\definecolor{color1bg}{HTML}{f73d28}%FA8072}
\definecolor{color2bg}{HTML}{FA8072}
\definecolor{bblue}{HTML}{00BFFF}
\definecolor{bblue2}{HTML}{00ffff}
	\tikzset{
%Define standard arrow tip
>=stealth',
%Define style for different line styles
help lines/.style={dashed, thick},
axis/.style={<->},
important line/.style={thick},
connection/.style={thick, dotted},
}
\tikzset{
diagonal fill/.style 2 args={fill=#2, path picture={
		\fill[#1, sharp corners] (path picture bounding box.south west) -|
		(path picture bounding box.north east) -- cycle;}},
reversed diagonal fill/.style 2 args={fill=#2, path picture={
		\fill[#1, sharp corners] (path picture bounding box.north west) |- 
		(path picture bounding box.south east) -- cycle;}}
		}
\newcounter{as}[section]
\DeclareMathOperator*{\argminA}{arg\,min}
\title[Distribution-free Deviation Bounds]{Distribution-free Deviation Bounds and The Role of Domain Knowledge in Learning via Model Selection with Cross-validation Risk Estimation}
\author{Diego Marcondes}
\address{Mathematical Sciences Institute and France-Australia Mathematical Sciences and Interactions ANU-CNRS International Research Lab, The Australian National University}
\email{\href{diego.marcondes@anu.edu.au}{diego.marcondes@anu.edu.au} }
\author{Cl\'audia Peixoto}
\address{Department of Applied Mathematics, Institute of Mathematics and Statistics, Universidade de S\~{a}o
Paulo, R. do Mat\~{a}o, 1010 - Butant\~{a}, S\~{a}o Paulo - SP,
05508-090, Brazil.}
\newtheorem{theorem}{Theorem}[section]
\newtheorem{remark}[theorem]{Remark}
\newtheorem{definition}[theorem]{Definition}
\newtheorem{corollary}[theorem]{Corollary}
\newtheorem{lemma}[theorem]{Lemma}
\newtheorem{proposition}[theorem]{Proposition}
\newtheorem{example}[theorem]{Example}
\begin{document}
	\maketitle
	
	\begin{abstract}
			Cross-validation is one of the most widely used tools for risk estimation and model selection in statistics and machine learning, yet its theoretical properties when embedded in a learning procedure remain insufficiently understood. This paper develops a general, distribution-free framework for learning via model selection with cross-validation risk estimation within classical statistical learning theory. We establish VC dimension-based deviation bounds for the entire learning pipeline, providing detailed proofs for both bounded and unbounded loss functions, the latter requiring a novel extension of existing results. A central focus of the analysis is how the structure of the collection of candidate models influences generalization. To this end, we introduce Learning Spaces as collections of candidate models equipped with a partial order whose inclusion structure reflects increasing model complexity. We show how Learning Spaces can be constructed from domain knowledge and analyze how such structural information increases generalization. The framework is illustrated through case studies and a simulation study in high-dimensional linear regression, comparing learning via model selection in two distinct Learning Spaces against ordinary least squares, LASSO, and ridge regression across scenarios of varying alignment between prior knowledge and the true target. The results demonstrate that, when the Learning Space is well-adapted to the target and an efficient search algorithm is employed, learning via model selection can outperform standard methods by orders of magnitude. Through theoretical insights and concrete examples, we provide guidance on selecting the family of candidate models based on domain knowledge to enhance the performance of model selection with cross-validation.
			
			\noindent \textbf{Keywords:} deviation bounds, cross-validation, model selection, statistical learning theory, unbounded loss function, empirical risk minimization
	\end{abstract}
		
	\section{Introduction}
	
	In learning problems, properly choosing a hypotheses space is critical to achieving high generalization, which is more likely when it contains hypotheses with low risk and is of limited complexity relative to the sample size, so overfitting is avoided and low-risk hypotheses can be learned. Properly selecting a hypotheses space involves translating prior information and domain knowledge into properties that low-risk hypotheses should satisfy, and then designing a space containing hypotheses with these properties. This can be done, for example, by identifying invariances, such as a group invariance, associated with the learning problem and considering only hypotheses which respect them \cite{marcondesback}.
	
	If prior information was wrong or weak, or if it could not be properly converted into a relatively simple hypotheses space, then generalization may be low. In order to mitigate these issues, one may select the hypotheses space from data, in what is known in the literature as model selection \cite{raschka2018,ding2018,massart2007}. In model selection, one fixes a collection $\{\mathcal{M}_{1},\dots,\mathcal{M}_{n}\}$ of hypotheses spaces, or models, and an empirical error $\hat{L}(\mathcal{M}_{i})$ for each one, which is often given by an independent validation sample, cross-validation, or complexity penalization. A hypotheses space is selected by minimizing the empirical error, and a hypotheses is learned from it.
	
	If the candidate models are nested, i.e., $\mathcal{M}_{1} \subset \cdots \subset \mathcal{M}_{n}$, then a method based on the Structured Risk Minimization (SRM) Inductive Principle, in which the resubstitution error of the estimated hypotheses of $\mathcal{M}_{i}$ is penalized by its complexity, may be applied to solve this problem (see \cite[Chapter~4]{vapnik2000} for more details and \cite{anguita2012} for an example). More generally, model selection may be performed by penalizing the resubstitution error by the complexity of each model, in both nested and non-nested frameworks (see \cite{massart2007} for an in-depth presentation of model selection by penalization and \cite{koltchinskii2001,koltchinskii2011,arlot2011,bartlett2008} for more specific results). Moreover, the classical problem of variable selection \cite{guyon2003,john1994} constitutes another framework for model selection, in which a partially ordered structured family of constrained hypotheses spaces is generated through the elimination of variables.
	
	Although there is a rich literature about model selection, the methods usually have two shortcomings. First, the collection of candidate models is heuristically selected, and strong domain knowledge is not considered. For example, nested candidate models are selected due to their increasing complexity and not necessarily because their hypotheses have some properties that are believed to be satisfied by low-risk hypotheses. Second, even when domain knowledge is considered, the collection of candidate models does not have a rich structure that could be leveraged to enhance the computational efficiency and generalization of model selection. For example, a collection of nested models does not have as many algebraic properties as a Boolean lattice, which is isomorphic to the collection of candidate models in variable selection. Although variable selection is an example that considers domain knowledge and its collection of candidate models has a lattice structure, it is specific to problems in which low-risk hypotheses do not depend on all variables, and it cannot be readily adapted to other scenarios.
	
	In this paper, we analyze how domain knowledge and prior information can be leveraged to increase generalization by learning via model selection. We focus on model selection with cross-validation risk estimation, in which a hypotheses is learned on the selected model with an independent sample or by reusing the sample used to select the model. We formalize model selection within the statistical learning framework and deduce insightful distribution-free deviation bounds for learning via model selection in this instance. These insights are then illustrated through case studies and a simulation study, in which practical ways of increasing generalization by incorporating domain knowledge into the family of candidate models are presented.
	
	\section{Background}
	
	The classical framework of Statistical Learning Theory, in the case of binary classification, is a triple $(\mathcal{H},\mathbb{A},\mathcal{D}_{N})$, composed of a set $\mathcal{H}$ of functions $h: \mathbb{R}^{d} \to \{0,1\},d \geq 1$, called hypotheses space, and a learning algorithm $\mathbb{A}(\mathcal{H},\mathcal{D}_{N})$, which searches $\mathcal{H}$ to minimize the classification error over a training sample $\mathcal{D}_{N} = \{(X_{1},Y_{1}),\dots,(X_{N},Y_{N})\}$ of a random vector $(X,Y)$, with range $\mathbb{R}^{d} \times \{0,1\}$, $Z$, with range $\mathcal{Z} \subset \mathbb{R}^{d}, d \geq 1,$ and unknown probability distribution $P$. We refer to Section \ref{SecPreliminaries} for more details about the general framework of statistical learning.
	
	Let $\ell(h(x),y) \coloneqq \mathds{1}\{h(x) \neq y\}$ be the 0-1 loss function. The risk of a hypotheses $h \in \mathcal{H}$ is an expected value of the local loss $\ell(h(x),y), x \in \mathbb{R}^{d}, y \in \{0,1\}$. If the expectation is the sample mean of $\ell(h(x),y)$ under $\mathcal{D}_{N}$, we have the empirical risk $L_{\mathcal{D}_{N}}(h)$, while if the expectation of $\ell(h(X),Y)$ is under the distribution $P$, we then have the out-of-sample risk $L(h) = \mathbb{P}(h(X) \neq Y)$. A target hypotheses $h^{\star} \in \mathcal{H}$ is such that its out-of-sample risk is minimum in $\mathcal{H}$, i.e., $L(h^{\star}) \leq L(h), \forall h \in \mathcal{H}$, while an empirical risk minimization (ERM) hypotheses $\hat{h}$ is such that its empirical risk is minimum, i.e., $L_{\mathcal{D}_{N}}(\hat{h}) \leq L_{\mathcal{D}_{N}}(h), \forall h \in \mathcal{H}$.
	
	In this context, learning via model selection is a two-step framework in which first a model is selected from a collection of candidates, and then a hypotheses is learned on it. Fix a hypotheses space $\mathcal{H}$, a collection of candidate subsets, or models,
	\begin{align*}
		\mathbb{C}(\mathcal{H}) = \{\mathcal{M}_{1},\dots,\mathcal{M}_{n}\} & & \text{ such that } & & \mathcal{H} = \bigcup_{i=1}^{n} \mathcal{M}_{i},
	\end{align*}
	and an empirical risk estimator $\hat{L}: \mathbb{C}(\mathcal{H}) \mapsto \mathbb{R}_{+}$, which assigns to each candidate model a risk estimated from data. The first step of learning via model selection is to select a minimizer of $\hat{L}$ in $\mathbb{C}(\mathcal{H})$:
	\begin{equation*}
		\hat{\mathcal{M}} \coloneqq \argminA_{\mathcal{M} \in \mathbb{C}(\mathcal{H})} \hat{L}(\mathcal{M}).
	\end{equation*}
	Once a model is selected, one employs a data-driven algorithm $\mathbb{A}$, for example empirical risk minimization under a training sample, to learn a hypotheses $\hat{h}^{\mathbb{A}}_{\hat{\mathcal{M}}}$ in $\hat{\mathcal{M}}$.
	
	An intrinsic characteristic of model selection frameworks is a bias-variance trade-off, which is depicted in Figure \ref{fig_error}. On the one hand, when one selects from data a model $\hat{\mathcal{M}}$ among the candidates to learn on, one adds a bias to the learning process if $h^{\star}$ is not in $\hat{\mathcal{M}}$, since the best hypotheses $h^{\star}_{\hat{\mathcal{M}}}$ one can learn on $\hat{\mathcal{M}}$ may have a greater risk than $h^{\star}$, i.e., $L(h^{\star}_{\hat{\mathcal{M}}}) > L(h^{\star})$. Hence, even when the sample size is large, the learned hypotheses may not well generalize relative to $h^{\star}$. We call this bias type III estimation error, as depicted in Figure \ref{fig_error}.
	
	On the other hand, learning within $\hat{\mathcal{M}}$ may have a smaller error than on the whole $\mathcal{H}$, in the sense of $L(\hat{h}^{\mathbb{A}}_{\hat{\mathcal{M}}})$ being close to $L(h^{\star}_{\hat{\mathcal{M}}})$. We call their difference type II estimation error, as depicted in Figure \ref{fig_error}. The actual error of learning in this instance is the difference between $L(\hat{h}^{\mathbb{A}}_{\hat{\mathcal{M}}})$ and $L(h^{\star})$, also depicted in Figure \ref{fig_error} as type IV estimation error. 
	
	A successful framework for model selection should be such that $L(\hat{h}^{\mathbb{A}}_{\hat{\mathcal{M}}}) < L(\hat{h}^{\mathbb{A}})$ where $\hat{h}^{\mathbb{A}}$ is obtained by learning on the whole space $\mathcal{H}$ with algorithm $\mathbb{A}$. In other words, by adding a bias (III), the learning variance (II) within $\hat{\mathcal{M}}$ should be low enough, so the actual error (IV) committed when learning is smaller than the one committed by learning on the whole space $\mathcal{H}$, that is, $L(\hat{h}^{\mathbb{A}}) - L(h^{\star})$.
	
	\begin{figure}[ht]
		\centering
		\includegraphics[width=0.75\linewidth]{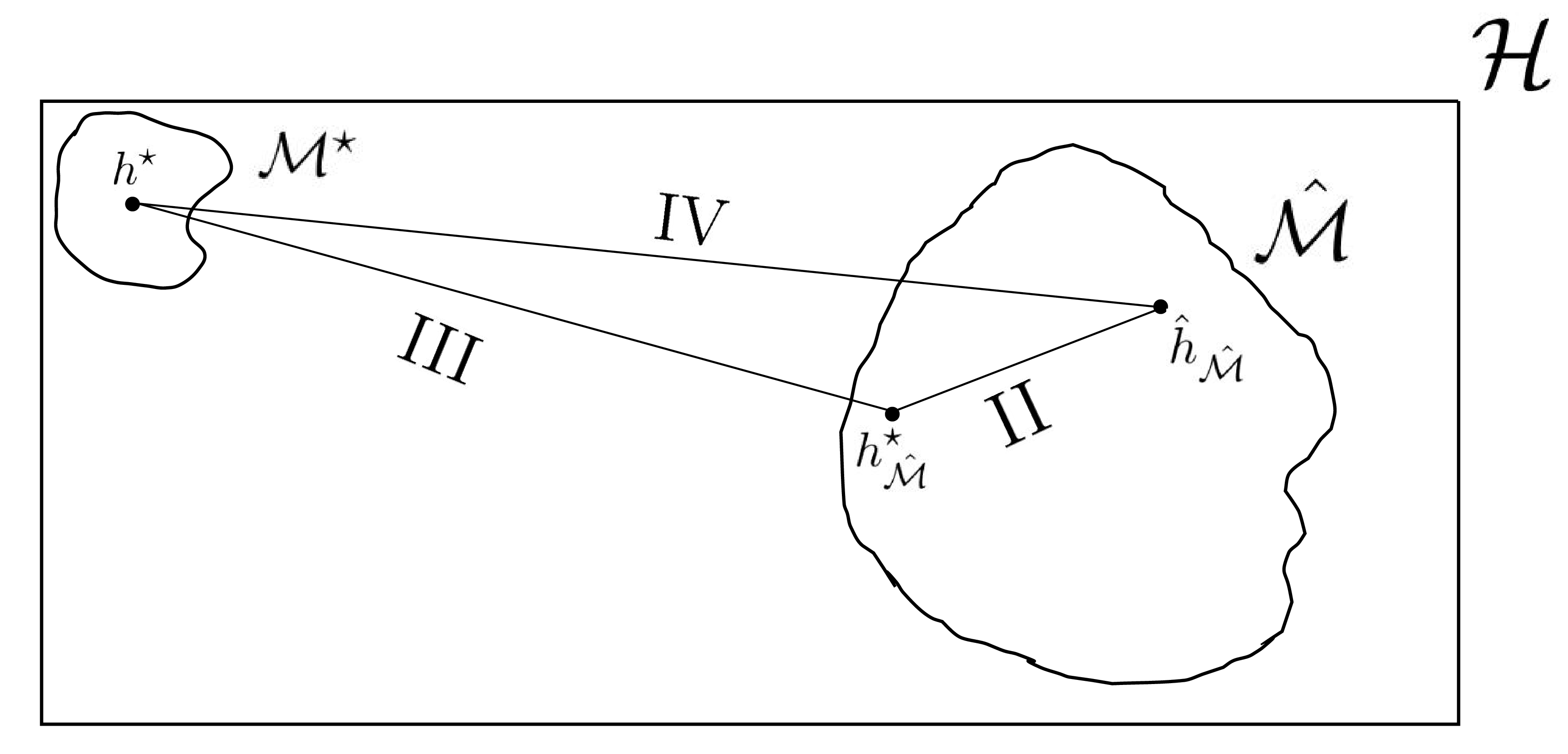}
		\caption{Types II, III, and IV estimation errors when learning on $\hat{\mathcal{M}}$, in which $\hat{h}_{\hat{\mathcal{M}}} \equiv \hat{h}_{\hat{\mathcal{M}}}^{\mathbb{A}}$. These errors are formally defined in Section \ref{SecErrors}.}
		\label{fig_error}
	\end{figure}
	
	In this paper, we study distribution-free asymptotics of learning via model selection, meaning bounding the estimation errors in Figure \ref{fig_error}. We define the target model $\mathcal{M}^{\star}$ among the candidates and establish convergence rates of $\hat{\mathcal{M}}$ to $\mathcal{M}^{\star}$. We focus on frameworks where the risk of each candidate model is estimated via a cross-validation procedure and learning on $\hat{\mathcal{M}}$ is performed with an independent sample. We briefly discuss the case in which the same sample that was used to selected $\hat{\mathcal{M}}$ is reused to learn on it.
	
	\subsection{Related work}
	
	Cross-validation techniques for risk estimation and model selection are widely used in statistics and machine learning, and their asymptotic behavior has been well studied. The asymptotic optimality of specific cross-validation methods has been studied in \cite{andrews1991asymptotic,li1987asymptotic,dudoit2005asymptotics} and so-called oracle inequalities have been established for cross-validation methods in \cite{van2006oracle,lecue2012oracle}. Moreover, some straightforward deviation-bounds in distribution-free scenarios can be found in \cite[Chapter~4]{mohri2018foundations}. However, the understanding of the theoretical properties of cross-validation is quite low in light of its widespread use. This fact has been noted by \cite{austern2020asymptotics,maillard2021local}, which study the asymptotics of cross-validation for specific classes of models. To the best of our knowledge, a systematic study of model selection with cross-validation risk estimation within a distribution-free framework has not been done before, and may bring some insights into the benefits of model selection and have an impact on practical applications.
	
	In this paper, we apply some techniques that have been applied before in the context of model selection via complexity penalization. For instance, results in  \cite[Chapter~4]{mohri2018foundations} present deviation bounds for SRM depending on the VC dimension of the target model, as do ours. However, our results also depend on the minimum discrimination error, that is, informally, the difference between the risk of the target model and the second best.
	
	We note that substantial work has been done in model selection in distribution dependent settings, especially for complexity penalization, as thoroughly presented in \cite{massart2007}. In particular, we highlight \cite{mendelson2004importance}, which, in a distribution-dependent framework, obtains insights about the effect of the complexity of the learned model on the generalization quality of learning via model selection. Distribution-dependent bounds are out of the scope of this paper, but the results presented here might be adapted to distribution-dependent scenarios, a topic we will leave for future studies.
	
	Despite the rich literature about model selection, an important facet of it is rarely treated from a statistical perspective: how the family of candidate models should be chosen. This question, when addressed, is usually done so from a computational perspective by fixing a family which can be efficiently searched. In particular, how generalization can be increased by properly selecting a penalization is a highly studied topic, but how it can be increased by properly selecting the family of candidate models is not. This paper is especially concerned with this neglected topic and how to leverage domain knowledge to increase generalization.
	
	\subsection{Main contributions}
	
	We present learning via model selection with cross-validation risk estimation as a general systematic learning framework within classical Statistical Learning Theory, and establish distribution-free deviation bounds for the estimation errors in terms of VC dimension, giving detailed proofs of the results and considering both bounded and unbounded loss functions. In order to treat the case of bounded loss functions, we extend the classical Statistical Learning Theory \cite{vapnik1998,devroye1996} to learning via model selection, while to treat the case of unbounded loss functions, we apply and extend results of \cite{cortes2019}.
	
	In \cite{cortes2019}, bounds for the tail probabilities of
	\begin{align*}
		\sup\limits_{h \in \mathcal{H}} \frac{L(h) - L_{\mathcal{D}_{N}}(h)}{\sqrt[p]{(L^{p}(h))^{p} + \varsigma}}
	\end{align*}
	are established for unbounded loss functions, for any $\varsigma > 0$, in which $L^{p}(h)$ is the $p$-norm of $\ell(Z,h)$ and $p > 1$ is such that $\sup_{h \in \mathcal{H}} L^{p}(h) < \infty$. We extend this result to $p = 1$ to obtain bounds for the relative type I estimation error
	\begin{align*}
		\sup\limits_{h \in \mathcal{H}} \frac{|L(h) - L_{\mathcal{D}_{N}}(h)|}{L(h) + \varsigma}
	\end{align*}
	so that we can carry out an analysis of model selection with unbounded loss functions. The details about this extension are in Section \ref{ApUnbounded} in Appendix \ref{apVCtheory}.
	
	We also define the Learning Spaces as a class of candidate models in which the partial order by inclusion reflects model complexity. In particular, we focus on Learning Spaces that have a lattice structure and formalize a way of defining them through Learning Space generators. We present some examples and discuss how the Learning Space can be modeled to reflect domain knowledge and prior information about the target hypotheses. We illustrate this through case studies and a simulation study comparing learning via model selection against benchmark methods across scenarios of varying domain knowledge alignment in classification and linear regression. We expect with the theoretical results and these concrete examples to guide practitioners and applied researchers on how to choose the family of candidate models based on domain knowledge to increase generalization.
	
	\subsection{Paper structure}
	
	In Section \ref{SecPreliminaries}, we present the main concepts of Statistical Learning Theory, and define the framework of learning via model selection with cross-validation risk estimation as a systematic general learning framework. In Sections \ref{boundedL} and \ref{SecUnbounded}, we establish deviation bounds for learning via model selection with an independent sample for bounded and unbounded loss functions, respectively. In Section \ref{SecReuse}, we briefly discuss learning via model selection by reusing. In Section \ref{SecEnhanceGen}, we introduce the Learning Spaces, discuss how they can be built based on prior information, and in Section \ref{SecNum} we present numerical simulations to study two concrete examples to better understand the effect on the generalization quality of quantities present in the established deviation-bounds. In particular, we discuss how domain knowledge can be leveraged to increase generalization by learning via model selection and briefly present considerations about computational aspects of learning via model selection. We discuss the main results and perspectives of this paper in Section \ref{FinalRemarks}. In Section \ref{SecProof}, we present the proof of the results. To improve the accessibility of this paper to readers not familiar with VC theory, we present in Appendix \ref{apVCtheory} an overview of its main results that are used in this paper.
	
	\section{Model selection in Statistical Learning}
	\label{SecPreliminaries}
	
	Let $Z$ be a random vector defined on a probability space $(\Omega,\mathcal{S},\mathbb{P})$, with range $\mathcal{Z} \subset \mathbb{R}^{d}, d \geq 1$. Denote $P(z) \coloneqq \mathbb{P}(Z \leq z)$, in which $\leq$ is the component-wise partial order in $\mathbb{R}^{d}$, as the probability distribution of $Z$ at point $z \in \mathcal{Z}$, which we assume unknown, but fixed. Define a sample $\mathcal{D}_{N} = \{Z_{1}, \dots, Z_{N}\}$ as a sequence of independent and identically distributed random vectors, defined on $(\Omega,\mathcal{S},\mathbb{P})$, with distribution $P$.
	
	Let $\mathcal{H}$ be a general set, whose typical element we denote by $h$, which we call hypotheses space. We denote subsets of $\mathcal{H}$ by $\mathcal{M}_{i}$, indexed by the positive integers, i.e., $i \in \mathbb{Z}_{+}$. We may also denote a subset of $\mathcal{H}$ by $\mathcal{M}$ to ease notation. We consider \textit{model} and \textit{subset of} $\mathcal{H}$ as synonyms.
	
	Let $\ell: \mathcal{Z} \times \mathcal{H} \mapsto  \mathbb{R}_{+}$ be a, possibly unbounded, loss function, which represents the loss $\ell(z,h)$ that incurs when one applies a hypotheses $h \in \mathcal{H}$ to \textit{explain} a feature of point $z \in \mathcal{Z}$. Denoting $\ell_{h}(z) \coloneqq \ell(z,h)$ for $z \in \mathcal{Z}$, we assume that, for each $h \in \mathcal{H}$, the composite function $\ell_{h} \circ Z$ is $(\Omega,\mathcal{S})$-measurable.
	
	The risk of a hypotheses $h \in \mathcal{H}$ is defined as
	\begin{equation*}
		L(h) \coloneqq \mathbb{E}[\ell_{h}(Z)] =  \int_{\mathcal{Z}} \ell_{h}(z) \ dP(z),
	\end{equation*}
	in which $\mathbb{E}$ means expectation under $\mathbb{P}$. We define the empirical risk on sample $\mathcal{D}_{N}$ as
	\begin{equation*}
		L_{\mathcal{D}_{N}}(h) \coloneqq \frac{1}{N} \sum_{i=1}^{N} \ell_{h}(Z_{i}),
	\end{equation*}
	that is the empirical mean of $\ell_{h}(Z)$.
	
	In classification problems, when $Z = (X,Y)$ with $X$ taking values in $\mathbb{R}^{d}$ and $Y$ in a finite set of labels, the usual loss function is $\ell_{h}((x,y)) = \mathds{1}\{h(x) \neq y\}$ when $h$ is a function of $x$, so the risk is the respective classification error. In regression problems, when $Y$ takes values in $\mathbb{R}$ the loss function is the quadratic loss $\ell_{h}((x,y)) = (h(x) - y)^2$ and the risk is the respective mean squared error. By considering, for example, the loss function as $\ell_{h}(z) = - \log f(z|h)$ for a probability density function that depends on $h \in \mathcal{H}$, the empirical risk is minus the log-likelihood and empirical risk minimization is equivalent to maximum likelihood.
	
	We denote the set of target hypotheses of $\mathcal{H}$ as
	\begin{equation*}
		h^{\star} \coloneqq \argminA\limits_{h \in \mathcal{H}} L(h),
	\end{equation*}
	that are the hypotheses that minimize $L$ in $\mathcal{H}$, and the set of the target hypotheses of subsets of $\mathcal{H}$ by
	\begin{align*}
		h^{\star}_{i} \coloneqq \argminA\limits_{h \in \mathcal{M}_{i}} L(h) & & h^{\star}_{\mathcal{M}} \coloneqq \argminA\limits_{h \in \mathcal{M}} L(h),
	\end{align*}
	depending on the subset.
	
	The set of hypotheses which minimize the empirical risk in $\mathcal{H}$ is defined as 
	\begin{align}
		\label{ERMH}
		\hat{h}^{\mathcal{D}_{N}} \coloneqq \argminA\limits_{h \in \mathcal{H}} L_{\mathcal{D}_{N}}(h),
	\end{align}	
	while the ones that minimize it in models are denoted by
	\begin{align}
		\label{ERMSub}
		\hat{h}_{i}^{\mathcal{D}_{N}} \coloneqq \argminA\limits_{h \in \mathcal{M}_{i}} L_{\mathcal{D}_{N}}(h) & & \hat{h}_{\mathcal{M}}^{\mathcal{D}_{N}} \coloneqq \argminA\limits_{h \in \mathcal{M}} L_{\mathcal{D}_{N}}(h).
	\end{align}
	We assume the minimum of $L$ and $L_{\mathcal{D}_{N}}$ is achieved in $\mathcal{H}$, and in all subsets of it that we consider throughout this paper, so the sets above are not empty. To ease notation, we may simply denote $\hat{h}$ as a hypotheses that minimizes the empirical risk in $\mathcal{H}$. In general, we denote hypotheses estimated via an algorithm $\mathbb{A}$ in $\mathcal{H}$ and its subsets by $\hat{h}^{\mathbb{A}}, \hat{h}_{i}^{\mathbb{A}}$ and $\hat{h}_{\mathcal{M}}^{\mathbb{A}}$. In the special case when $\mathbb{A}$ is given by empirical risk minimization in sample $\mathcal{D}_{N}$, we have the hypotheses defined in \eqref{ERMH} and \eqref{ERMSub}.
	
	We denote a collection of candidate models by $\mathbb{C}(\mathcal{H}) = \{\mathcal{M}_{i}: i \in \mathcal{J}\}$, for $\mathcal{J} \subset \mathbb{Z}_{+}, \text{\textbar}\mathcal{J}\text{\textbar} < \infty$, and assume that it covers $\mathcal{H}$:
	\begin{equation*}
		\mathcal{H} = \bigcup_{i \in \mathcal{J}} \mathcal{M}_{i}.
	\end{equation*}
	We define the VC dimension under loss function $\ell$ of such a collection as
	\begin{equation*}
		\label{VCdimCand}
		d_{VC}(\mathbb{C}(\mathcal{H}),\ell) \coloneqq \max\limits_{i \in \mathcal{J}} d_{VC}(\mathcal{M}_{i},\ell)
	\end{equation*}
	and assume that $d_{VC}(\mathbb{C}(\mathcal{H}),\ell) < \infty$. Since every model in $\mathbb{C}(\mathcal{H})$ is a subset of $\mathcal{H}$, it follows that $d_{VC}(\mathbb{C}(\mathcal{H}),\ell) \leq d_{VC}(\mathcal{H},\ell)$. In Appendix \ref{apVCtheory}, we review the main concepts of VC theory \cite{vapnik1998}. In particular, we define the VC dimension of a hypotheses space under loss function $\ell$ (cf. Definition \ref{VCdimension}). When the loss function is clear from the context, or not relevant to our argument, we denote the VC dimension simply by $d_{VC}(\mathcal{M})$ for $\mathcal{M} \subseteq \mathcal{H}$ and $d_{VC}(\mathbb{C}(\mathcal{H}))$.
	
	\subsection{Model risk estimation}
	\label{esti_Lhat}
	
	The risk of a model in $\mathbb{C}(\mathcal{H})$ is defined as
	\begin{equation*}
		L(\mathcal{M}) \coloneqq  \min\limits_{h \in \mathcal{M}} L(h) = L(h^{\star}_{\mathcal{M}}),
	\end{equation*}
	for $\mathcal{M} \in \mathbb{C}(\mathcal{H})$, and we consider estimators $\hat{L}(\mathcal{M})$ for $L(\mathcal{M})$ based on cross-validation. We assume that $\hat{L}$ is of the form
	\begin{align}
		\label{form_Lhat}
		\hat{L}(\mathcal{M}) = \frac{1}{m} \ \sum_{j=1}^{m} \ \hat{L}^{(j)}(\hat{h}^{(j)}_{\mathcal{M}}), & & \mathcal{M} \in \mathbb{C}(\mathcal{H}),
	\end{align}
	in which there are $m$ pairs of independent training and validation samples, $\hat{L}^{(j)}$ is the empirical risk under the $j$-th validation sample, and $\hat{h}^{(j)}_{\mathcal{M}}$ is a hypotheses that minimizes the empirical risk in $\mathcal{M}$ under the $j$-th training sample, denoted by $\mathcal{D}_{N}^{(j)}$. We assume independence between samples within a pair $j$, but there may exist dependence between samples of distinct pairs $j,j^{\prime}$. All training and validation samples are subsets of $\mathcal{D}_{N}$. We assume all training samples have a size $N_{t}$ and the validation samples a size $N_{v}$.
	
	In order to exemplify the results obtained in this paper, we consider two estimators of the form \eqref{form_Lhat}, obtained with a validation sample and k-fold cross-validation, which we formally define in Section \ref{SecVal}. When there is no need to specify which estimator of $L(\mathcal{M})$ we are referring to, we simply denote $\hat{L}(\mathcal{M})$ to mean an arbitrary estimator with form \eqref{form_Lhat}.
	
	\subsection{Target model and estimation errors}
	\label{SecErrors}
	
	The motivation for learning via model selection stems from a trade-off inherent to the models in $\mathbb{C}(\mathcal{H})$: simpler models, e.g., those with lower VC dimension, tend to allow more accurate estimation, but at the risk of not containing a hypotheses that performs as well as the best one available in $\mathcal{H}$. In this context, assume that $\mathcal{H}$ is all we have to learn on, and we are not willing to consider any hypotheses outside $\mathcal{H}$. Then, if we could choose, we would like to learn on $\mathcal{M}^{\star}$: \textit{the least complex model in $\mathbb{C}(\mathcal{H})$ which contains a target hypotheses $h^{\star}$}. We call $\mathcal{M}^{\star}$ the target model.
	
	In order to formally define the target model, we need to consider equivalence classes of models, as it is not possible to differentiate some models with the concepts of Statistical Learning Theory. Define in $\mathbb{C}(\mathcal{H})$ the equivalence relation given by
	\begin{align}
		\label{equiv_class}
		\mathcal{M}_{i} \sim \mathcal{M}_{j} \text{ if, and only if, } d_{VC}(\mathcal{M}_{i}) = d_{VC}(\mathcal{M}_{j}) \text{ and } L(\mathcal{M}_{i}) = L(\mathcal{M}_{j}),
	\end{align}
	for $\mathcal{M}_{i}, \mathcal{M}_{j} \in \mathbb{C}(\mathcal{H})$: two models in $\mathbb{C}(\mathcal{H})$ are equivalent if they have the same VC dimension and risk. Let 
	\begin{equation*}
		\mathcal{L}^{\star} = \argminA\limits_{\mathcal{M} \in \ \nicefrac{\mathbb{C}(\mathcal{H})}{\sim}} L(\mathcal{M})
	\end{equation*}
	be the equivalence classes which contain a target hypotheses of $\mathcal{H}$, so their risk is minimum. We define the target model $\mathcal{M}^{\star} \in \nicefrac{\mathbb{C}(\mathcal{H})}{\sim}$ as
	\begin{equation*}
		\mathcal{M}^{\star} = \argminA\limits_{\mathcal{M} \in \mathcal{L}^{\star}} d_{VC}(\mathcal{M}),
	\end{equation*}
	which is the class of the smallest models in $\mathbb{C}(\mathcal{H})$, in the VC dimension sense, that are not disjoint with $h^{\star}$. The target model has the lowest complexity among the unbiased models in $\mathbb{C}(\mathcal{H})$.
	
	The target model is dependent on both $\mathbb{C}(\mathcal{H})$ and the data generating distribution, so we cannot establish beforehand, without looking at data, on which model of $\mathbb{C}(\mathcal{H})$ to learn. Hence, in this context, a model selection procedure should, based on data, learn a model $\hat{\mathcal{M}}$ among the candidates $\mathbb{C}(\mathcal{H})$ as an estimator of $\mathcal{M}^{\star}$.
	
	However, when learning on a model $\hat{\mathcal{M}} \in \mathbb{C}(\mathcal{H})$ selected based on data, one commits three types of errors:
	\begin{align*}
		\label{ee23}
		\textbf{(II)} \ \ L(\hat{h}_{\hat{\mathcal{M}}}^{\mathbb{A}}) - L(h^{\star}_{\hat{\mathcal{M}}}) & & \textbf{(III)} \ \ L(h^{\star}_{\hat{\mathcal{M}}}) - L(h^{\star}) & & \textbf{(IV)} \ \ L(\hat{h}_{\hat{\mathcal{M}}}^{\mathbb{A}}) - L(h^{\star}),
	\end{align*}
	that we call types II, III, and IV estimation errors\footnote{We define type I estimation error in Section \ref{SecLearnOn} (cf. \eqref{typeIe}).}, which are illustrated in Figure \ref{fig_error}. In a broad sense, type III estimation error would represent the bias of learning on $\hat{\mathcal{M}}$, while type II would represent the variance within $\hat{\mathcal{M}}$, and type IV would be the error, with respect to $\mathcal{H}$, committed when learning on $\hat{\mathcal{M}}$ with algorithm $\mathbb{A}$. 
	
	Indeed, type III estimation error compares a target hypotheses $h_{\hat{\mathcal{M}}}^{\star}$ of $\hat{\mathcal{M}}$ with a target hypotheses $h^{\star}$ of $\mathcal{H}$, hence any difference between them would be a systematic bias of learning on $\hat{\mathcal{M}}$ when compared to learning on $\mathcal{H}$. Type II estimation error compares the loss of the estimated hypotheses $\hat{h}_{\hat{\mathcal{M}}}^{\mathbb{A}}$ of $\hat{\mathcal{M}}$ and the loss of its target, assessing how much the estimated hypotheses varies from a target of $\hat{\mathcal{M}}$, while type IV is the effective error committed, since it compares the estimated hypotheses of $\hat{\mathcal{M}}$ with a target of $\mathcal{H}$. 
	
	As is often the case, there will be a bias-variance trade-off that should be minded when learning on $\hat{\mathcal{M}}$, so it is important to guarantee that, when the sample size increases, all the estimation errors tend to zero. Furthermore, it is desired that the learned model $\hat{\mathcal{M}}$ converges to the target model $\mathcal{M}^{\star}$ with probability one, so learning is asymptotically optimal. The proposed learning framework via model selection defined in the next section will take these properties into account.
	
	\subsection{Learning hypotheses via model selection}
	\label{LearningFramework}
	
	Learning via model selection is composed of two steps: first learn a model $\hat{\mathcal{M}}$ from $\mathbb{C}(\mathcal{H})$ and then learn a hypotheses on $\hat{\mathcal{M}}$. In this section, we define $\hat{\mathcal{M}}$ and two algorithms $\mathbb{A}$ to learn on it.
	
	\subsubsection{Learning model $\hat{\mathcal{M}}$}
	
	Model selection is performed by applying a $(\Omega,\mathcal{S})$-measurable function $\mathbb{M}_{\mathbb{C}(\mathcal{H})}$, dependent on $\mathbb{C}(\mathcal{H})$, satisfying
	\begin{equation}
		\label{diagram}
		\omega \in \Omega \xrightarrow{(\mathcal{D}_{N},\hat{L})} (\mathcal{D}_{N}(\omega),\hat{L}(\omega)) \xrightarrow{\ \ \mathbb{M}_{\mathbb{C}(\mathcal{H})} \ \ } \mathcal{\hat{M}}(\omega) \in \mathbb{C}(\mathcal{H}),
	\end{equation}
	which is such that, given $\mathcal{D}_{N}$ and an estimator $\hat{L}$ of the risk of each candidate model, learns a $\mathcal{\hat{M}} \in \mathbb{C}(\mathcal{H})$. Note from (\ref{diagram}) that $\mathcal{\hat{M}}$ is a $(\Omega,\mathcal{S})$-measurable $\mathbb{C}(\mathcal{H})$-valued function, as it is the composition of measurable functions, i.e., $\mathcal{\hat{M}} \coloneqq \mathcal{\hat{M}}_{\mathcal{D}_{N},\hat{L},\mathbb{C}(\mathcal{H})} = \mathbb{M}_{\mathbb{C}(\mathcal{H})}\big(\mathcal{D}_{N},\hat{L}\big)$. Even though $\mathcal{\hat{M}}$ depends on $\mathcal{D}_{N}, \hat{L}$ and $\mathbb{C}(\mathcal{H})$, we drop the subscripts to ease notation. 
	
	We are interested in a $\mathbb{M}_{\mathbb{C}(\mathcal{H})}$ such that
	\begin{equation}
		\label{consistent_LM}
		\mathcal{\hat{M}} = \mathbb{M}_{\mathbb{C}(\mathcal{H})}\big(\mathcal{D}_{N},\hat{L}\big) \xrightarrow{N \rightarrow \infty} \mathcal{M}^{\star} \text{ with probability one}.
	\end{equation}
	In particular, it is desired that the model learned by $\mathbb{M}_{\mathbb{C}(\mathcal{H})}$ be as simple as it can be under the restriction that it converges to the target model. 
	
	A $\mathbb{M}_{\mathbb{C}(\mathcal{H})}$ which satisfies (\ref{consistent_LM}) may be defined by mimicking the definition of $\mathcal{M}^{\star}$, but employing the estimated risk $\hat{L}$ instead of $L$. Define in $\mathbb{C}(\mathcal{H})$ the equivalence relation given by
	\begin{align*}
		\mathcal{M}_{i} \hat{\sim} \mathcal{M}_{j} \text{ if, and only if, } d_{VC}(\mathcal{M}_{i}) = d_{VC}(\mathcal{M}_{j}) \text{ and } \hat{L}(\mathcal{M}_{i}) = \hat{L}(\mathcal{M}_{j}),
	\end{align*}
	for $\mathcal{M}_{i}, \mathcal{M}_{j} \in \mathbb{C}(\mathcal{H})$, which is a random $(\Omega,\mathcal{S})$-measurable equivalence relation. Let
	\begin{equation*}
		\hat{\mathcal{L}} = \argminA\limits_{\mathcal{M} \in \ \nicefrac{\mathbb{C}(\mathcal{H})}{\hat{\sim}}} \hat{L}(\mathcal{M})
	\end{equation*}
	be the classes in $\nicefrac{\mathbb{C}(\mathcal{H})}{\hat{\sim}}$ with the least estimated risk. We call the classes in $\hat{\mathcal{L}}$ the global minima of $\mathbb{C}(\mathcal{H})$. Then, $\mathbb{M}_{\mathbb{C}(\mathcal{H})}$ selects
	\begin{equation}
		\label{Ghat}
		\hat{\mathcal{M}} = \argminA\limits_{\mathcal{M} \in \mathcal{\hat{L}}} d_{VC}(\mathcal{M}),
	\end{equation} 
	the simplest class among the global minima.
	
	\subsubsection{Learning hypotheses on $\hat{\mathcal{M}}$}
	\label{SecLearnOn}
	
	Once $\hat{\mathcal{M}}$ is selected, we need to learn hypotheses on it. In this paper, we focus on learning with an independent sample and briefly discuss learning by reusing, as follows. 
	
	Let $\tilde{\mathcal{D}}_{M} = \{\tilde{Z}_{l}: 1 \leq l \leq M\}$ be a sequence of $M$ independent and identically distributed random vectors with distribution $P$, independent of $\mathcal{D}_{N}$. When learning with an independent sample, we consider
	\begin{align*}
		%\label{learn_inde}
		\hat{h}_{\hat{\mathcal{M}}}^{\tilde{\mathcal{D}}_{M}} \coloneqq \argminA\limits_{h \in \hat{\mathcal{M}}} L_{\tilde{\mathcal{D}}_{M}}(h),
	\end{align*}
	that are the hypotheses which minimize the empirical risk under $\tilde{\mathcal{D}}_{M}$ on $\hat{\mathcal{M}}$. 
	
	Another straightforward way of learning on $\hat{\mathcal{M}}$ is to simply consider
	\begin{align}
		\label{learn_reuse}
		\hat{h}_{\hat{\mathcal{M}}}^{\mathcal{D}_{N}} \coloneqq \argminA\limits_{h \in \hat{\mathcal{M}}} L_{\mathcal{D}_{N}}(h),
	\end{align}
	that are the hypotheses which minimize the empirical error under $\mathcal{D}_{N}$ on $\hat{\mathcal{M}}$. We call this framework \textit{learning by reusing}. Figure \ref{learn_hyp} summarizes these systematic frameworks of learning via model selection.
	
	\begin{figure*}[ht]
		\centering
		\includegraphics[width=\linewidth]{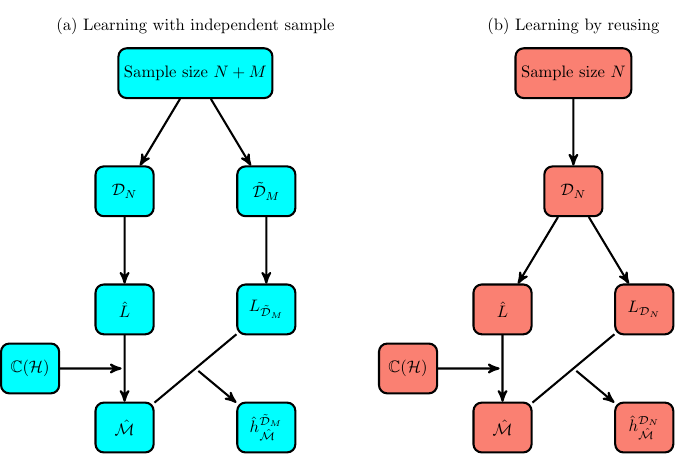}	
		\caption{The systematic frameworks for learning hypotheses via model selection. (a) A sample of size $N+M$ is split into two, one of size $N$ that is used to estimate $\hat{\mathcal{M}}$ by the minimization of $\hat{L}$ on $\mathbb{C}(\mathcal{H})$, and another of size $M$ is used to learn a hypotheses on $\hat{\mathcal{M}}$ by the minimization of the empirical risk. (b) The whole sample of size $N$ is used for estimating $\hat{\mathcal{M}}$ by the minimization of $\hat{L}$ on $\mathbb{C}(\mathcal{H})$, and to estimate hypotheses on $\hat{\mathcal{M}}$ via ERM.}
		\label{learn_hyp}
	\end{figure*}
	
	We define the type I estimation error as
	\begin{align}
		\label{typeIe}
		\textbf{(I)} \begin{cases}
			\sup\limits_{h \in \hat{\mathcal{M}}} \left|L_{\tilde{\mathcal{D}}_{M}}(h) - L(h)\right| & \text{if learning with an independent sample}\\
			\sup\limits_{h \in \hat{\mathcal{M}}} \left|L_{\mathcal{D}_{N}}(h) - L(h)\right| & \text{if learning by reusing}
		\end{cases},
	\end{align}
	which represents how well one can estimate the loss uniformly on $\hat{\mathcal{M}}$ by the empirical risk under $\tilde{\mathcal{D}}_{M}$ and $\mathcal{D}_{N}$, respectively.
	
	\begin{remark}
		We assume that the suprema in \eqref{typeIe} are $(\Omega,\mathcal{S})$-measurable, so it is meaningful to calculate probabilities of events which involve them. We also assume throughout this paper that these suprema, over any fixed $\mathcal{M} \in \mathbb{C}(\mathcal{H}),$ are also $(\Omega,\mathcal{S})$-measurable.
	\end{remark}
	
	\subsection{Deviation bounds and main results}
	
	In classical learning theory, or VC theory, there are two kinds of estimation errors, whose tail probabilities are
	\begin{align}
		\label{GE1}
		\mathbb{P}&\left(\sup\limits_{h \in \mathcal{H}} \text{\textbar}L_{\mathcal{D}_{N}}(h) - L(h)\text{\textbar} > \epsilon\right) 
	\end{align}
	and
	\begin{align}
		\label{GE2}
		\mathbb{P}&\left(L(\hat{h}^{\mathcal{D}_{N}}) - L(h^{\star}) > \epsilon\right),
	\end{align}
	for $\epsilon > 0$. In the terminology of this paper, they are called, respectively, type I and II estimation error, when the target hypotheses of $\mathcal{H}$ are estimated by minimizing the empirical risk under sample $\mathcal{D}_{N}$.
	
	When the loss function is bounded, the rate of convergence of \eqref{GE2} to zero is decreasing on the VC dimension of $\mathcal{H}$. This is the main result of VC theory, which may be stated as follows, and is a consequence of Corollaries \ref{cor3TypeI} and \ref{cor1TypeII}. Observe that the bounds do not depend on $P$, and are valid for any distribution $Z$ may have, that is, are distribution-free. A result analogous to Proposition \ref{propVC} will be stated for unbounded loss functions in Section \ref{SecUnbounded}. In what follows, a.s. stands for almost sure convergence or convergence with probability one.
	
	\begin{proposition}
		\label{propVC}
		Assume the loss function is bounded and fix a hypotheses space $\mathcal{H}$ with $d_{VC}(\mathcal{H}) < \infty$. There exist sequences $\{B^{I}_{N,\epsilon}: N \geq 1\}$ and $\{B^{II}_{N,\epsilon}: N \geq 1\}$ of positive real-valued increasing functions with domain $\mathbb{Z}_{+}$ satisfying
		\begin{equation*}
			\lim\limits_{N \to \infty} B^{I}_{N,\epsilon}(k) = \lim\limits_{N \to \infty} B^{II}_{N,\epsilon}(k) = 0,
		\end{equation*}
		for all $\epsilon > 0$ and $k \in \mathbb{Z}_{+}$ fixed, such that
		\begin{align*}
			\mathbb{P}&\left(\sup\limits_{h \in \mathcal{H}} \text{\textbar}L_{\mathcal{D}_{N}}(h) - L(h)\text{\textbar} > \epsilon\right) \leq B^{I}_{N,\epsilon}(d_{VC}(\mathcal{H}))\\
			\mathbb{P}&\left(L(\hat{h}^{\mathcal{D}_{N}}) - L(h^{\star}) > \epsilon\right) \leq B^{II}_{N,\epsilon}(d_{VC}(\mathcal{H})).
		\end{align*}	
		Furthermore, the following holds:
		\begin{align*}
			&\sup\limits_{h \in \mathcal{H}} \text{\textbar}L_{\mathcal{D}_{N}}(h) - L(h)\text{\textbar} \xrightarrow[N \to \infty]{a.s.} 0 & & L(\hat{h}^{\mathcal{D}_{N}}) - L(h^{\star}) \xrightarrow[N \to \infty]{a.s.} 0.
		\end{align*}
	\end{proposition}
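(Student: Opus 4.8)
The plan is to obtain all four assertions from the classical Vapnik--Chervonenkis uniform deviation inequality for bounded loss, reviewed in Appendix \ref{apVCtheory}, together with Sauer's bound on the shattering coefficient and a Borel--Cantelli argument; the two estimation errors are handled in the order they appear, type I first and then type II by reduction to type I.

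For the type I bound I would invoke the bounded-loss VC inequality (Corollary \ref{cor3TypeI}): there are constants $c_{1},c_{2}>0$ (depending only on the loss bound) such that, writing $\mathcal{S}_{\mathcal{H}}$ for the shattering coefficient of $\mathcal{H}$ under $\ell$,
\[
\mathbb{P}\!\left(\sup_{h\in\mathcal{H}}\text{\textbar}L_{\mathcal{D}_{N}}(h)-L(h)\text{\textbar}>\epsilon\right)\le c_{1}\,\mathcal{S}_{\mathcal{H}}(2N)\,e^{-c_{2}N\epsilon^{2}} .
\]
By Sauer's lemma $\mathcal{S}_{\mathcal{H}}(2N)\le\sum_{i=0}^{d_{VC}(\mathcal{H})}\binom{2N}{i}$, and the quantity $\sum_{i=0}^{k}\binom{2N}{i}$ is nondecreasing in the integer $k$ since each additional summand is nonnegative. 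I would therefore set
\[
B^{I}_{N,\epsilon}(k):=\min\!\left\{1,\ c_{1}\Big(\textstyle\sum_{i=0}^{k}\binom{2N}{i}\Big)e^{-c_{2}N\epsilon^{2}}\right\},
\]
defined for every $k\in\mathbb{Z}_{+}$ (and with $\epsilon$ rescaled by the loss bound if $\ell$ is not $[0,1]$-valued). This is a positive, increasing function of $k$ on $\mathbb{Z}_{+}$; for fixed $k$ and $\epsilon$ the factor $\sum_{i=0}^{k}\binom{2N}{i}=O(N^{k})$ is dominated by $e^{-c_{2}N\epsilon^{2}}$, so $B^{I}_{N,\epsilon}(k)\to0$ as $N\to\infty$; and evaluating at $k=d_{VC}(\mathcal{H})$ yields exactly the first displayed bound in the statement.

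For the type II bound I would use the usual decomposition, using that the empirical-risk difference is nonpositive by the definition of $\hat{h}^{\mathcal{D}_{N}}$ in \eqref{ERMH}:
\[
L(\hat{h}^{\mathcal{D}_{N}})-L(h^{\star})\le\big[L(\hat{h}^{\mathcal{D}_{N}})-L_{\mathcal{D}_{N}}(\hat{h}^{\mathcal{D}_{N}})\big]+\big[L_{\mathcal{D}_{N}}(h^{\star})-L(h^{\star})\big]\le2\sup_{h\in\mathcal{H}}\text{\textbar}L_{\mathcal{D}_{N}}(h)-L(h)\text{\textbar}.
\]
Hence $\mathbb{P}(L(\hat{h}^{\mathcal{D}_{N}})-L(h^{\star})>\epsilon)\le B^{I}_{N,\epsilon/2}(d_{VC}(\mathcal{H}))$, so I would define $B^{II}_{N,\epsilon}(k):=B^{I}_{N,\epsilon/2}(k)$, which inherits positivity, monotonicity in $k$, and convergence to $0$; this is the content of Corollary \ref{cor1TypeII}.

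Finally, for the almost-sure statements, fix $\epsilon>0$: both $B^{I}_{N,\epsilon}(d_{VC}(\mathcal{H}))$ and $B^{II}_{N,\epsilon}(d_{VC}(\mathcal{H}))$ have the form (polynomial in $N$)$\times e^{-c_{2}N\epsilon^{2}/4}$ or faster, hence are summable over $N\ge1$, so by Borel--Cantelli the event $\limsup_{N}\{\sup_{h}\text{\textbar}L_{\mathcal{D}_{N}}(h)-L(h)\text{\textbar}>\epsilon\}$ has probability zero, and likewise for type II; intersecting over $\epsilon=1/m$, $m\in\mathbb{Z}_{+}$, gives the two a.s.\ convergences. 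The only genuinely delicate step is the first one: one must extract from Appendix \ref{apVCtheory} the precise bounded-loss inequality with its explicit dependence on $N$ and on $d_{VC}(\mathcal{H})$, and check that the resulting $B^{I}_{N,\epsilon}$ is simultaneously monotone in the VC-dimension argument and has a polynomial-times-exponential (hence summable) tail in $N$; the reduction $\le2\sup$ and the Borel--Cantelli step are then routine.
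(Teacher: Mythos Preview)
Your proposal is correct and follows essentially the same route as the paper: the paper derives Proposition \ref{propVC} directly from Corollaries \ref{cor3TypeI} and \ref{cor1TypeII}, i.e., the bounded-loss VC inequality combined with the Sauer-type bound on the shatter coefficient for type I, the $2\sup$ reduction (Lemma \ref{lemmaTypeItoII}) for type II, and Borel--Cantelli for the almost-sure statements. Your use of the binomial-sum form of Sauer's lemma (rather than the $d_{VC}(1+\ln(N/d_{VC}))$ form in Theorem \ref{theorem_shaterDVC}) is a cosmetic variant and has the mild advantage that monotonicity of $B^{I}_{N,\epsilon}(k)$ in $k$ is immediate over all of $\mathbb{Z}_{+}$.
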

	
	The sequences $\{B^{I}_{N,\epsilon}: N \geq 1\}$ and $\{B^{II}_{N,\epsilon}: N \geq 1\}$ are what we call the deviation bounds of learning via empirical risk minimization in $\mathcal{H}$ and we refer to Appendix \ref{ApTypeI} for explicit formulas for them. The main results of this paper are the convergence of $\hat{\mathcal{M}}$ to $\mathcal{M}^{\star}$ with probability one and deviation bounds for types I, II, III, and IV estimation errors when learning via model selection with bounded or unbounded loss functions.
	
	In particular, it will follow that the established deviation bounds for the type IV estimation error of learning via model selection may be tighter than those for the type II estimation error of learning directly on $\mathcal{H}$ via empirical risk minimization, and hence one may have a lower risk by learning via model selection. This means that by introducing a bias III, which converges to zero, we may decrease the variance II of the learning process, so it is more efficient to learn via model selection. In Section \ref{SecEnhanceGen}, we further discuss when this is the case and how domain knowledge can be leveraged to choose a class of candidate models in which learning via model selection may be, in general, better than learning via ERM in $\mathcal{H}$.
	
	In the following sections, we treat the cases of bounded and unbounded loss functions.
	
	\begin{remark}
		Throughout this paper, we assume that functions such as $B_{N,\epsilon}^{I}(k)$ and $B_{N,\epsilon}^{II}(k)$ are decreasing on $\epsilon$ and $N$ for $k$ fixed.
	\end{remark}
	
	\begin{remark}
		The results of this paper for bounded loss functions hold for any distribution-free complexity, such that Proposition \ref{propVC} remains true. In fact, the results hold by assuming the existence of bounds $B_{N,\epsilon}^{I}(d(\mathcal{H}))$ and $B_{N,\epsilon}^{II}(d(\mathcal{H}))$ for \eqref{GE1} and \eqref{GE2} depending on a complexity measure $d(\mathcal{H})$. For instance, bounds based on the Rademacher and Gaussian complexities \cite{bartlett2002rademacher} or the fat-shattering dimension \cite{bartlett1994fat} could be considered. We stated the results for the VC-dimension to be consistent with the unbounded loss results, since in that case, we could only show that similar bounds hold for the VC dimension (cf. Proposition \ref{propVC2}) by extending results in \cite{cortes2019}.
	\end{remark}
	
	\section{Learning via model selection with bounded loss functions}
	\label{boundedL}
	
	In Section \ref{SecConvTM}, we show the convergence of $\hat{\mathcal{M}}$ to the target model $\mathcal{M}^{\star}$ with probability one, and in Section \ref{SecConvOnMhat} we establish deviation bounds for the estimation errors of learning via model selection when the loss function is bounded. From now on, we assume there exists a constant $C > 0$ such that
	\begin{align*}
		0 \leq \ell(z,h) \leq C & & \text{ for all } z \in \mathcal{Z}, h \in \mathcal{H}.
	\end{align*}
	In this section, we focus on learning with an independent sample and in Section \ref{SecReuse} we briefly present analogous results for learning by reusing.
	
	\subsection{Convergence to the target model}
	\label{SecConvTM}
	
	We start by studying a result weaker than the convergence of $\hat{\mathcal{M}}$ to $\mathcal{M}^{\star}$, that is, the convergence of $L(\hat{\mathcal{M}})$ to $L(\mathcal{M}^{\star})$.
	
	In order to have $L(\hat{\mathcal{M}}) = L(\mathcal{M}^{\star})$, one does not need to know exactly $L(\mathcal{M})$ for all $\mathcal{M} \in \mathbb{C}(\mathcal{H})$, i.e., one does not need $\hat{L}(\mathcal{M}) = L(\mathcal{M})$, for all $\mathcal{M} \in \mathbb{C}(\mathcal{H})$. We argue that it suffices to have $\hat{L}(\mathcal{M})$ close enough to $L(\mathcal{M})$, for all $\mathcal{M} \in \mathbb{C}(\mathcal{H})$, so the global minima of $\mathbb{C}(\mathcal{H})$ will have the same risk as $\mathcal{M}^{\star}$, even if it is not possible to properly estimate their risk. This ``close enough'' depends on $P$, hence is not distribution-free, and is given by the \textit{maximum discrimination error} (MDE) of $\mathbb{C}(\mathcal{H})$ under $P$, which we define as
	\begin{equation*}
		\epsilon^{\star} = \epsilon^{\star}(\mathbb{C}(\mathcal{H}),P) \coloneqq \min\limits_{\substack{\mathcal{M} \in \mathbb{C}(\mathcal{H})\\L(\mathcal{M}) > L(\mathcal{M}^{\star})}} L(\mathcal{M}) - L(\mathcal{M}^{\star}).
	\end{equation*}
	
	The MDE is the minimum difference between the out-of-sample risk of a target hypotheses and the best hypotheses in a model which does not contain a target. In other words, it is the difference between the risk of the best model $\mathcal{M}^{\star}$ and the second best. The meaning of $\epsilon^{\star}$ is depicted in Figure \ref{epsilonstar}.
	
	\begin{figure}[ht]
		\centering
		\includegraphics[width=\linewidth]{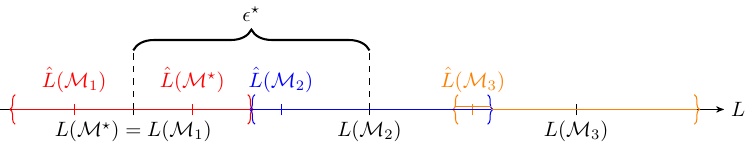}
		\caption{The risks of the equivalence classes (cf. \eqref{equiv_class}) of $\mathbb{C}(\mathcal{H})$ in ascending order. The MDE $\epsilon^{\star}$ is the difference between the risk of the target class $\mathcal{M}^{\star}$, and the second best $\mathcal{M}_{2}$. The colored intervals represent a distance of $\epsilon^{\star}/2$ from the out-of-sample risk of each model, and the colored estimated risks $\hat{L}$ illustrate a case such that the estimated risk is within $\epsilon^{\star}/2$ of the out-of-sample risk for all models. The class $\mathcal{M}_{1}$ has the same risk as $\mathcal{M}^{\star}$, but has a smaller estimated risk, and, by the definition of $\mathcal{M}^{\star}$, greater VC dimension. Note from the representation that, if one can estimate $\hat{L}$ within a margin of error of $\epsilon^{\star}/2$, then $\hat{\mathcal{M}}$ will be a model with the same risk as $\mathcal{M}^{\star}$, in this case $\mathcal{M}_{1}$ (cf. Proposition \ref{proposition_principal}).} \label{epsilonstar}
	\end{figure}
	
	The MDE is defined only if there exists at least one $\mathcal{M} \in \mathbb{C}(\mathcal{H})$ such that $h^{\star} \cap \mathcal{M} = \emptyset$, i.e., there is a subset in $\mathbb{C}(\mathcal{H})$ that does not contain a target hypotheses. If $h^{\star} \cap \mathcal{M} \neq \emptyset$ for all $\mathcal{M} \in \mathbb{C}(\mathcal{H})$, then type III estimation error is zero, and type IV reduces to type II. From this point, we assume that $\epsilon^{\star}$ is well-defined.
	
	The terminology MDE is used because we can show that a fraction of $\epsilon^{\star}$ is the greatest error one can commit when estimating $L(\mathcal{M})$ by $\hat{L}(\mathcal{M})$, for all $\mathcal{M} \in \mathbb{C}(\mathcal{H})$, in order for $L(\mathcal{\hat{M}})$ to be equal to $L(\mathcal{M}^{\star})$. This is the result of the next proposition.
	
	\begin{proposition}
		\label{proposition_principal}
		Assume there exists $\delta > 0$ such that
		\begin{equation}
			\label{cond_prop_principal}
			\mathbb{P}\left(\max\limits_{i \in \mathcal{J}} \text{\textbar}L(\mathcal{M}_{i}) - \hat{L}(\mathcal{M}_{i})\text{\textbar} < \epsilon^{\star}/2\right) \geq 1 - \delta.
		\end{equation}
		Then
		\begin{equation}
			\label{prob_equal}
			\mathbb{P}\left(L(\mathcal{\hat{M}}) = L(\mathcal{M}^{\star})\right) \geq 1-\delta.
		\end{equation}	
	\end{proposition}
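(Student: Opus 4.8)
The plan is to show that on the event $A \coloneqq \{\max_{i \in \mathcal{J}} |L(\mathcal{M}_i) - \hat{L}(\mathcal{M}_i)| < \epsilon^\star/2\}$, every class in $\hat{\mathcal{L}}$ has out-of-sample risk equal to $L(\mathcal{M}^\star)$; since $\hat{\mathcal{M}}$ is one such class by \eqref{Ghat}, this yields $L(\hat{\mathcal{M}}) = L(\mathcal{M}^\star)$ on $A$, and \eqref{prob_equal} follows from \eqref{cond_prop_principal}. First I would fix $\omega \in A$ and take any model $\mathcal{N} \in \mathbb{C}(\mathcal{H})$ whose class lies in $\hat{\mathcal{L}}$, i.e. $\hat{L}(\mathcal{N}) \leq \hat{L}(\mathcal{M})$ for all $\mathcal{M} \in \mathbb{C}(\mathcal{H})$. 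The goal is to prove $L(\mathcal{N}) = L(\mathcal{M}^\star)$.

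The argument is a short chain of inequalities. Pick a representative $\mathcal{M}^{\star}$ of the target class (any model in $\mathbb{C}(\mathcal{H})$ containing a target hypothesis, so $L(\mathcal{M}^\star) = L(h^\star) = \min_{\mathcal{M}} L(\mathcal{M})$). On $A$ we have $\hat{L}(\mathcal{M}^\star) < L(\mathcal{M}^\star) + \epsilon^\star/2$, and since $\mathcal{N}$ is a minimizer of $\hat{L}$, $\hat{L}(\mathcal{N}) \leq \hat{L}(\mathcal{M}^\star) < L(\mathcal{M}^\star) + \epsilon^\star/2$. Again on $A$, $L(\mathcal{N}) < \hat{L}(\mathcal{N}) + \epsilon^\star/2 < L(\mathcal{M}^\star) + \epsilon^\star$. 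On the other hand, $L(\mathcal{N}) \geq L(\mathcal{M}^\star)$ because $L(\mathcal{M}^\star)$ is the minimum risk over $\mathbb{C}(\mathcal{H})$. So $L(\mathcal{M}^\star) \leq L(\mathcal{N}) < L(\mathcal{M}^\star) + \epsilon^\star$. Now invoke the definition of $\epsilon^\star$: it is the smallest strictly-positive gap $L(\mathcal{M}) - L(\mathcal{M}^\star)$ over models $\mathcal{M}$ with $L(\mathcal{M}) > L(\mathcal{M}^\star)$. Hence there is no model with risk strictly between $L(\mathcal{M}^\star)$ and $L(\mathcal{M}^\star) + \epsilon^\star$, which forces $L(\mathcal{N}) = L(\mathcal{M}^\star)$.

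Finally I would note that $\hat{\mathcal{M}}$, defined in \eqref{Ghat} as a minimizer of $d_{VC}$ over $\hat{\mathcal{L}}$, is in particular a class in $\hat{\mathcal{L}}$, so the above gives $L(\hat{\mathcal{M}}) = L(\mathcal{M}^\star)$ on $A$. Therefore $\mathbb{P}(L(\hat{\mathcal{M}}) = L(\mathcal{M}^\star)) \geq \mathbb{P}(A) \geq 1 - \delta$. A small point worth being careful about is that the equivalence relation $\hat{\sim}$ and the quotient $\nicefrac{\mathbb{C}(\mathcal{H})}{\hat{\sim}}$ are random objects, so one should phrase the argument pointwise in $\omega \in A$ and note that $L$ is constant on $\sim$-classes, so "$L(\hat{\mathcal{M}})$" is well defined. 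I do not anticipate a serious obstacle; the only delicate step is making sure the strict inequalities are handled correctly so that the defining-minimality property of $\epsilon^\star$ can be applied, and confirming the measurability/well-definedness of the random quotient causes no trouble, which is already assumed in the setup.
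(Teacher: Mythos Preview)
Your proposal is correct and follows essentially the same approach as the paper. The paper argues the contrapositive direction---showing that on the event $A$ every model with $L(\mathcal{M}_i)>L(\mathcal{M}^\star)$ satisfies $\hat{L}(\mathcal{M}_i)>\hat{L}(\mathcal{M}^\star)$ via the inequality $\hat{L}(\mathcal{M}_i)-\hat{L}(\mathcal{M}^\star)>L(\mathcal{M}_i)-L(\mathcal{M}^\star)-\epsilon^\star\geq 0$---whereas you argue directly that any $\hat{L}$-minimizer $\mathcal{N}$ must have $L(\mathcal{N})\in[L(\mathcal{M}^\star),L(\mathcal{M}^\star)+\epsilon^\star)$ and then invoke the gap; the two arguments are logically equivalent and use the same two $\epsilon^\star/2$ bounds together with the defining property of $\epsilon^\star$.
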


	\begin{remark}
		Since there may exist $\mathcal{M} \in \ \nicefrac{\mathbb{C}(\mathcal{H})}{\sim}$ with $L(\mathcal{M}) = L(\mathcal{\mathcal{M}^{\star}})$ and $d_{VC}(\mathcal{M}) > d_{VC}(\mathcal{M}^{\star})$, condition \eqref{cond_prop_principal} guarantees only that the estimated risk of both $\mathcal{M}$ and $\mathcal{M}^{\star}$ is smaller than the estimated risk of any model with risk greater than theirs, but it may happen that $\hat{L}(\mathcal{M}) < \hat{L}(\mathcal{M}^{\star})$ (see Figure \ref{epsilonstar} for an example). In this instance, we have $\hat{\mathcal{M}} = \mathcal{M}$ and $L(\hat{\mathcal{M}}) = L(\mathcal{M}^{\star})$.
	\end{remark}
	
	Recall we are assuming that $\hat{L}$ is of the form \eqref{form_Lhat}. In this case, we may obtain a bound for \eqref{prob_equal} depending on $\epsilon^{\star}$, on $d_{VC}(\mathbb{C}(\mathcal{H}))$, and on bounds for tail probabilities of type I estimation error under each validation and training sample (cf. Proposition \ref{propVC}). These bounds also depend on the number of maximal models of $\mathbb{C}(\mathcal{H})$, which are models in
	\begin{align*}
		\text{Max } \mathbb{C}(\mathcal{H}) = \left\{\mathcal{M} \in \mathbb{C}(\mathcal{H}): \text{ if } \mathcal{M} \subset \mathcal{M}^{\prime} \in \mathbb{C}(\mathcal{H}) \text{ then } \mathcal{M} = \mathcal{M}^{\prime} \right\},
	\end{align*}
	i.e., models not contained in any element of $\mathbb{C}(\mathcal{H})$ besides themselves. We denote
	\begin{equation*}
		\mathfrak{m}(\mathbb{C}(\mathcal{H})) = \text{\textbar}\text{Max } \mathbb{C}(\mathcal{H})\text{\textbar}
	\end{equation*}
	the number of maximal models in $\mathbb{C}(\mathcal{H})$. We have the following rate of convergence of $L(\hat{\mathcal{M}})$ to $L(\mathcal{M}^{\star})$, and condition for $\hat{\mathcal{M}}$ to converge to $\mathcal{M}^{\star}$ with probability one.
	
	\begin{theorem}
		\label{theorem_principal_convergence}
		Assume the loss function is bounded. For each $\epsilon > 0$, let $\{B_{N,\epsilon}: N \geq 1\}$ and $\{\hat{B}_{N,\epsilon}: N \geq 1\}$ be sequences of positive real-valued increasing functions with domain $\mathbb{Z}_{+}$ satisfying
		\begin{equation*}
			\lim\limits_{N \to \infty} B_{N,\epsilon}(k) = \lim\limits_{N \to \infty} \hat{B}_{N,\epsilon}(k) = 0,
		\end{equation*}
		for all $\epsilon > 0$ and $k \in \mathbb{Z}_{+}$ fixed, and such that
		\begin{align*}
			\max_{j} \mathbb{P}\left(\sup\limits_{h \in \mathcal{M}} \text{\textbar}L_{\mathcal{D}_{N}^{(j)}}(h) - L(h)\text{\textbar} > \epsilon \right) \leq B_{N_{t},\epsilon}(d_{VC}(\mathcal{M})) & \text{ and } &\\ \nonumber 
			\max_{j} \mathbb{P}\left(\sup\limits_{h \in \mathcal{M}} \text{\textbar}\hat{L}^{(j)}(h) - L(h)\text{\textbar} > \epsilon \right) \leq \hat{B}_{N_{v},\epsilon}(d_{VC}(\mathcal{M})), & & 
		\end{align*}
		for all $\mathcal{M} \in \mathbb{C}(\mathcal{H})$, recalling that $L_{\mathcal{D}_{N}^{(j)}}$ and $\hat{L}^{(j)}$ represent the empirical risk under the $j$-th training and validation samples, respectively. Let $\mathcal{\hat{M}} \in \mathbb{C}(\mathcal{H})$ be a random model learned by $\mathbb{M}_{\mathbb{C}(\mathcal{H})}$. Then,
		\begin{align}
			\label{bound_pM} \nonumber
			\mathbb{P}\left(L(\hat{\mathcal{M}}) \neq L(\mathcal{M}^{\star})\right) &\leq m \sum_{\mathcal{M} \in \text{Max } \mathbb{C}(\mathcal{H})} \left[B_{N_{t},\epsilon^{\star}/8}(d_{VC}(\mathcal{M})) + \hat{B}_{N_{v},\epsilon^{\star}/4}(d_{VC}(\mathcal{M}))\right]\\
			&\leq m \ \mathfrak{m}(\mathbb{C}(\mathcal{H})) \left[B_{N_{t},\epsilon^{\star}/8}(d_{VC}(\mathbb{C}(\mathcal{H}))) + \hat{B}_{N_{v},\epsilon^{\star}/4}(d_{VC}(\mathbb{C}(\mathcal{H})))\right],
		\end{align}
		in which $m$ is the number of pairs considered to calculate \eqref{form_Lhat}. Furthermore, if
		\begin{align}
			\label{as_conv}
			\max_{\mathcal{M} \in \mathbb{C}(\mathcal{H})} \max_{j} \sup\limits_{h \in \mathcal{M}} \text{\textbar}L_{\mathcal{D}_{N}^{(j)}}(h) - L(h)\text{\textbar} \xrightarrow{\text{a.s.}} 0 & \text{ and } &\\ \nonumber
			\max_{\mathcal{M} \in \mathbb{C}(\mathcal{H})} \max_{j} \sup\limits_{h \in \mathcal{M}} \text{\textbar}\hat{L}^{(j)}(h) - L(h)\text{\textbar} \xrightarrow{\text{a.s.}} 0, & & 
		\end{align}
		then
		\begin{equation*}
			\lim_{N \rightarrow \infty} \mathbb{P}\left(\hat{\mathcal{M}} = \mathcal{M}^{\star}\right) = 1.
		\end{equation*}
	\end{theorem}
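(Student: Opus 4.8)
The plan is to deduce both parts from Proposition~\ref{proposition_principal}: it suffices to show that the right-hand side of \eqref{bound_pM} bounds $\mathbb{P}\big(\max_{i\in\mathcal{J}}|L(\mathcal{M}_i)-\hat{L}(\mathcal{M}_i)|\ge\epsilon^{\star}/2\big)$ and then to apply the proposition with $\delta$ equal to that quantity; for the almost-sure statement it suffices to strengthen the control of the same discrepancy to an a.s.\ vanishing statement and read off the conclusion of Proposition~\ref{proposition_principal} pathwise.

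The heart of the argument is a pointwise bound on $|L(\mathcal{M})-\hat{L}(\mathcal{M})|$, for a fixed $\mathcal{M}\in\mathbb{C}(\mathcal{H})$. Using $L(\mathcal{M})=L(h^{\star}_{\mathcal{M}})$ and the form \eqref{form_Lhat} of $\hat{L}$, write $\hat{L}(\mathcal{M})-L(\mathcal{M})=\frac1m\sum_{j=1}^{m}\big(\hat{L}^{(j)}(\hat{h}^{(j)}_{\mathcal{M}})-L(h^{\star}_{\mathcal{M}})\big)$ and, in each summand, insert $L(\hat{h}^{(j)}_{\mathcal{M}})$:
\[
\big|\hat{L}^{(j)}(\hat{h}^{(j)}_{\mathcal{M}})-L(h^{\star}_{\mathcal{M}})\big|\le \sup_{h\in\mathcal{M}}\big|\hat{L}^{(j)}(h)-L(h)\big|+\big(L(\hat{h}^{(j)}_{\mathcal{M}})-L(h^{\star}_{\mathcal{M}})\big).
\]
The first term is the type~I deviation of the $j$-th validation sample over $\mathcal{M}$; the second is nonnegative by definition of $h^{\star}_{\mathcal{M}}$ and, since $\hat{h}^{(j)}_{\mathcal{M}}$ minimizes the empirical risk on $\mathcal{M}$ under the $j$-th training sample $\mathcal{D}_N^{(j)}$, the classical type~II inequality $L(\hat{h}^{(j)}_{\mathcal{M}})-L(h^{\star}_{\mathcal{M}})\le 2\sup_{h\in\mathcal{M}}|L_{\mathcal{D}_N^{(j)}}(h)-L(h)|$ of VC theory (cf.\ Corollary~\ref{cor1TypeII} and Appendix~\ref{apVCtheory}) bounds it. Using the triangle inequality for the average,
\[
|L(\mathcal{M})-\hat{L}(\mathcal{M})|\le\frac1m\sum_{j=1}^{m}\Big(\sup_{h\in\mathcal{M}}|\hat{L}^{(j)}(h)-L(h)|+2\sup_{h\in\mathcal{M}}|L_{\mathcal{D}_N^{(j)}}(h)-L(h)|\Big).
\]

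Taking $\max_{i\in\mathcal{J}}$ on the left and using that for a fixed function the supremum over a model only grows along inclusions — so the maximum over $\mathbb{C}(\mathcal{H})$ equals the maximum over $\text{Max}\,\mathbb{C}(\mathcal{H})$ — let $v_j$ and $t_j$ denote, respectively, the $j$-th validation and training deviations maximized over the maximal models. Then $\{\max_i|L(\mathcal{M}_i)-\hat{L}(\mathcal{M}_i)|\ge\epsilon^{\star}/2\}\subseteq\bigcup_{j=1}^{m}\big(\{v_j\ge\epsilon^{\star}/4\}\cup\{t_j\ge\epsilon^{\star}/8\}\big)$, because $v_j+2t_j\ge\epsilon^{\star}/2$ forces $v_j\ge\epsilon^{\star}/4$ or $t_j\ge\epsilon^{\star}/8$; this is where the fractions $\epsilon^{\star}/8$ and $\epsilon^{\star}/4$ enter. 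A union bound over the $m$ folds and over the $\mathfrak{m}(\mathbb{C}(\mathcal{H}))$ maximal models, together with the hypothesized bounds $B_{N,\cdot}$ (training) and $\hat{B}_{N,\cdot}$ (validation) evaluated at $d_{VC}(\mathbb{C}(\mathcal{H}))$ — legitimate since these are increasing in that argument and $d_{VC}(\mathcal{M})\le d_{VC}(\mathbb{C}(\mathcal{H}))$ — produces precisely $m\,\mathfrak{m}(\mathbb{C}(\mathcal{H}))\big[B_{N,\epsilon^{\star}/8}(d_{VC}(\mathbb{C}(\mathcal{H})))+\hat{B}_{N,\epsilon^{\star}/4}(d_{VC}(\mathbb{C}(\mathcal{H})))\big]$, and Proposition~\ref{proposition_principal} yields \eqref{bound_pM}. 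For the almost-sure part, the same pointwise bound gives $\max_i|L(\mathcal{M}_i)-\hat{L}(\mathcal{M}_i)|\le\frac1m\sum_j(v_j+2t_j)$, and \eqref{as_conv} makes every $v_j$ and $t_j$ tend to $0$ a.s.; hence this maximum tends to $0$ a.s., so outside a null set and for $N$ large the event of \eqref{cond_prop_principal} holds. On that event the deterministic reasoning behind Proposition~\ref{proposition_principal} shows $\hat{L}$ ranks every model of risk exceeding $L(\mathcal{M}^{\star})$ strictly above the models of minimal risk; combined with the definition \eqref{Ghat} of $\mathbb{M}_{\mathbb{C}(\mathcal{H})}$ (least estimated risk, ties broken by least VC dimension) and the characterization of $\mathcal{M}^{\star}$ as the minimal-VC-dimension class of minimal risk, this forces $\hat{\mathcal{M}}=\mathcal{M}^{\star}$ for all large $N$, a.s., whence $\mathbb{P}(\hat{\mathcal{M}}=\mathcal{M}^{\star})\to1$.

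I expect the quantitative step to be the main obstacle: routing each per-fold discrepancy through the within-model type~II error (which creates the factor $2$ and, after the $v_j+2t_j$ splitting, the specific fractions of $\epsilon^{\star}$), and making the two nested union bounds — over the $m$ folds and over $\text{Max}\,\mathbb{C}(\mathcal{H})$ — interact correctly with the monotonicity of $B_{N,\cdot}$ and $\hat{B}_{N,\cdot}$, so that only $\mathfrak{m}(\mathbb{C}(\mathcal{H}))$ and $d_{VC}(\mathbb{C}(\mathcal{H}))$ remain rather than a sum over all of $\mathbb{C}(\mathcal{H})$. In the a.s.\ part the subtle point is upgrading $L(\hat{\mathcal{M}})=L(\mathcal{M}^{\star})$ to the stronger identity $\hat{\mathcal{M}}=\mathcal{M}^{\star}$, which rests entirely on the VC-dimension tie-break in \eqref{Ghat} and on the minimality built into the target model.
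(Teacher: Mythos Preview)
Your derivation of \eqref{bound_pM} follows the paper's route: you insert $L(\hat h^{(j)}_{\mathcal{M}})$ to split each per-fold discrepancy into a validation type~I term and the training type~II term $L(\hat h^{(j)}_{\mathcal{M}})-L(h^{\star}_{\mathcal{M}})$, bound the latter by $2\sup_{h\in\mathcal{M}}|L_{\mathcal{D}_N^{(j)}}(h)-L(h)|$, then split at $\epsilon^{\star}/4$ and $\epsilon^{\star}/8$ and pass to maximal models by monotonicity. The paper organises the same steps at the level of tail probabilities rather than through a single pointwise inequality, but the two arguments are interchangeable.

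There is, however, a genuine gap in your almost-sure step. From $\max_i|\hat L(\mathcal{M}_i)-L(\mathcal{M}_i)|<\epsilon^{\star}/2$ you correctly conclude that every model with $L(\mathcal{M})>L(\mathcal{M}^{\star})$ satisfies $\hat L(\mathcal{M})>\hat L(\mathcal{M}^{\star})$, so $\hat{\mathcal{M}}$ lies among the models of minimal true risk. But this does \emph{not} force $\hat{\mathcal{M}}=\mathcal{M}^{\star}$: among the models with $L(\mathcal{M})=L(\mathcal{M}^{\star})$ there may be one of strictly larger VC dimension whose estimated risk is strictly below $\hat L(\mathcal{M}^{\star})$, and then \eqref{Ghat} selects that one --- exactly the scenario highlighted in the Remark following Proposition~\ref{proposition_principal} and in Figure~\ref{epsilonstar}. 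Your tie-break reasoning would only apply if $\hat L$ were \emph{constant} across the minimal-risk models, which mere $\epsilon^{\star}/2$-closeness does not give. So the upgrade from $L(\hat{\mathcal{M}})=L(\mathcal{M}^{\star})$ to $\hat{\mathcal{M}}=\mathcal{M}^{\star}$ is unjustified as written.

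The paper handles the almost-sure part differently: from \eqref{as_conv} it obtains $\hat L(\mathcal{M})\to L(\mathcal{M})$ a.s.\ for every $\mathcal{M}$, and then appeals to the inclusion $\{\max_{\mathcal{M}}|\hat L(\mathcal{M})-L(\mathcal{M})|=0\}\subset\{\hat{\mathcal{M}}=\mathcal{M}^{\star}\}$, noting that when $\hat L\equiv L$ the equivalence relations $\hat\sim$ and $\sim$ coincide and hence so do the two definitions. That is, the paper relies on \emph{exact} equality of estimated and true risks (not just $\epsilon^{\star}/2$-closeness) to obtain the stronger conclusion $\hat{\mathcal{M}}=\mathcal{M}^{\star}$ rather than only $L(\hat{\mathcal{M}})=L(\mathcal{M}^{\star})$.
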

	
	A bound for $\mathbb{P}(L(\hat{\mathcal{M}}) \neq L(\mathcal{M}^{\star}))$, and the almost sure convergence of $\hat{\mathcal{M}}$ to $\mathcal{M}^{\star}$ in the case of k-fold cross-validation, follow from Proposition \ref{propVC} by taking $N = N_{t} + N_{v}$ the total of the training and validation samples, with both sample sizes tending to infinity. Analogously, we may obtain a bound when an independent validation sample is considered. This result is stated in the next theorem.
	
	\begin{theorem}
		\label{CVModelconvergence}
		Assume the loss function is bounded. If $\hat{L}$ is given by k-fold cross-validation or by an independent validation sample, then $\hat{\mathcal{M}}$ converges with probability one to $\mathcal{M}^{\star}$.
	\end{theorem}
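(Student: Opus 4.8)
The plan is to derive Theorem~\ref{CVModelconvergence} from Theorem~\ref{theorem_principal_convergence}: for each of the two estimators it suffices to supply admissible bounding sequences $\{B_{N,\epsilon}\}$ and $\{\hat{B}_{N,\epsilon}\}$ and to check the almost-sure hypothesis~\eqref{as_conv}, and both tasks amount to applying the bounded-loss VC estimates of Proposition~\ref{propVC} to each individual training and validation sample, each of which is a bona fide i.i.d.\ sample from $P$ of a known size.

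I would begin with the sample-size bookkeeping. For $k$-fold cross-validation, $m=k$; the $j$-th training sample $\mathcal{D}_N\setminus\mathcal{D}_N^{(j)}$ is i.i.d.\ of size $(k-1)n$ and the $j$-th validation sample $\mathcal{D}_N^{(j)}$ is i.i.d.\ of size $n$, with $N=kn$ and $k$ fixed, so both sizes diverge with $N$. For the hold-out estimator, $m=1$, the training sample has size $N-V_N$ and the validation sample size $V_N$, both of which diverge by the standing assumption $\lim_{N\to\infty}V_N=\lim_{N\to\infty}(N-V_N)=\infty$. Regarding any model $\mathcal{M}\in\mathbb{C}(\mathcal{H})$ as its own hypotheses space, Proposition~\ref{propVC} supplies, for every sample size $r$, a function $B^{I}_{r,\epsilon}$ --- increasing in its argument, vanishing as $r\to\infty$, and universal in the sense of depending on the sample only through $r$ --- such that the tail at level $\epsilon$ of the uniform deviation of the size-$r$ empirical risk over $\mathcal{M}$ is at most $B^{I}_{r,\epsilon}(d_{VC}(\mathcal{M}))$. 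Taking $B_{N,\epsilon}:=B^{I}_{(k-1)n,\epsilon}$ and $\hat{B}_{N,\epsilon}:=B^{I}_{n,\epsilon}$ in the $k$-fold case (restricting to $N=kn$, which is all that is used) and $B_{N,\epsilon}:=B^{I}_{N-V_N,\epsilon}$, $\hat{B}_{N,\epsilon}:=B^{I}_{V_N,\epsilon}$ in the hold-out case then gives sequences meeting the hypotheses of Theorem~\ref{theorem_principal_convergence}, and, all $k$ training (resp.\ validation) samples sharing a common size, these also dominate the maxima over $j$ appearing there. To verify~\eqref{as_conv} I would use that $\mathbb{C}(\mathcal{H})$ is finite with all VC dimensions finite, so it is enough to show $\sup_{h\in\mathcal{M}}|L_{\mathrm{sub}}(h)-L(h)|\to 0$ almost surely for one model $\mathcal{M}$ and one index $j$ at a time --- this being the strong uniform law underlying the a.s.\ clause of Proposition~\ref{propVC} --- and then intersect the finitely many resulting almost-sure events. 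With both hypotheses of Theorem~\ref{theorem_principal_convergence} in force its conclusion applies, and, since~\eqref{as_conv} is itself an almost-sure statement, the argument behind the second part of that theorem runs path-by-path on that event and yields $\hat{\mathcal{M}}=\mathcal{M}^{\star}$ for all large $N$ almost surely, i.e.\ $\hat{\mathcal{M}}\to\mathcal{M}^{\star}$ with probability one.

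The step I expect to be the main obstacle is precisely the almost-sure (rather than in-probability) part of~\eqref{as_conv}: Proposition~\ref{propVC} is phrased for the canonical growing sample $\{Z_1,\dots,Z_N\}$, but the cross-validation folds $\mathcal{D}_N^{(j)}$ with $j\ge 2$ and the hold-out block $\mathcal{D}_N^{(\mathrm{val})}$ are sub-samples whose left endpoint moves with $N$, so they are not prefixes of a fixed i.i.d.\ sequence and the a.s.\ clause of Proposition~\ref{propVC} does not transfer to them verbatim. The remedy is to use the quantitative bound instead: $B^{I}_{r,\epsilon}$ is dominated by a polynomial in $r$ times $e^{-cr\epsilon^{2}}$, so for each fixed $\epsilon>0$ the probabilities that some such window sub-sample has uniform deviation above $\epsilon$ are summable over $N$ --- using that the fold sizes grow linearly in $N$ for $k$-fold and that $V_N,N-V_N\to\infty$ in the hold-out case --- whence Borel--Cantelli, applied after a union over the finitely many models and folds and an intersection over a sequence $\epsilon\downarrow 0$, delivers~\eqref{as_conv}. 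Everything else --- identifying $m$ and the sub-sample sizes, checking that they diverge, and the final invocation of Theorem~\ref{theorem_principal_convergence} --- is routine bookkeeping.
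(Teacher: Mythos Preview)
Your proposal is correct and follows essentially the same route as the paper: invoke Theorem~\ref{theorem_principal_convergence}, verify the almost-sure condition~\eqref{as_conv} by applying the explicit VC exponential bound (Corollary~\ref{cor3TypeI}, the concrete form of the first clause of Proposition~\ref{propVC}) to each training and validation sub-sample, union-bound over the finitely many folds and models, and conclude via Borel--Cantelli. Your discussion of the bounding sequences $B_{N,\epsilon},\hat B_{N,\epsilon}$ is not needed for the a.s.\ conclusion (only~\eqref{as_conv} is), but it does no harm; and your explicit remark that the sub-samples are not prefixes of a fixed sequence --- so one must use the quantitative bound plus Borel--Cantelli rather than the a.s.\ clause of Proposition~\ref{propVC} directly --- is a point the paper leaves implicit.
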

	
	Since $B_{N,\epsilon}(k)$ and $\hat{B}_{N,\epsilon}(k)$ are decreasing on $\epsilon$ and $N$ for $k$ fixed, it follows from \eqref{bound_pM} that a tighter bound for $\mathbb{P}(L(\hat{\mathcal{M}}) \neq L(\mathcal{M}^{\star}))$ is obtained if the training sample size is greater than the validation sample size ($N_{t} > N_{v}$) since the functions $B_{N_{t},\epsilon^{\star}/8}$ and $\hat{B}_{N_{v},\epsilon^{\star}/4}$ appear on the bound. Observe that this is the case in k-fold cross-validation.
	
	Moreover, from this bound it follows that, with a fixed sample size, we can have a tighter bound for $\mathbb{P}(L(\hat{\mathcal{M}}) \neq L(\mathcal{M}^{\star}))$ by choosing a family of candidate models with small $d_{VC}(\mathbb{C}(\mathcal{H}))$ and few maximal elements, while attempting to increase $\epsilon^{\star}$. Of course, there is a trade-off between $d_{VC}(\mathbb{C}(\mathcal{H}))$ and the number of maximal elements of $\mathbb{C}(\mathcal{H})$, the only known free quantities in bound $\eqref{bound_pM}$, since the sample size is fixed and $\epsilon^{\star}$ is unknown.
	
	\subsection{Deviation bounds for estimation errors on $\hat{\mathcal{M}}$}
	\label{SecConvOnMhat}
	
	Bounds for types I and II estimation errors when learning on a random model with a sample independent of the one employed to compute such random model, may be obtained when there is a bound for them on each $\mathcal{M} \in \mathbb{C}(\mathcal{H})$ under the independent sample. This is the content of Theorem \ref{bound_constant}.
	
	\begin{theorem}
		\label{bound_constant}	
		Fix a bounded loss function. Assume we are learning with an independent sample $\tilde{\mathcal{D}}_{M}$, and that for each $\epsilon > 0$ there exist sequences $\{B^{I}_{M,\epsilon}: M \geq 1\}$ and $\{B^{II}_{M,\epsilon}: M \geq 1\}$ of positive real-valued increasing functions with domain $\mathbb{Z}_{+}$ satisfying
		\begin{equation*}
			\lim\limits_{M \to \infty} B^{I}_{M,\epsilon}(k) = \lim\limits_{M \to \infty} B^{II}_{M,\epsilon}(k) = 0,
		\end{equation*}
		for all $\epsilon > 0$ and $k \in \mathbb{Z}_{+}$ fixed, such that
		\begin{align}
			\label{bound_theoremBC}
			\mathbb{P}\left(\sup\limits_{h \in \mathcal{M}} \text{\textbar}L_{\tilde{\mathcal{D}}_{M}}(h) - L(h) \text{\textbar} > \epsilon \right) \leq B^{I}_{M,\epsilon}(d_{VC}(\mathcal{M})) & \text{ and } &\\ \nonumber %\\ \nonumber
			\mathbb{P}\left(L(\hat{h}_{\mathcal{M}}^{\tilde{\mathcal{D}}_{M}}) - L(h^{\star}_{\mathcal{M}}) > \epsilon \right) \leq B^{II}_{M,\epsilon}(d_{VC}(\mathcal{M})), & & 
		\end{align}
		for all $\mathcal{M} \in \mathbb{C}(\mathcal{H})$. Let $\mathcal{\hat{M}} \in \mathbb{C}(\mathcal{H})$ be a random model learned by $\mathbb{M}_{\mathbb{C}(\mathcal{H})}$. Then, for any $\epsilon > 0$,
		\begin{align*}
			\textbf{(I)} \ \mathbb{P}&\left(\sup\limits_{h \in \mathcal{\hat{M}}} \text{\textbar}L_{\tilde{\mathcal{D}}_{M}}(h) - L(h) \text{\textbar} > \epsilon \right) \leq \mathbb{E}\Big[B^{I}_{M,\epsilon}(d_{VC}(\mathcal{\hat{M}}))\Big] \leq B^{I}_{M,\epsilon}\left(d_{VC}(\mathbb{C}(\mathcal{H}))\right)
		\end{align*}
		and
		\begin{align*}
			\textbf{(II)} \ \mathbb{P}\left(L(\hat{h}_{\mathcal{\hat{M}}}^{\tilde{\mathcal{D}}_{M}}) - L(h^{\star}_{\mathcal{\hat{M}}}) > \epsilon \right) \leq \mathbb{E}\Big[B^{II}_{M,\epsilon}(d_{VC}(\mathcal{\hat{M}}))\Big] \leq B^{II}_{M,\epsilon}\left(d_{VC}(\mathbb{C}(\mathcal{H}))\right),
		\end{align*}
		in which the expectations are over all samples $\mathcal{D}_{N}$, from which $\hat{\mathcal{M}}$ is calculated. Since $d_{VC}(\mathbb{C}(\mathcal{H})) < \infty$, both probabilities above converge to zero when $M \to \infty$.
	\end{theorem}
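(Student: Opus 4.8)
The plan is to reduce the statement on the random model $\hat{\mathcal{M}}$ to the family of fixed-model bounds \eqref{bound_theoremBC} by conditioning on $\mathcal{D}_{N}$, exploiting the crucial independence of $\tilde{\mathcal{D}}_{M}$ from $\mathcal{D}_{N}$. First I would observe that, once $\mathcal{D}_{N}$ is fixed, $\hat{\mathcal{M}}$ is a deterministic element of $\mathbb{C}(\mathcal{H})$, and since $\tilde{\mathcal{D}}_{M}$ is independent of $\mathcal{D}_{N}$, the conditional probability
\begin{equation*}
	\mathbb{P}\left(\sup\limits_{h \in \hat{\mathcal{M}}} \text{\textbar}L_{\tilde{\mathcal{D}}_{M}}(h) - L(h)\text{\textbar} > \epsilon \ \middle| \ \mathcal{D}_{N}\right)
\end{equation*}
equals the fixed-model probability $\mathbb{P}(\sup_{h \in \mathcal{M}} \text{\textbar}L_{\tilde{\mathcal{D}}_{M}}(h) - L(h)\text{\textbar} > \epsilon)$ evaluated at $\mathcal{M} = \hat{\mathcal{M}}(\omega)$, which by hypothesis \eqref{bound_theoremBC} is at most $B^{I}_{M,\epsilon}(d_{VC}(\hat{\mathcal{M}}))$. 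Taking expectation over $\mathcal{D}_{N}$ and using the tower property then gives the first inequality $\mathbb{P}(\cdots) \leq \mathbb{E}[B^{I}_{M,\epsilon}(d_{VC}(\hat{\mathcal{M}}))]$. The same argument applied to the event $\{L(\hat{h}_{\hat{\mathcal{M}}}^{\tilde{\mathcal{D}}_{M}}) - L(h^{\star}_{\hat{\mathcal{M}}}) > \epsilon\}$ — noting that $\hat{h}_{\hat{\mathcal{M}}}^{\tilde{\mathcal{D}}_{M}}$ and $h^{\star}_{\hat{\mathcal{M}}}$ are, conditionally on $\mathcal{D}_{N}$, just the ERM and target hypotheses of the fixed model $\hat{\mathcal{M}}(\omega)$ — yields the type II bound $\mathbb{E}[B^{II}_{M,\epsilon}(d_{VC}(\hat{\mathcal{M}}))]$.

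The second inequality in each display is immediate: since $\hat{\mathcal{M}} \in \mathbb{C}(\mathcal{H})$ we always have $d_{VC}(\hat{\mathcal{M}}) \leq d_{VC}(\mathbb{C}(\mathcal{H}))$ by the definition of $d_{VC}(\mathbb{C}(\mathcal{H}))$ as the maximum over the collection, and because $B^{I}_{M,\epsilon}$ and $B^{II}_{M,\epsilon}$ are increasing functions on $\mathbb{Z}_{+}$, monotonicity gives $B^{\cdot}_{M,\epsilon}(d_{VC}(\hat{\mathcal{M}})) \leq B^{\cdot}_{M,\epsilon}(d_{VC}(\mathbb{C}(\mathcal{H})))$ pointwise, hence after taking expectations as well. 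Finally, the convergence to zero as $M \to \infty$ follows directly from the hypothesis $\lim_{M \to \infty} B^{I}_{M,\epsilon}(k) = \lim_{M \to \infty} B^{II}_{M,\epsilon}(k) = 0$ applied at the fixed value $k = d_{VC}(\mathbb{C}(\mathcal{H})) < \infty$, combined with the just-established upper bounds.

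The main obstacle — more a technical point than a genuine difficulty — is the measurability bookkeeping needed to justify the conditioning step: one must ensure that $\omega \mapsto B^{I}_{M,\epsilon}(d_{VC}(\hat{\mathcal{M}}(\omega)))$ is a genuine random variable (which holds because $\hat{\mathcal{M}}$ is $(\Omega,\mathcal{S})$-measurable with finite range $\mathbb{C}(\mathcal{H})$, so $d_{VC}(\hat{\mathcal{M}})$ is a simple function), and that the event $\{\sup_{h \in \hat{\mathcal{M}}} \text{\textbar}L_{\tilde{\mathcal{D}}_{M}}(h) - L(h)\text{\textbar} > \epsilon\}$ is measurable, which is covered by the standing measurability assumption in the Remark following \eqref{typeIe}. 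Because $\mathbb{C}(\mathcal{H})$ is finite, the conditioning argument can in fact be made completely elementary by decomposing over the partition $\{\hat{\mathcal{M}} = \mathcal{M}_i\}_{i \in \mathcal{J}}$ of $\Omega$ and applying \eqref{bound_theoremBC} on each piece, which avoids any appeal to regular conditional probabilities. I would present the proof in that decomposed form to keep it self-contained.
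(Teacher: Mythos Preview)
Your proposal is correct and matches the paper's proof essentially line for line: the paper also conditions on $\hat{\mathcal{M}}$, decomposes over the finite partition $\{\hat{\mathcal{M}} = \mathcal{M}_i\}_{i \in \mathcal{J}}$, uses the independence of $\tilde{\mathcal{D}}_M$ from $\mathcal{D}_N$ to drop the conditioning, applies \eqref{bound_theoremBC} termwise, and then invokes monotonicity of $B^{\cdot}_{M,\epsilon}$ together with $d_{VC}(\hat{\mathcal{M}}) \leq d_{VC}(\mathbb{C}(\mathcal{H}))$ for the second inequality. Your closing remark about presenting the argument via the finite decomposition is exactly how the paper writes it.
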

	
	Our definition of $\mathcal{\hat{M}}$ ensures that it has the smallest VC dimension under the constraint that it is a global minimum of $\mathbb{C}(\mathcal{H})$. As the quantities inside the expectations of Theorem \ref{bound_constant} are increasing functions of VC dimension, fixed $\epsilon$ and $M$, we tend to have smaller expectations, thus tighter bounds for types I and II estimation errors. Furthermore, it follows from Theorem \ref{bound_constant} that the sample complexity needed to learn on $\hat{\mathcal{M}}$ is at most that of $d_{VC}(\mathbb{C}(\mathcal{H}))$. This implies that this complexity is at most that of $\mathcal{H}$, but may be much smaller if $d_{VC}(\mathbb{C}(\mathcal{H})) \ll d_{VC}(\mathcal{H})$. 
	
	A bound for type III estimation error may be obtained using methods similar to those we employed to prove Theorem \ref{theorem_principal_convergence}. As in that theorem, the bound for type III estimation error depends on $\epsilon^{\star}$, on bounds for type I estimation error under each training and validation sample, and on $\mathbb{C}(\mathcal{H})$, more specifically, on its VC dimension and number of maximal elements. To ease notation, we denote $\epsilon \vee \epsilon^{\star} \coloneqq \max \{\epsilon,\epsilon^{\star}\}$ for any $\epsilon > 0$.
	
	\begin{theorem}
		\label{theorem_tipeIII}
		Assume the premises of Theorem \ref{theorem_principal_convergence} are in force. Let $\mathcal{\hat{M}} \in \mathbb{C}(\mathcal{H})$ be a random model learned by $\mathbb{M}_{\mathbb{C}(\mathcal{H})}$. Then, for any $\epsilon > 0$,
		\begin{align*}
			\textbf{(III)} &\ \mathbb{P}\left(L(h_{\hat{\mathcal{M}}}^{\star}) - L(h^{\star}) > \epsilon\right) \\
			& \leq m \sum_{\mathcal{M} \in \text{Max } \mathbb{C}(\mathcal{H})} \left[ B_{N_{t},(\epsilon \vee \epsilon^{\star})/8}(d_{VC}(\mathcal{M})) + \hat{B}_{N_{v},(\epsilon \vee \epsilon^{\star})/4}(d_{VC}(\mathcal{M}))\right]\\
			&\leq m \ \mathfrak{m}(\mathbb{C}(\mathcal{H})) \left[ B_{N_{t},(\epsilon \vee \epsilon^{\star})/8}(d_{VC}(\mathbb{C}(\mathcal{H}))) + \hat{B}_{N_{v},(\epsilon \vee \epsilon^{\star})/4}(d_{VC}(\mathbb{C}(\mathcal{H})))\right].
		\end{align*}
		In particular,
		\begin{align*}
			\lim_{N \rightarrow \infty} \mathbb{P}\left(L(h_{\hat{\mathcal{M}}}^{\star}) - L(h^{\star}) > \epsilon\right) = 0,
		\end{align*}
		for any $\epsilon > 0$.
	\end{theorem}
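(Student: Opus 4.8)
The plan is to reduce the type III error to the quantity $L(\hat{\mathcal{M}}) - L(\mathcal{M}^{\star})$ already handled in Theorem~\ref{theorem_principal_convergence}, but with a refinement that lets us use $\epsilon$ in place of $\epsilon^{\star}$ whenever $\epsilon \ge \epsilon^{\star}$. First I would observe that $L(h^{\star}_{\hat{\mathcal{M}}}) = L(\hat{\mathcal{M}})$ by the definition of the model risk, and that $L(h^{\star}) = L(\mathcal{M}^{\star})$ since $\mathcal{M}^{\star}$ contains a target hypothesis; hence the type III error equals $L(\hat{\mathcal{M}}) - L(\mathcal{M}^{\star}) \ge 0$, and it suffices to bound $\mathbb{P}\big(L(\hat{\mathcal{M}}) - L(\mathcal{M}^{\star}) > \epsilon\big)$.

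The key deterministic step is a quantitative sharpening of Proposition~\ref{proposition_principal}: on the event $\big\{\max_{i \in \mathcal{J}} |L(\mathcal{M}_i) - \hat{L}(\mathcal{M}_i)| < (\epsilon \vee \epsilon^{\star})/2\big\}$ one has $L(\hat{\mathcal{M}}) - L(\mathcal{M}^{\star}) \le \epsilon$. For any $\epsilon$, a direct approximate-optimality argument works: letting $\mathcal{M}^{\star}_{0}$ be a representative of the class $\mathcal{M}^{\star}$, since that class is among the candidates the rule \eqref{Ghat} selects $\hat{\mathcal{M}}$ with $\hat{L}(\hat{\mathcal{M}}) \le \hat{L}(\mathcal{M}^{\star}_{0})$, so combining $|\hat{L}(\hat{\mathcal{M}}) - L(\hat{\mathcal{M}})| < \epsilon/2$ and $|\hat{L}(\mathcal{M}^{\star}_{0}) - L(\mathcal{M}^{\star})| < \epsilon/2$ gives $L(\hat{\mathcal{M}}) < \hat{L}(\hat{\mathcal{M}}) + \epsilon/2 \le \hat{L}(\mathcal{M}^{\star}_{0}) + \epsilon/2 < L(\mathcal{M}^{\star}) + \epsilon$. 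When $\epsilon < \epsilon^{\star}$ the threshold is $\epsilon^{\star}/2$ and Proposition~\ref{proposition_principal} applies directly, giving the stronger $L(\hat{\mathcal{M}}) = L(\mathcal{M}^{\star})$, hence difference $0 \le \epsilon$. Either way, $\mathbb{P}\big(L(\hat{\mathcal{M}}) - L(\mathcal{M}^{\star}) > \epsilon\big) \le \mathbb{P}\big(\max_{i}|L(\mathcal{M}_i) - \hat{L}(\mathcal{M}_i)| \ge (\epsilon \vee \epsilon^{\star})/2\big)$.

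It then remains to bound the right-hand side, and here I would essentially rerun the probabilistic computation in the proof of Theorem~\ref{theorem_principal_convergence} with $\epsilon^{\star}$ replaced by $\epsilon \vee \epsilon^{\star}$. Writing $\hat{L}(\mathcal{M}) = \tfrac1m\sum_j \hat{L}^{(j)}(\hat{h}^{(j)}_{\mathcal{M}})$, for each $j$ the triangle inequality together with the empirical optimality of $\hat{h}^{(j)}_{\mathcal{M}}$ on the $j$-th training sample yields $|\hat{L}^{(j)}(\hat{h}^{(j)}_{\mathcal{M}}) - L(\mathcal{M})| \le 2\sup_{h \in \mathcal{M}}|L_{\mathcal{D}_{N}^{(j)}}(h) - L(h)| + \sup_{h \in \mathcal{M}}|\hat{L}^{(j)}(h) - L(h)|$, so after averaging over $j$ the total deviation is $< (\epsilon \vee \epsilon^{\star})/2$ provided that, for every $j$, the training-sample uniform deviation over $\mathcal{M}$ is $< (\epsilon \vee \epsilon^{\star})/8$ and the validation-sample one is $< (\epsilon \vee \epsilon^{\star})/4$. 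Since these suprema are monotone under inclusion of models and $B_{N,\cdot},\hat{B}_{N,\cdot}$ are increasing in the VC dimension, a union bound over the $m$ pairs and over the maximal models $\text{Max }\mathbb{C}(\mathcal{H})$ — which dominate every element of $\mathbb{C}(\mathcal{H})$ — gives the asserted bound $m\,\mathfrak{m}(\mathbb{C}(\mathcal{H}))\big[B_{N,(\epsilon\vee\epsilon^{\star})/8}(d_{VC}(\mathbb{C}(\mathcal{H}))) + \hat{B}_{N,(\epsilon\vee\epsilon^{\star})/4}(d_{VC}(\mathbb{C}(\mathcal{H})))\big]$. The final limit follows because $B_{N,\gamma}(k),\hat{B}_{N,\gamma}(k) \to 0$ as $N \to \infty$ for each fixed $\gamma$ and $k$, and $d_{VC}(\mathbb{C}(\mathcal{H})) < \infty$.

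I expect the main obstacle to be the deterministic sharpening in the second paragraph: making the approximate-optimality argument airtight with respect to the tie-breaking rule in \eqref{Ghat}, the passage between $\sim$ and $\hat{\sim}$ equivalence classes, and the strict-versus-nonstrict inequalities when matching the thresholds $\epsilon^{\star}/8,\epsilon^{\star}/4$ to the hypotheses \eqref{bound_theoremBC}-type bounds. The probabilistic part is a routine adaptation of the proof of Theorem~\ref{theorem_principal_convergence} and should require no new ideas.
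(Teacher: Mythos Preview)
Your proposal is correct and follows essentially the same approach as the paper's proof: you establish the key inclusion $\{L(h^{\star}_{\hat{\mathcal{M}}}) - L(h^{\star}) > \epsilon\} \subset \{\max_{i}|\hat{L}(\mathcal{M}_i) - L(\mathcal{M}_i)| \ge (\epsilon \vee \epsilon^{\star})/2\}$ by the same case split (Proposition~\ref{proposition_principal} when $\epsilon \le \epsilon^{\star}$, the approximate-optimality inequality $L(\hat{\mathcal{M}}) - L(\mathcal{M}^{\star}) \le |L(\hat{\mathcal{M}}) - \hat{L}(\hat{\mathcal{M}})| + |\hat{L}(\mathcal{M}^{\star}) - L(\mathcal{M}^{\star})|$ together with $\hat{L}(\hat{\mathcal{M}}) \le \hat{L}(\mathcal{M}^{\star})$ when $\epsilon > \epsilon^{\star}$), and then you rerun the probabilistic computation of Theorem~\ref{theorem_principal_convergence} with $\epsilon^{\star}$ replaced by $\epsilon \vee \epsilon^{\star}$. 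Your anticipated obstacle about tie-breaking and equivalence classes is not a real issue, since the inequality $\hat{L}(\hat{\mathcal{M}}) \le \hat{L}(\mathcal{M}^{\star})$ holds for any representative of $\mathcal{M}^{\star}$ by the definition of $\hat{\mathcal{M}}$ as a global minimizer of $\hat{L}$.
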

	
	\begin{remark}
		\label{remReuse}
		Type III estimation error, and its bound presented in Theorem \ref{theorem_tipeIII}, do not depend on the algorithm $\mathbb{A}$ employed to learn on $\hat{\mathcal{M}}$, hence this theorem is true for both frameworks in Figure \ref{learn_hyp}, holding also when learning by reusing.
	\end{remark}
	
	On the one hand, by definition of $\epsilon^{\star}$, if $\epsilon < \epsilon^{\star}$, then type III estimation error is smaller than $\epsilon$ if, and only if, $L(\hat{\mathcal{M}}) = L(\mathcal{M}^{\star})$, so this error is actually zero, and the result of Theorem \ref{theorem_principal_convergence} is a bound for type III estimation error in this case. On the other hand, if $\epsilon > \epsilon^{\star}$, one way of having type III estimation error smaller than $\epsilon$ is to have the estimated risk of each $\mathcal{M}$ at a distance at most $\epsilon/2$ from its out-of-sample risk and, as can be inferred from the proof of Theorem \ref{theorem_principal_convergence}, this can be accomplished if one has type I estimation error not greater than a fraction of $\epsilon$ under each training and validation sample considered, so a modification of Theorem \ref{theorem_principal_convergence} applies to this case.
	
	Finally, as the tail probability of type IV estimation error may be bounded by the following inequality, involving the tail probabilities of types II and III estimation errors,
	\begin{align}
		\label{triangle} \nonumber
		\textbf{(IV)} \ \mathbb{P}&\left(L(\hat{h}_{\mathcal{\hat{M}}}^{\tilde{\mathcal{D}}_{M}}) - L(h^{\star}) > \epsilon\right)\\& \leq \mathbb{P}\left(L(\hat{h}_{\mathcal{\hat{M}}}^{\tilde{\mathcal{D}}_{M}}) - L(h^{\star}_{\mathcal{\hat{M}}}) > \epsilon/2\right) + \mathbb{P}\left(L(h^{\star}_{\mathcal{\hat{M}}}) - L(h^{\star}) > \epsilon/2\right),
	\end{align}
	a bound for \eqref{triangle} is a direct consequence of Theorems \ref{bound_constant} and \ref{theorem_tipeIII}.
	
	\begin{corollary}
		\label{cor_typeIV}
		Assume the premises of Theorems \ref{theorem_principal_convergence} and \ref{bound_constant} are in force. Let $\mathcal{\hat{M}} \in \mathbb{C}(\mathcal{H})$ be a random model learned by $\mathbb{M}_{\mathbb{C}(\mathcal{H})}$. Then, for any $\epsilon > 0$,
		\begin{align*}
			&\textbf{(IV)} \ \mathbb{P}\left(L(\hat{h}_{\mathcal{\hat{M}}}^{\tilde{\mathcal{D}}_{M}}) - L(h^{\star}) > \epsilon\right)\\
			& \leq \mathbb{E}\Big[B^{II}_{M,\epsilon/2}(d_{VC}(\mathcal{\hat{M}}))\Big] \\
			&+ m \ \sum_{\mathcal{M} \in \text{Max } \mathbb{C}(\mathcal{H})} \left[B_{N_{t},(\epsilon/2 \vee \epsilon^{\star})/8}(d_{VC}(\mathcal{M})) + \hat{B}_{N_{v},(\epsilon/2 \vee \epsilon^{\star})/4}(d_{VC}(\mathcal{M}))\right]\\
			&\leq  B^{II}_{M,\epsilon/2}(d_{VC}(\mathbb{C}(\mathcal{H}))) \\
			&+ m \ \mathfrak{m}(\mathbb{C}(\mathcal{H})) \left[B_{N_{t},(\epsilon/2 \vee \epsilon^{\star})/8}(d_{VC}(\mathbb{C}(\mathcal{H}))) + \hat{B}_{N_{v},(\epsilon/2 \vee \epsilon^{\star})/4}(d_{VC}(\mathbb{C}(\mathcal{H})))\right].
		\end{align*}
		In particular,
		\begin{align*}
			\lim_{\substack{N \rightarrow \infty \\ M \rightarrow \infty}} \mathbb{P}\left(L(\hat{h}_{\mathcal{\hat{M}}}^{\tilde{\mathcal{D}}_{M}}) - L(h^{\star}) > \epsilon\right) = 0,
		\end{align*}
		for any $\epsilon > 0$.
	\end{corollary}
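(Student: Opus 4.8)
The plan is to obtain Corollary \ref{cor_typeIV} as a mechanical assembly: start from the triangle-type inequality \eqref{triangle}, bound its first summand with the type II estimate of Theorem \ref{bound_constant}, bound its second summand with the type III estimate of Theorem \ref{theorem_tipeIII}, and add.

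First I would justify \eqref{triangle} itself. Write $a = L(\hat{h}_{\mathcal{\hat{M}}}^{\tilde{\mathcal{D}}_{M}})$, $b = L(h^{\star}_{\mathcal{\hat{M}}})$, and $c = L(h^{\star})$. Since $h^{\star}_{\mathcal{\hat{M}}}$ minimizes $L$ over $\hat{\mathcal{M}}$ and $\hat{h}_{\mathcal{\hat{M}}}^{\tilde{\mathcal{D}}_{M}} \in \hat{\mathcal{M}}$, we have $a \geq b$; since $h^{\star}$ minimizes $L$ over $\mathcal{H} \supseteq \hat{\mathcal{M}}$, we have $b \geq c$. Hence $a - c = (a-b) + (b-c)$ is a sum of two nonnegative terms, so $\{a - c > \epsilon\} \subseteq \{a - b > \epsilon/2\} \cup \{b - c > \epsilon/2\}$, and a union bound gives \eqref{triangle}.

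Next I would bound each summand. The first, $\mathbb{P}(a - b > \epsilon/2)$, is exactly the tail probability of type II estimation error on $\hat{\mathcal{M}}$ with margin $\epsilon/2$; under the premises of Theorem \ref{bound_constant} this is at most $\mathbb{E}[B^{II}_{M,\epsilon/2}(d_{VC}(\mathcal{\hat{M}}))]$, and in turn at most $B^{II}_{M,\epsilon/2}(d_{VC}(\mathbb{C}(\mathcal{H})))$ because $B^{II}_{M,\epsilon/2}$ is increasing and $d_{VC}(\hat{\mathcal{M}}) \leq d_{VC}(\mathbb{C}(\mathcal{H}))$ pointwise. The second, $\mathbb{P}(b - c > \epsilon/2)$, is the tail probability of type III estimation error with margin $\epsilon/2$; applying Theorem \ref{theorem_tipeIII} with $\epsilon/2$ in place of $\epsilon$ bounds it by $m\,\mathfrak{m}(\mathbb{C}(\mathcal{H}))\left[B_{N,(\epsilon/2 \vee \epsilon^{\star})/8}(d_{VC}(\mathbb{C}(\mathcal{H}))) + \hat{B}_{N,(\epsilon/2 \vee \epsilon^{\star})/4}(d_{VC}(\mathbb{C}(\mathcal{H})))\right]$. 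Adding the two bounds produces the two displayed inequalities of the corollary: the first keeps the expectation $\mathbb{E}[B^{II}_{M,\epsilon/2}(d_{VC}(\hat{\mathcal{M}}))]$, the second replaces it by the coarser term $B^{II}_{M,\epsilon/2}(d_{VC}(\mathbb{C}(\mathcal{H})))$ using the monotonicity step above.

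Finally, for the limit statement I would let $N, M \to \infty$ and use that $(\epsilon/2) \vee \epsilon^{\star}$ is a fixed strictly positive number: by hypothesis $B^{II}_{M,\epsilon/2}(d_{VC}(\mathbb{C}(\mathcal{H}))) \to 0$ as $M \to \infty$ since $d_{VC}(\mathbb{C}(\mathcal{H})) < \infty$, while $B_{N,(\epsilon/2 \vee \epsilon^{\star})/8}(d_{VC}(\mathbb{C}(\mathcal{H}))) \to 0$ and $\hat{B}_{N,(\epsilon/2 \vee \epsilon^{\star})/4}(d_{VC}(\mathbb{C}(\mathcal{H}))) \to 0$ as $N \to \infty$, so the right-hand side vanishes in the joint limit. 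I do not expect any real obstacle here: the entire argument is a routine combination of already-established estimates, and the only point that genuinely needs a line of care is checking the nonnegativity of the two increments $a - b$ and $b - c$, which is what makes the splitting in \eqref{triangle} legitimate.
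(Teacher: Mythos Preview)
Your proposal is correct and follows exactly the route the paper takes: decompose via the triangle-type inequality \eqref{triangle}, bound the two summands with Theorems \ref{bound_constant} and \ref{theorem_tipeIII} at margin $\epsilon/2$, and let $N,M\to\infty$. The paper in fact gives no more detail than this, so your write-up is already more explicit than the original; the nonnegativity of $a-b$ and $b-c$ is not actually needed for the union bound (the inclusion $\{a-c>\epsilon\}\subset\{a-b>\epsilon/2\}\cup\{b-c>\epsilon/2\}$ holds regardless), though it does no harm to note it.
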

	
	Comparing the bounds of Corollary \ref{cor_typeIV} with those of type II estimation error of learning via ERM in $\mathcal{H}$ for a sample size of $N + M$ (cf. Proposition \ref{propVC}), we see that the former can be tighter. This is the case when the VC dimension of the maximal models is smaller than $d_{VC}(\mathcal{H})$, $\epsilon^{\star}$ is large or $d_{VC}(\mathcal{M}^{\star})$ is small and $d_{VC}(\hat{\mathcal{M}}) \approx d_{VC}(\mathcal{M}^{\star})$ with high probability. This fact evidences that by properly modeling the set of candidate models, it may be possible to increase generalization without increasing the sample size. This fact will be further explored in Sections \ref{SecEnhanceGen} and \ref{SecNum}. This relation between the bounds also holds for unbounded functions (cf. Corollary \ref{cor_typeIV2}) and when learning by reusing (cf. Section \ref{SecReuse}).
	
	\section{Learning via model selection with unbounded loss functions}
	\label{SecUnbounded}
	
	When the loss function is unbounded, we need to consider relative estimation errors and make assumptions about the tail weight of $P$. Heavy tail distributions are classically defined as those with a tail heavier than that of exponential distributions \cite{foss2011}. Nevertheless, in the context of learning, the tail weight of $P$ should take into account the loss function $\ell$. Hence, for $1 < p < \infty$ and a fixed hypotheses space $\mathcal{H}$, we measure the weight of the tails of distribution $P$ by
	\begin{equation*}
		\tau_{p} \coloneqq \sup\limits_{h \in \mathcal{H}} \frac{\left(\int_{\mathcal{Z}} \ell^{p}(z,h) \ dP(z)\right)^{\frac{1}{p}}}{\int_{\mathcal{Z}} \ell(z,h) \ dP(z)} = \sup\limits_{h \in \mathcal{H}} \frac{L^{p}(h)}{L(h)},
	\end{equation*}
	in which $L^{p}(h) \coloneqq \left(\int_{\mathcal{Z}} \ell^{p}(z,h) \ dP(z)\right)^{\frac{1}{p}}$. We omit the dependence of $\tau_{p}$ on $\ell$, $P$ and $\mathcal{H}$ to simplify notation, since they will be clear from context. The weight of the tails of distribution $P$ may be defined based on $\tau_{p}$, as follows. Our presentation is analogous to \cite[Section~5.7]{vapnik1998} and is within the framework of \cite{cortes2019}. In this section, we again focus on learning with an independent sample and in Section \ref{SecReuse} we present analogous results for learning by reusing.
	
	\begin{definition}
		\label{def_tails1}
		We say that distribution $P$ on $\mathcal{H}$ under $\ell$ has:
		\begin{itemize}
			\item Light tails, if there exists a $p > 2$ such that $\tau_{p} < \infty$;
			\item Heavy tails, if there exists a $1 < p \leq 2$ such that $\tau_{p} < \infty$, but $\tau_{p} = \infty$ for all $p > 2$;
			\item Very heavy tails, if $\tau_{p} = \infty$ for all $p > 1$.
		\end{itemize}
	\end{definition}
	
	We assume that $P$ has at most heavy tails, which means there exists a $p > 1$, that can be lesser than 2, with
	\begin{align}
		\label{tauStar}
		\tau_{p} < \tau^{\star} < \infty,
	\end{align}
	that is, $P$ is in a class of distributions for which bound \eqref{tauStar} holds. From now on, fix a $p > 1$ and a $\tau^{\star}$ such that \eqref{tauStar} holds.
	
	Besides the constraint \eqref{tauStar}, we also assume that the loss function is greater or equal to one: $\ell(z,h) \geq 1$ for all $z \in \mathcal{Z}, h \in \mathcal{H}$. This is done to ease the presentation, and without loss of generality, since it is enough to sum add 1 to any unbounded loss function to have this property and, in doing so, not only the minimizers of $L_{\mathcal{D}_{N}}$ and $L$ in each model in $\mathbb{C}(\mathcal{H})$ remain the same, but also $\epsilon^{\star}$ does not change. Hence, by adding one to the loss, the estimated model $\hat{\mathcal{M}}$ and learned hypotheses from it do not change, and the result of the model selection framework is the same. We refer to Remark \ref{remark_geq1} for the technical reason we choose to consider loss functions greater than one.
	
	Finally, we assume that $\ell$ has a finite moment of order $p$, under $P$ and under the empirical measure, for all $h \in \mathcal{H}$. That is, defining\footnote{We elevate \eqref{LNp} to the $1/p$ power to be consistent with the theory presented in Appendix \ref{apVCtheory}.}
	\begin{equation}
		\label{LNp}
		L_{\mathcal{D}_{N}}^{p}(h) \coloneqq \left(\frac{1}{N} \sum_{i=1}^{N} \ell^{p}(Z_{i},h)\right)^ {\frac{1}{p}},
	\end{equation}
	we assume that
	\begin{align}
		\label{finite_moments_text}
		\sup\limits_{h \in \mathcal{H}} L_{\mathcal{D}_{N}}^{p}(h)  < \infty & & \text{ and } & & \sup\limits_{h \in \mathcal{H}} L^{p}(h) < \infty,
	\end{align}
	in which the first inequality should hold with probability one, for all possible samples $\mathcal{D}_{N}$. Since $L^{p}(h)$ is non-decreasing in $p$ for each fixed $h$, \eqref{finite_moments_text} actually implies \eqref{tauStar}, so \eqref{finite_moments_text} is the non-trivial constraint in distribution $P$.
	
	Although this is a deviation from the distribution-free framework, it is a mild constraint on distribution $P$. On the one hand, the condition on $L^{p}$ is usually satisfied for distributions observed in real data (see \cite[Section~5.7]{vapnik1998} for examples with Normal, Uniform, and Laplacian distributions under the quadratic loss function). On the other hand, the condition on $L_{\mathcal{D}_{N}}^{p}$ is more a feature of the loss function, than of the distribution $P$, and can be guaranteed if one excludes from $\mathcal{H}$ some hypotheses with arbitrarily large loss in a way that $h^{\star}$ and $d_{VC}(\mathcal{H})$ remain the same (see Lemma \ref{lemma_norm} and Remark \ref{remark_finite_moments} for more details).
	
	When the loss function is unbounded, besides the constraints in the moments of $\ell$, under $P$ and the empirical measure, we also have to consider variants of the estimation errors. Instead of the estimation errors, we consider the relative estimation errors:
	\begin{align*}
		&\textbf{(I)} \sup\limits_{h \in \hat{\mathcal{M}}} \ \frac{\text{\textbar} L(h) - L_{\tilde{\mathcal{D}}_{M}}(h) \text{\textbar}}{L(h)} & & \textbf{(II)} \ \frac{L(\hat{h}_{\hat{\mathcal{M}}}^{\mathbb{A}}) - L(h^{\star}_{\hat{\mathcal{M}}})}{L(\hat{h}_{\hat{\mathcal{M}}}^{\mathbb{A}})}\\
		&\textbf{(III)} \ \frac{L(h^{\star}_{\hat{\mathcal{M}}}) - L(h^{\star})}{L(h^{\star}_{\hat{\mathcal{M}}})} & & \textbf{(IV)} \ \frac{L(\hat{h}_{\hat{\mathcal{M}}}^{\mathbb{A}}) - L(h^{\star})}{L(\hat{h}_{\hat{\mathcal{M}}}^{\mathbb{A}})}
	\end{align*}
	in which algorithm $\mathbb{A}$ depends on the estimation technique once $\hat{\mathcal{M}}$ is selected.
	
	In this section, we prove analogues of Theorems \ref{theorem_principal_convergence}, \ref{bound_constant} and \ref{theorem_tipeIII}. Before starting the study of the convergence of $\hat{\mathcal{M}}$ to $\mathcal{M}^{\star}$, we state a result analogous to Proposition \ref{propVC} about deviation bounds of relative type I and II estimation errors on $\mathcal{H}$, which are a consequence of Corollaries \ref{convergence_relativeTI} and \ref{cor2TypeII}. These are novel results of this paper which extend those of \cite{cortes2019}. 
	
	\begin{proposition}
		\label{propVC2}
		Assume the loss function is unbounded and $P$ is such that \eqref{finite_moments_text} holds. Fix a hypotheses space $\mathcal{H}$ with $d_{VC}(\mathcal{H}) < \infty$. There exist sequences $\{B^{I}_{N,\epsilon}: N \geq 1\}$ and $\{B^{II}_{N,\epsilon}: N \geq 1\}$ of positive real-valued increasing functions with domain $\mathbb{Z}_{+}$ satisfying
		\begin{equation*}
			\lim\limits_{N \to \infty} B^{I}_{N,\epsilon}(k) = \lim\limits_{N \to \infty} B^{II}_{N,\epsilon}(k) = 0,
		\end{equation*}
		for all $\epsilon > 0$ and $k \in \mathbb{Z}_{+}$ fixed, such that
		\begin{align*}
			\mathbb{P}&\left(\sup\limits_{h \in \mathcal{H}} \frac{\text{\textbar}L_{\mathcal{D}_{N}}(h) - L(h)\text{\textbar}}{L(h)} > \epsilon\right) \leq B^{I}_{N,\epsilon}(d_{VC}(\mathcal{H})) \text{ and }\\
			\mathbb{P}&\left(\frac{L(\hat{h}^{\mathcal{D}_{N}}) - L(h^{\star})}{L(\hat{h}^{\mathcal{D}_{N}})} > \epsilon\right) \leq B^{II}_{N,\epsilon}(d_{VC}(\mathcal{H})).
		\end{align*}
		Furthermore, the following holds:
		\begin{align*}
			\sup\limits_{h \in \mathcal{H}} \frac{\text{\textbar}L_{\mathcal{D}_{N}}(h) - L(h)\text{\textbar}}{L(h)} \xrightarrow[N \to \infty]{a.s.} 0 & & \text{ and } & & \frac{L(\hat{h}^{\mathcal{D}_{N}}) - L(h^{\star})}{L(\hat{h}^{\mathcal{D}_{N}})} \xrightarrow[N \to \infty]{a.s.} 0.
		\end{align*}
	\end{proposition}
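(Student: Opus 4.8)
The plan is to reduce the statement to the relative-deviation results referenced in the paper (Corollaries \ref{convergence_relativeTI} and \ref{cor2TypeII}), which are extensions of \cite{cortes2019}, and to extract from them the three claims: the existence of bounding sequences $B^I_{N,\epsilon}$, the existence of $B^{II}_{N,\epsilon}$, and the almost sure convergence. The skeleton mirrors exactly the proof of Proposition \ref{propVC} in the bounded case, with uniform deviations of $L_{\mathcal{D}_N}$ from $L$ replaced by uniform deviations of the \emph{relative} quantity $|L_{\mathcal{D}_N}(h) - L(h)|/L(h)$, and the key input is that under the moment condition \eqref{finite_moments_text} (which in particular forces $\tau_p < \tau^\star < \infty$ for some $p>1$ via \eqref{tauStar}) one has a quantitative bound on \eqref{GE1}-type probabilities that decays in $N$ and is monotone in $d_{VC}(\mathcal{H})$.

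First I would invoke Corollary \ref{convergence_relativeTI} to obtain, for each $\epsilon>0$, an explicit function of $N$ and of $d_{VC}(\mathcal{H})$ — call it $B^I_{N,\epsilon}(\cdot)$ — bounding $\mathbb{P}(\sup_{h\in\mathcal{H}} |L_{\mathcal{D}_N}(h)-L(h)|/L(h) > \epsilon)$. I would check the three required structural properties: positivity (immediate), monotonicity in the VC-dimension argument $k$ (this holds because the growth function / shattering coefficient $S_{\mathcal{H}}(N)$ is monotone in $d_{VC}$ by Sauer's lemma, and the Cortes-type bound is an increasing function of that coefficient), and $\lim_{N\to\infty} B^I_{N,\epsilon}(k)=0$ for fixed $\epsilon,k$ (the bound carries an $e^{-c N \epsilon^2}$-type factor, possibly after the moment-dependent rescaling by $\tau^\star$, which dominates the polynomial-in-$N$ growth-function factor). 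The second step handles the type II error: I would derive the relative version of the standard ERM argument, namely that $\{(L(\hat h^{\mathcal{D}_N}) - L(h^\star))/L(\hat h^{\mathcal{D}_N}) > \epsilon\}$ is contained in the event that the relative uniform deviation exceeds a fixed fraction of $\epsilon$ (the relative deviation at $\hat h^{\mathcal{D}_N}$ and at $h^\star$ each exceed roughly $\epsilon/2$, using $L_{\mathcal{D}_N}(\hat h^{\mathcal{D}_N}) \le L_{\mathcal{D}_N}(h^\star)$ and the normalization $\ell\ge 1$ so denominators are bounded below). Concretely I would set $B^{II}_{N,\epsilon}(k) := B^I_{N,c\epsilon}(k)$ for a suitable universal constant $c$, or cite Corollary \ref{cor2TypeII} directly if it already packages this; the three structural properties are then inherited from those of $B^I$.

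For the almost sure convergence I would use the Borel--Cantelli lemma: along any subsequence it suffices that $\sum_{N} B^I_{N,\epsilon}(d_{VC}(\mathcal{H})) < \infty$ for each fixed $\epsilon>0$, which holds because $B^I_{N,\epsilon}$ decays exponentially in $N$ while $d_{VC}(\mathcal{H})$ is fixed and finite; hence $\sup_{h\in\mathcal{H}} |L_{\mathcal{D}_N}(h)-L(h)|/L(h) \to 0$ a.s., and the same Borel--Cantelli argument applied to $B^{II}$ (or the deterministic containment of events used in step two, applied $\omega$-wise) gives $(L(\hat h^{\mathcal{D}_N}) - L(h^\star))/L(\hat h^{\mathcal{D}_N}) \to 0$ a.s. One subtlety worth a sentence: since the training samples $\mathcal{D}_N$ for different $N$ are nested prefixes of a single i.i.d. sequence, summability of the tail bounds is exactly what licenses this pathwise conclusion without any further independence structure.

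The main obstacle is step two: making the passage from the uniform \emph{relative} deviation bound to the relative type II (ERM) bound fully rigorous, because relative deviations do not enjoy the clean triangle-inequality manipulations of absolute deviations. The denominators $L(\hat h^{\mathcal{D}_N})$ and $L(h^\star)$ differ, and one must carefully track how the event $\{L_{\mathcal{D}_N}(\hat h^{\mathcal{D}_N}) \le L_{\mathcal{D}_N}(h^\star)\}$ combines with two-sided relative control on the whole space to force a lower bound on $(L(\hat h^{\mathcal{D}_N}) - L(h^\star))/L(\hat h^{\mathcal{D}_N})$; here the assumption $\ell \ge 1$, hence $L(h)\ge 1$ for all $h$, is what keeps all the ratios well-behaved and turns the bookkeeping into the choice of a harmless universal constant. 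Everything else is a routine transcription of the bounded-case argument with the relative Cortes-type inequality substituted for Hoeffding/VC.
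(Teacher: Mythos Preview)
Your proposal is correct and matches the paper's approach: the proposition is stated as a direct consequence of Corollaries \ref{convergence_relativeTI} and \ref{cor2TypeII}, exactly as you suggest, with the almost sure convergence coming from Borel--Cantelli applied to the summable exponential-type bounds. The ``main obstacle'' you flag is in fact handled cleanly by Lemma \ref{lemmaTypeItoII}, which proves the deterministic inequality $\frac{L(\hat h^{\mathcal{D}_N}) - L(h^\star)}{L(\hat h^{\mathcal{D}_N})} \le 2 \sup_{h\in\mathcal{H}} \frac{|L(h)-L_{\mathcal{D}_N}(h)|}{L(h)}$ (using $L(\hat h^{\mathcal{D}_N}) \ge L(h^\star)$ to swap denominators), so your universal constant is simply $c=1/2$ and no delicate bookkeeping is needed.
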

	
	We refer to Appendix \ref{ApUnbounded} for explicit formulas for $B^{I}_{N,\epsilon}(k)$ and $B^{II}_{N,\epsilon}(k)$ in this case. In particular, the rate of convergence of $B^{I}_{N,\epsilon}(k)$ and $B^{II}_{N,\epsilon}(k)$ when $N \to \infty$ depends on the value of $p$ for which \eqref{tauStar} holds, even though they converge to zero for all $p > 1$.
	
	The results of this section seek to obtain insights about the asymptotic behavior of learning via model selection in the case of unbounded loss functions, rather than obtain the tightest possible bounds. Hence, in some results, the simplicity of the bounds is preferred over their tightness, and tighter bounds may be readily obtained from the proofs.
	
	\subsection{Convergence to the target model}
	
	We start by showing a result similar to Theorem \ref{theorem_principal_convergence}.
	
	\begin{theorem}
		\label{theorem_principal_convergence_unbounded}
		Assume the loss function is unbounded and $P$ is such that \eqref{finite_moments_text} holds. For each $\epsilon > 0$, let $\{B_{N,\epsilon}: N \geq 1\}$ and $\{\hat{B}_{N,\epsilon}: N \geq 1\}$ be sequences of positive real-valued increasing functions with domain $\mathbb{Z}_{+}$ satisfying
		\begin{equation*}
			\lim\limits_{N \to \infty} B_{N,\epsilon}(k) = \lim\limits_{N \to \infty} \hat{B}_{N,\epsilon}(k) = 0,
		\end{equation*}
		for all $\epsilon > 0$ and $k \in \mathbb{Z}_{+}$ fixed, and such that
		\begin{align*}
			\max_{j} \mathbb{P}\left(\sup\limits_{h \in \mathcal{M}} \frac{\text{\textbar}L(h) - L_{\mathcal{D}_{N}^{(j)}}(h)\text{\textbar}}{L(h)} > \epsilon \right) \leq B_{N_{t},\epsilon}(d_{VC}(\mathcal{M})) & \text{ and } &\\ \nonumber \\ \nonumber
			\max_{j} \mathbb{P}\left(\sup\limits_{h \in \mathcal{M}} \frac{\text{\textbar}L(h) - \hat{L}^{(j)}(h)\text{\textbar}}{L(h)} > \epsilon \right) \leq \hat{B}_{N_{v},\epsilon}(d_{VC}(\mathcal{M})), & & 
		\end{align*}
		for all $\mathcal{M} \in \mathbb{C}(\mathcal{H})$, recalling that $L_{\mathcal{D}_{N}^{(j)}}$ and $\hat{L}^{(j)}$ represent the empirical risk under the $j$-th training and validation samples, respectively. Let $\mathcal{\hat{M}} \in \mathbb{C}(\mathcal{H})$ be a random model learned by $\mathbb{M}_{\mathbb{C}(\mathcal{H})}$. Then,
		\begin{align*}
			%\label{bound_pM2}
			\mathbb{P}\left(L(\hat{\mathcal{M}}) \neq L(\mathcal{M}^{\star})\right) & \leq 2m \sum_{\mathcal{M} \in \text{Max } \mathbb{C}(\mathcal{H})} \left[\hat{B}_{N_{v},\frac{\delta(1-\delta)}{2}}(d_{VC}(\mathcal{M})) + B_{N_{t},\frac{\delta(1-\delta)}{4}}(d_{VC}(\mathcal{M}))\right]\\
			&\leq 2 m \ \mathfrak{m}(\mathbb{C}(\mathcal{H})) \left[\hat{B}_{N_{v},\frac{\delta(1-\delta)}{2}}(d_{VC}(\mathbb{C}(\mathcal{H}))) + B_{N_{t},\frac{\delta(1-\delta)}{4}}(d_{VC}(\mathbb{C}(\mathcal{H})))\right],
		\end{align*}
		in which $m$ is the number of pairs considered to calculate \eqref{form_Lhat} and 
		\begin{equation*}
			\delta \coloneqq \frac{\epsilon^{\star}}{2 \max\limits_{i \in \mathcal{J}} L(\mathcal{M}_{i})}.
		\end{equation*}
		Furthermore, if
		\begin{align}
			\label{as_conv2}
			\max_{\mathcal{M} \in \mathbb{C}(\mathcal{H})} \max_{j} \sup\limits_{h \in \mathcal{M}} \frac{\text{\textbar}L(h) - L_{\mathcal{D}_{N}^{(j)}}(h)\text{\textbar}}{L(h)} \xrightarrow[N \to \infty]{\text{a.s.}} 0 & \text{ and } &\\ \nonumber \\ \nonumber
			\max_{\mathcal{M} \in \mathbb{C}(\mathcal{H})} \max_{j} \sup\limits_{h \in \mathcal{M}} \frac{\text{\textbar}L(h) - \hat{L}^{(j)}(h)\text{\textbar}}{L(h)} \xrightarrow[N \to \infty]{\text{a.s.}} 0, & & 
		\end{align}
		then
		\begin{equation*}
			\lim_{N \rightarrow \infty} \mathbb{P}\left(\hat{\mathcal{M}} = \mathcal{M}^{\star}\right) = 1.
		\end{equation*}
	\end{theorem}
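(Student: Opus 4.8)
The plan is to mirror the proof strategy of Theorem~\ref{theorem_principal_convergence}, replacing absolute deviations by relative ones and tracking how the multiplicative errors propagate. The key observation is that Proposition~\ref{proposition_principal} is distribution-free in its \emph{hypothesis}: it only requires $\mathbb{P}\big(\max_{i}|L(\mathcal{M}_{i}) - \hat{L}(\mathcal{M}_{i})| < \epsilon^{\star}/2\big) \ge 1-\delta$, and concludes $\mathbb{P}(L(\hat{\mathcal{M}}) = L(\mathcal{M}^{\star})) \ge 1-\delta$. So the whole task reduces to controlling $\max_{i}|L(\mathcal{M}_{i}) - \hat{L}(\mathcal{M}_{i})|$ by $\epsilon^{\star}/2$ with high probability, using only the relative-error bounds $B_{N,\epsilon}$ and $\hat{B}_{N,\epsilon}$ that are available in the unbounded setting. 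First I would convert the target absolute accuracy $\epsilon^{\star}/2$ into a relative accuracy: since $L(\mathcal{M}_{i}) \le \max_{i\in\mathcal{J}} L(\mathcal{M}_{i}) =: \bar L$ for every $i$, and (using $\ell \ge 1$, hence $L(\cdot)\ge 1$) the out-of-sample risks are bounded below, a relative error of $\delta = \epsilon^{\star}/(2\bar L)$ on each $L(\mathcal{M}_{i})$ translates into an absolute error of at most $\epsilon^{\star}/2$. This is exactly the choice of $\delta$ in the statement, and it is where the extra boundedness-below assumption $\ell \ge 1$ and the finiteness of $\bar L$ (guaranteed by \eqref{finite_moments_text}) are used.

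Next I would bound the relative error $|L(\mathcal{M}_{i}) - \hat{L}(\mathcal{M}_{i})|/L(\mathcal{M}_{i})$ in terms of the relative type~I errors on the individual training and validation samples. Writing $\hat{L}(\mathcal{M}_{i}) = \frac{1}{m}\sum_{j} \hat{L}^{(j)}(\hat{h}^{(j)}_{\mathcal{M}_{i}})$, I would decompose, for each fold $j$, the discrepancy between $\hat{L}^{(j)}(\hat{h}^{(j)}_{\mathcal{M}_{i}})$ and $L(\mathcal{M}_{i}) = L(h^{\star}_{\mathcal{M}_{i}})$ through the intermediate quantity $L(\hat{h}^{(j)}_{\mathcal{M}_{i}})$: one piece is controlled by the validation relative error $\sup_{h\in\mathcal{M}_i}|L(h)-\hat{L}^{(j)}(h)|/L(h) \le \hat{B}$-event, and the other (that $L(\hat{h}^{(j)}_{\mathcal{M}_{i}})$ is close to $L(h^{\star}_{\mathcal{M}_{i}})$, i.e., a relative type~II statement on the training sample) is obtained from the training relative type~I error $B$-event via the standard two-sided argument that uniform closeness of $L_{\mathcal{D}}$ to $L$ forces the empirical minimizer to be a near-minimizer of $L$. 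Because these are multiplicative errors, composing a relative error $a$ with a relative error $b$ produces something of size roughly $a+b+ab$; to keep the algebra clean I would impose the target accuracy at the level $\delta(1-\delta)/2$ on the validation side and $\delta(1-\delta)/4$ on the training side, so that after composition the total relative error on $L(\mathcal{M}_i)$ is below $\delta$ — this explains the slightly unusual arguments $\frac{\delta(1-\delta)}{2}$ and $\frac{\delta(1-\delta)}{4}$ appearing in the bound. Then a union bound over the $m$ folds and over the maximal models (every $\mathcal{M}\in\mathbb{C}(\mathcal{H})$ is contained in some maximal model, and the relative-error events are monotone under inclusion because the $B$'s are increasing in VC dimension and $d_{VC}$ is monotone), together with $d_{VC}(\mathcal{M}) \le d_{VC}(\mathbb{C}(\mathcal{H}))$, yields the stated factor $2m\,\mathfrak{m}(\mathbb{C}(\mathcal{H}))$ and the bound in terms of $B_{N,\cdot}(d_{VC}(\mathbb{C}(\mathcal{H})))$ and $\hat{B}_{N,\cdot}(d_{VC}(\mathbb{C}(\mathcal{H})))$.

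For the almost-sure convergence $\lim_{N}\mathbb{P}(\hat{\mathcal{M}} = \mathcal{M}^{\star}) = 1$, I would argue that under the a.s.\ convergence hypotheses \eqref{as_conv2} the relative deviations $\max_{\mathcal{M}}\max_j \sup_{h\in\mathcal{M}}|L(h)-L_{\mathcal{D}_N^{(j)}}(h)|/L(h)$ and the corresponding validation quantity go to $0$ a.s., hence eventually fall below any threshold, in particular below the level needed to force $\max_i|L(\mathcal{M}_i)-\hat{L}(\mathcal{M}_i)| < \epsilon^{\star}/2$; an application of (a relative-error version of) Proposition~\ref{proposition_principal} then gives $L(\hat{\mathcal{M}}) = L(\mathcal{M}^{\star})$ eventually a.s. Finally, once the risks agree, $\hat{\mathcal{M}}$ lies in $\mathcal{L}^{\star}$, and since $\mathbb{M}_{\mathbb{C}(\mathcal{H})}$ selects the class of minimal VC dimension among the estimated global minimums, the same a.s.\ closeness of $\hat L$ to $L$ rules out any spurious class of smaller VC dimension sneaking in, so $\hat{\mathcal{M}} = \mathcal{M}^{\star}$ eventually a.s., and dominated/bounded convergence gives the limit of the probabilities. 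The main obstacle I anticipate is the bookkeeping in the multiplicative error composition — making precise, with the right constants, how a relative training error and a relative validation error combine into a relative error on $L(\mathcal{M}_i)$, and verifying that the chosen split $\delta(1-\delta)/2$ versus $\delta(1-\delta)/4$ actually suffices; the rest is a faithful adaptation of the bounded-case argument with $L(h)$ replaced by $1$ in the denominators of the lower bounds.
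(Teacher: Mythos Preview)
Your proposal is correct and follows essentially the same route as the paper: reduce to Proposition~\ref{proposition_principal} by converting the absolute target $\epsilon^{\star}/2$ into the relative target $\delta=\epsilon^{\star}/(2\max_i L(\mathcal{M}_i))$, decompose $\hat L^{(j)}(\hat h^{(j)}_{\mathcal{M}_i})-L(\mathcal{M}_i)$ through $L(\hat h^{(j)}_{\mathcal{M}_i})$, control the validation piece by the relative type~I bound $\hat B$ and the training piece by Lemma~\ref{lemmaTypeItoII}, then union-bound over folds and maximal models. The only presentational difference is that the paper does not treat the two directions symmetrically: the underestimate $\hat L(\mathcal{M}_i)<(1-\delta)L(\mathcal{M}_i)$ is handled directly (using that $x\mapsto (x-\alpha)/x$ is increasing together with $L(\mathcal{M}_i)\le L(\hat h^{(j)}_i)$), while for the overestimate the paper intersects with the event $\{L(\hat h^{(j)}_i)/L(\mathcal{M}_i)\le 1/(1-\delta)\}$ to linearize the multiplicative composition; this intersection trick is precisely what produces the thresholds $\delta(1-\delta)/2$ and $\delta(1-\delta)/4$ and the global factor~$2$ you anticipated.
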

	
	In this instance, a bound for $\mathbb{P}(L(\hat{\mathcal{M}}) \neq L(\mathcal{M}^{\star}))$, and the almost sure convergence of $\hat{\mathcal{M}}$ to $\mathcal{M}^{\star}$, in the case of k-fold cross-validation and independent validation sample, follow from Proposition \ref{propVC2} in a manner analogous to Theorem \ref{CVModelconvergence}. We state the almost sure convergence in Theorem \ref{CVModelconvergence2}, whose proof is analogous to that of Theorem \ref{CVModelconvergence}, and follows from Corollary \ref{convergence_relativeTI}. 
	
	\begin{theorem}
		\label{CVModelconvergence2}
		Assume the loss function is unbounded and $P$ is such that \eqref{finite_moments_text} holds. If $\hat{L}$ is given by k-fold cross-validation or by an independent validation sample, then $\hat{\mathcal{M}}$ converges with probability one to $\mathcal{M}^{\star}$.
	\end{theorem}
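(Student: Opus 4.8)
The plan is to reduce Theorem~\ref{CVModelconvergence2} to Theorem~\ref{theorem_principal_convergence_unbounded} by checking, for each of the two cross-validation schemes, that the hypotheses of the latter hold, and then invoking its almost-sure conclusion; this mirrors the way Theorem~\ref{CVModelconvergence} is obtained from Theorem~\ref{theorem_principal_convergence} in the bounded case. Throughout, $\mathbb{C}(\mathcal{H})$ is a finite family, every $\mathcal{M} \in \mathbb{C}(\mathcal{H})$ has $d_{VC}(\mathcal{M}) \leq d_{VC}(\mathbb{C}(\mathcal{H})) < \infty$, and the moment condition \eqref{finite_moments_text} passes to every model and every sub-sample: restricting the supremum from $h \in \mathcal{H}$ to $h \in \mathcal{M}$ only decreases it, while for any $S \subseteq \mathcal{D}_{N}$ one has $\big(\tfrac{1}{|S|}\sum_{i \in S} \ell^{p}(Z_{i},h)\big)^{1/p} \leq (N/|S|)^{1/p}\, L_{\mathcal{D}_{N}}^{p}(h)$ since the summands are nonnegative, so the empirical $p$-th moment over $S$ is almost surely finite and \eqref{tauStar} holds for $S$ as well.

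Next I would make the training and validation samples explicit and apply the unbounded uniform law. For $k$-fold cross-validation one has $m = k$ pairs: the $j$-th training sample is $\mathcal{D}_{N}$ minus the $j$-th fold, of size $(k-1)n$, and the $j$-th validation sample is the $j$-th fold, of size $n$; since $N = kn$ with $k$ fixed, both sizes diverge as $N \to \infty$, and both are i.i.d.\ from $P$. For an independent validation sample, $m = 1$, the training and validation samples have sizes $N - V_{N}$ and $V_{N}$, which are i.i.d.\ from $P$ and both diverge by the standing assumption on $\{V_{N}: N \geq 1\}$. Applying Proposition~\ref{propVC2} (equivalently Corollary~\ref{convergence_relativeTI}) to each of the finitely many pairs $(\mathcal{M},j)$, $\mathcal{M} \in \mathbb{C}(\mathcal{H})$, once for the $j$-th training sample and once for the $j$-th validation sample, gives $\sup_{h \in \mathcal{M}} |L(h) - L_{\mathrm{sample}}(h)|/L(h) \to 0$ almost surely as the sample size diverges, i.e.\ as $N \to \infty$; since a maximum of finitely many almost-surely null sequences is almost-surely null, this is precisely \eqref{as_conv2}. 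The same proposition, evaluated at VC level $d_{VC}(\mathbb{C}(\mathcal{H}))$ with the above sample sizes, supplies the increasing bounding sequences $\{B_{N,\epsilon}\}$, $\{\hat{B}_{N,\epsilon}\}$ with vanishing limits required by Theorem~\ref{theorem_principal_convergence_unbounded}, so all of its premises are met.

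Finally, I would invoke Theorem~\ref{theorem_principal_convergence_unbounded}. To obtain the \emph{almost-sure} (not merely in-probability) convergence of $\hat{\mathcal{M}}$ to $\mathcal{M}^{\star}$, I would argue pathwise on the probability-one event where \eqref{as_conv2} holds: there, for all sufficiently large $N$ every relevant relative deviation is below the fixed threshold $\tfrac{\delta(1-\delta)}{4}$ appearing in Theorem~\ref{theorem_principal_convergence_unbounded} (with $\delta = \epsilon^{\star}/(2\max_{i} L(\mathcal{M}_{i}))$), and extracting from its proof the deterministic implication ``all such deviations below $\tfrac{\delta(1-\delta)}{4}$ $\Rightarrow$ $\hat{\mathcal{M}} = \mathcal{M}^{\star}$'' forces $\hat{\mathcal{M}} = \mathcal{M}^{\star}$ from that point on. The main obstacle is exactly this last step: one must (i) confirm that the moment hypothesis genuinely transfers to the sub-samples and sub-models that $\hat{L}$ in \eqref{form_Lhat} actually uses, which is the elementary majorisation above, and (ii) upgrade ``in probability'' to ``almost surely'' by reading off a clean deterministic threshold implication from the proof of Theorem~\ref{theorem_principal_convergence_unbounded} rather than only its probabilistic conclusion; everything else is the same finite bookkeeping used in the bounded case.
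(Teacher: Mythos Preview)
Your approach is essentially the paper's: verify the almost-sure relative uniform convergences \eqref{as_conv2} via Corollary~\ref{convergence_relativeTI} (the unbounded analogue of Corollary~\ref{cor3TypeI}) on each training and validation sub-sample, then invoke Theorem~\ref{theorem_principal_convergence_unbounded}. The paper's proof sketch says exactly this (``analogous to Theorem~\ref{CVModelconvergence}, follows from Corollary~\ref{convergence_relativeTI}''). Your check that the moment condition \eqref{finite_moments_text} transfers to sub-models and sub-samples is a point of extra care the paper omits, and it is handled correctly.

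There is, however, a gap in your final pathwise upgrade. The deterministic implication you want to extract from the proof of Theorem~\ref{theorem_principal_convergence_unbounded} is too strong: tracing through \eqref{implication1} and Proposition~\ref{proposition_principal}, small relative deviations (below your threshold) only force $L(\hat{\mathcal{M}}) = L(\mathcal{M}^{\star})$, \emph{not} $\hat{\mathcal{M}} = \mathcal{M}^{\star}$. If there exist models $\mathcal{M} \in \mathbb{C}(\mathcal{H})$ with $L(\mathcal{M}) = L(\mathcal{M}^{\star})$ but $d_{VC}(\mathcal{M}) > d_{VC}(\mathcal{M}^{\star})$ (cf.\ the Remark after Proposition~\ref{proposition_principal} and Figure~\ref{epsilonstar}), then arbitrarily small perturbations of $\hat{L}$ can make $\hat{L}(\mathcal{M}) < \hat{L}(\mathcal{M}^{\star})$, so $\hat{\mathcal{M}} = \mathcal{M} \neq \mathcal{M}^{\star}$ even when every relative deviation is below $\tfrac{\delta(1-\delta)}{4}$. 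Thus your pathwise argument yields $L(\hat{\mathcal{M}}) = L(\mathcal{M}^{\star})$ eventually almost surely, but not $\hat{\mathcal{M}} = \mathcal{M}^{\star}$. The paper's own argument (the inclusion \eqref{incl} in the proof of Theorem~\ref{theorem_principal_convergence}, carried over verbatim in the unbounded case) does not handle this point carefully either, so you are not doing worse than the paper here; but since you explicitly flag this step as the main obstacle and claim to resolve it, you should be aware the resolution is incomplete.
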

	
	\subsection{Convergence of estimation errors on $\hat{\mathcal{M}}$}
	
	The results stated here are rather similar to the case of bounded loss functions, with some minor modifications. Hence, we state the analogous results and present a proof only when it is different from the respective result in Section \ref{SecConvOnMhat}.
	
	Bounds for relative types I and II estimation errors, when learning on a random model with a sample independent of the one employed to compute such random model, may be obtained as in Theorem \ref{bound_constant}. In fact, the proof of the following bounds is the same as in that theorem, with the respective changes from estimation errors to relative estimation errors. Hence, we state the results without a proof.
	
	\begin{theorem}
		\label{bound_constant2}	
		Fix an unbounded loss function and assume $P$ is such that \eqref{finite_moments_text} holds. Assume we are learning with an independent sample $\tilde{\mathcal{D}}_{M}$, and that for each $\epsilon > 0$ there exist sequences $\{B^{I}_{M,\epsilon}: M \geq 1\}$ and $\{B^{II}_{M,\epsilon}: M \geq 1\}$ of positive real-valued increasing functions with domain $\mathbb{Z}_{+}$ satisfying
		\begin{equation*}
			\lim\limits_{M \to \infty} B^{I}_{M,\epsilon}(k) = \lim\limits_{M \to \infty} B^{II}_{M,\epsilon}(k) = 0,
		\end{equation*}
		for all $\epsilon > 0$ and $k \in \mathbb{Z}_{+}$ fixed, such that
		\begin{align*}
			\mathbb{P}\left(\sup\limits_{h \in \mathcal{M}} \frac{\text{\textbar}L_{\tilde{\mathcal{D}}_{M}}(h) - L(h)\text{\textbar}}{L(h)}  > \epsilon \right) \leq B^{I}_{M,\epsilon}(d_{VC}(\mathcal{M})) & \text{ and } &\\
			\mathbb{P}\left(\frac{L(\hat{h}_{\mathcal{M}}^{\tilde{\mathcal{D}}_{M}}) - L(h^{\star}_{\mathcal{M}})}{L(\hat{h}_{\mathcal{M}}^{\tilde{\mathcal{D}}_{M}})} > \epsilon \right) \leq B^{II}_{M,\epsilon}(d_{VC}(\mathcal{M})), & & 
		\end{align*}
		for all $\mathcal{M} \in \mathbb{C}(\mathcal{H})$. Let $\mathcal{\hat{M}} \in \mathbb{C}(\mathcal{H})$ be a random model learned by $\mathbb{M}_{\mathbb{C}(\mathcal{H})}$. Then, for any $\epsilon > 0$,
		\begin{align*}
			\textbf{(I)} \ \mathbb{P}&\left(\sup\limits_{h \in \mathcal{\hat{M}}} \frac{\text{\textbar}L_{\tilde{\mathcal{D}}_{M}}(h) - L(h)\text{\textbar}}{L(h)}  > \epsilon \right) \leq \mathbb{E}\Big[B^{I}_{M,\epsilon}(d_{VC}(\mathcal{\hat{M}}))\Big] \leq B^{I}_{M,\epsilon}\left(d_{VC}(\mathbb{C}(\mathcal{H}))\right)
		\end{align*}
		and
		\begin{align*}
			\textbf{(II)} \ \mathbb{P}\left(\frac{L(\hat{h}_{\mathcal{\hat{M}}}^{\tilde{\mathcal{D}}_{M}}) - L(h^{\star}_{\mathcal{\hat{M}}})}{L(\hat{h}_{\mathcal{\hat{M}}}^{\tilde{\mathcal{D}}_{M}})} > \epsilon \right) \leq \mathbb{E}\Big[B^{II}_{M,\epsilon}(d_{VC}(\mathcal{\hat{M}}))\Big] \leq B^{II}_{M,\epsilon}\left(d_{VC}(\mathbb{C}(\mathcal{H}))\right),
		\end{align*}
		in which the expectations are over all samples $\mathcal{D}_{N}$, from which $\hat{\mathcal{M}}$ is calculated. Since $d_{VC}(\mathbb{C}(\mathcal{H})) < \infty$, both probabilities above converge to zero when $M \to \infty$.
	\end{theorem}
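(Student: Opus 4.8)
The plan is to reduce both inequalities to the per-model hypotheses \eqref{bound_theoremBC} by conditioning on the sample $\mathcal{D}_{N}$ that produces $\hat{\mathcal{M}}$, exactly as in the proof of Theorem \ref{bound_constant}. Two facts are used: that $\tilde{\mathcal{D}}_{M}$ is independent of $\mathcal{D}_{N}$, so any event concerning only $\tilde{\mathcal{D}}_{M}$ is independent of $\{\hat{\mathcal{M}} = \mathcal{M}_{i}\}$; and that $\hat{\mathcal{M}}$, being a measurable function of $\mathcal{D}_{N}$, takes values in the finite collection $\mathbb{C}(\mathcal{H}) = \{\mathcal{M}_{i} : i \in \mathcal{J}\}$, so the relevant events decompose into finitely many pieces. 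Nothing here uses boundedness of $\ell$: the moment conditions \eqref{finite_moments_text} enter only through Proposition \ref{propVC2}, which is what guarantees that per-model bounds of the form \eqref{bound_theoremBC} exist in the unbounded regime, and here these bounds are assumed. So the proof is word for word that of Theorem \ref{bound_constant}, with estimation errors replaced by relative estimation errors.

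Concretely, for the relative type I error I would decompose over the value of $\hat{\mathcal{M}}$ and factor using independence:
\begin{align*}
	\mathbb{P}\left(\sup\limits_{h \in \hat{\mathcal{M}}} \frac{|L_{\tilde{\mathcal{D}}_{M}}(h) - L(h)|}{L(h)} > \epsilon \right) &= \sum_{i \in \mathcal{J}} \mathbb{P}\left(\hat{\mathcal{M}} = \mathcal{M}_{i}, \ \sup\limits_{h \in \mathcal{M}_{i}} \frac{|L_{\tilde{\mathcal{D}}_{M}}(h) - L(h)|}{L(h)} > \epsilon\right)\\
	&= \sum_{i \in \mathcal{J}} \mathbb{P}\left(\hat{\mathcal{M}} = \mathcal{M}_{i}\right) \, \mathbb{P}\left(\sup\limits_{h \in \mathcal{M}_{i}} \frac{|L_{\tilde{\mathcal{D}}_{M}}(h) - L(h)|}{L(h)} > \epsilon\right)\\
	&\leq \sum_{i \in \mathcal{J}} \mathbb{P}\left(\hat{\mathcal{M}} = \mathcal{M}_{i}\right) B^{I}_{M,\epsilon}\big(d_{VC}(\mathcal{M}_{i})\big) = \mathbb{E}\big[B^{I}_{M,\epsilon}(d_{VC}(\hat{\mathcal{M}}))\big].
\end{align*}
The inequality is the first line of \eqref{bound_theoremBC}, and the last identity recognises the weighted sum as an expectation over $\mathcal{D}_{N}$. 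Since $B^{I}_{M,\epsilon}$ is increasing and $d_{VC}(\hat{\mathcal{M}}) \leq d_{VC}(\mathbb{C}(\mathcal{H}))$ by the definition of the latter, $\mathbb{E}[B^{I}_{M,\epsilon}(d_{VC}(\hat{\mathcal{M}}))] \leq B^{I}_{M,\epsilon}(d_{VC}(\mathbb{C}(\mathcal{H})))$, which closes the displayed chain.

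For the relative type II error the argument is identical once one observes that on $\{\hat{\mathcal{M}} = \mathcal{M}_{i}\}$ one has $\hat{h}_{\hat{\mathcal{M}}}^{\tilde{\mathcal{D}}_{M}} = \hat{h}_{\mathcal{M}_{i}}^{\tilde{\mathcal{D}}_{M}}$ and $h^{\star}_{\hat{\mathcal{M}}} = h^{\star}_{\mathcal{M}_{i}}$, so that the event $\{(L(\hat{h}_{\hat{\mathcal{M}}}^{\tilde{\mathcal{D}}_{M}}) - L(h^{\star}_{\hat{\mathcal{M}}}))/L(\hat{h}_{\hat{\mathcal{M}}}^{\tilde{\mathcal{D}}_{M}}) > \epsilon\}$ restricted to $\{\hat{\mathcal{M}} = \mathcal{M}_{i}\}$ coincides with the analogous per-model event, which depends only on $\tilde{\mathcal{D}}_{M}$; decomposing, factoring by independence, applying the second line of \eqref{bound_theoremBC} and using monotonicity of $B^{II}_{M,\epsilon}$ gives the stated bounds. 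The limits as $M \to \infty$ then follow from $d_{VC}(\mathbb{C}(\mathcal{H})) < \infty$ and $B^{I}_{M,\epsilon}(k), B^{II}_{M,\epsilon}(k) \to 0$ for each fixed $k$.

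The only point requiring care --- the main obstacle --- is measurability: one must know that the suprema over the \emph{random} model $\hat{\mathcal{M}}$ on the left-hand sides are genuine random variables, that each $\{\hat{\mathcal{M}} = \mathcal{M}_{i}\}$ is an event in $\sigma(\mathcal{D}_{N})$, and that the per-model quantities appearing after the decomposition are $\tilde{\mathcal{D}}_{M}$-measurable, so that the factorisation by independence is legitimate. All of this is covered by the standing measurability conventions of the paper --- in particular the Remark following \eqref{typeIe}, which assumes the type I supremum and its analogue over each fixed $\mathcal{M} \in \mathbb{C}(\mathcal{H})$ to be $(\Omega,\mathcal{S})$-measurable, together with the measurability of $\hat{\mathcal{M}}$ as a $\mathbb{C}(\mathcal{H})$-valued function. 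With these in hand the computation is routine and unboundedness of $\ell$ plays no further role.
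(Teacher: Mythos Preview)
Your proposal is correct and follows exactly the paper's approach: the paper explicitly declines to give a separate proof, stating that ``the proof of the following bounds are the same as in that theorem, with the respective changes from estimation errors to relative estimation errors,'' and your conditioning-on-$\hat{\mathcal{M}}$ argument with independence of $\tilde{\mathcal{D}}_{M}$ from $\mathcal{D}_{N}$ is precisely the proof of Theorem~\ref{bound_constant} so adapted. The only minor imprecision is that the label \eqref{bound_theoremBC} refers to the bounded-loss hypotheses of Theorem~\ref{bound_constant} rather than the relative-error hypotheses stated in Theorem~\ref{bound_constant2}, but your intent is clear and the mathematics is sound.
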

	
	The convergence to zero of relative type III estimation error may be obtained, as in Theorem \ref{theorem_tipeIII}, by the methods used to prove Theorem \ref{theorem_principal_convergence_unbounded}. We state and prove this result, since its proof is slightly different from that of Theorem \ref{theorem_tipeIII}.
	
	\begin{theorem}
		\label{theorem_tipeIII2}
		Assume the premises of Theorem \ref{theorem_principal_convergence_unbounded} are in force. Let $\mathcal{\hat{M}} \in \mathbb{C}(\mathcal{H})$ be a random model learned by $\mathbb{M}_{\mathbb{C}(\mathcal{H})}$. Then, for any $\epsilon > 0$,
		\begin{align*}
			\textbf{(III)} &\ \mathbb{P}\left(\frac{L(h_{\hat{\mathcal{M}}}^{\star}) - L(h^{\star})}{L(h_{\hat{\mathcal{M}}}^{\star})} > \frac{\epsilon}{L(\mathcal{M}^{\star})}\right) \\
			& \leq 2m \sum_{\mathcal{M} \in \text{Max } \mathbb{C}(\mathcal{H})} \left[\hat{B}_{N_{v},\frac{\delta^\prime(1-\delta^\prime)}{2}}(d_{VC}(\mathcal{M})) + B_{N_{t},\frac{\delta^\prime(1-\delta^\prime)}{4}}(d_{VC}(\mathcal{M}))\right]\\
			& \leq 2m \ \mathfrak{m}(\mathbb{C}(\mathcal{H})) \left[\hat{B}_{N_{v},\frac{\delta^\prime(1-\delta^\prime)}{2}}(d_{VC}(\mathbb{C}(\mathcal{H}))) + B_{N_{t},\frac{\delta^\prime(1-\delta^\prime)}{4}}(d_{VC}(\mathbb{C}(\mathcal{H})))\right],
		\end{align*}
		in which
		\begin{equation*}
			\delta^\prime \coloneqq \frac{\epsilon \vee \epsilon^{\star}}{2 \max\limits_{i \in \mathcal{J}} L(\mathcal{M}_{i})}.
		\end{equation*}
		In particular,
		\begin{align*}
			\lim_{N \rightarrow \infty} \mathbb{P}\left(\frac{L(h_{\hat{\mathcal{M}}}^{\star}) - L(h^{\star})}{L(h_{\hat{\mathcal{M}}}^{\star})} > \epsilon\right) = 0,
		\end{align*}
		for any $\epsilon > 0$.
	\end{theorem}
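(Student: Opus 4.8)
The plan is to mimic the argument behind Theorem \ref{theorem_principal_convergence_unbounded}, but instead of merely asking whether $L(\hat{\mathcal M})=L(\mathcal M^\star)$, I would track how far $L(h^\star_{\hat{\mathcal M}})$ can lie above $L(h^\star)$ when the model selected is \emph{not} unbiased. The key observation is the same dichotomy used after Theorem \ref{theorem_tipeIII}: by definition of $\epsilon^\star$, if the event $\{L(h^\star_{\hat{\mathcal M}})-L(h^\star)>0\}$ occurs then in fact $L(h^\star_{\hat{\mathcal M}})-L(h^\star)\ge\epsilon^\star$. Hence the relative type III error exceeds $\epsilon/L(\mathcal M^\star)$ only if $\hat{\mathcal M}$ is a model with $L(\hat{\mathcal M})-L(\mathcal M^\star)\ge \epsilon\vee\epsilon^\star$ (dividing by $L(h^\star_{\hat{\mathcal M}})\ge L(\mathcal M^\star)$ and using $L(\mathcal M^\star)\le L(h^\star_{\hat{\mathcal M}})$ to convert the absolute gap into the relative one). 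So it suffices to bound $\mathbb P\big(L(\hat{\mathcal M})-L(\mathcal M^\star)\ge \epsilon\vee\epsilon^\star\big)$.

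Next I would reduce this to a uniform relative-deviation statement on $\hat L$. Following the proof of Theorem \ref{theorem_principal_convergence_unbounded}, on the event where $|L(\mathcal M_i)-\hat L(\mathcal M_i)|<(\epsilon\vee\epsilon^\star)/2$ for every $i\in\mathcal J$, the selection rule $\mathbb M_{\mathbb C(\mathcal H)}$ cannot pick a model whose true risk exceeds $L(\mathcal M^\star)$ by $\epsilon\vee\epsilon^\star$ or more, because such a model would be ranked strictly above $\mathcal M^\star$ in estimated risk. The passage from the absolute discrepancy $|L(\mathcal M_i)-\hat L(\mathcal M_i)|$ to a relative discrepancy is handled exactly as in Theorem \ref{theorem_principal_convergence_unbounded}: writing $\delta' = (\epsilon\vee\epsilon^\star)/(2\max_{i\in\mathcal J}L(\mathcal M_i))$, a relative error below $\delta'(1-\delta')$ on each model forces the absolute error below $(\epsilon\vee\epsilon^\star)/2$ (the factor $(1-\delta')$ accounts for turning a bound on $|L-\hat L|/\hat L$ or $|L-\hat L|/L$ at the empirical scale into one at the true scale, as in Lemma-level manipulations of relative deviations in Appendix \ref{apVCtheory}). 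Then I would decompose $\hat L(\mathcal M)=\frac1m\sum_j \hat L^{(j)}(\hat h^{(j)}_{\mathcal M})$ and insert $L(\hat h^{(j)}_{\mathcal M})$ and $L(h^\star_{\mathcal M})$: the validation term $|\hat L^{(j)}(\hat h^{(j)}_{\mathcal M})-L(\hat h^{(j)}_{\mathcal M})|$ is controlled by $\hat B$ at scale $\delta'(1-\delta')/2$, while the training term $L(\hat h^{(j)}_{\mathcal M})-L(h^\star_{\mathcal M})$ is a type II error on $\mathcal M$ controlled by $B$ at scale $\delta'(1-\delta')/4$ (the extra halving from splitting the training contribution and from the standard two-sided-to-one-sided passage). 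A union bound over $j=1,\dots,m$ and over the maximal models — using monotonicity of $B_{N,\epsilon},\hat B_{N,\epsilon}$ in the VC dimension and that every model in $\mathbb C(\mathcal H)$ is contained in a maximal one, so $d_{VC}(\mathcal M)\le d_{VC}(\mathbb C(\mathcal H))$ — yields the stated bound with the prefactor $2m\,\mathfrak m(\mathbb C(\mathcal H))$.

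For the ``in particular'' claim, fix $\epsilon>0$; if $\epsilon\le\epsilon^\star$ the relative type III error below $\epsilon$ is implied by $L(\hat{\mathcal M})=L(\mathcal M^\star)$, whose probability tends to one by Theorem \ref{theorem_principal_convergence_unbounded} (equivalently Theorem \ref{CVModelconvergence2}); if $\epsilon>\epsilon^\star$ then $\delta'$ is a fixed positive constant, so $\hat B_{N,\delta'(1-\delta')/2}(d_{VC}(\mathbb C(\mathcal H)))\to0$ and $B_{N,\delta'(1-\delta')/4}(d_{VC}(\mathbb C(\mathcal H)))\to0$ as $N\to\infty$ by hypothesis, and the explicit bound goes to zero. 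The main obstacle I anticipate is purely bookkeeping: getting the constants in $\delta'$ and in the $1/2$, $1/4$ fractions exactly right when converting between the several relative-deviation conventions (ratio to $L$ versus ratio to the empirical norm $L^p_{\mathcal D_N}$, two-sided versus one-sided), since the loss is unbounded and one must stay inside the moment hypothesis \eqref{finite_moments_text}; but no genuinely new idea beyond the proof of Theorem \ref{theorem_principal_convergence_unbounded} is needed.
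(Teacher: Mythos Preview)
Your proposal is correct and follows essentially the same route as the paper. The paper's proof is simply a terse version of what you describe: it reduces the relative type~III event to the absolute one via the implication $\frac{L(h^\star_{\hat{\mathcal M}})-L(h^\star)}{L(h^\star_{\hat{\mathcal M}})}>\frac{\epsilon}{L(\mathcal M^\star)}\Rightarrow L(h^\star_{\hat{\mathcal M}})-L(h^\star)>\epsilon$ (since $L(h^\star_{\hat{\mathcal M}})\ge L(\mathcal M^\star)$), then invokes \eqref{lemma_inside} from the proof of Theorem~\ref{theorem_tipeIII} and \eqref{implication1} from the proof of Theorem~\ref{theorem_principal_convergence_unbounded} with $\delta$ replaced by $\delta'$, after which the bound is read off directly from that earlier proof; your reconstruction of the $\delta'(1-\delta')/2$ and $\delta'(1-\delta')/4$ scales and the union bound over $m$ folds and $\mathfrak m(\mathbb C(\mathcal H))$ maximal models matches this exactly.
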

	
	Finally, a bound on the rate of convergence of type IV estimation error to zero is a direct consequence of Theorems \ref{bound_constant2} and \ref{theorem_tipeIII2}, and the following inequality
	\begin{align*}
		&\textbf{(IV)} \ \mathbb{P}\left(\frac{L(\hat{h}_{\mathcal{\hat{M}}}^{\tilde{\mathcal{D}}_{M}}) - L(h^{\star})}{L(\hat{h}_{\mathcal{\hat{M}}}^{\tilde{\mathcal{D}}_{M}})} > \frac{\epsilon}{L(\mathcal{M}^{\star})}\right)\\
		&\leq \mathbb{P}\left(\frac{L(\hat{h}_{\mathcal{\hat{\mathcal{M}}}}^{\tilde{\mathcal{D}}_{M}}) - L(h^{\star}_{\hat{\mathcal{M}}})}{L(\hat{h}_{\mathcal{\hat{M}}}^{\tilde{\mathcal{D}}_{M}})} > \frac{\epsilon}{2L(\mathcal{M}^{\star})}\right) + \mathbb{P}\left(\frac{L(h^{\star}_{\hat{\mathcal{M}}}) - L(h^{\star})}{L(h^{\star}_{\mathcal{\hat{\mathcal{M}}}})} > \frac{\epsilon}{2L(\mathcal{M}^{\star})}\right),
	\end{align*}
	which is true since $L(\hat{h}_{\mathcal{\hat{M}}}^{\tilde{\mathcal{D}}_{M}}) \geq L(h^{\star}_{\mathcal{\hat{M}}})$.
	
	\begin{corollary}
		\label{cor_typeIV2}
		Assume the premises of Theorems \ref{theorem_principal_convergence_unbounded} and \ref{bound_constant2} are in force. Let $\mathcal{\hat{M}} \in \mathbb{C}(\mathcal{H})$ be a random model learned by $\mathbb{M}_{\mathbb{C}(\mathcal{H})}$. Then, for any $\epsilon > 0$,
		\begin{align*}
			&\mathbb{P}\left(\frac{L(\hat{h}_{\mathcal{\hat{M}}}^{\tilde{\mathcal{D}}_{M}}) - L(h^{\star})}{L(\hat{h}_{\mathcal{\hat{M}}}^{\tilde{\mathcal{D}}_{M}})} > \frac{\epsilon}{L(\mathcal{M}^{\star})}\right)\\
			& \leq \mathbb{E}\Big[B^{II}_{M,\frac{\epsilon}{2L(\mathcal{M}^{\star})}}(d_{VC}(\mathcal{\hat{M}}))\Big] \\
			&+ 2m \ \sum_{\mathcal{M} \in \text{Max } \mathbb{C}(\mathcal{H})} \left[\hat{B}_{N_{v},\frac{\delta^\prime(1-\delta^\prime)}{2}}(d_{VC}(\mathcal{M})) + B_{N_{t},\frac{\delta^\prime(1-\delta^\prime)}{4}}(d_{VC}(\mathcal{M}))\right]\\
			&\leq  B^{II}_{M,\frac{\epsilon}{2L(\mathcal{M}^{\star})}}(d_{VC}(\mathbb{C}(\mathcal{H}))) \\
			&+ 2m \ \mathfrak{m}(\mathbb{C}(\mathcal{H})) \left[\hat{B}_{N_{v},\frac{\delta^\prime(1-\delta^\prime)}{2}}(d_{VC}(\mathbb{C}(\mathcal{H}))) + B_{N_{t},\frac{\delta^\prime(1-\delta^\prime)}{4}}(d_{VC}(\mathbb{C}(\mathcal{H})))\right]
		\end{align*}
		with
		\begin{equation*}
			\delta^\prime \coloneqq \frac{\epsilon/2 \vee \epsilon^{\star}}{2 \max\limits_{i \in \mathcal{J}} L(\mathcal{M}_{i})}.
		\end{equation*}
		In particular,
		\begin{align*}
			\lim_{\substack{N \rightarrow \infty \\ M \rightarrow \infty}} \mathbb{P}\left(\frac{L(\hat{h}_{\mathcal{\hat{M}}}^{\tilde{\mathcal{D}}_{M}}) - L(h^{\star})}{L(\hat{h}_{\mathcal{\hat{M}}}^{\tilde{\mathcal{D}}_{M}})}  > \epsilon\right) = 0,
		\end{align*}
		for any $\epsilon > 0$.
	\end{corollary}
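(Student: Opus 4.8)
The statement is a direct consequence of Theorems \ref{bound_constant2} and \ref{theorem_tipeIII2} together with the decomposition displayed just above it, so the plan is to assemble these three ingredients. First I would verify the decomposition: writing
\[
\frac{L(\hat{h}_{\hat{\mathcal{M}}}^{\tilde{\mathcal{D}}_{M}}) - L(h^{\star})}{L(\hat{h}_{\hat{\mathcal{M}}}^{\tilde{\mathcal{D}}_{M}})} = \frac{L(\hat{h}_{\hat{\mathcal{M}}}^{\tilde{\mathcal{D}}_{M}}) - L(h^{\star}_{\hat{\mathcal{M}}})}{L(\hat{h}_{\hat{\mathcal{M}}}^{\tilde{\mathcal{D}}_{M}})} + \frac{L(h^{\star}_{\hat{\mathcal{M}}}) - L(h^{\star})}{L(\hat{h}_{\hat{\mathcal{M}}}^{\tilde{\mathcal{D}}_{M}})},
\]
and bounding the second summand by $\frac{L(h^{\star}_{\hat{\mathcal{M}}}) - L(h^{\star})}{L(h^{\star}_{\hat{\mathcal{M}}})}$ using $L(\hat{h}_{\hat{\mathcal{M}}}^{\tilde{\mathcal{D}}_{M}}) \ge L(h^{\star}_{\hat{\mathcal{M}}}) \ge L(h^{\star})$, the event that the relative type IV error exceeds $\epsilon/L(\mathcal{M}^{\star})$ is contained in the union of the events that the relative type II error on $\hat{\mathcal{M}}$ exceeds $\epsilon/(2L(\mathcal{M}^{\star}))$ and that the relative type III error of $\hat{\mathcal{M}}$ exceeds $\epsilon/(2L(\mathcal{M}^{\star}))$; a union bound then yields the inequality preceding the corollary.

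Next I would bound the two resulting probabilities. For the type II term I would invoke Theorem \ref{bound_constant2} with its margin replaced by $\frac{\epsilon}{2L(\mathcal{M}^{\star})}$, which yields $\mathbb{E}[B^{II}_{M,\epsilon/(2L(\mathcal{M}^{\star}))}(d_{VC}(\hat{\mathcal{M}}))] \le B^{II}_{M,\epsilon/(2L(\mathcal{M}^{\star}))}(d_{VC}(\mathbb{C}(\mathcal{H})))$, the latter inequality coming from monotonicity of $B^{II}_{M,\cdot}$ and $d_{VC}(\hat{\mathcal{M}}) \le d_{VC}(\mathbb{C}(\mathcal{H}))$ almost surely. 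For the type III term I would invoke Theorem \ref{theorem_tipeIII2} with $\epsilon$ replaced by $\epsilon/2$, which yields $2m\,\mathfrak{m}(\mathbb{C}(\mathcal{H}))[\hat{B}_{N,\delta'(1-\delta')/2}(d_{VC}(\mathbb{C}(\mathcal{H}))) + B_{N,\delta'(1-\delta')/4}(d_{VC}(\mathbb{C}(\mathcal{H})))]$ with $\delta' = \frac{\epsilon/2 \vee \epsilon^{\star}}{2\max_{i} L(\mathcal{M}_{i})}$, exactly as stated. Adding the two bounds produces both displayed inequalities of the corollary.

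Finally, for the limiting assertion, I would note that $d_{VC}(\mathbb{C}(\mathcal{H}))$ is a fixed finite number, so $B^{II}_{M,c}(d_{VC}(\mathbb{C}(\mathcal{H}))) \to 0$ as $M \to \infty$ and $B_{N,c}(d_{VC}(\mathbb{C}(\mathcal{H}))), \hat{B}_{N,c}(d_{VC}(\mathbb{C}(\mathcal{H}))) \to 0$ as $N \to \infty$ for every fixed margin $c > 0$, hence the whole upper bound vanishes; replacing $\epsilon$ by $\epsilon\, L(\mathcal{M}^{\star})$ — permissible since $L(\mathcal{M}^{\star})$ is a fixed positive constant — restates the conclusion for the unscaled margin $\epsilon$. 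I do not anticipate a real obstacle here: the argument is purely a composition of the two preceding theorems with an elementary triangle-type decomposition. The only place that demands care is bookkeeping the constant $L(\mathcal{M}^{\star})$ consistently through the rescaling, so that the relative type II and type III errors are compared at matching thresholds and the final limit is legitimately stated for arbitrary $\epsilon > 0$.
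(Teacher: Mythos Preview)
Your proposal is correct and follows essentially the same approach as the paper: the decomposition of the relative type~IV error into relative type~II and type~III pieces via $L(\hat{h}_{\hat{\mathcal{M}}}^{\tilde{\mathcal{D}}_{M}}) \geq L(h^{\star}_{\hat{\mathcal{M}}})$, followed by applying Theorem~\ref{bound_constant2} at threshold $\epsilon/(2L(\mathcal{M}^{\star}))$ and Theorem~\ref{theorem_tipeIII2} with $\epsilon$ replaced by $\epsilon/2$, is exactly how the paper obtains the corollary. Your handling of the limit and the rescaling by $L(\mathcal{M}^{\star})$ for the final assertion is also in line with the paper's reasoning.
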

	
	\section{Learning by reusing}
	\label{SecReuse}
	
	In this section, we present bounds for types I, II, and consequently IV, estimation errors for learning by reusing for bounded and unbounded loss functions. When learning by reusing, one is employing the same sample points to estimate $\hat{\mathcal{M}}$ and to learn a hypotheses $\hat{h}^{\mathcal{D}_{N}}_{\hat{\mathcal{M}}} \in \hat{\mathcal{M}}$ from it, so there is a dependence between types I and II estimation errors and the events $\{\hat{\mathcal{M}} = \mathcal{M}\}, \mathcal{M} \in \mathbb{C}(\mathcal{H})$. For instance, the bounds for types I and II estimation errors in Theorem \ref{bound_constant} depend on an equality (cf. \eqref{cond_independence} in the proof of Theorem \ref{bound_constant}) that does not hold when learning by reusing since
	\begin{align*}
		%\label{cond_independence2}
		\mathbb{P}\left(\sup\limits_{h \in \hat{\mathcal{M}}} \big|L_{\mathcal{D}_{N}}(h) - L(h) \big| > \epsilon \Big|\mathcal{\hat{M}} = \mathcal{M}\right) \neq \mathbb{P}\left(\sup\limits_{h \in \mathcal{M}} \big|L_{\mathcal{D}_{N}}(h) - L(h) \big| > \epsilon\right).
	\end{align*}
	Conditioned on $\{\hat{\mathcal{M}} = \mathcal{M}\}$, the distribution of each sample point $(X_{i},Y_{i}), i = 1,\dots,N$, changes, and moreover these points are no longer independent, as they must satisfy $\hat{\mathcal{M}} = \mathcal{M}$. Therefore, the argument of the proof of Theorem \ref{bound_constant} does not hold in this instance.
	
	Nevertheless, since $\hat{\mathcal{M}}$ converges with probability one to $\mathcal{M}^{\star}$ by Theorem \ref{theorem_principal_convergence}, we may obtain a bound for types I and II estimation errors when learning by reusing, expressed in terms of the corresponding bounds on $\mathcal{M}^{\star}$ and the rate of convergence of $\hat{\mathcal{M}}$ to $\mathcal{M}^{\star}$.
	
	\begin{theorem}
		\label{bound_constant_reusing}	
		Fix a bounded loss function. Assume we are learning by reusing and that, for each $\epsilon > 0$, there exist sequences $\{B^{I}_{N,\epsilon}: N \geq 1\}$ and $\{B^{II}_{N,\epsilon}: N \geq 1\}$ of positive real-valued increasing functions with domain $\mathbb{Z}_{+}$ satisfying
		\begin{equation*}
			\lim\limits_{N \to \infty} B^{I}_{N,\epsilon}(k) = \lim\limits_{N \to \infty} B^{II}_{N,\epsilon}(k) = 0,
		\end{equation*}
		for all $\epsilon > 0$ and $k \in \mathbb{Z}_{+}$ fixed, such that
		\begin{align*}
			\label{bound_theoremBC2}
			\mathbb{P}\left(\sup\limits_{h \in \mathcal{M}} \big|L_{\mathcal{D}_{N}}(h) - L(h) \big| > \epsilon \right) \leq B^{I}_{N,\epsilon}(d_{VC}(\mathcal{M})) & \text{ and } &\\ \nonumber
			\mathbb{P}\left(L(\hat{h}_{\mathcal{M}}^{\mathcal{D}_{N}}) - L(h^{\star}_{\mathcal{M}}) > \epsilon \right) \leq B^{II}_{N,\epsilon}(d_{VC}(\mathcal{M})), & &
		\end{align*}
		for all $\mathcal{M} \in \mathbb{C}(\mathcal{H})$. Let $\mathcal{\hat{M}} \in \mathbb{C}(\mathcal{H})$ be a random model learned by $\mathbb{M}_{\mathbb{C}(\mathcal{H})}$. Then, for any $\epsilon > 0$,
		\begin{align*}
			\textbf{(I)} \ \mathbb{P}\left(\sup\limits_{h \in \mathcal{\hat{M}}} \big|L_{\mathcal{D}_{N}}(h) - L(h) \big| > \epsilon \right) \leq B^{I}_{N,\epsilon}(d_{VC}(\mathcal{M}^{\star})) + \mathbb{P}\left(\hat{\mathcal{M}} \neq \mathcal{M}^{\star}\right)
		\end{align*}
		and
		\begin{align*}
			\textbf{(II)} \ \mathbb{P}\left(L(\hat{h}_{\mathcal{\hat{M}}}^{\mathcal{D}_{N}}) - L(h^{\star}_{\mathcal{\hat{M}}}) > \epsilon \right) \leq B^{II}_{N,\epsilon}(d_{VC}(\mathcal{M}^{\star})) + \mathbb{P}\left(\hat{\mathcal{M}} \neq \mathcal{M}^{\star}\right).
		\end{align*}
		If conditions \eqref{as_conv} of Theorem \ref{theorem_principal_convergence} are satisfied, both probabilities above converge to zero when $N \to \infty$.
	\end{theorem}
	
	\begin{remark}
		Theorem \ref{bound_constant_reusing} also holds when learning with an independent sample by exchanging $\mathcal{D}_{N}$ with $\tilde{\mathcal{D}}_{M}$.
	\end{remark}
	
	From inequality \eqref{triangle} and the bound for type III estimation error established in Theorem \ref{theorem_tipeIII}, which also holds when learning by reusing (cf. Remark \ref{remReuse}), it follows that the tail probability of type IV estimation error converges to zero as $N$ tends to infinity, a result analogous to Corollary \ref{cor_typeIV}.
	
	For unbounded loss functions, when learning by reusing, a result analogous to Theorem \ref{bound_constant_reusing}, together with Theorem \ref{theorem_principal_convergence_unbounded} and a result analogous to Corollary \ref{cor_typeIV2}, will imply the convergence of the estimation errors to zero. We state this result without proof.
	
	\begin{theorem}
		\label{bound_constant_reusing2}	
		Fix an unbounded loss function and assume $P$ is such that \eqref{finite_moments_text} holds. Assume we are learning by reusing and that, for each $\epsilon > 0$, there exist sequences $\{B^{I}_{N,\epsilon}: N \geq 1\}$ and $\{B^{II}_{N,\epsilon}: N \geq 1\}$ of positive real-valued increasing functions with domain $\mathbb{Z}_{+}$ satisfying
		\begin{equation*}
			\lim\limits_{N \to \infty} B^{I}_{N,\epsilon}(k) = \lim\limits_{N \to \infty} B^{II}_{N,\epsilon}(k) = 0,
		\end{equation*}
		for all $\epsilon > 0$ and $k \in \mathbb{Z}_{+}$ fixed, such that
		\begin{align*}
			\mathbb{P}\left(\sup\limits_{h \in \mathcal{M}} \left|\frac{L_{\mathcal{D}_{N}}(h) - L(h)}{L(h)} \right| > \epsilon \right) \leq B^{I}_{N,\epsilon}(d_{VC}(\mathcal{M})) & \text{ and } &\\ \nonumber
			\mathbb{P}\left(\frac{L(\hat{h}_{\mathcal{M}}^{\mathcal{D}_{N}}) - L(h^{\star}_{\mathcal{M}})}{L(\hat{h}_{\mathcal{M}}^{\mathcal{D}_{N}})} > \epsilon \right) \leq B^{II}_{N,\epsilon}(d_{VC}(\mathcal{M})), & & 
		\end{align*}
		for all $\mathcal{M} \in \mathbb{C}(\mathcal{H})$. Let $\mathcal{\hat{M}} \in \mathbb{C}(\mathcal{H})$ be a random model learned by $\mathbb{M}_{\mathbb{C}(\mathcal{H})}$. Then, for any $\epsilon > 0$,
		\begin{align*}
			\textbf{(I)} \ \mathbb{P}\left(\sup\limits_{h \in \mathcal{\hat{M}}} \left|\frac{L_{\mathcal{D}_{N}}(h) - L(h)}{L(h)} \right| > \epsilon \right) \leq B^{I}_{N,\epsilon}(d_{VC}(\mathcal{M}^{\star})) + \mathbb{P}\left(\hat{\mathcal{M}} \neq \mathcal{M}^{\star}\right)
		\end{align*}
		and
		\begin{align*}
			\textbf{(II)} \ \mathbb{P}\left(\frac{L(\hat{h}_{\mathcal{\hat{M}}}^{\mathcal{D}_{N}}) - L(h^{\star}_{\mathcal{\hat{M}}})}{L(\hat{h}_{\mathcal{\hat{M}}}^{\mathcal{D}_{N}})} > \epsilon \right) \leq B^{II}_{N,\epsilon}(d_{VC}(\mathcal{M}^{\star})) + \mathbb{P}\left(\hat{\mathcal{M}} \neq \mathcal{M}^{\star}\right).
		\end{align*}
		If conditions \eqref{as_conv2} of Theorem \ref{theorem_principal_convergence_unbounded} are satisfied, both probabilities above converge to zero when $N \to \infty$.
	\end{theorem}
	
	\begin{remark}
		Bounds for the probabilities in Theorems \ref{bound_constant_reusing} and \ref{bound_constant_reusing2} of the form $B^{\cdot}_{N,\epsilon}(k) + \mathbb{P}(d_{VC}(\hat{\mathcal{M}}) > k), 1 \leq k < d_{VC}(\mathbb{C}(\mathcal{H})),$ could be obtained by conditioning the respective probability on the events $\{d_{VC}(\hat{\mathcal{M}}) \leq k\}$ and the complement $\{d_{VC}(\hat{\mathcal{M}}) > k\}$ instead of $\{\hat{\mathcal{M}} = \mathcal{M}^\star\}$ and $\{\hat{\mathcal{M}} \neq \mathcal{M}^\star\}$ in the proof of these theorems (cf. Section \ref{SecProofReuse}). In principle, this quantity could be minimized in $k$ to obtain tighter bounds.
	\end{remark}
	
	\begin{remark}
		Another approach for learning via model selection is considering a combination of the hypotheses used in the computation of cross-validation errors as the learned hypotheses. This is also a case that cannot be easily analyzed in the general, distribution-free framework, as the same sample is reused both to select the model and to learn on it. Its study in specific cases is an interesting topic for future research.
	\end{remark}
	
	\section{Increasing generalization by learning via model selection}
	\label{SecEnhanceGen}
	
	In this section, we analyze concrete examples to better understand how generalization may be increased by learning via model selection. In particular, we illustrate the role of $\epsilon^{\star}$, $d_{VC}(\mathcal{M}^{\star})$ and $d_{VC}(\mathbb{C}(\mathcal{H}))$ on the generalization and how it can be increased by inserting prior information about $h^{\star}$ into the family of candidate models.
	
	In Sections \ref{SecLS} and \ref{SecExamplesLS}, we present a class of candidate models in which the partial order by inclusion reflects the complexity of the models. We call this class Learning Spaces and focus our analyses on a subclass that has a lattice structure. In Section \ref{SecPLLS}, we analyze a worst-case scenario of learning a classifier with finite domain, focusing on the effect of $\epsilon^{\star}$ and $d_{VC}(\mathbb{C}(\mathcal{H}))$ on the generalization. In Section \ref{SecRegression}, we simulate a typical case in linear regression in high dimension, focusing on the effect of $d_{VC}(\mathcal{M}^{\star})$ and prior information about $h^{\star}$ on generalization. In Section \ref{SecPIEnhance}, we further discuss how generalization may be increased by inserting domain knowledge into the family of candidate models.
	
	\subsection{Learning Spaces}
	\label{SecLS}
	
	Let $\mathbb{L}(\mathcal{H}) \coloneqq \{\mathcal{M}_{i}: i \in \mathcal{J} \subset \mathbb{Z}_{+}\}$ be a finite subset of the power set of $\mathcal{H}$, i.e., $\mathbb{L}(\mathcal{H}) \subset \mathcal{P}(\mathcal{H})$ and $|\mathcal{J}| < \infty$. We say that the partially ordered set (poset) $(\mathbb{L}(\mathcal{H}),\subset)$ is a Learning Space if
	\begin{itemize}
		\item[] (i) $\bigcup\limits_{i \in \mathcal{J}} \mathcal{M}_{i} = \mathcal{H}$
		\item[] (ii) $\mathcal{M}_{1}, \mathcal{M}_{2} \in \mathbb{L}(\mathcal{H})$ and $\mathcal{M}_{1} \subset \mathcal{M}_{2}$ implies $d_{VC}(\mathcal{M}_{1}) < d_{VC}(\mathcal{M}_{2})$.
	\end{itemize}
	
	Learning Spaces are families of candidate models that cover $\mathcal{H}$ and such that any element $\mathcal{M} \in \mathbb{L}(\mathcal{H})$ is complexity maximal in the sense that there does not exist $\mathcal{M}' \in \mathbb{L}(\mathcal{H}), \mathcal{M}' \neq \mathcal{M},$ such that $d_{VC}(\mathcal{M}') = d_{VC}(\mathcal{M})$ and $\mathcal{M}' \subset \mathcal{M}$. This condition guarantees that if $\mathcal{M}_{1} \subset \mathcal{M}_{2}$, then the complexity of $\mathcal{M}_{2}$ is greater than that of $\mathcal{M}_{1}$. This implies that the poset $(\mathbb{L}(\mathcal{H}),\subset)$ reflects the complexity of its models. We note that one could choose $\{\mathcal{M}_{1},\dots,\mathcal{M}_{n}\}$ without thinking of it as a decomposition of a hypotheses space $\mathcal{H}$. Nevertheless, if condition (ii) is satisfied, then it would be a Learning Space of $\mathcal{H} = \cup_{i} \mathcal{M}_{i}$, so taking $\mathcal{H}$ as this union, the only non-trivial condition is (ii).
	
	Learning Spaces are not unique, i.e., there are multiple subsets of $\mathcal{P}(\mathcal{H})$ which are Learning Spaces, and the main class of Learning Spaces are the Lattice Learning Spaces, which have a complete lattice structure. A complete lattice $(\mathbb{L}(\mathcal{H}),\subset,\wedge,\vee,\mathcal{O},\mathcal{I})$ is the poset $(\mathbb{L}(\mathcal{H}),\subset)$ with binary relations meet ($\wedge$) and join ($\vee$) representing the greatest lower bound and least upper bound of two elements. A lattice is complete if every subset of $\mathbb{L}(\mathcal{H})$ has a meet and a join in $\mathbb{L}(\mathcal{H})$. In particular, a complete lattice has a least element ($\mathcal{O}$) and greatest element ($\mathcal{I}$) that are the meet and join of $\mathbb{L}(\mathcal{H})$, so $Max \ \mathbb{L}(\mathcal{H}) = \{\mathcal{I}\}$ and $\mathfrak{m}(\mathbb{L}(\mathcal{H})) = 1$.
	
	\begin{remark}
		Although we consider the VC-dimension, other complexity measures of hypotheses spaces could be used to define the Learning Spaces, such as the fat-shattering dimension \cite{bartlett1994fat} and the Rademacher and Gaussian complexities \cite{bartlett2002rademacher}. However, these are complexity measures that depend on the data-generating distribution, and hence could lead to collections of models that are Learning Spaces for some distributions, but not for others. Nevertheless, we note that the value of the VC dimension is not important to the algebraic aspect of the Learning Space definition, but only the fact that it increases when we consider nested models. Hence, any other complexity measure such that this increase is also observed for the chosen poset of models, under the data generating distribution, would generate a valid Learning Space.
	\end{remark}
	
	The first step in building a Learning Space is fixing an algebraic parametric representation of the hypotheses in $\mathcal{H}$. The algebraic structure of $(\mathbb{L}(\mathcal{H}),\subset)$ may be defined from the learning model and algebraic parametric representation fixed, in the following manner. Let $(\mathcal{F},\leq)$ be a poset, in which $\mathcal{F}$ is an arbitrary set with finite cardinality. Moreover, let $\mathcal{R}: \mathcal{F} \mapsto Im(\mathcal{R}) \subset \mathcal{P}(\mathcal{H})$ be a lattice isomorphism from set $(\mathcal{F},\leq)$ to $(Im(\mathcal{R}),\subset)$, a subset of the power set of $\mathcal{H}$ partially ordered by inclusion. This means that $\mathcal{R}$ is bijective, and if $a,b \in \mathcal{F}, a \leq b$, then $\mathcal{R}(a) \subset \mathcal{R}(b)$, so $\mathcal{R}$ preserves the partial order $\leq$ on $\mathcal{F}$ as the partial order on $Im(\mathcal{R})$ given by inclusion. Then, if 
	\begin{itemize}
		\item[] (i) $\bigcup\limits_{a \in \mathcal{F}} \mathcal{R}(a) = \mathcal{H}$ and
		\item[] (ii) $a,b \in \mathcal{F}, a \leq b,$ implies $d_{VC}(\mathcal{R}(a)) < d_{VC}(\mathcal{R}(b))$,
	\end{itemize}
	we may define $\mathbb{L}(\mathcal{H}) \coloneqq Im(\mathcal{R})$ as a Learning Space of $\mathcal{H}$. We call isomorphisms which satisfy these conditions Learning Space generators. Since the generator $\mathcal{R}$ is an isomorphism, it preserves properties of $(\mathcal{F},\leq)$, hence, for instance, by applying $\mathcal{R}$ to a complete lattice we obtain a Lattice Learning Space.
	
	A Learning Space is completely defined by a triple $(\mathcal{F},\leq,\mathcal{R})$, in which the elements of $\mathcal{F}$ may be interpreted as sets of parameters which describe a subset of hypotheses, i.e., the hypotheses in $\mathcal{R}(a), a \in \mathcal{F},$ are represented by the parameters $a$, so that, in particular, $\mathcal{F}$ generates a parametric representation of the functions in $\mathcal{H}$. For this reason, we call $(\mathcal{F},\leq)$ a parametric poset of $\mathcal{H}$. Therefore, in general, to build a Learning Space of $\mathcal{H}$ we apply a generator to a parametric poset of its hypotheses. 
	
	\subsection{Examples of Learning Spaces}
	\label{SecExamplesLS}
	
	We present some examples of Learning Spaces completely defined by a triple $(\mathcal{F},\leq,\mathcal{R})$.
	
	\begin{example}[Variable Selection]
		\label{feature_lattice} \normalfont
		Assume that $\mathcal{H}$ is a space of functions with domain $\mathcal{X} \subset \mathbb{R}^{d}, d > 1$. Let $\mathcal{F} = \mathcal{P}(\{1,\dots,d\})$ be the power set of $\{1,\dots,d\}$ partially ordered by inclusion, so that $(\mathcal{F},\subset,\cap,\cup,\emptyset,\{1,\dots,d\})$ is a complete Boolean lattice. Consider the Learning Space generator $\mathcal{R}: \mathcal{F} \mapsto Im(\mathcal{R}) \subset \mathcal{P}(\mathcal{H})$ given by
		\begin{equation*}
			\mathcal{R}(a) = \Big\{h \in \mathcal{H}: h(x) = h(z), \text{ if } x \equiv_{a} z\Big\},
		\end{equation*} 
		in which $a = \{a_{1},\dots,a_{j}\} \in \mathcal{F}$ and $x = (x_{1},\dots,x_{d}) \equiv_{a} z = (z_{1},\dots,z_{d})$ if, and only if, $x_{a_{i}} = z_{a_{i}}$ for $i = 1,\dots,j$, so $\mathcal{R}(a)$ contains the hypotheses which depend solely on variables in $a$. The lattice isomorphism $\mathcal{R}$ satisfies condition (i) and often satisfies (ii), as in many applications the VC dimension is an increasing function of the number of variables, so $Im(\mathcal{R})$ is often a Learning Space. This is the usual collection of candidate models for variable selection problems.
		
		\hfill$\square$
	\end{example}
	
	\begin{example}[Partition Lattice]
		\label{partition_lattice} \normalfont
		Let $\mathcal{X}$ be an arbitrary set with $\text{\textbar}\mathcal{X}\text{\textbar} < \infty$ and let $\mathcal{H} = \{h: \mathcal{X} \mapsto \{0,1\}\}$ be the set of all functions from $\mathcal{X}$ to $\{0,1\}$. Assuming $Z = (X,Y)$ and $\mathcal{Z} = \mathcal{X} \times \{0,1\}$, under the simple loss function $\ell((x,y),h) = \mathds{1}\{h(x) \neq y\}$, it follows that $L(h) = \mathbb{P}(h(X) \neq Y)$ is the classification error.
		
		Denote by $\Pi = \{\pi: \pi \text{ is a partition of } \mathcal{X}\}$ the set of all partitions of $\mathcal{X}$. A partition $\pi = \{p_{1},\dots,p_{k}\}$ is a collection of subsets of $\mathcal{X}$, called blocks or parts, such that
		\begin{align*}
			\bigcup_{i=1}^{k} p_{i} = \mathcal{X} & & p_{i} \cap p_{i^\prime} = \emptyset \text{ if } i \neq i^\prime.
		\end{align*}
		Each partition $\pi \in \Pi$ induces an equivalence relation on $\mathcal{X}$, with equivalence between points in the same block of partition $\pi$:
		\begin{equation*}
			x \equiv_{\pi} y \iff \exists p \in \pi \text{ such that } \{x,y\} \subset p.
		\end{equation*}
		Consider in $\Pi$ the partial order $\leq$ defined as
		\begin{equation*}
			\pi_{1} \leq \pi_{2} \text{ if, and only if, } x \equiv_{\pi_{2}} z \text{ implies } x \equiv_{\pi_{1}} z
		\end{equation*}
		for $\pi_{1}, \pi_{2} \in \Pi$, which turns it into a complete lattice $(\Pi,\leq,\wedge,\vee,\{\mathcal{X}\},\mathcal{X})$. This partial order is equivalent to $\pi_{1} \leq \pi_{2}$ if, and only if, for every $p_{2} \in \pi_{2}$ there exists a $p_{1} \in \pi_{1}$ such that $p_{2} \subset p_{1}$. See Figure \ref{partitionL} for an example of a partition lattice. By applying the generator $\mathcal{R}: \mathcal{F} \mapsto Im(\mathcal{R}) \subset \mathcal{P}(\mathcal{H})$ given by
		\begin{equation*}
			\mathcal{R}(\pi) = \mathcal{M}_{\pi} \coloneqq \Big\{h \in \mathcal{H}: h(x) = h(z) \text{ if } x \equiv_{\pi} z \Big\},
		\end{equation*}
		for $\pi \in \Pi$, we obtain the partition lattice Learning Space $\mathbb{L}(\mathcal{H}) \coloneqq Im(\mathcal{R})$. The set $\mathcal{R}(\pi)$ is formed by all hypotheses which classify the points inside a block of $\pi$ in the same category; those are the hypotheses which respect $\pi$. This lattice is indeed a Learning Space since $\mathcal{H} \in \mathbb{L}(\mathcal{H})$ and $d_{VC}(\mathcal{M}_{\pi}) = |\pi|$. In this case, the parameters of the functions $h \in \mathcal{H}$ are the elements in their domain $\mathcal{X}$, in contrast, for example, to the variables they depend on, as in Example \ref{feature_lattice}.
		
		For any distribution $P$, $d_{VC}(\mathcal{M}^{\star}) \leq 2$ since $h^{\star} \in \mathcal{M}_{\pi^{\star}}$ in which $\pi^{\star} = \{\{x: h^{\star}(x) = 0\},\{x: h^{\star}(x) = 1\}\}$ is the partition generated by $h^{\star}$.
		
		\hfill$\square$
		
		\begin{figure}[ht]
			\centering
			\includegraphics[width=\textwidth]{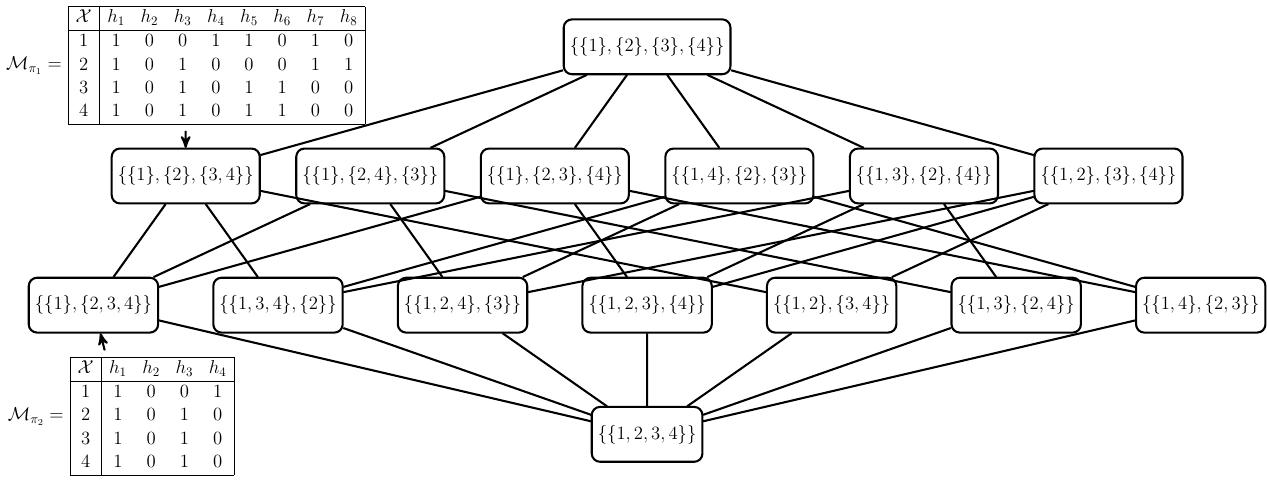}
			\caption{The set $\Pi$ of all partitions of $\mathcal{X} = \{1,2,3,4\}$. The tables present the hypotheses in selected models $\mathcal{M}_{\pi_{1}}, \mathcal{M}_{\pi_{2}}$.}
			\label{partitionL}
		\end{figure}
		%\end{landscape}
	\end{example}
	
	\begin{example}[Linear Regression]
		\label{parametric_lattice} \normalfont
		Let $\mathcal{H}$ be given by the linear functions in $\mathbb{R}^{d}, d \geq 1$:
		\begin{equation*}
			\mathcal{H} = \Bigg\{h_{a}(x) = a_{0} + \sum_{i=1}^{d} a_{i}x_{i}: a_{i} \in \mathbb{R}\Bigg\},
		\end{equation*}
		in which $x = (x_{1},\dots,x_{d}) \in \mathbb{R}^{d}$ and $h_{a}$ is the function indexed by its parameters $a = (a_{0},\dots,a_{d}) \in \mathbb{R}^{d+1}$. Assume $Z = (X,Y)$ and $\mathcal{Z} = \mathbb{R}^{d} \times \mathbb{R}$, and consider the square loss function $\ell((x,y),h) = (h(x) - y)^{2}$, so $L(h) = \mathbb{E}[(h(X) - Y)^{2}]$ is the mean squared error.
		
		Denoting $\mathcal{A} = \{1,\dots,d\}$, we consider two distinct Learning Space generators: from the Boolean lattice $(\mathcal{P}(\mathcal{A}),\subset,\cap,\cup,\emptyset,\mathcal{A})$ and from the partition lattice $(\Pi_{\mathcal{A}},\leq,\wedge,\vee,\{\mathcal{A}\},\mathcal{A})$ of $\mathcal{A}$, in which $\mathcal{P}(\mathcal{A})$ is the power set of $\mathcal{A}$ and $\Pi_{\mathcal{A}}$ is the set of all partitions of $\mathcal{A}$. The partition lattice is represented in Figure \ref{partitionL} for $d = 4$.
		
		Define $\mathcal{R}_{1}: \mathcal{P}(\mathcal{A}) \mapsto \mathcal{P}(\mathcal{H})$ as
		\begin{equation*}
			\mathcal{R}_{1}(A) = \Big\{h_{a} \in \mathcal{H}: a_{j} = 0  \text{ if } j \notin A \cup \{0\}\Big\},
		\end{equation*}
		for $A \in \mathcal{P}(\mathcal{A})$ as a variable selection generator, and define $\mathcal{R}_{2}: \Pi_{\mathcal{A}} \mapsto \mathcal{P}(\mathcal{H})$ as
		\begin{equation*}
			\mathcal{R}_{2}(\pi) = \Big\{h_{a} \in \mathcal{H}: a_{j} = a_{k} \text{ if } j \equiv_{\pi} k\Big\},
		\end{equation*}
		for $\pi \in \Pi_{\mathcal{A}}$ as a generator that constrains parameters to be equal within equivalence classes of $\mathcal{A}$. Both $\mathcal{R}_{1}, \mathcal{R}_{2}$ clearly satisfy (i) and (ii). Therefore, these lattice isomorphisms generate two distinct Lattice Learning Spaces for the same hypotheses space $\mathcal{H}$. Isomorphic Learning Spaces also apply to the hypotheses space of linear classifiers.	
		
		\hfill$\square$
	\end{example}
	
	\section{Numerical experiments}
	\label{SecNum}

	This section presents two numerical studies designed to illustrate the theoretical results established in Sections \ref{boundedL} to \ref{SecReuse}. The first study (Section \ref{SecPLLS}) considers learning a classifier on a finite domain using the partition lattice Learning Space, and focuses on the role of the maximum discrimination error $\epsilon^\star$ and the VC dimension $d_{VC}(\mathbb{C}(\mathcal{H}))$ in determining when learning via model selection outperforms ERM.
	
	The second study (Section \ref{SecRegression}) considers high-dimensional linear regression with two distinct Learning Spaces, the variable selection Learning Space and the partition lattice Learning Space, and compares learning via model selection against ERM, LASSO and ridge regression across four scenarios that span a wide range of alignment between domain knowledge and the true target function, from perfect to complete misalignment. All simulations are implemented in \textbf{R} \cite{R} and the code will be made available. The simulations of Section \ref{SecRegression} were performed using nodes equipped with Intel Xeon Platinum 8274 processors (3.2 GHz, 48 cores per node). 
	
	The four regression scenarios in Section \ref{SecRegression} are explicitly designed to instantiate the theoretical decomposition of the generalization error into types II, III, and IV estimation errors (cf. Section \ref{SecErrors} and Figure \ref{fig_error}). When the Learning Space is well-adapted to the target, i.e., the target model is simple, and the search algorithm successfully locates a low-complexity \textit{good} model, the theory predicts, via Corollaries \ref{cor_typeIV} and \ref{cor_typeIV2}, substantially lower Type IV error than ERM, since the bound depends on $d_{VC}(\hat{\mathcal{M}})$ rather than $d_{VC}(\mathcal{H})$. The simulations evidence this quantitatively. When the search fails, Type III error is large and performance degrades, irrespective of the theoretical adaptation of the Learning Space, illustrating the binding role of the optimization algorithm as a practical constraint.
	
	Together, the two studies address the practical dimension of the framework. Section \ref{SecPLLS} clarifies under what conditions on $\epsilon^\star$ and sample size model selection is most beneficial relative to ERM. Section \ref{SecRegression} illustrates that the theoretical gains are realizable in high-dimensional settings ($d \in \{100, 500, 1000\}$ variables), provided the prior information encoded in the Learning Space is at least partially aligned with the true target and a proper search of the Learning Space can be performed.

	\subsection{Partition Lattice Learning Space for classification}
	\label{SecPLLS}
	
	In this section, we compare learning via ERM on the whole hypotheses space with learning via model selection in the partition lattice Learning Space of Example \ref{partition_lattice}. We consider the \textit{worst-case} distribution for a fixed value of $\epsilon^{\star}$, that is, when the joint distribution of $(X,Y)$ has maximum conditional entropy. Clearly, this happens when $\mathbb{P}(X = x) = 1/|\mathcal{X}|$ for all $x \in \mathcal{X} = \{1,\dots,K\}$ and
	\begin{align}
		\label{cond_max_entropy}
		|\mathbb{P}(Y = 1|X = x) - \mathbb{P}(Y = 0|X = x)| = |\mathcal{X}|\epsilon^{\star}, \ \text{ for all } x \in \mathcal{X}.
	\end{align}
	There are $2^{|\mathcal{X}|}$ distributions that satisfy \eqref{cond_max_entropy} and we choose the one such that $\mathbb{P}(Y = 1|X = x) > \mathbb{P}(Y = 0|X = x)$ if $x$ is odd and $\mathbb{P}(Y = 1|X = x) < \mathbb{P}(Y = 0|X = x)$ if $x$ is even. For any value of $\epsilon^{\star}$, $h^{\star}(x)$ equals one for $x$ odd and zero for $x$ even, and $\mathcal{M}^{\star} = \mathcal{M}_{\pi}$ in which $\pi$ is the partition of even and odd numbers with $d_{VC}(\mathcal{M}^{\star}) = 2$.
	
	For each $\epsilon^{\star} \in \{0.0025,0.005,0.0125,0.025,0.05,0.075,0.1\}$ we simulate $1,000$ samples of total size $n = M + N \in \{16,32,64,128,256\}$ considering $\mathcal{X} = \{1,\dots,8\}$ and the worst-case distribution described above. The distributions considered are in Table \ref{jointDist} in Appendix \ref{AppResults}. For each sample, we learn by ERM with the whole sample, learn via model selection with an independent sample by dividing the sample into 50\% - 50\% for training $(N = n/2)$ and independent $(M = n/2)$ sample, and learn via model selection by reusing. Model risks are estimated by k-fold cross-validation with $k = 4$, and when the solution is not unique, i.e., $\hat{\mathcal{M}}$ is an equivalence class with more than one model, learning is performed by minimizing the respective empirical risk in the union of the models in $\hat{\mathcal{M}}$.
	
	Figure \ref{fig_PLLS1} presents for each scenario the number of simulated samples in which the risk of the estimated hypotheses via model selection was greater, lesser or equal to the risk of the ERM hypotheses. We see that for all $\epsilon^{\star}$, the number of samples in which learning via model selection is better than ERM decreases with $n$, illustrating that learning via model selection is more beneficial for small sample sizes. For all sample sizes, the number of samples in which learning via model selection is better decreases with $\epsilon^{\star}$, and for large values of $\epsilon^{\star}$ ($\geq 0.05$) learning via model selection is as good as via ERM in the majority of cases for mild to large sample sizes. Finally, learning via model selection by reusing is, in general, slightly better than with an independent sample relative to learning via ERM.
	
	The results in Figure \ref{fig_PLLS1} illustrate the effect of $\epsilon^{\star}$ in the generalization of learning via model selection. On the one hand, if $\epsilon^{\star}$ is too large, the learning problem is actually quite \textit{easy} and model selection is not necessary. For example, the conditional distribution for $\epsilon^{\star} = 0.1$ is such that $\mathbb{P}(Y = 1|X = 1) = 0.9$ (cf. Table \ref{jointDist}) so we do not need many samples with $X = 1$ to figure out that $h^{\star}(1) = 1$. Therefore, with or without model selection, we can properly learn $h^{\star}$.
	
	On the other hand, when $\epsilon^{\star}$ is small, the learning problem becomes \textit{harder}. For example, the conditional distribution for $\epsilon^{\star} = 0.0025$ is such that $\mathbb{P}(Y = 1|X = 1) = 0.51$ so more samples with $X = 1$ are needed to establish that $h^{\star}(1) = 1$. However, if we could somehow figure out that, say, $h^{\star}(1) = h^{\star}(3)$ then we would pool the samples with $X = 1$ and $X = 3$ together to figure the value of $h^{\star}$ at these points. Therefore, by knowing a partition of the domain with points in the same block having the same value of $h^{\star}$, we need fewer samples in the independent sample to properly learn. In the limit case, when the partition selected is that of the even and odd numbers, learning with an independent sample will be performed in a (\textit{good}) hypotheses space with VC dimension 2 and few samples will suffice.
	
	\begin{figure}[ht]
		\centering
		\includegraphics[width=\linewidth]{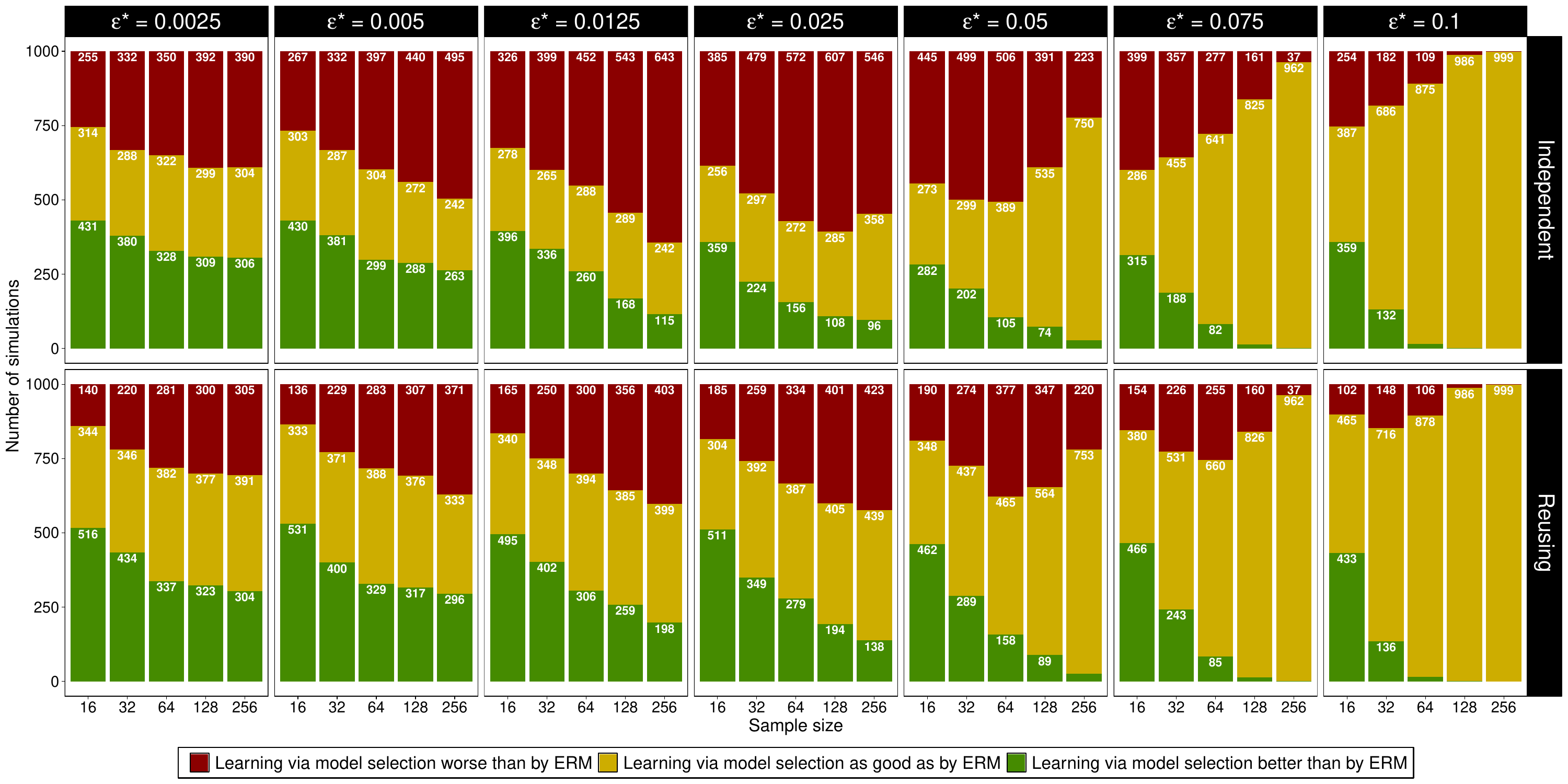}
		\caption{Number of simulations in which learning via model selection in the whole partition lattice Learning Space is better, worse, and as good as learning via ERM with respect to the risk of the estimated hypotheses.} \label{fig_PLLS1}
	\end{figure}
	
	\begin{figure}[ht]
		\centering
		\includegraphics[width=\linewidth]{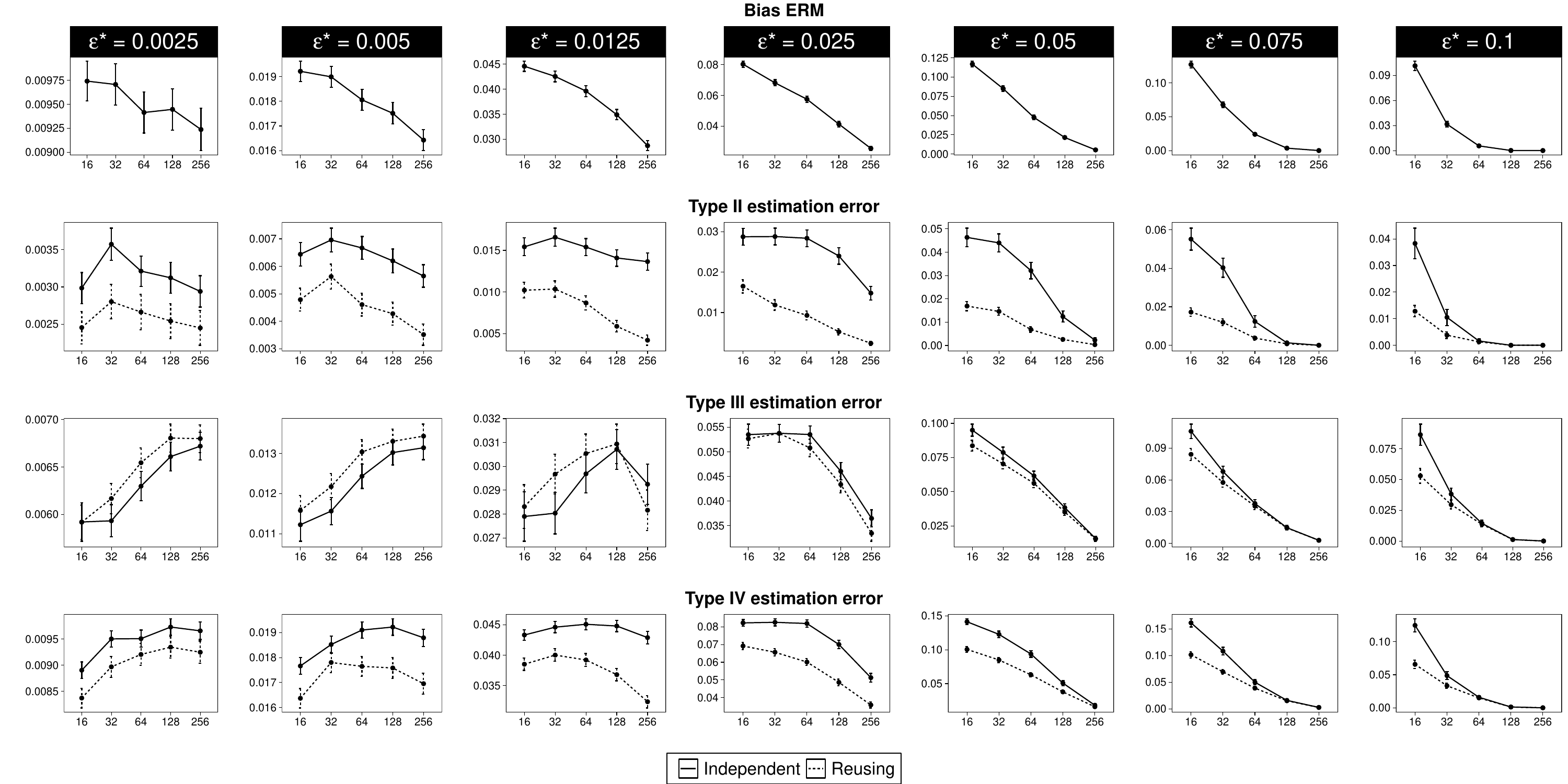}
		\caption{The average with its 95\% confidence interval of the ERM hypotheses bias and types II, III, and IV estimation errors over the 1,000 simulated samples for each case when learning via model selection in the whole partition lattice Learning Space. These results are also presented in Table \ref{res_PLLS} in Appendix \ref{AppResults}.} \label{fig_PLLS2}
	\end{figure}
	
	Figure \ref{fig_PLLS2} presents the average bias of the ERM hypotheses and types II, III, and IV estimation errors over the 1,000 simulated samples for each case. For $\epsilon^{\star} \geq 0.05$, the average estimation errors decrease with the sample size, while the average bias of the ERM hypotheses decreases with $n$ for all $\epsilon^{\star} \geq 0.005$. For $\epsilon^{\star} \leq 0.025$, the average of the estimation errors first increases with $n$, then remains stable, until attaining a sample size in which it rapidly decreases, especially for learning with an independent sample. In particular, the average of the bias of the ERM hypotheses is, in general, greater than the average of type IV estimation error for small sample sizes.
	
	The increase in the estimation errors from sample size $16$ to $32$ can be explained by the fact that with $16$ samples the model of the constant functions, related to the partition $\pi = \{\{\mathcal{X}\}\}$, which has VC dimension 1, is often selected: between 22-33\% for small values of $\epsilon^{\star}$ (see Figure \ref{fig_PLLS4} in Appendix \ref{AppResults}). When this model is selected, then the learned hypotheses has a risk equal to $L(h^{\star}) + 4\epsilon^{\star}$, that is, the risk of the constant hypotheses. When more samples are available, models with VC dimension 2 are selected more often and the risk can be as large as $L(h^{\star}) + 8\epsilon^{\star}$, so it is reasonable that the average of estimation errors slightly increases.
	
	The abrupt decrease of the error when $n$ increases, especially for type II estimation error, is due to a phase transition which is a feature of, for example, the bounds in Theorems \ref{theorem_tipeIII} and \ref{theorem_tipeIII2} and Corollaries \ref{cor_typeIV} and \ref{cor_typeIV2}, because of the term $\epsilon \vee \epsilon^{\star}$. When there are not enough samples to properly estimate the risks with a precision of $\epsilon^{\star}/2$ with high probability, it is unlikely that a model \textit{close}, in the type III estimation error sense, to $\mathcal{M}^{\star}$ is selected. But once there are enough samples for type III estimation error to be low, learning improves and types II and IV estimation errors decrease significantly.
	
	We end this section by analyzing the case of learning via model selection, considering only the models with VC dimension 2 in the partition lattice Learning Space. This family of candidate models has VC dimension two and $2^{7} - 1$ maximal elements, since all of its elements are maximal, but it has the same $\epsilon^{\star}$ as the whole partition lattice Learning Space. The comparison with learning via ERM for this case is in Figure \ref{fig_PLLS3}. The performance of model selection relative to learning via ERM improves substantially for this family of candidate models when compared to the whole partition lattice. For instance, learning via ERM is better than learning via model selection by reusing in no more than 8.4\% of the simulated samples for all sample sizes, and learning via model selection is strictly better in as many as two thirds of the samples for small sample sizes. However, as $\epsilon^{\star}$ and the sample size increase, learning via ERM becomes as good as learning via model selection, since the learning problem becomes easier to solve. More details about this simulation can be found in Appendix \ref{AppResults}.
	
	\begin{figure}[ht]
		\centering
		\includegraphics[width=\linewidth]{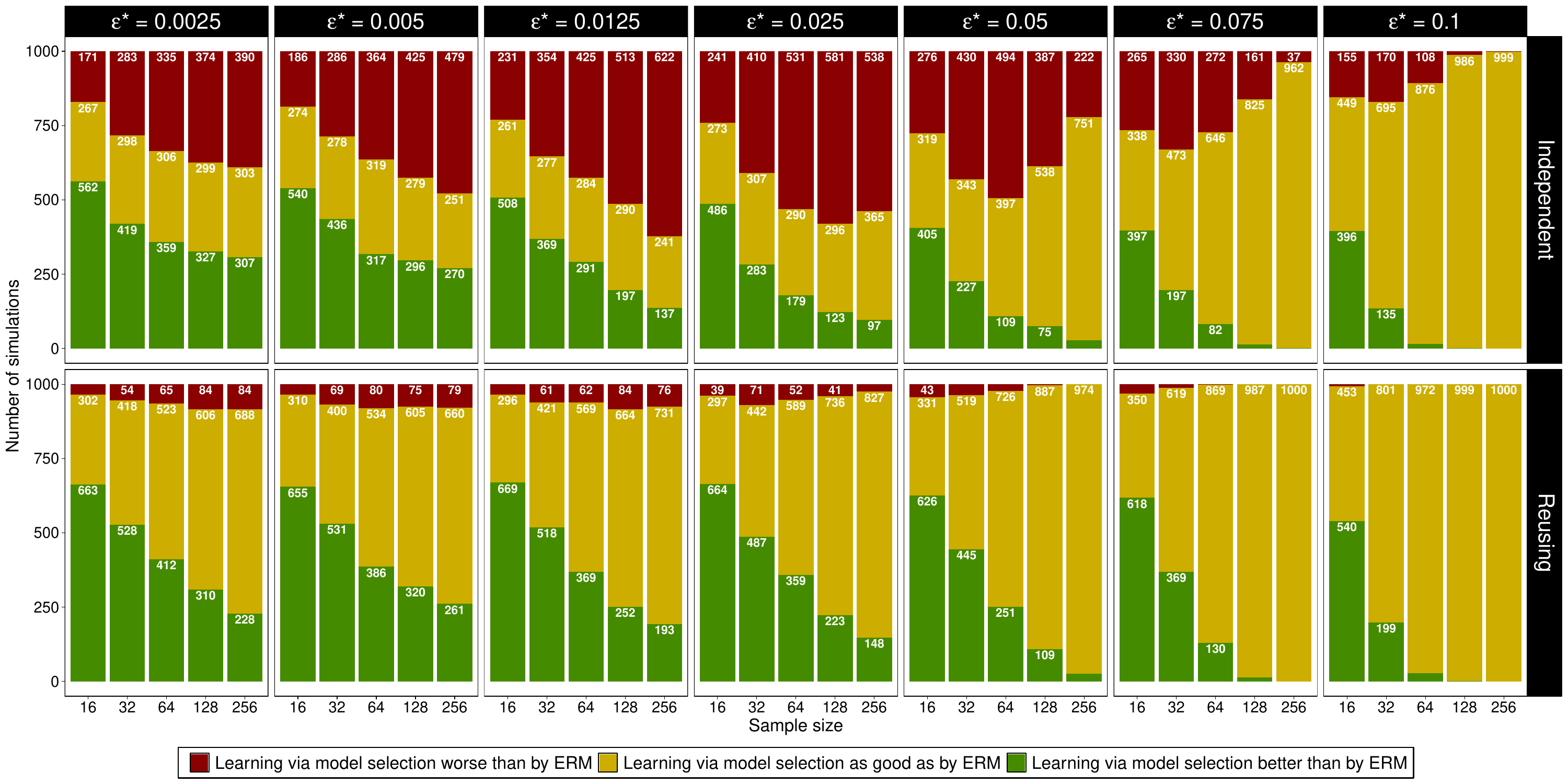}
		\caption{Number of simulations in which learning via model selection in the models of the partition lattice Learning Space with VC dimension 2 is better, worse, and as good as learning via ERM with respect to the risk of the estimated hypotheses.} \label{fig_PLLS3}
	\end{figure}
	
	\subsection{Learning Spaces for linear regression}
	\label{SecRegression}
	
	In this section, we compare learning via ERM on the whole hypotheses space (i.e., ordinary least squares), LASSO \cite{tibshirani1996regression} and ridge \cite{hoerl1970ridge} regression with learning via model selection in linear regression leveraging domain knowledge. We consider two kinds of prior information about $h^{\star}: \mathbb{R}^{d} \to \mathbb{R}$: that it is sparse and that the effect of some of the $d$ input variables is the same. Denoting $h^{\star}(x) = a_{0}^{\star} + \sum_{i=1}^{d} a_{i}^{\star} x_{i}$, sparsity means that $a_{i}^{\star} = 0$ for $i$ in a subset of $\{1,\dots,d\}$ and two input variables $x_{i}, x_{j}$ having the same effect on the output means that $a_{i}^{\star} = a_{j}^{\star}$.
	
	We simulate four cases, combining two levels of sparsity, namely, sparse when only $\ell \coloneqq d/10$ of the $d$ input variables are active and dense when all are active, with two levels of homogeneity of effect, namely, unequal when the active variables have pairwise distinct coefficients and grouped when the active variables are split into few groups that share a common coefficient. Table \ref{tab_scenarios} presents the four target hypotheses considered $h^{\star}_{1},\dots,h^{\star}_{4}$, each with intercept $a_{0}^{\star} = 0$.
	
	\begin{table}[ht]
		\centering
		\caption{The four target hypotheses considered, combining two levels of sparsity with two levels of homogeneity of effect.}
		\label{tab_scenarios}
		\begin{tabular}{c|cc}
			\hline
			Sparsity & \multicolumn{2}{c}{Homogeneity of effect} \\
			& Unequal & Grouped \\
			\hline
			Sparse & $h^{\star}_{1}(x) = \displaystyle\sum_{i=1}^{\ell} \frac{i}{\ell} x_{i}$ & $h^{\star}_{2}(x) = \displaystyle\frac{1}{\ell}\sum_{i=1}^{\ell} x_{i}$ \\[1.5em]
			\hline
			Dense & $h^{\star}_{4}(x) = \displaystyle\sum_{i=1}^{d} \frac{i}{d} x_{i}$ & $h^{\star}_{3}(x) = \displaystyle\frac{1}{2\ell}\sum_{i \text{ odd}} x_{i} + \frac{3}{2\ell}\sum_{i \text{ even}} x_{i}$ \\[1.5em]
			\hline
		\end{tabular}
	\end{table}
	
	We consider a \textit{typical} scenario for each target hypotheses. Let $X$ be a random vector uniformly distributed in $[-1,1]^{d}$ and define, for $i = 1,\dots,4$, the random variable $Y^{(i)} = h^{\star}_{i}(X) + \varepsilon$, in which $\varepsilon$ is independent of $X$ and has mean zero and variance $\sigma^{2} > 0$. Denote the distribution of $(X,Y^{(i)})$ as $P_{i}$, so $h^{\star}_{i}$ is the target hypotheses under $P_{i}$. Considering the square loss function, the risk of each target hypotheses is $L_{i}(h_{i}^{\star}) = \sigma^{2}$, in which $L_{i}$ is the expected loss under $P_{i}$.
	
	First, we analyze the variable selection Learning Space (VSLS) generated by $\mathcal{R}_{1}$ from Example \ref{parametric_lattice}. Under $P_{1}$ and $P_{2}$, the target hypotheses are sparse depending only on the first $\ell$ variables, so $\mathcal{M}_{1}^{\star,(1)} = \mathcal{M}_{1}^{\star,(2)} = \mathcal{R}_{1}(\{1,\dots,\ell\})$ is formed by the linear functions that depend only on the first $\ell$ input variables. Under $P_{3}$ and $P_{4}$, the target hypotheses are dense, so $\mathcal{M}_{1}^{\star,(3)} = \mathcal{M}_{1}^{\star,(4)} = \mathcal{H} = \mathcal{R}_{1}(\{1,\dots,d\})$.
	
	Now consider the partition lattice Learning Space (PLLS) generated by $\mathcal{R}_{2}$ from Example \ref{parametric_lattice}. Under $P_{1}$, the zero coefficients are grouped, but, since the active coefficients are pairwise distinct, they cannot be grouped, hence $\mathcal{M}^{\star,(1)}_{2} = \mathcal{R}_{2}(\{\{1\},\dots,\{\ell\},\{\ell+1,\dots,d\}\})$. Under $P_{2}$, the active variables share a common coefficient, hence can be grouped, and $\mathcal{M}^{\star,(2)}_{2} = \mathcal{R}_{2}(\{\{1,\dots,\ell\},\{\ell+1,\dots,d\}\})$. Under $P_{3}$, the odd and even indexed variables each share a common coefficient, so $\mathcal{M}^{\star,(3)}_{2} = \mathcal{R}_{2}(\{\{i: i \text{ odd}\},\{i: i \text{ even}\}\})$. Finally, under $P_{4}$, all coefficients are pairwise distinct, so $\mathcal{M}^{\star,(4)}_{2} = \mathcal{H} = \mathcal{R}_{2}(\{\{1\},\dots,\{d\}\})$.
	
	We observe that the variable selection Learning Space is especially adapted to scenarios 1 (S1) and 2 (S2) in which the target is sparse, while the partition lattice Learning Space is particularly adapted to the scenarios 2 and 3 (S3) in which the coefficients can be grouped into two groups, and partially adapted to S1 in which the variables can be grouped into $\ell + 1$ groups. In these cases, the target model has low complexity, which, theoretically, can lead to better performance compared with ERM on the whole hypotheses space as long as these sets can be properly learned. Scenario 4 (S4) is dense and the variables cannot be grouped, hence neither Learning Space provides any theoretical gain over ERM since the target model is the whole hypotheses space.
	
	From a prior knowledge perspective, if one knows that the problem is sparse (S1 and S2) then the variable selection Learning Space should be chosen; if, additionally, it is known that many of the active variables have equal effect (S2), then the partition lattice Learning Space should be considered. Now, if one only knows that there are few groups of variables with the same effect (S2 and S3), then the partition lattice Learning Space should be considered. If the prior information is inaccurate, and the true target function is actually dense and the effects cannot be grouped (S4), then both Learning Spaces should perform poorly compared with ERM.
	
	\subsubsection{Simulation details}
	
	For each $d \in \{100,500,1000\}$, with $\ell = d/10$, and each scenario, we consider three values of $\sigma$, called \textit{easy}, \textit{moderate} and \textit{hard}: $\sigma = \sqrt{0.1/3m^{2}}$, $\sigma = \sqrt{1/3m^{2}}$ and $\sigma = \sqrt{10/3m^{2}}$, respectively for increasing difficulty, in which $m \in \{\ell,d\}$ is the number of active variables. As $\sigma$ increases, the signal-to-noise ratio decreases and the problem becomes harder. The value $1/3m^{2}$ was selected as a scale comparable to the value of $\epsilon^{\star}$ for the VSLS in all scenarios.	
	
	For each $(d,\sigma)$ combination we simulate $96$ replications, and for each replication we generate two independent samples of size $N = M \in \big\{\lceil 4(d+5)/3\rceil,\, \lceil 5(d+5)/3\rceil,\, \lceil 6(d+5)/3\rceil\big\}$ from $P_{i}$, considering a Gaussian distribution for $\varepsilon$. For each pair of samples, we learn by ERM, LASSO and ridge regression with the pooled sample of size N + M, and by model selection with an independent sample in both the variable selection and partition lattice Learning Spaces, in which the candidate model is selected by minimizing the k-fold cross-validation risk estimate on one sample and the final hypotheses is learned on the other. Model risk is estimated by k-fold cross-validation with $k=4$.
	
	LASSO and ridge regression estimate the parameters by minimizing the empirical squared loss plus, respectively, an $\ell_{1}$ penalty $\lambda\sum_{i=1}^{d}|a_{i}|$ and an $\ell_{2}$ penalty $\lambda\sum_{i=1}^{d}a_{i}^{2}$ on the slope coefficients, with the former additionally inducing sparsity by shrinking some coefficients exactly to zero. The regularization parameter $\lambda$ is selected by $k$-fold cross-validation under the one-standard-error rule, that is, as the largest $\lambda$ whose cross-validated risk is within one standard error of the minimum, favoring the most regularized, and hence simplest, model among those statistically indistinguishable from the one with the smallest estimated risk. They are fitted using the R package \texttt{glmnet} \cite{glmnet}.
	
	\subsubsection{Optimization algorithms for Learning Spaces}
	
	Given the exponential size of the lattices for $d \in \{100,500,1000\}$, we cannot perform an exhaustive search and we need specialized algorithms for minimizing functions in lattices. For the Boolean VSLS we consider a stepwise approach which, starting from the model without variables, one variable is added/removed at each step. The selected addition or removal is that which leads to the least cross-validation error. When there are ties, we select one of the minimums randomly with equal probability among the ones with least number of variables. The algorithm runs for a pre-specified computation time and returns the model with the least cross-validation error visited during the search. The computation time was set to 120 seconds. This differs from usual stepwise algorithms that stop when no addition/removal of variables decreases the cross-validation error.
	
	For the PLLS we consider the stochastic lattice descent algorithm (SLDA) proposed in \cite{marcondes2023discrete,marcondes2024lattice,marcondes2024algorithm,marcondes2025unrestricted}, which is a generalization of the U-curve algorithms \cite{u-curve1,u-curve3,ucurveParallel,reis2018}, proposed for minimizing U-shaped functions in Boolean lattices. In the implemented SLDA,	the search is initialized at the partition with a single block containing all variables. At each step, a set of candidate neighboring partitions is generated by either splitting one existing block into two, or merging two existing blocks into one. 
	
	Rather than sampling these candidates uniformly, splits are proposed by first regressing the residuals (i.e., point-wise errors) of the current partition model trained with the whole $N$ size training sample onto the raw variables within each block. The probability of each block being selected for splitting is proportional to the $R^2$ of this within-block regression. Observe that large values of $R^2$ are evidence that the residuals are strongly correlated with the variables, which is an indication that considering the same coefficient for all variables is not providing goodness of fit.
	
	When a block is selected, it is then split into two groups: one containing all variables in which the residual regression coefficients are positive and the other all that have non-positive coefficients. The sign of the residual regression coefficient is an indication that the true value of the model coefficient for the variable should be greater or smaller than the common estimated coefficient. This heuristic should lead to splittings that tend to group together variables with true coefficients that are close to each other.
	
	On the other hand, merges of blocks are proposed by sampling pairs of blocks with probability inversely proportional to the squared difference between their current fitted coefficients of the model considering the whole training sample, since blocks with similar coefficients are the best candidates for merging without loss of fit.
	
	At each step, a pre-specified number of split and merge neighbors are sampled with the established probabilities, their cross-validation error is computed, and the algorithm moves to the neighbor with the least cross-validation error, breaking ties first in favor of those with fewest blocks and then uniformly at random. As with the Boolean VSLS, the algorithm runs for a pre-specified computation time and returns the partition with the least cross-validation error visited during the search, rather than stopping at the first local optimum.
	
	The number of sampled neighbors is $128$, $64$ and  $32$ for $d = 100, 500$ and $1000$, respectively, with half being sampled among the splits, and half among the merges. These choices are made to allow the algorithm to explore a bigger region of the bigger lattices in the pre-specified running time, which was also set at $120$ seconds.
	
	\subsubsection{Results}
	
	\begin{figure}[ht]
		\centering
		\includegraphics[width=\linewidth]{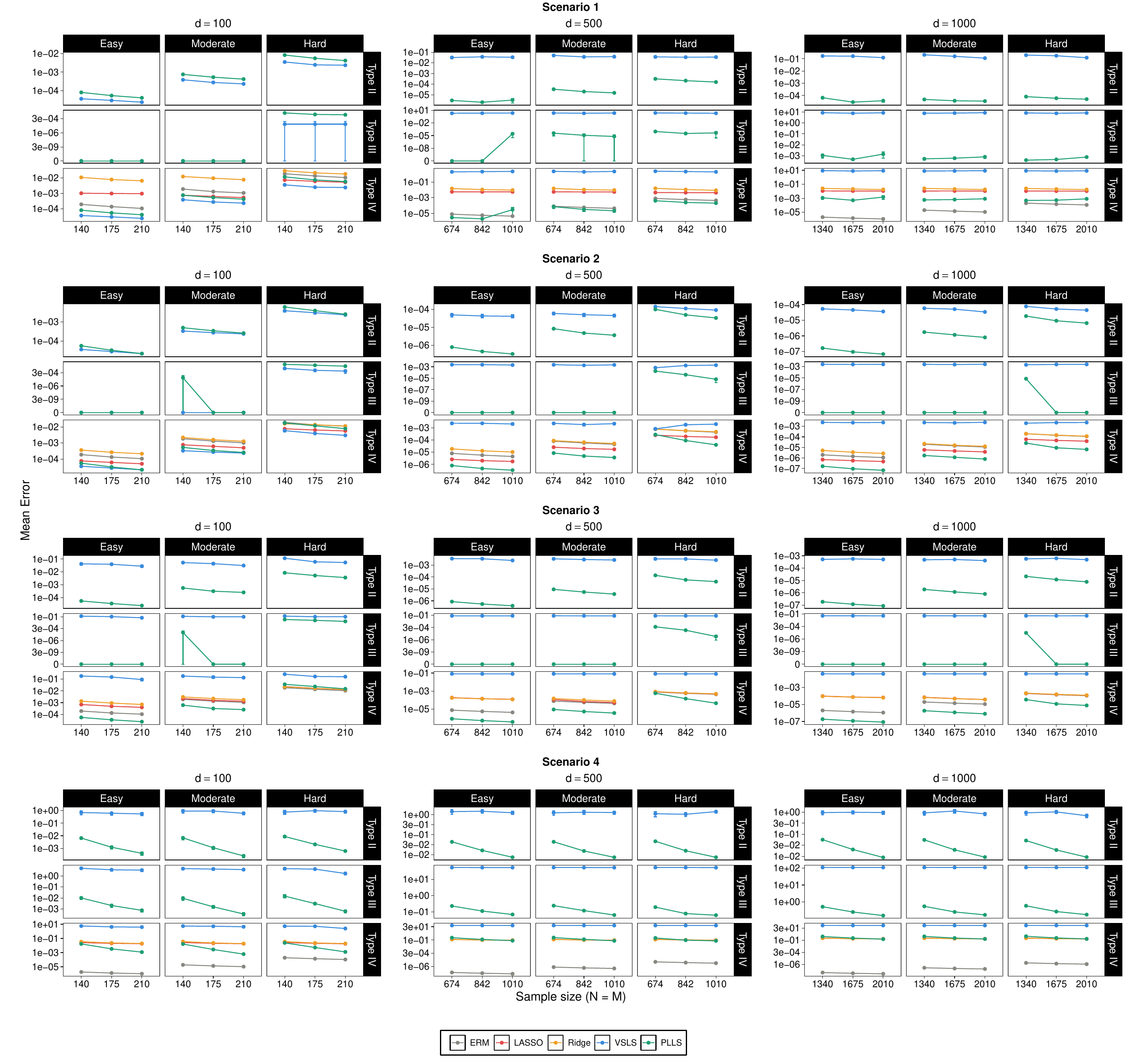}
		\caption{Average across the 96 repetitions with 95\% confidence intervals of the type II, III, and IV estimation errors of learning in each Learning Space for all scenarios, across $d \in \{100,500,1000\}$ and noise levels (easy, moderate, hard), as a function of sample size. The average with 95\% confidence intervals of the difference between the risk of the fitted function and the target is presented for ERM, LASSO and ridge regression as the type IV estimation error in each case.}
		\label{fig_estimation_errors}
	\end{figure}
	
	\begin{figure}[ht]
		\centering
		\includegraphics[width=\linewidth]{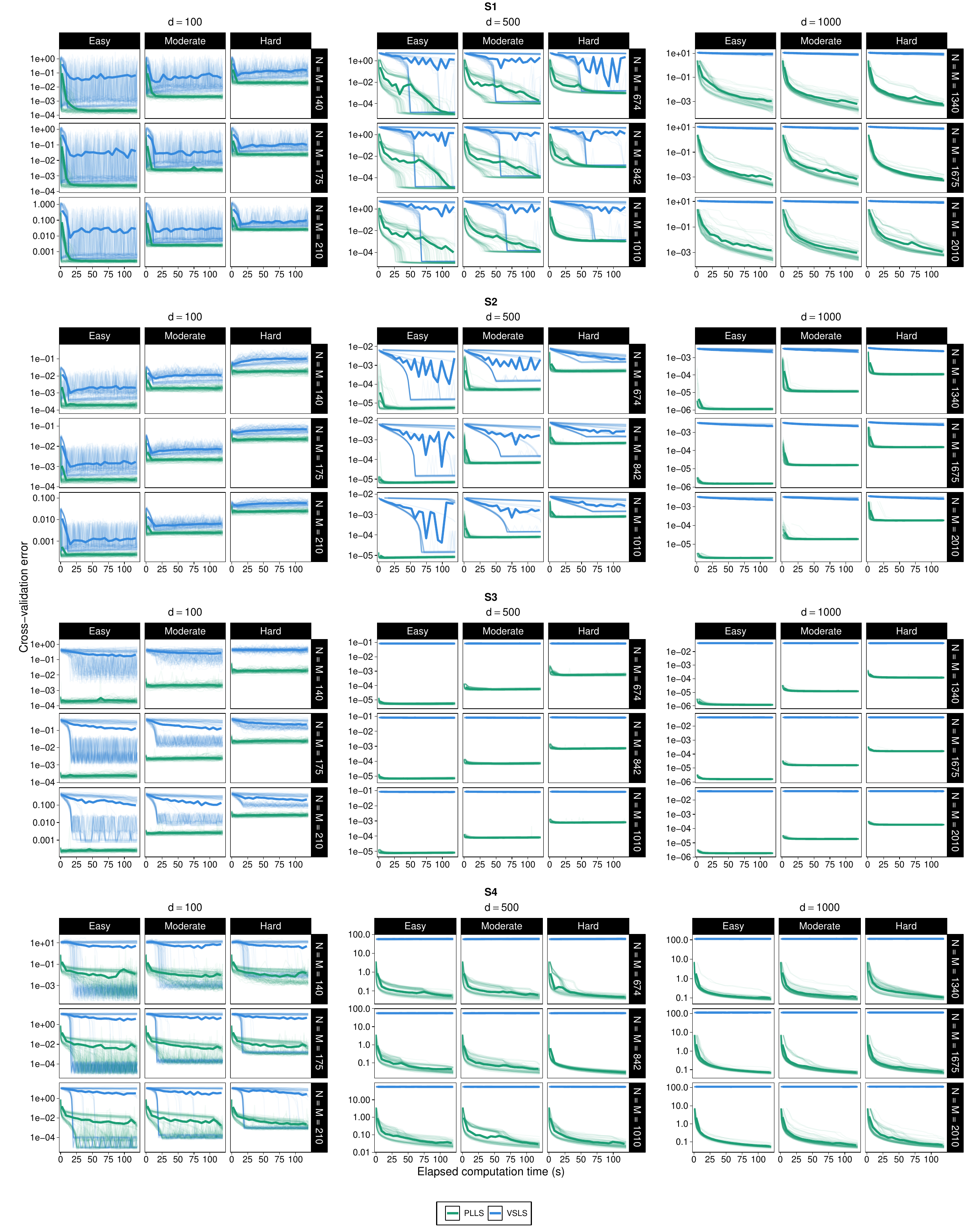}
		\caption{Current cross-validation error against elapsed computation time for the VSLS and PLLS, for each scenario, $d \in \{100,500,1000\}$, noise level, and sample size. Each line is a repetition, with the thick line representing the average over the repetitions.}
		\label{fig_trace_all}
	\end{figure}
	
	Figure \ref{fig_estimation_errors} presents, for each scenario, number of variables, error level and sample size, the average with 95\% confidence intervals of the types II, III, and IV estimation errors of learning in each Learning Space. The average with 95\% confidence intervals of the difference between the risk of the fitted function and the target is presented for ERM, LASSO and ridge regression as the type IV estimation error in each case. Table \ref{tab.typeIV} in Appendix \ref{AppResults} presents the detailed results for each case. Figure \ref{fig_trace_all} presents the cross-validation error at each step of the search on the VSLS and PLLS for all repetitions in each case. 
	
	\textbf{Scenario 1}. The target is sparse with unequal active coefficients, so the VSLS is theoretically well-adapted, while the PLLS is only partially adapted. The results evidence this clearly for $d = 100$: VSLS achieves the lowest Type IV estimation error across all noise levels and sample sizes, with values on the order of $10^{-5}$ to $10^{-3}$, substantially below ERM ($10^{-4}$ to $10^{-2}$) and far below LASSO and ridge, while PLLS sits above VSLS in all cases. As $d$ grows, however, the picture changes sharply for VSLS: with only $\sim 30$-$45$ search steps available within 120 seconds, the stepwise algorithm cannot adequately explore the $2^d$-element Boolean lattice, and its Type IV estimation error collapses to values orders of magnitude worse than any competitor. On the other hand, PLLS, running the SLDA with 110-140 steps, is better, or as competitive as ERM, for all cases at $d = 500$. For $d = 1000$, the PLLS has a Type IV estimation error orders of magnitude lower than LASSO and ridge regression, but is not as low as ERM. We observe that learning via model selection has a better performance when the Type III estimation error is low, or even zero, meaning a \textit{good} model was selected; when Type III error is large, especially for the VSLS in higher dimensions, the performance cannot be as high. This scenario outlines an important practical message: even when the VSLS is theoretically optimal for a sparse target, its stepwise search algorithm does not scale, and the PLLS with the SLDA provides a computationally viable alternative that leverages the partial grouping structure of the target function to achieve remarkable generalization at high dimension.
	
	\textbf{Scenario 2}. This scenario combines sparsity with grouped active coefficients, making it suited to both Learning Spaces simultaneously. At $d = 100$, VSLS again achieves the best Type IV error (order $10^{-5}$ to $10^{-3}$), closely followed by PLLS, while LASSO performs competitively. The PLLS's particular strength emerges as $d$ increases: at $d = 500$ and $d = 1000$, PLLS achieves the lowest Type IV errors of any method, reaching values as low as $\sim 7 \times 10^{-8}$ at $d = 1000$, $n = 2010$, easy, roughly 15 times better than LASSO and 16 times better than ERM in the same setting. This gain is a direct consequence of the PLLS target model's minimal complexity: with only two groups of variables (active and inactive), the SLDA needs very few steps to locate the optimum, explaining why PLLS's step counts at $d = 500$-$1000$ (120-175) remain comparable to or slightly above those in S1, while the performance is far higher. This is also evidenced in the Type III estimation error, that is zero in many cases, evidencing that a model that contains the target was selected. On the other hand, VSLS again degrades for $d \geq 500$ for the same search-cost reasons as in S1. The results in S2 thus provide the clearest empirical evidence of the central theoretical message: structuring the collection of candidate models to match domain knowledge yields results that beat ERM and regularization (LASSO and ridge).
	
	\textbf{Scenario 3}. The target is dense with grouped coefficients, where all $d$ variables are active but split into two groups with distinct shared coefficients. Here VSLS has no theoretical advantage over ERM, while PLLS is ideally adapted. The results confirm this asymmetry: the PLLS achieved the lowest Type IV estimation error with the exception of the hard noise level for $d = 100$. In particular, the true advantage of PLLS materializes at $d = 500$ and $d = 1000$, where it dominates all methods across all noise levels and sample sizes, by multiple orders of magnitude in many cases. On the other hand, VSLS, as expected theoretically, performs very poorly for $d \geq 500$, with Type IV errors near 0.08-                                                                                                                                                         0.04 that barely decrease with $n$, reflecting the fact that the search cannot locate the full-variable model within the 120-second budget. These results underscore that the PLLS's advantage is not merely theoretical: for dense problems with groupable coefficients, the SLDA efficiently exploits the two-group structure to achieve generalization that scales gracefully with $d$.
	
	\textbf{Scenario 4}. This is the adversarial case for both Learning Spaces in which the target is dense with all $d$ variables active and pairwise distinct coefficients. The results confirm this unambiguously: ERM dominates across every combination of $d$, $n$, and noise level, achieving Type IV estimation error orders of magnitude smaller than any other method.
	
	\textbf{Evolution of optimization algorithms.} The search algorithm of the VSLS has two distinct behaviors. The first, observed in the cases it struggles most ($d = 500$ for S3 and S4, and $d = 1000$), is characterized by the cross-validation error not decreasing over time and a low variability among the repetitions. The second is characterized by high variability among repetitions; a sudden decrease in the error; and high variability over time evidencing visits to subsets with higher cross-validation error after a (local) minimum is found. 
	
	On the other hand, the SLDA in general converges, especially in lower dimension and S1, S2 and S3, without large variability over time: after a \textit{good} minimum is found, the cross-validation error does not increase substantially afterwards. Furthermore, no abrupt decrease on the error is seen, with the exception of the first steps in some cases. Nevertheless, in some cases, such as S1 and S4 in $d = 500$ and  S4 in $d = 1000$, there is a variability among the repetitions, with some converging faster  than the others outlining that the stochasticity of the algorithm may lead to distinct convergence behaviors.
	
	\subsubsection{Comment}
	
	The results above highlight that the performance of learning via model selection depends jointly on two distinct factors: the alignment between the Learning Space and the target hypotheses, and the ability of the search algorithm to locate the target model within the allotted computation budget. When both conditions are met, as in S2 under PLLS and S1 under VSLS at $d = 100$, the Type III estimation error is zero across replications, meaning the selected model contains the target, and the Type IV estimation error is substantially below that of ERM and regularization methods. When the Learning Space is well-adapted but the search fails to locate the target model, as with VSLS at $d \geq 500$, the Type III error grows and directly degrades performance, irrespective of the theoretical advantage conferred by the Learning Space structure. This makes the quality of the optimization algorithm a binding practical constraint, and suggests that bounding the gap between the model returned by the SLDA and the global minimum as a function of computation time is an important open problem.
	
	The results also clarify the effect of partially correct prior knowledge. Scenario 1 under the PLLS is precisely such a case: the target is sparse with pairwise distinct active coefficients, so the PLLS target model has $\ell + 1$ blocks rather than the minimal two blocks of S2, meaning the imposed structure is imperfect. Nevertheless, PLLS still outperforms ERM and regularization at $d = 500$ and $d = 1000$, illustrating that any reduction in the complexity of the target model, even partial, translates into improved generalization, with the magnitude of the gain related to how much the structure reduces the complexity of the target model relative to the full hypotheses space.
	
	Finally, the four scenarios are designed to span a wide range of alignment between domain knowledge and the true target: from perfect alignment, to partial alignment, to complete misalignment (S4 for both). The Type II, III, and IV estimation errors reported in Figure \ref{fig_estimation_errors} directly instantiate the theoretical decomposition of the generalization error: Type II measures the approximation gap of the selected model, Type III measures the model selection error, and Type IV measures the total gap between the fitted hypotheses and the target. The consistent pattern across all scenarios, that lower target model complexity leads to lower Type IV error when the search successfully minimizes Type III error, provides direct empirical corroboration of the paper's central theoretical message. In particular, it illustrates when learning via model selection with cross-validation error estimation based on prior information can beat ERM and established regularization methods.

	\begin{remark}
		We note that the stepwise algorithm for the VSLS could be adapted by also sampling candidate models at each step to increase the range of the lattice that is visited with the limited computation budget. However, a good heuristic would be necessary to assign probabilities to each neighboring model.
	\end{remark}
	
	\FloatBarrier
			
	\section{Inserting domain knowledge into Learning Spaces increases generalization}
	\label{SecPIEnhance}
	
	The example in Section \ref{SecRegression} is a special case in which generalization may be increased by inserting prior information into the Learning Space. If it is known that the target hypotheses is sparse, then the variable selection Learning Space should be chosen to increase generalization, while if it is known that some input variables have the same effect, the partition lattice Learning Space should be chosen.
	
	In ERM, in order to increase generalization, much stronger domain knowledge must be available to consider $\mathcal{H}$ as a simpler hypotheses space without adding a large bias. For instance, knowing that the target is sparse is not enough, and it is necessary to know exactly on what variables it depends on. Moreover, knowing that some variables have the same effect is not enough, and it is necessary to know which variables have the same effect. Conversely, the Learning Spaces can leverage weaker domain knowledge to increase generalization, as illustrated in Section \ref{SecRegression}.
	
	Domain knowledge can be especially leveraged to decrease the complexity of $\mathcal{M}^{\star}$, which should enable greater generalization. Indeed, if it is too complex, then there would only be the possibility of learning a target hypotheses if a highly complex model was selected, and then it was learned from it. This scenario would require a large computational budget to search the Learning Space and a large independent sample to properly learn the target hypotheses once a highly complex model that contains it is selected; actually, a large sample size might also be necessary to select such a complex model. On the other hand, if $\mathcal{M}^{\star}$ has low complexity, then it might be more likely to select it, or a low-complexity model that contains it, and fewer samples and computational resources are necessary to both select a suitable model and learn a hypotheses from it. The results in Figure \ref{fig_estimation_errors} illustrate this fact.
	
	Figure \ref{candidates} presents some examples of how the quality of prior information can be associated with the complexity of $\mathcal{M}^{\star}$. In Figure \ref{candidates} (a), the complexity of $\mathcal{M}^{\star}$ is low, so generalization might be higher. The partition lattice Learning Space of linear functions in Section \ref{SecRegression} when the target is $h_{3}^{\star}$ is an example of this scenario. In Figure \ref{candidates} (b), the complexity of $\mathcal{M}^{\star}$ is not as low, but generalization might be higher than in the scenario in Figure \ref{candidates} (c), in which $\mathcal{M}^{\star}$ is highly complex. The partition lattice Learning Space of linear functions in Section \ref{SecRegression} when the target is $h_{1}^{\star}$ is an example of Figure \ref{candidates} (b) and the variable selection Learning Space when the target is $h^{\star}_{3}$ is an example of Figure \ref{candidates} (c).
	
	\begin{figure*}[ht]
		\centering
		\includegraphics[width=\linewidth]{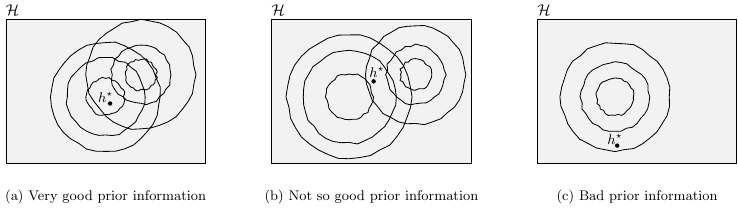}
		\caption{Three examples illustrating how the quality of prior information is associated with the complexity of $\mathcal{M}^{\star}$. The circles represent some models in $\mathbb{L}(\mathcal{H})$ and their area is proportional to the model complexity.} \label{candidates}
	\end{figure*}
	
	The reasoning illustrated in Figure \ref{candidates} also argues \textit{against} considering nested models as the family of candidates from a statistical perspective. Nested models represent a chain $\mathcal{M}_{1} \subset \dots \subset \mathcal{M}_{r}$ in the lattice, and the \textit{amount} of domain knowledge has to be \textit{substantial} to choose the \textit{right} chain. This is illustrated in Figure \ref{candidates} (a) in which $h^{\star}$ is in a simple model of one chain, but a complex model of the other chain. In this case, domain knowledge would need to be spot on so that the \textit{right} chain could be selected. In Scenario 3 of Section \ref{SecRegression}, this would mean that only knowing that some variables have the same effect would not be sufficient and more prior information would have to lead one to select a chain that passes through the partition of the even and odd numbers. Moreover, we argue that lattice Learning Spaces also have advantages from a computational perspective, especially when searching for approximately optimal models within the allotted computation budget. 
	
	\section{Final Remarks}
	\label{FinalRemarks}
	
	Model selection techniques have been historically sought to increase generalization. The main concern of these methods has been to control the complexity of the hypotheses space to avoid overfitting. This is explicitly done in model selection by complexity penalization, and in variable selection, since the complexity of the model is related to the number of variables. However, selecting a low-complexity model is only one step to achieving high generalization, since it is also necessary to have low-risk hypotheses in such a model; otherwise high generalization is unfeasible. In this paper, we investigated how modeling the collection of candidate models based on domain knowledge and prior information may increase generalization.
	
	We presented learning via model selection with cross-validation risk estimation as a systematic data-driven framework consisting of selecting the simplest global minimum of a family of candidate models, and then learning a hypotheses on it with an independent sample or by reusing, seeking to approximate a target hypotheses of $\mathcal{H}$. We studied the distribution-free asymptotics of such a framework by showing the convergence of the estimated model to the target one, and of the estimation errors to zero, for both bounded and unbounded loss functions. The case of bounded loss functions was treated with the usual tools of VC theory, while the case of unbounded loss functions required some new technical results, which are an extension of those in \cite{cortes2019}. 
	
	We introduced the maximum discrimination error $\epsilon^{\star}$, formalized the concept of target model, and illustrated the possibility of better learning with a fixed sample size by properly modeling the family of candidate models with two simulation studies. We argued that by modeling the collection of candidate models based on domain knowledge, it is possible to increase generalization. In particular, we introduced the Learning Spaces and discussed how generalization might be increased by designing them in such a way that the target hypotheses is contained in simple models. The theoretical results also support this assertion, as follows.
	
	First, since $\hat{\mathcal{M}}$ converges to $\mathcal{M}^{\star}$ with probability one, 
	\begin{equation*}
		\mathbb{E}(G(\mathcal{\hat{M}})) \xrightarrow{N \rightarrow \infty} G(\mathcal{M}^{\star}),
	\end{equation*}
	in which $G: \mathbb{C}(\mathcal{H}) \mapsto  \mathbb{R}$ is any real-valued function. The convergence of $\mathbb{E}(G(\mathcal{\hat{M}}))$ ensures that the expectations of functions of $\hat{\mathcal{M}}$ on the right-hand side of inequalities in Theorems \ref{bound_constant} and \ref{bound_constant2} and Corollaries \ref{cor_typeIV} and \ref{cor_typeIV2} tend to the same functions evaluated at $\mathcal{M}^{\star}$ when $N$ tends to infinity. Hence, if one was able to isolate $h^{\star}$ within a model $\mathcal{M}^{\star}$ with small VC dimension, the bounds for types I, II and IV estimation errors will tend to be tighter for a fixed sample size. Furthermore, tighter bounds are obtained in cases in which $d_{VC}(\hat{\mathcal{M}}) \approx d_{VC}(\mathcal{M}^{\star})$, or $\hat{\mathcal{M}} = \mathcal{M}^{\star}$, with high probability (see also Theorems \ref{bound_constant_reusing} and \ref{bound_constant_reusing2}).
	
	Second, if the MDE of $\mathbb{C}(\mathcal{H})$ under $P$ is large, then we need less precision when estimating $L(\mathcal{M})$ for $L(\hat{\mathcal{M}})$ to be equal to $L(\mathcal{M}^{\star})$, and for types III and IV estimation errors to be smaller than an $\epsilon \ll \epsilon^{\star}$ with high probability, so fewer samples are needed to learn a model as good as $\mathcal{M}^{\star}$ and to have smaller types III and IV estimation errors. Moreover, the sample complexity to learn this model is that of the most complex model in $\mathbb{C}(\mathcal{H})$, hence is at most the complexity of a model with VC dimension $d_{VC}(\mathbb{C}(\mathcal{H}))$, which may be smaller than that of $\mathcal{H}$. However, there may be a trade-off between $d_{VC}(\mathbb{C}(\mathcal{H}))$ and the number $\mathfrak{m}(\mathbb{C}(\mathcal{H}))$ of maximal elements in $\mathbb{C}(\mathcal{H})$, as can be seen on the established bounds.
	
	The simulation study of Section \ref{SecNum} provides direct empirical support for these theoretical assertions. In the regression experiments, learning via model selection in the partition lattice Learning Space substantially outperformed ERM, LASSO, and ridge regression in scenarios where the target model had low complexity and the search algorithm successfully located it (low Type III estimation error), illustrating that a well-designed Learning Space can yield generalization gains orders of magnitude beyond those of standard methods. Conversely, when the search failed to locate the target model, or when the Learning Space was misaligned with the true target (Scenario 4), performance degraded as predicted by the bounds, with ERM dominating. Together, the theoretical bounds and simulation results highlight that the practical benefit of learning via model selection depends jointly on the alignment between the Learning Space and the true target, i.e., prior knowledge, and on the quality of the optimization algorithm used to search it.
	
	Since the bounds were developed for a general hypotheses space with finite VC dimension in a distribution-free framework, they are not the tightest possible in specific cases, hence an interesting topic for future research would be to apply the methods used here to obtain tighter bounds for restricted classes of hypotheses spaces, candidate models and/or data generating distributions. The results of this paper may be extended when distribution-dependent bounds for types I and II estimation errors are available in the framework of Propositions \ref{propVC} and \ref{propVC2}. In particular, a theoretical study of the estimation errors in the examples in Section \ref{SecEnhanceGen} should be straightforward, so they ought to be the first distribution-dependent settings to be investigated in future studies. Bounding the estimation errors in the framework of \cite{mendelson2004importance} is also a promising line of research in this context.
	
	The established bounds are not useful to determine the number of folds in cross-validation, since we have not made use of the dependence between the samples in different folds and just applied a union bound to consider each pair of samples separately. The bounds could be improved and insights about the optimal number of folds could be obtained if this dependence was considered, but it would be necessary to restrict the study to a specific class of models, since such a result should not be possible in a general framework. Insights and refined results in the learning by reusing framework could also be obtained by restricting the study to a specific class of models and data generating distributions, in which the effect of considering the same sample to select the model and learn on it could be better understood.
	
	Although outside the scope of this paper, the computational cost of computing $\hat{\mathcal{M}}$ by solving optimization problem \eqref{Ghat} should also be considered when choosing family $\mathbb{C}(\mathcal{H})$. As evidenced in the simulation study of Section \ref{SecRegression}, this family of candidate models should have some structure that allows an efficient computation of \eqref{Ghat}, or the computation of a suboptimal solution with satisfactory practical results. 
	
	The framework of this paper could be studied considering other risk estimators besides cross-validation. For example, one could consider modeling the family of candidate models based on domain knowledge together with penalization methods, by taking the risk $\hat{L}$ as a penalization of the empirical error. This approach, in both a distribution-free and dependent scenario, could lead to tighter bounds and better practical methods for specific problems. Also, an interesting topic for future research would be to prove the results of this paper relaxing the assumption that the sample is independent and identically distributed.
	
	From an applied perspective, there is great promise in investigating how specific domain knowledge may be leveraged to design a Learning Space to solve specific learning problems. The examples in Section \ref{SecEnhanceGen} illustrate how this can be done, but it is necessary to further study not only how to translate prior information into a suitable Learning Space, but also the effect of this modeling on the generalization in other applied problems. This can be investigated from a theoretical perspective, by developing deviation-bounds in specific scenarios, and empirically by comparing learning via model selection based on domain knowledge with other methods.
	
	\section{Proof of results}
	\label{SecProof}
	
	\subsection{Definition of validation errors}
	\label{SecVal}
	
	\subsubsection{Validation sample}
	
	Fix a sequence $\{V_{N}: N \geq 1\}$ such that $\lim\limits_{N \rightarrow \infty} V_{N} = \lim\limits_{N \to \infty} N - V_{N} = \infty$, and let 
	\begin{align*}
		\mathcal{D}_{N}^{(\text{train})} = \{Z_{l}: 1 \leq l \leq N - V_{N}\} & & \mathcal{D}_{N}^{(\text{val})} = \{Z_{l}: N - V_{N} < l \leq N\}
	\end{align*}
	be a split of $\mathcal{D}_{N}$ into a training and validation sample. The two samples $\mathcal{D}_{N}^{(\text{train})}$ and $\mathcal{D}_{N}^{(\text{val})}$ are independent. The estimator under the validation sample is given by
	\begin{equation*}
		\label{VALLhat}
		\hat{L}_{\text{val}}(\mathcal{M}) \coloneqq L_{\mathcal{D}_{N}^{(\text{val})}}(\hat{h}_{\mathcal{M}}^{(\text{train})}) =  \frac{1}{V_{N}} \sum_{N- V_{N} < l \leq N}  \ell\big(Z_{l},\hat{h}_{\mathcal{M}}^{(\text{train})}\big),
	\end{equation*}
	in which
	\begin{equation*}
		\hat{h}_{\mathcal{M}}^{(\text{train})} = \argminA_{h \in \mathcal{M}} L_{\mathcal{D}_{N}^{(\text{train})}}(h)
	\end{equation*}
	minimizes the empirical risk in $\mathcal{M}$ under $\mathcal{D}_{N}^{(\text{train})}$.
	
	\subsubsection{K-fold cross-validation}
	
	Fix $k \in \mathbb{Z}_{+}$ and assume $N \coloneqq kn$, for a $n \in \mathbb{Z}_{+}$. Then, let 
	\begin{align*}
		\mathcal{D}_{N}^{(j)} \coloneqq \{Z_{l}: (j-1)n < l \leq jn\}, & & j = 1,\dots,k,
	\end{align*}
	be a partition of $\mathcal{D}_{N}$: 
	\begin{align*}
		\mathcal{D}_{N} = \bigcup_{j=1}^{k} \mathcal{D}_{N}^{(j)} & & \text{ and } & & \mathcal{D}_{N}^{(j)} \cap \mathcal{D}_{N}^{(j^{\prime})} = \emptyset \text{ if } j \neq j^{\prime}.
	\end{align*}
	We define
	\begin{equation*}
		\hat{h}^{(j)}_{\mathcal{M}} \coloneqq \argminA_{h \in \mathcal{M}} \ L_{\mathcal{D}_{N}\setminus\mathcal{D}_{N}^{(j)}}(h) = \argminA_{h \in \mathcal{M}} \ \frac{1}{(k-1)n} \sum_{\substack{l \leq (j-1)n \\ \cup \ l > jn}} \ell(Z_{l},h)
	\end{equation*}
	as the hypotheses which minimizes the empirical risk in $\mathcal{M}$ under the sample $\mathcal{D}_{N}\setminus\mathcal{D}_{N}^{(j)}$, that is the sample composed by all folds, but the $j$-th, and
	\begin{equation*}
		\hat{L}_{\text{cv(k)}}^{(j)}(\mathcal{M}) \coloneqq L_{\mathcal{D}_{N}^{(j)}}(\hat{h}^{(j)}_{\mathcal{M}}) =  \frac{1}{n} \sum_{(j-1)n < l \leq jn} \ell(Z_{l},\hat{h}^{(j)}_{\mathcal{M}}),
	\end{equation*}
	as the validation risk of the $j$-th fold.
	
	The k-fold cross-validation estimator of $L(\mathcal{M})$ is then given by
	\begin{equation*}
		\label{CVLhat}
		\hat{L}_{\text{cv(k)}}(\mathcal{M}) \coloneqq \frac{1}{k} \ \sum_{j=1}^{k} \hat{L}_{\text{cv(k)}}^{(j)}(\mathcal{M}),
	\end{equation*}
	that is the average validation risk over the folds. 
	
	\subsection{Results of Section \ref{boundedL}}
	
	We start with a lemma.
	
	\begin{lemma}
		\label{lemma0}
		\begin{equation*}
			\label{inclusion_star}
			\left\{\max\limits_{i \in \mathcal{J}} \text{\textbar}L(\mathcal{M}_{i}) - \hat{L}(\mathcal{M}_{i})\text{\textbar} < \epsilon^{\star}/2\right\} \subset \left\{L(\hat{\mathcal{M}}) = L(\mathcal{M}^{\star})\right\},
		\end{equation*}
	\end{lemma}
	\begin{proof}
		If
		\begin{equation*}
			\max\limits_{i \in \mathcal{J}} \text{\textbar}L(\mathcal{M}_{i}) - \hat{L}(\mathcal{M}_{i})\text{\textbar} < \epsilon^{\star}/2
		\end{equation*}
		then, for any $i \in \mathcal{J}$ such that $L(\mathcal{M}_{i}) > L(\mathcal{M}^{\star})$, we have
		\begin{align}
			\label{ineq11}
			\hat{L}(\mathcal{M}_{i}) - \hat{L}(\mathcal{M}^{\star}) > L(\mathcal{M}_{i}) - L(\mathcal{M}^{\star}) - \epsilon^{\star} \geq 0,
		\end{align}
		in which the last inequality follows from the definition of $\epsilon^{\star}$. From \eqref{ineq11} it follows that the global minimum of $\nicefrac{\mathbb{C}(\mathcal{H})}{\hat{\sim}}$ with the least VC dimension, that is $\hat{\mathcal{M}}$, is such that $L(\hat{\mathcal{M}}) = L(\mathcal{M}^{\star})$. Indeed, from \eqref{ineq11} it follows that $\hat{L}(\mathcal{M}) > \hat{L}(\mathcal{M}^{\star})$ for all $\mathcal{M} \in \mathbb{C}(\mathcal{H})$ such that $L(\mathcal{M}) > L(\mathcal{M}^{\star})$. Hence, since $\hat{L}(\hat{\mathcal{M}}) \leq \hat{L}(\mathcal{M}^{\star})$, we must have $L(\hat{\mathcal{M}}) = L(\mathcal{M}^{\star})$ implying the desired inclusion of events.
	\end{proof}
	
	\begin{proof}[\textbf{Proof of Proposition \ref{proposition_principal}}]
		The result is a direct consequence of Lemma \ref{lemma0}.
	\end{proof}
	
	We state and prove a lemma that will aid the proof of Theorem \ref{theorem_principal_convergence}
	
	\begin{lemma}
		\label{lemma1}
		Assume the premises of Theorem \ref{theorem_principal_convergence} are in force. Then, for any $\epsilon > 0$ it holds
		\begin{align*}
			\mathbb{P}&\left(\max\limits_{i \in \mathcal{J}} \text{\textbar}L(\mathcal{M}_{i}) - \hat{L}(\mathcal{M}_{i})\text{\textbar} \geq \epsilon/2\right) \leq \\
			&\leq m \sum_{\mathcal{M} \in \text{ Max } \mathbb{C}(\mathcal{H})} \left[B_{N_{t},\epsilon/8}(d_{VC}(\mathcal{M})) + \hat{B}_{N_{v},\epsilon/4}(d_{VC}(\mathcal{M}))\right]\\
			&\leq m \ \mathfrak{m}(\mathbb{C}(\mathcal{H})) \left[B_{N_{t},\epsilon/8}(d_{VC}(\mathbb{C}(\mathcal{H}))) + \hat{B}_{N_{v},\epsilon/4}(d_{VC}(\mathbb{C}(\mathcal{H})))\right].
		\end{align*}
	\end{lemma}
	\begin{proof}
		Fix $\epsilon > 0$. Denoting $\hat{h}_{i}^{(j)} \coloneqq \hat{h}_{\mathcal{M}_{i}}^{(j)}$,
		\begin{align}
			\label{dp1} \nonumber
			\mathbb{P}&\left(\max\limits_{i \in \mathcal{J}} \text{\textbar}L(\mathcal{M}_{i}) - \hat{L}(\mathcal{M}_{i})\text{\textbar} \geq \epsilon/2\right) \leq \mathbb{P}\left(\max\limits_{i \in \mathcal{J}} \sum_{j=1}^{m} \frac{1}{m} \text{\textbar}L(\mathcal{M}_{i}) - \hat{L}^{(j)}(\hat{h}^{(j)}_{i})\text{\textbar} > \epsilon/2\right)\\ \nonumber
			&\leq \mathbb{P}\left(\max_{j} \max\limits_{i \in \mathcal{J}} \text{\textbar}L(\mathcal{M}_{i}) - \hat{L}^{(j)}(\hat{h}^{(j)}_{i})\text{\textbar} > \epsilon/2\right)\\ \nonumber
			&\leq \mathbb{P}\left(\bigcup_{j=1}^{m} \left\{\max\limits_{i \in \mathcal{J}} \text{\textbar}L(\mathcal{M}_{i}) - \hat{L}^{(j)}(\hat{h}^{(j)}_{i})\text{\textbar} > \epsilon/2\right\}\right)\\ \nonumber
			&\leq \sum_{j=1}^{m} \mathbb{P}\left(\max\limits_{i \in \mathcal{J}} \text{\textbar}L(\mathcal{M}_{i}) - \hat{L}^{(j)}(\hat{h}^{(j)}_{i})\text{\textbar} > \epsilon/2\right)\\ \nonumber
			&= \sum_{j=1}^{m} \mathbb{P}\left(\max\limits_{i \in \mathcal{J}} \text{\textbar}L(\mathcal{M}_{i}) - L(\hat{h}^{(j)}_{i}) + L(\hat{h}^{(j)}_{i}) - \hat{L}^{(j)}(\hat{h}^{(j)}_{i})\text{\textbar} > \epsilon/2\right)\\ \nonumber
			&\leq \sum_{j=1}^{m} \mathbb{P}\left(\max\limits_{i \in \mathcal{J}} L(\hat{h}^{(j)}_{i}) - L(\mathcal{M}_{i}) + \max\limits_{i \in \mathcal{J}} \text{\textbar}L(\hat{h}^{(j)}_{i}) - \hat{L}^{(j)}(\hat{h}^{(j)}_{i})\text{\textbar} > \epsilon/2\right)\\ \nonumber
			&\leq \sum_{j=1}^{m} \mathbb{P}\left(\max\limits_{i \in \mathcal{J}} L(\hat{h}^{(j)}_{i}) - L(\mathcal{M}_{i}) > \epsilon/4\right) + \mathbb{P}\left(\max\limits_{i \in \mathcal{J}} \text{\textbar}L(\hat{h}^{(j)}_{i}) - \hat{L}^{(j)}(\hat{h}^{(j)}_{i})\text{\textbar} > \epsilon/4\right)\\
			&\leq \sum_{j=1}^{m} \mathbb{P}\left(\max\limits_{i \in \mathcal{J}} L(\hat{h}^{(j)}_{i}) - L(\mathcal{M}_{i}) > \epsilon/4\right) + \mathbb{P}\left(\max\limits_{i \in \mathcal{J}} \sup_{h \in \mathcal{M}_{i}} \text{\textbar}\hat{L}^{(j)}(h) - L(h)\text{\textbar} > \epsilon/4\right)
		\end{align}
		in which in the first inequality we applied the definition of $\hat{L}(\mathcal{M})$. For each $j$, the first probability in \eqref{dp1} is equal to
		\begin{align*}
			\mathbb{P}&\left(\max\limits_{i \in \mathcal{J}} L(\hat{h}^{(j)}_{i}) - L_{\mathcal{D}_{N}^{(j)}}(\hat{h}^{(j)}_{i}) + L_{\mathcal{D}_{N}^{(j)}}(\hat{h}^{(j)}_{i}) - L(\mathcal{M}_{i}) > \epsilon/4\right)\\
			&\leq \mathbb{P}\left(\max\limits_{i \in \mathcal{J}} L(\hat{h}^{(j)}_{i}) - L_{\mathcal{D}_{N}^{(j)}}(\hat{h}^{(j)}_{i}) + L_{\mathcal{D}_{N}^{(j)}}(h_{i}) - L(\mathcal{M}_{i}) > \epsilon/4\right)\\
			&\leq \mathbb{P}\left(\left\{\max\limits_{i \in \mathcal{J}} \text{\textbar}L(\hat{h}^{(j)}_{i}) - L_{\mathcal{D}_{N}^{(j)}}(\hat{h}^{(j)}_{i})\text{\textbar} > \epsilon/8\right\} \bigcup \left\{\max\limits_{i \in \mathcal{J}} \text{\textbar} L_{\mathcal{D}_{N}^{(j)}}(h_{i}) - L(\mathcal{M}_{i}) \text{\textbar} > \epsilon/8\right\}\right)\\
			&\leq \mathbb{P}\left(\max\limits_{i \in \mathcal{J}} \sup_{h \in \mathcal{M}_{i}} \text{\textbar}L_{\mathcal{D}_{N}^{(j)}}(h) - L(h)\text{\textbar} > \epsilon/8\right),
		\end{align*}
		in which the first inequality follows from the fact that $L_{\mathcal{D}_{N}^{(j)}}(\hat{h}^{(j)}_{i}) \leq L_{\mathcal{D}_{N}^{(j)}}(h_{i})$, and the last follows since $L(\mathcal{M}_{i}) = L(h_{i})$. We conclude that
		\begin{align*}
			&\mathbb{P}\left(\max\limits_{i \in \mathcal{J}} \text{\textbar}L(\mathcal{M}_{i}) - \hat{L}(\mathcal{M}_{i})\text{\textbar} \geq \epsilon/2\right) \\
			&\leq \sum_{j=1}^{m} \mathbb{P}\left(\max\limits_{i \in \mathcal{J}} \sup_{h \in \mathcal{M}_{i}} \text{\textbar}L_{\mathcal{D}_{N}^{(j)}}(h) - L(h)\text{\textbar} > \epsilon/8\right) + \mathbb{P}\left(\max\limits_{i \in \mathcal{J}} \sup_{h \in \mathcal{M}_{i}} \text{\textbar}\hat{L}^{(j)}(h) - L(h)\text{\textbar} > \epsilon/4\right).
		\end{align*}	
		If $\mathcal{M}_{1} \subset \mathcal{M}_{2}$ then, for any $\epsilon > 0$ and $j = 1, \dots, m$, we have the following inclusion of events
		\begin{align*}
			&\left\{\sup_{h \in \mathcal{M}_{1}} \text{\textbar}\hat{L}^{(j)}(h) - L(h)\text{\textbar} > \epsilon\right\} \subset \left\{\sup_{h \in \mathcal{M}_{2}} \text{\textbar}\hat{L}^{(j)}(h) - L(h)\text{\textbar} > \epsilon\right\}\\
			&\left\{\sup_{h \in \mathcal{M}_{1}} \text{\textbar}L_{\mathcal{D}_{N}^{(j)}}(h) - L(h)\text{\textbar} > \epsilon\right\} \subset \left\{\sup_{h \in \mathcal{M}_{2}} \text{\textbar}L_{\mathcal{D}_{N}^{(j)}}(h) - L(h)\text{\textbar} > \epsilon\right\},
		\end{align*}
		hence it is true that
		\begin{align*}
			&\left\{\max\limits_{i \in \mathcal{J}} \sup_{h \in \mathcal{M}_{i}} \text{\textbar}\hat{L}^{(j)}(h) - L(h)\text{\textbar} > \epsilon/4\right\} \subset \left\{\max_{\mathcal{M} \in \text{ Max } \mathbb{C}(\mathcal{H})} \sup_{h \in \mathcal{M}} \text{\textbar}\hat{L}^{(j)}(h) - L(h)\text{\textbar} > \epsilon/4\right\}\\
			&\left\{\max\limits_{i \in \mathcal{J}} \sup_{h \in \mathcal{M}_{i}} \text{\textbar}L_{\mathcal{D}_{N}^{(j)}}(h) - L(h)\text{\textbar} > \epsilon/8\right\} \subset \left\{\max_{\mathcal{M} \in \text{ Max } \mathbb{C}(\mathcal{H})} \sup_{h \in \mathcal{M}} \text{\textbar}L_{\mathcal{D}_{N}^{(j)}}(h) - L(h)\text{\textbar} > \epsilon/8\right\},
		\end{align*}
		which yields
		\begin{align}
			\label{conlusion_cond} \nonumber
			\mathbb{P}&\left(\max\limits_{i \in \mathcal{J}} \text{\textbar}L(\mathcal{M}_{i}) - \hat{L}(\mathcal{M}_{i})\text{\textbar} \geq \epsilon/2\right) \\  \nonumber
			&\leq \sum_{j=1}^{m} \sum_{\mathcal{M} \in \text{ Max } \mathbb{C}(\mathcal{H})} \mathbb{P}\left(\sup_{h \in \mathcal{M}} \text{\textbar}L_{\mathcal{D}_{N}^{(j)}}(h) - L(h)\text{\textbar} > \epsilon/8\right) + \mathbb{P}\left(\sup_{h \in \mathcal{M}} \text{\textbar}\hat{L}^{(j)}(h) - L(h)\text{\textbar} > \epsilon/4\right)\\ \nonumber
			&\leq m \sum_{\mathcal{M} \in \text{ Max } \mathbb{C}(\mathcal{H})} \left[B_{N_{t},\epsilon/8}(d_{VC}(\mathcal{M})) + \hat{B}_{N_{v},\epsilon/4}(d_{VC}(\mathcal{M}))\right]\\
			&\leq m \ \mathfrak{m}(\mathbb{C}(\mathcal{H})) \left[B_{N_{t},\epsilon/8}(d_{VC}(\mathbb{C}(\mathcal{H}))) + \hat{B}_{N_{v},\epsilon/4}(d_{VC}(\mathbb{C}(\mathcal{H})))\right],
		\end{align}
		in which the last inequality follows from the fact that both $\hat{B}_{N_{v},\epsilon/4}$ and $B_{N_{t},\epsilon/8}$ are increasing functions, and $d_{VC}(\mathbb{C}(\mathcal{H})) = \max_{\mathcal{M} \in \mathbb{C}(\mathcal{H})} d_{VC}(\mathcal{M})$.
	\end{proof}
	
	\begin{proof}[\textbf{Proof of Theorem \ref{theorem_principal_convergence}}]
		It follows from Lemma \ref{lemma1} that
		\begin{align*}
			\mathbb{P}&\left(\max\limits_{i \in \mathcal{J}} \text{\textbar}L(\mathcal{M}_{i}) - \hat{L}(\mathcal{M}_{i})\text{\textbar} \geq \epsilon^{\star}/2\right) \\  \nonumber
			&\leq m \sum_{\mathcal{M} \in \text{ Max } \mathbb{C}(\mathcal{H})} \left[B_{N_{t},\epsilon^{\star}/8}(d_{VC}(\mathcal{M})) + \hat{B}_{N_{v},\epsilon^{\star}/4}(d_{VC}(\mathcal{M}))\right]\\
			&\leq m \ \mathfrak{m}(\mathbb{C}(\mathcal{H})) \left[B_{N_{t},\epsilon^{\star}/8}(d_{VC}(\mathbb{C}(\mathcal{H}))) + \hat{B}_{N_{v},\epsilon^{\star}/4}(d_{VC}(\mathbb{C}(\mathcal{H})))\right],
		\end{align*}
		so the result follows from Proposition \ref{proposition_principal} since
		\begin{equation*}
			\{L(\hat{\mathcal{M}}) \neq L(\mathcal{M}^{\star})\} \subset \left\{\max\limits_{i \in \mathcal{J}} \text{\textbar}L(\mathcal{M}_{i}) - \hat{L}(\mathcal{M}_{i})\text{\textbar} \geq \epsilon^{\star}/2\right\}.
		\end{equation*}	
		
		If the almost sure convergences \eqref{as_conv} hold, then
		\begin{equation}
			\label{as_proof}
			\hat{L}(\mathcal{M}) \xrightarrow[N \to \infty]{\text{a.s.}} L(\mathcal{M})
		\end{equation}
		for all $\mathcal{M} \in \mathbb{C}(\mathcal{H})$, since, if $L(h) = \hat{L}^{(j)}(h) = L_{\mathcal{D}_{N}}^{(j)}(h)$ for all $j = 1,\dots,m$ and $h \in \mathcal{H}$, then $\hat{L}(\mathcal{M}) = L(\mathcal{M})$ for all $\mathcal{M} \in \mathbb{C}(\mathcal{H})$. Observe that
		\begin{align}
			\label{incl}
			\left\{\max_{\mathcal{M} \in \mathbb{C}(\mathcal{H})} \text{\textbar}L(\mathcal{M}) - \hat{L}(\mathcal{M})\text{\textbar} = 0\right\} \subset \left\{\hat{\mathcal{M}} = \mathcal{M}^{\star}\right\},
		\end{align}
		since, if the estimated risk $\hat{L}$ is equal to the out-of-sample risk $L$, then the definitions of $\hat{\mathcal{M}}$ and $\mathcal{M}^{\star}$ coincide. As the probability of the event on the left hand-side of \eqref{incl} converges to one if \eqref{as_proof} is true, we conclude that, if \eqref{as_conv} holds, then $\hat{\mathcal{M}}$ converges to $\mathcal{M}^{\star}$ with probability one.
	\end{proof}
	
	\begin{proof}[\textbf{Proof of Theorem \ref{CVModelconvergence}}]
		We need to show that \eqref{as_conv} holds in these instances. For any $\epsilon > 0$, by Corollary \ref{cor3TypeI},	
		\begin{align*}
			&\mathbb{P}\left(\max_{\mathcal{M} \in \mathbb{C}(\mathcal{H})} \max_{j} \sup\limits_{h \in \mathcal{M}} \text{\textbar}L_{\mathcal{D}_{N}^{(j)}}(h) - L(h)\text{\textbar} > \epsilon\right) \leq \sum_{j=1}^{m} \mathbb{P}\left(\sup\limits_{h \in \mathcal{H}} \text{\textbar}L_{\mathcal{D}_{N}^{(j)}}(h) - L(h)\text{\textbar} > \epsilon\right)\\
			&\leq m \ 8 \exp\left\{d_{VC}(\mathcal{H}) \left(1 + \ln \frac{N_{t}}{d_{VC}(\mathcal{H})} - N_{t}\frac{\epsilon^{2}}{32C^{2}}\right)\right\}.
		\end{align*}
		By the inequality above, and Borel-Cantelli Lemma \cite[Theorem~4.3]{billingsley2008}, the first convergence in \eqref{as_conv} holds. The second convergence holds since the inequality above is also true, but with $L_{\mathcal{D}_{N}^{(j)}}$ and $N_{t}$ interchanged by $\hat{L}^{(j)}$ and $N_{v}$, the empirical risk and size of the $j$-th validation sample.
	\end{proof}
	
	\begin{proof}[\textbf{Proof of Theorem \ref{bound_constant}}]
		We first note that
		\begin{align}
			\label{Sum1} \nonumber
			\mathbb{P}&\left(\sup\limits_{h \in \mathcal{\hat{M}}} \text{\textbar}L_{\tilde{\mathcal{D}}_{M}}(h) - L(h) \text{\textbar} > \epsilon \right)  = \mathbb{E} \Bigg(\mathbb{P}\left(\sup\limits_{h \in \mathcal{\hat{M}}} \text{\textbar}L_{\tilde{\mathcal{D}}_{M}}(h) - L(h) \text{\textbar} > \epsilon \text{\textbar}\mathcal{\hat{M}}\right)\Bigg)\\ \nonumber
			& = \sum_{i \in \mathcal{J}} \mathbb{P}\left(\sup\limits_{h \in \mathcal{\hat{M}}} \text{\textbar}L_{\tilde{\mathcal{D}}_{M}}(h) - L(h) \text{\textbar} > \epsilon \text{\textbar}\mathcal{\hat{M}} = \mathcal{M}_{i}\right) \mathbb{P}(\mathcal{\hat{M}} = \mathcal{M}_{i})\\
			& = \sum_{i \in \mathcal{J}} \mathbb{P}\left(\sup\limits_{h \in \mathcal{M}_{i}} \text{\textbar}L_{\tilde{\mathcal{D}}_{M}}(h) - L(h) \text{\textbar} > \epsilon \text{\textbar}\mathcal{\hat{M}} = \mathcal{M}_{i}\right) \mathbb{P}(\mathcal{\hat{M}} = \mathcal{M}_{i}).
		\end{align}
		Fix $\mathcal{M} \in \mathbb{C}(\mathcal{H})$ with $\mathbb{P}(\mathcal{\hat{M}} = \mathcal{M}) > 0$. We claim that
		\begin{align}
			\label{cond_independence}
			\mathbb{P}\left(\sup\limits_{h \in \mathcal{M}} \text{\textbar}L_{\tilde{\mathcal{D}}_{M}}(h) - L(h) \text{\textbar} > \epsilon \text{\textbar}\mathcal{\hat{M}} = \mathcal{M}\right) = \mathbb{P}\left(\sup\limits_{h \in \mathcal{M}} \text{\textbar}L_{\tilde{\mathcal{D}}_{M}}(h) - L(h) \text{\textbar} > \epsilon\right).
		\end{align}
		Indeed, since $\tilde{\mathcal{D}}_{M}$ is independent of $\mathcal{D}_{N}$, the event 
		\begin{equation*}
			\left\{\sup_{h \in \mathcal{M}} \text{\textbar}L_{\tilde{\mathcal{D}}_{M}}(h) - L(h) \text{\textbar} > \epsilon\right\}
		\end{equation*}
		is independent of $\{\mathcal{\hat{M}} = \mathcal{M}\}$, as the former depends solely on $\tilde{\mathcal{D}}_{M}$, and the latter solely on $\mathcal{D}_{N}$. Hence, by applying bound $\eqref{bound_theoremBC}$ to each positive probability in the sum \eqref{Sum1}, we obtain that
		\begin{align*}
			\mathbb{P}\left(\sup\limits_{h \in \mathcal{\hat{M}}} \text{\textbar}L_{\tilde{\mathcal{D}}_{M}}(h) - L(h) \text{\textbar} > \epsilon \right) & \leq \sum_{i \in \mathcal{J}} B_{M,\epsilon}^{I}(d_{VC}(\mathcal{M}_{i}))  \mathbb{P}(\mathcal{\hat{M}} = \mathcal{M}_{i})\\
			& = \mathbb{E} \left(B_{M,\epsilon}^{I}(d_{VC}(\mathcal{\hat{M}}))\right) \leq B_{N,\epsilon}^{I}(d_{VC}(\mathbb{C}(\mathcal{H}))),
		\end{align*}
		as desired, in which the last inequality follows from the fact that $B_{M,\epsilon}^{I}$ is an increasing function and $d_{VC}(\mathbb{C}(\mathcal{H})) = \max_{\mathcal{M} \in \mathbb{C}(\mathcal{H})} d_{VC}(\mathcal{M})$.
		
		The bound for type II estimation error may be obtained similarly, since
		\begin{align*}
			\mathbb{P}&\left(L(\hat{h}_{\mathcal{\hat{M}}}^{\tilde{\mathcal{D}}_{M}}) - L(h^{\star}_{\mathcal{\hat{M}}}) > \epsilon \right) = \mathbb{E} \Bigg(\mathbb{P}\left(L(\hat{h}_{\mathcal{\hat{M}}}^{\tilde{\mathcal{D}}_{M}}) - L(h^{\star}_{\mathcal{\hat{M}}}) > \epsilon \text{\textbar} \mathcal{\hat{M}} \right)\Bigg)\\
			& = \sum_{i \in \mathcal{J}} \mathbb{P}\left(L(\hat{h}_{\mathcal{\hat{M}}}^{\tilde{\mathcal{D}}_{M}}) - L(h^{\star}_{\mathcal{\hat{M}}}) > \epsilon \text{\textbar} \mathcal{\hat{M}} = \mathcal{M}_{i} \right) \mathbb{P}(\mathcal{\hat{M}} = \mathcal{M}_{i})\\
			& = \sum_{i \in \mathcal{J}} \mathbb{P}\left(L(\hat{h}_{\mathcal{M}_{i}}^{\tilde{\mathcal{D}}_{M}}) - L(h^{\star}_{\mathcal{M}_{i}}) > \epsilon \text{\textbar} \mathcal{\hat{M}} = \mathcal{M}_{i} \right) \mathbb{P}(\mathcal{\hat{M}} = \mathcal{M}_{i})\\
			& = \sum_{i \in \mathcal{J}} \mathbb{P}\left(L(\hat{h}_{\mathcal{M}_{i}}^{\tilde{\mathcal{D}}_{M}}) - L(h^{\star}_{\mathcal{M}_{i}}) > \epsilon \right) \mathbb{P}(\mathcal{\hat{M}} = \mathcal{M}_{i}),
		\end{align*}
		and $B^{II}_{M,\epsilon}(d_{VC}(\mathcal{M}_{i}))$ is a bound for the probabilities inside the sum by \eqref{bound_theoremBC}. The assertion that types I and II estimation errors are asymptotically zero when $d_{VC}(\mathbb{C}(\mathcal{H})) < \infty$ is immediate from the established bounds.
	\end{proof}
	
	\begin{proof}[\textbf{Proof of Theorem \ref{theorem_tipeIII}}]
		We first show that
		\begin{align}
			\label{lemma_inside}
			\mathbb{P}\left(L(h_{\hat{\mathcal{M}}}^{\star}) - L(h^{\star}) > \epsilon\right) \leq \mathbb{P}\left(\max\limits_{i \in \mathcal{J}} \text{\textbar}\hat{L}(\mathcal{M}_{i}) - L(\mathcal{M}_{i})\text{\textbar} > (\epsilon \vee \epsilon^{\star})/2 \right).
		\end{align}
		If $\epsilon \leq \epsilon^{\star}$ then, by Lemma \ref{lemma0}, we have that
		\begin{align}
			\label{incl1}
			\left\{\max\limits_{i \in \mathcal{J}} \text{\textbar}\hat{L}(\mathcal{M}_{i}) - L(\mathcal{M}_{i})\text{\textbar} < (\epsilon \vee \epsilon^{\star})/2\right\} \subset \left\{L(\hat{\mathcal{M}}) = L(\mathcal{M}^{\star})\right\} \subset \left\{L(h_{\hat{\mathcal{M}}}^{\star}) - L(h^{\star}) < \epsilon\right\},
		\end{align}
		since $L(h^{\star}_{\hat{\mathcal{M}}}) = L(\hat{\mathcal{M}})$ and $L(h^{\star}_{\mathcal{M}^{\star}}) = L(\mathcal{M}^{\star})$, so \eqref{lemma_inside} follows in this case.
		
		Now, if $\epsilon > \epsilon^{\star}$ and $\max\limits_{i \in \mathcal{J}} \text{\textbar}\hat{L}(\mathcal{M}_{i}) - L(\mathcal{M}_{i})\text{\textbar} < \epsilon/2$, then
		\begin{align*}
			L(\hat{\mathcal{M}}) - L(\mathcal{M}^{\star}) &= [L(\hat{\mathcal{M}}) - \hat{L}(\mathcal{M}^{\star})] - [L(\mathcal{M}^{\star}) - \hat{L}(\mathcal{M}^{\star})]\\
			&\leq [L(\hat{\mathcal{M}}) - \hat{L}(\hat{\mathcal{M}})] - [L(\mathcal{M}^{\star}) - \hat{L}(\mathcal{M}^{\star})]\\
			&\leq \epsilon/2 + \epsilon/2 = \epsilon,
		\end{align*}
		in which the first inequality follows from the fact that the minimum of $\hat{L}$ is attained at $\hat{\mathcal{M}}$, and the last inequality follows from $\max\limits_{i \in \mathcal{J}} \text{\textbar}\hat{L}(\mathcal{M}_{i}) - L(\mathcal{M}_{i})\text{\textbar} < \epsilon/2$. Since $L(\hat{\mathcal{M}}) - L(\mathcal{M}^{\star}) = L(h_{\hat{\mathcal{M}}}^{\star}) - L(h^{\star})$, we also have the inclusion of events
		\begin{align}
			\label{incl2}
			\left\{\max\limits_{i \in \mathcal{J}} \text{\textbar}\hat{L}(\mathcal{M}_{i}) - L(\mathcal{M}_{i})\text{\textbar} < (\epsilon \vee \epsilon^{\star})/2\right\} \subset \left\{L(h_{\hat{\mathcal{M}}}^{\star}) - L(h^{\star}) < \epsilon\right\},
		\end{align}
		when $\epsilon > \epsilon^{\star}$. From \eqref{incl1} and \eqref{incl2} follows \eqref{lemma_inside}, as desired.
		
		It follows from Lemma \ref{lemma1} that
		\begin{align}
			\label{conclusion2} \nonumber
			\mathbb{P}&\left(\max\limits_{i \in \mathcal{J}} \text{\textbar}L(\mathcal{M}_{i}) - \hat{L}(\mathcal{M}_{i})\text{\textbar} \geq (\epsilon \vee \epsilon^{\star})/2\right) \leq\\ \nonumber
			&\leq m \sum_{\mathcal{M} \in \text{Max } \mathbb{C}(\mathcal{H})} \left[B_{N_{t},(\epsilon \vee \epsilon^{\star})/8}(d_{VC}(\mathcal{M})) + \hat{B}_{N_{v},(\epsilon \vee \epsilon^{\star})/4}(d_{VC}(\mathcal{M}))\right]\\
			& m \ \mathfrak{m}(\mathbb{C}(\mathcal{H})) \left[B_{N_{t},(\epsilon \vee \epsilon^{\star})/8}(d_{VC}(\mathbb{C}(\mathcal{H}))) + \hat{B}_{N_{v},(\epsilon \vee \epsilon^{\star})/4}(d_{VC}(\mathbb{C}(\mathcal{H})))\right].
		\end{align}
		The result follows combining \eqref{lemma_inside} and \eqref{conclusion2}.	
	\end{proof}
	
	\subsection{Results of Section \ref{SecUnbounded}}
	
	\begin{proof}[\textbf{Proof of Theorem \ref{theorem_principal_convergence_unbounded}}]
		We claim that
		\begin{equation}
			\label{implication1}
			1 - \delta < \frac{\hat{L}(\mathcal{M}_{i})}{L(\mathcal{M}_{i})} < 1 + \delta, \ \forall i \in \mathcal{J} \implies \max\limits_{i \in \mathcal{J}} \ \text{\textbar}\hat{L}(\mathcal{M}_{i}) - L(\mathcal{M}_{i})\text{\textbar} < \frac{\epsilon^{\star}}{2}.
		\end{equation}
		Indeed, the left-hand side of \eqref{implication1} implies
		\begin{equation*}
			\begin{cases}
				L(\mathcal{M}_{i}) - \hat{L}(\mathcal{M}_{i}) < \frac{\epsilon^{\star} L(\mathcal{M}_{i})}{2 \max\limits_{i \in \mathcal{J}} L(\mathcal{M}_{i})} < \frac{\epsilon^{\star}}{2}\\
				\hat{L}(\mathcal{M}_{i}) - L(\mathcal{M}_{i}) < \frac{\epsilon^{\star} L(\mathcal{M}_{i})}{2 \max\limits_{i \in \mathcal{J}} L(\mathcal{M}_{i})} < \frac{\epsilon^{\star}}{2}\\
			\end{cases} \ \forall i \in \mathcal{J},
		\end{equation*}
		as desired. In particular, it follows from Lemma \ref{lemma0} that
		\begin{equation}
			\label{writeP}
			\mathbb{P}\left(L(\hat{\mathcal{M}}) \neq L(\mathcal{M}^{\star})\right) \leq \mathbb{P}\left(\min\limits_{i \in \mathcal{J}} \frac{\hat{L}(\mathcal{M}_{i})}{L(\mathcal{M}_{i})} \leq 1 - \delta\right) + \mathbb{P}\left(\max\limits_{i \in \mathcal{J}} \frac{\hat{L}(\mathcal{M}_{i})}{L(\mathcal{M}_{i})} \geq 1 + \delta\right)
		\end{equation}
		hence it is enough to bound both probabilities on the right-hand side of the expression above.
		
		The first probability in \eqref{writeP} may be written as
		\begin{align}	
			\label{writeP2}
			\mathbb{P}\left(\max\limits_{i \in \mathcal{J}} \ \frac{L(\mathcal{M}_{i}) - \hat{L}(\mathcal{M}_{i})}{L(\mathcal{M}_{i})} \geq \delta\right) \leq \sum_{j=1}^{m} \mathbb{P}\left(\max\limits_{i \in \mathcal{J}} \ \frac{L(\mathcal{M}_{i}) - \hat{L}^{(j)}(\hat{h}^{(j)}_{i})}{L(\mathcal{M}_{i})} \geq \delta\right),
		\end{align}
		in which the inequality follows from a union bound. Since $x \mapsto  \frac{x - \alpha}{x}$ is increasing, and $L(\mathcal{M}_{i}) \leq L(\hat{h}^{(j)}_{i})$ for every $j = 1,\dots,m$, each probability in \eqref{writeP2} is bounded by
		\begin{equation}
			\label{res1}
			\mathbb{P}\left(\max\limits_{i \in \mathcal{J}} \ \frac{L(\hat{h}^{(j)}_{i}) - \hat{L}^{(j)}(\hat{h}^{(j)}_{i})}{L(\hat{h}^{(j)}_{i})} \geq \delta\right) \leq \mathbb{P}\left(\max\limits_{i \in \mathcal{J}} \sup\limits_{h \in \mathcal{M}_{i}} \ \frac{\text{\textbar}L(h) - \hat{L}^{(j)}(h)\text{\textbar}}{L(h)} \geq \delta\right).
		\end{equation}
		
		We turn to the second probability in \eqref{writeP} which can be written as
		\begin{equation}
			\label{writeP3}
			\mathbb{P}\left(\max\limits_{i \in \mathcal{J}} \ \frac{\hat{L}(\mathcal{M}_{i}) - L(\mathcal{M}_{i})}{L(\mathcal{M}_{i})} \geq \delta\right) \leq \sum_{j=1}^{m} \mathbb{P}\left(\max\limits_{i \in \mathcal{J}} \ \frac{\hat{L}^{(j)}(\hat{h}^{(j)}_{i}) - L(\mathcal{M}_{i})}{L(\mathcal{M}_{i})} \geq \delta\right),
		\end{equation}
		in which again the inequality follows from a union bound. In order to bound each probability in \eqref{writeP3} we intersect its event with
		\begin{equation*}
			\max\limits_{i \in \mathcal{J}} \frac{L(\hat{h}_{i}^{(j)})}{L(\mathcal{M}_{i})} \leq \frac{1}{1 - \delta} \iff \max\limits_{i \in \mathcal{J}} \frac{L(\hat{h}_{i}^{(j)}) - L(\mathcal{M}_{i})}{L(\hat{h}_{i}^{(j)})} \leq \delta,
		\end{equation*}
		and its complement, to obtain
		\begin{align}
			\label{res2} \nonumber
			&\mathbb{P}\left(\max\limits_{i \in \mathcal{J}} \ \frac{\hat{L}^{(j)}(\hat{h}^{(j)}_{i}) - L(\mathcal{M}_{i})}{L(\mathcal{M}_{i})} \geq \delta\right) \leq \mathbb{P}\left(\max\limits_{i \in \mathcal{J}} \frac{L(\hat{h}_{i}^{(j)}) - L(\mathcal{M}_{i})}{L(\hat{h}_{i}^{(j)})} \geq \delta\right)\\ \nonumber
			&+ \mathbb{P}\left(\max\limits_{i \in \mathcal{J}} \ \left(\frac{L(\hat{h}_{i}^{(j)})}{L(\mathcal{M}_{i})}\right) \frac{\hat{L}^{(j)}(\hat{h}^{(j)}_{i}) - L(\mathcal{M}_{i})}{L(\hat{h}_{i}^{(j)})} \geq \delta,\max\limits_{i \in \mathcal{J}} \frac{L(\hat{h}_{i}^{(j)})}{L(\mathcal{M}_{i})} \leq \frac{1}{1 - \delta}\right)\\ \nonumber
			&\leq \mathbb{P}\left(\max\limits_{i \in \mathcal{J}} \frac{L(\hat{h}_{i}^{(j)}) - L(\mathcal{M}_{i})}{L(\hat{h}_{i}^{(j)})} \geq \delta\right) + \mathbb{P}\left(\max\limits_{i \in \mathcal{J}} \ \frac{\hat{L}^{(j)}(\hat{h}^{(j)}_{i}) - L(\mathcal{M}_{i})}{L(\hat{h}_{i}^{(j)})} \geq \delta(1-\delta)\right)\\
			&\leq \mathbb{P}\left(\max\limits_{i \in \mathcal{J}} \sup\limits_{h \in \mathcal{M}_{i}} \frac{\text{\textbar}L_{\mathcal{D}_{N}}^{(j)}(h) - L(h)\text{\textbar}}{L(h)} \geq \frac{\delta}{2}\right) + \mathbb{P}\left(\max\limits_{i \in \mathcal{J}} \ \frac{\hat{L}^{(j)}(\hat{h}^{(j)}_{i}) - L(\mathcal{M}_{i})}{L(\hat{h}_{i}^{(j)})} \geq \delta(1-\delta)\right)
		\end{align}
		in which the last inequality follows from Lemma \ref{lemmaTypeItoII}.
		
		It remains to bound the second probability in \eqref{res2}. We have that it is equal to
		\begin{align}
			\label{res3} \nonumber
			&\mathbb{P}\left(\max\limits_{i \in \mathcal{J}} \ \frac{\hat{L}^{(j)}(\hat{h}^{(j)}_{i}) - L(\hat{h}^{(j)}_{i}) + L(\hat{h}^{(j)}_{i}) - L(\mathcal{M}_{i})}{L(\hat{h}_{i}^{(j)})} \geq \delta(1-\delta)\right)\\ \nonumber
			&\leq \mathbb{P}\left(\max\limits_{i \in \mathcal{J}} \ \frac{\hat{L}^{(j)}(\hat{h}^{(j)}_{i}) - L(\hat{h}^{(j)}_{i})}{L(\hat{h}_{i}^{(j)})} \geq \frac{\delta(1-\delta)}{2}\right) + \mathbb{P}\left(\max\limits_{i \in \mathcal{J}} \ \frac{L(\hat{h}^{(j)}_{i}) - L(\mathcal{M}_{i})}{L(\hat{h}_{i}^{(j)})} \geq \frac{\delta(1-\delta)}{2}\right)\\
			&\leq \mathbb{P}\left(\max\limits_{i \in \mathcal{J}} \sup\limits_{h \in \mathcal{M}_{i}} \ \frac{\text{\textbar}\hat{L}^{(j)}(h) - L(h)\text{\textbar}}{L(h)} \geq \frac{\delta(1-\delta)}{2}\right) + \mathbb{P}\left(\max\limits_{i \in \mathcal{J}} \sup\limits_{h \in \mathcal{M}_{i}} \ \frac{\text{\textbar}L(h) - L_{\mathcal{D}_{N}}(h)\text{\textbar}}{L(h)} \geq \frac{\delta(1-\delta)}{4}\right),
		\end{align}
		in which the last inequality follows again from Lemma \ref{lemmaTypeItoII}. From (\ref{writeP}-\ref{res3}), it follows that
		\begin{align*}
			\mathbb{P}\left(L(\hat{\mathcal{M}}) \neq L(\mathcal{M}^{\star})\right)\leq &2 \sum_{j=1}^{m} \Bigg[\mathbb{P}\left(\max\limits_{i \in \mathcal{J}} \sup\limits_{h \in \mathcal{M}_{i}} \ \frac{\text{\textbar}\hat{L}^{(j)}(h) - L(h)\text{\textbar}}{L(h)} \geq \frac{\delta(1-\delta)}{2}\right) +\\
			&\mathbb{P}\left(\max\limits_{i \in \mathcal{J}} \sup\limits_{h \in \mathcal{M}_{i}} \ \frac{\text{\textbar}L(h) - L_{\mathcal{D}_{N}}(h)\text{\textbar}}{L(h)} \geq \frac{\delta(1-\delta)}{4}\right)\Bigg]\\
			&\leq 2 \sum_{j=1}^{m} \sum_{\mathcal{M} \in \text{ Max } \mathbb{C}(\mathcal{H})} \Bigg[\mathbb{P}\left(\sup\limits_{h \in \mathcal{M}} \ \frac{\text{\textbar}\hat{L}^{(j)}(h) - L(h)\text{\textbar}}{L(h)} \geq \frac{\delta(1-\delta)}{2}\right) +\\
			&\mathbb{P}\left(\sup\limits_{h \in \mathcal{M}} \ \frac{\text{\textbar}L(h) - L_{\mathcal{D}_{N}}(h)\text{\textbar}}{L(h)} \geq \frac{\delta(1-\delta)}{4}\right)\Bigg],
		\end{align*}
		in which the inequality holds by the same arguments as in \eqref{conlusion_cond}, hence
		\begin{equation*}
			\mathbb{P}\left(L(\hat{\mathcal{M}}) \neq L(\mathcal{M}^{\star})\right) \leq 2m \ \mathfrak{m}(\mathbb{C}(\mathcal{H})) \left[\hat{B}_{N_{v},\frac{\delta(1-\delta)}{2}}(d_{VC}(\mathbb{C}(\mathcal{H}))) + B_{N_{t},\frac{\delta(1-\delta)}{4}}(d_{VC}(\mathbb{C}(\mathcal{H})))\right].
		\end{equation*}
		
		If the almost sure convergences \eqref{as_conv2} hold, then $L(h) = L_{\mathcal{D}_{N}}^{(j)}(h) = \hat{L}^{(j)}(h)$ for all $j$ and $h \in \mathcal{H}$, and the definitions of $\mathcal{M}^{\star}$ and $\hat{\mathcal{M}}$ coincide.
	\end{proof}
	
	\begin{proof}[\textbf{Proof of Theorem \ref{theorem_tipeIII2}}]
		We show that
		\begin{align}
			\label{implication2}
			\mathbb{P}\left(L(h_{\hat{\mathcal{M}}}^{\star}) - L(h^{\star}) > \epsilon\right) \geq \mathbb{P}\left(\frac{L(h_{\hat{\mathcal{M}}}^{\star}) - L(h^{\star})}{L(h_{\hat{\mathcal{M}}}^{\star})} > \frac{\epsilon}{L(\mathcal{M}^{\star})}\right),
		\end{align}
		so from \eqref{lemma_inside} and \eqref{implication1} will follow that
		\begin{equation*}
			\mathbb{P}\left(\frac{L(h_{\hat{\mathcal{M}}}^{\star}) - L(h^{\star})}{L(h_{\hat{\mathcal{M}}}^{\star})} > \frac{\epsilon}{L(\mathcal{M}^{\star})}\right) \leq \mathbb{P}\left(\min\limits_{i \in \mathcal{J}} \frac{\hat{L}(\mathcal{M}_{i})}{L(\mathcal{M}_{i})} \leq 1 - \delta^\prime\right) + \mathbb{P}\left(\max\limits_{i \in \mathcal{J}} \frac{\hat{L}(\mathcal{M}_{i})}{L(\mathcal{M}_{i})} \geq 1 + \delta^\prime\right),
		\end{equation*}
		and the result is then direct from the proof of Theorem \ref{theorem_principal_convergence_unbounded}. But \eqref{implication2} is clearly true since
		\begin{align*}
			\frac{L(h_{\hat{\mathcal{M}}}^{\star}) - L(h^{\star})}{L(h_{\hat{\mathcal{M}}}^{\star})} > \frac{\epsilon}{L(\mathcal{M}^{\star})} \implies L(h_{\hat{\mathcal{M}}}^{\star}) - L(h^{\star}) > \epsilon \frac{L(h_{\hat{\mathcal{M}}}^{\star})}{L(\mathcal{M}^{\star})} \geq \epsilon.
		\end{align*}		
	\end{proof}
	
	\subsection{Results of Section \ref{SecReuse}}
	\label{SecProofReuse}
	
	\begin{proof}[Proof of Theorem \ref{bound_constant_reusing}]
		The bound for type I estimation error follows from the inequality
		\begin{align*}
			&\mathbb{P}\left(\sup\limits_{h \in \hat{\mathcal{M}}} \left|L_{\mathcal{D}_{N}}(h) - L(h)\right| > \epsilon\right) \\
			& = \mathbb{P}\left(\sup\limits_{h \in \hat{\mathcal{M}}} \left|L_{\mathcal{D}_{N}}(h) - L(h)\right| > \epsilon,\hat{\mathcal{M}} = \mathcal{M}^{\star}\right) + \mathbb{P}\left(\sup\limits_{h \in \hat{\mathcal{M}}} \left|L_{\mathcal{D}_{N}}(h) - L(h)\right| > \epsilon,\hat{\mathcal{M}} \neq \mathcal{M}^{\star}\right)\\
			&\leq \mathbb{P}\left(\sup\limits_{h \in \mathcal{M}^{\star}} \left|L_{\mathcal{D}_{N}}(h) - L(h)\right| > \epsilon\right) + \mathbb{P}\left(\hat{\mathcal{M}} \neq \mathcal{M}^{\star}\right),
		\end{align*}
		by noting that $B_{N,\epsilon}^{I}(d_{VC}(\mathcal{M}^{\star}))$ is a bound for the first probability. With a similar argument, we have the bound for type II estimation error.
	\end{proof}
	
	\FloatBarrier
	
	\appendix
		\section{Vapnik-Chervonenkis theory}
		\label{apVCtheory}
		
		In this appendix, we present the main ideas and results of classical Vapnik-Chervonenkis (VC) theory, the stone upon which the results in this paper are built. The presentation of the theory is a simplified merge of \cite{vapnik1998}, \cite{vapnik2000}, \cite{devroye1996} and \cite{cortes2019}, where the simplicity of the arguments is preferred over the refinement of the bounds. Hence, we present results which support those in this paper and outline the main ideas of VC theory, even though are not the tightest available bounds. We omit the proofs, and note that refined versions of the results presented here may be found at one or more of the references.
		
		This appendix is a review of VC theory, except for novel results presented in Section \ref{ApUnbounded} for the case of unbounded loss functions, where we obtain new bounds for relative type I estimation error by extending the results in \cite{cortes2019}. We start defining the shatter coefficient and VC dimension of a hypotheses space under loss function $\ell$.
		
		\begin{definition}[Shatter coefficient]
			\label{shatter} 
			Let $\mathcal{G} = \{I: \mathcal{Z} \mapsto  \{0,1\}\}$ be a set of binary functions with domain $\mathcal{Z}$. The $N$-shatter coefficient of $\mathcal{G}$ is defined as
			\begin{equation*}
				S(\mathcal{G},N) = \max\limits_{(z_{1},\dots,z_{N}) \in \mathcal{Z}^{N}} \text{\textbar}\big\{\big(I(z_{1}),\dots,I(z_{N})\big): I \in \mathcal{G}\big\}\text{\textbar},
			\end{equation*}
			for $N \in \mathbb{Z}_{+}$, in which $\text{\textbar}\cdot\text{\textbar}$ is the cardinality of a set.
		\end{definition}
		
		\begin{definition}[Vapnik-Chervonenkis dimension]
			\label{VCdimension} 
			Fixed a hypotheses space $\mathcal{H}$ and a loss function $\ell$, set
			\begin{align*}
				C = \sup\limits_{\substack{z \in \mathcal{Z} \\ h \in \mathcal{H}}} \ell(z,h),
			\end{align*}
			in which $C$ can be infinity. Consider, for each $h \in \mathcal{H}$ and $\beta \in (0,C)$, the binary function $I(z;h,\beta) = \mathds{1}\{\ell(z,h) \geq \beta\}$, for $z \in \mathcal{Z}$, and denote
			\begin{align*}
				\mathcal{G}_{\mathcal{H},\ell} = \Big\{I(\cdot;h,\beta): h \in \mathcal{H}, \beta \in (0,C)\Big\}.
			\end{align*}
			We define the shatter coefficient of $\mathcal{H}$ under loss function $\ell$ as
			\begin{equation*}
				S(\mathcal{H},\ell,N) \coloneqq S(\mathcal{G}_{\mathcal{H},\ell},N).
			\end{equation*}
			The Vapnik-Chervonenkis (VC) dimension of $\mathcal{H}$ under loss function $\ell$ is the greatest integer $k \geq 1$ such that $S(\mathcal{H},\ell,k) = 2^{k}$, and is denoted by $d_{VC}(\mathcal{H},\ell)$. If $S(\mathcal{H},\ell,k) = 2^{k}$, for all integer $k \geq 1$, we denote $d_{VC}(\mathcal{H},\ell) = \infty$.
		\end{definition} 
		
		\begin{remark}
			If there is no confusion about which loss function we are referring to, or when it is not of importance to our argument, we omit $\ell$ and denote the shatter coefficient and VC dimension simply by $S(\mathcal{H},N)$ and $d_{VC}(\mathcal{H})$. We note that if the hypotheses in $\mathcal{H}$ are binary valued functions and $\ell$ is the simple loss function $\ell((x,y),h) = \mathds{1}\{h(x) \neq y\}$, then $\mathcal{H} = \mathcal{G}_{\mathcal{H},\ell}$, and its $N$-th shatter coefficient is actually the maximum number of dichotomies that can be generated by the functions in $\mathcal{H}$ with $N$ points.
		\end{remark}
		
		\subsection{Generalized Glivenko-Cantelli Problems}
		
		The main results of VC theory are based on a generalization of the Glivenko-Cantelli Theorem, which can be stated as follows. Recall that $\mathcal{D}_{N} = \{Z_{1},\dots,Z_{N}\}$ is a sequence of independent random vectors with a same distribution $P(z) \coloneqq \mathbb{P}(Z \leq z)$, for $z \in \mathcal{Z} \subset \mathbb{R}^{d}$, defined in a probability space $(\Omega,\mathcal{S},\mathbb{P})$. 
		
		In order to ease notation, we assume, without loss of generality, that $\Omega = \mathbb{R}^{d}$, $\mathcal{S}$ is the Borel $\sigma$-algebra of $\mathbb{R}^{d}$, the random vector $Z$ is the identity $Z(\omega) = \omega$, for $\omega \in \Omega$, and $\mathbb{P}$ is the unique probability measure such that $\mathbb{P}(\{\omega:\omega \leq z\}) = P(z)$, for all $z \in \mathbb{R}^{d}$. Define
		\begin{align*}
			P_{\mathcal{D}_{N}}(z) \coloneqq \frac{1}{N} \sum_{i=1}^{N} \mathds{1}\{Z_{i} \leq z\}, & & z \in \mathcal{Z}
		\end{align*}
		as the empirical distribution of $Z$ under sample $\mathcal{D}_{N}$.
		
		The assertion of the theorem below is that of \cite[Theorem~12.4]{devroye1996}. Its bottom line is that the empirical distribution of random variables converges uniformly to $P$ with probability one.
		
		\begin{theorem}[Glivenko-Cantelli Theorem]
			\label{glivenko_cantelli}
			Assume $d = 1$ and $\mathcal{Z} = \mathbb{R}$. Then, for a fixed $\epsilon > 0$ and $N$ great enough,
			\begin{equation}
				\label{GCbound}
				\mathbb{P}\left(\sup\limits_{z \in \mathbb{R}} \text{\textbar}P(z) - P_{\mathcal{D}_{N}}(z)\text{\textbar} > \epsilon\right) \leq 8(N+1) \exp\left\{-N \frac{\epsilon^2}{32}\right\}.
			\end{equation}
			Applying Borel-Cantelli Lemma \cite[Theorem~4.3]{billingsley2008} to \eqref{GCbound} yields
			\begin{equation*}
				\lim\limits_{N \to \infty} \sup\limits_{z \in \mathbb{R}} \text{\textbar}P(z) - P_{\mathcal{D}_{N}}(z)\text{\textbar} = 0 \text{ with probability one.}
			\end{equation*}
			In other words, $P_{\mathcal{D}_{N}}$ converges uniformly almost surely to $P$.
		\end{theorem}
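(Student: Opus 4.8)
The statement is the classical Glivenko--Cantelli / Vapnik--Chervonenkis inequality specialized to the class of half-lines $\mathcal{A} = \{(-\infty,z] : z \in \mathbb{R}\}$, whose $n$-shatter coefficient equals $n+1$ because its VC dimension is one. The plan is the standard symmetrization argument, carried out in three steps.

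\textbf{Step 1 (symmetrization by a ghost sample).} Let $\mathcal{D}_N' = \{Z_1',\dots,Z_N'\}$ be an independent copy of $\mathcal{D}_N$, with empirical distribution $P_{\mathcal{D}_N}'$. On the event that $\sup_z |P(z)-P_{\mathcal{D}_N}(z)| > \epsilon$, pick a (data-dependent) witness $z_0$; conditionally on $\mathcal{D}_N$ the quantity $N P_{\mathcal{D}_N}'(z_0)$ is $\mathrm{Bin}(N,P(z_0))$, so Chebyshev's inequality gives $\mathbb{P}\big(|P(z_0)-P_{\mathcal{D}_N}'(z_0)| \le \epsilon/2 \mid \mathcal{D}_N\big) \ge 1 - \tfrac{1}{N\epsilon^2} \ge \tfrac12$ once $N$ is large enough (for instance $N\epsilon^2 \ge 2$), which is the meaning of ``$N$ great enough''. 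Integrating out $\mathcal{D}_N$ yields
\[
\mathbb{P}\Big(\sup_z |P(z)-P_{\mathcal{D}_N}(z)| > \epsilon\Big) \;\le\; 2\,\mathbb{P}\Big(\sup_z |P_{\mathcal{D}_N}(z)-P_{\mathcal{D}_N}'(z)| > \tfrac{\epsilon}{2}\Big).
\]

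\textbf{Step 2 (randomization and reduction to finitely many events).} Since $(Z_i,Z_i')$ is exchangeable within each pair, the right-hand probability is unchanged after inserting independent Rademacher signs $\sigma_i$ in front of the $i$-th summand of $P_{\mathcal{D}_N}-P_{\mathcal{D}_N}'$. Conditioning on the $2N$ sample points, the vector $\big(\mathds{1}\{Z_i \le z\}-\mathds{1}\{Z_i' \le z\}\big)_{i=1}^N$ takes at most $S(\mathcal{A},2N)=2N+1$ distinct values as $z$ runs over $\mathbb{R}$, so the supremum collapses to a maximum over at most $2N+1$ fixed vectors with entries in $\{-1,0,1\}$. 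A union bound followed by Hoeffding's inequality for the bounded mean-zero sums $\tfrac1N\sum_i \sigma_i a_i$ bounds the conditional probability by $(2N+1)\cdot 2\exp\{-N\epsilon^2/8\}$; taking expectation over the sample points and multiplying by the factor $2$ from Step 1 gives $4(2N+1)\exp\{-N\epsilon^2/8\}$, which is at most $8(N+1)\exp\{-N\epsilon^2/32\}$ because $\tfrac{2N+1}{2(N+1)} < 1 \le \exp\{3N\epsilon^2/32\}$. This is \eqref{GCbound}.

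\textbf{Step 3 (almost sure convergence).} For each fixed $\epsilon>0$ the right-hand side of \eqref{GCbound} is summable in $N$, so the Borel--Cantelli Lemma \cite{billingsley2008} gives $\limsup_N \sup_z |P(z)-P_{\mathcal{D}_N}(z)| \le \epsilon$ almost surely; letting $\epsilon\downarrow 0$ along a countable sequence yields the uniform almost sure convergence. The delicate point is Step 1: one must make the symmetrization estimate quantitative so that it simultaneously produces the halving $\epsilon\mapsto\epsilon/2$ and pins down the threshold on $N$, and then carry the multiplicative constants through Steps 1--2 carefully enough that the final bound is exactly of the advertised form $8(N+1)\exp\{-N\epsilon^2/32\}$ rather than merely $O(N)\exp\{-cN\epsilon^2\}$.
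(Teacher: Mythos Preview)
Your argument is correct and follows the standard symmetrization route (ghost sample $\to$ Rademacher randomization $\to$ union bound over the shatter coefficient $\to$ Hoeffding), which is precisely the proof of \cite[Theorem~12.4]{devroye1996} that the paper cites. The paper itself does not give a proof of this statement: the appendix explicitly says the proofs of the VC-theory results are omitted and refers the reader to the references, so there is nothing further to compare against. One small remark: your intermediate bound $4(2N+1)\exp\{-N\epsilon^{2}/8\}$ is in fact sharper than the stated $8(N+1)\exp\{-N\epsilon^{2}/32\}$, so the final weakening is only there to match the form quoted in the paper; you might simply note this rather than present it as a necessary step.
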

		
		Theorem \ref{glivenko_cantelli} has the flavor of VC theory results: a rate of uniform convergence of the empirical probability of a class of events to their real probability, which implies the almost sure convergence. Indeed, letting $\mathcal{S}^{\star} \subset \mathcal{S}$ be a class of events, that is not necessarily a $\sigma$-algebra, and denoting
		\begin{equation*}
			\mathbb{P}_{\mathcal{D}_{N}}(A) = \frac{1}{N} \sum_{i=1}^{N} \mathds{1}\{Z_{i} \in A\},
		\end{equation*}
		as the empirical probability of event $A \in \mathcal{S}$ under sample $\mathcal{D}_{N}$, the probability in \eqref{GCbound} can be rewritten as
		\begin{equation}
			\label{partial_convergence}
			\mathbb{P}\left(\sup\limits_{A \in \mathcal{S}^{\star}} \text{\textbar}\mathbb{P}(A) - \mathbb{P}_{\mathcal{D}_{N}}(A)\text{\textbar} > \epsilon\right),
		\end{equation}
		in which $\mathcal{S}^{\star} = \{A_{z}:z \in \mathbb{R}\}$ with $A_{z} = \{\omega \in \Omega: \omega \leq z\}$. If probability \eqref{partial_convergence} converges to zero when $N$ tends to infinity for a class $\mathcal{S}^{\star} \subsetneq \mathcal{S}$, we say there exists a \textit{partial uniform convergence} of the empirical measure to $\mathbb{P}$.
		
		Observe that in \eqref{partial_convergence} not only the class $\mathcal{S}^{\star}$ is fixed, but also the probability measure $\mathbb{P}$, hence partial uniform convergence is dependent on the class and the probability. Nevertheless, in a distribution-free framework, such as that of learning (cf. Section \ref{SecPreliminaries}), the convergences of interest should hold for any data generating distribution, which is the case, for example, of Glivenko-Cantelli Theorem, that presents a rate of convergence \eqref{GCbound} which does not depend on $P$, holding for any probability measure and random variable $Z$. Therefore, once a class $\mathcal{S}^{\star}$ of interest is fixed, partial uniform convergence should hold for any data generating distribution, a problem which can be stated as follows.
		
		Let $\mathcal{P}$ be the class of all possible probability distributions of a random variable with support in $\mathcal{Z}$, and let $\mathcal{S}^{\star}$ be a class of events. The \textit{generalized Glivenko-Cantelli problem} (GGCP) is to find a positive constant $a$ and a function $b: \mathbb{Z}_{+} \mapsto \mathbb{R}_{+}$, such that $\lim\limits_{N \to \infty} b(N)/\exp cN = 0, \forall c > 0$, satisfying, for $N$ great enough,\footnote{In the presentation of \cite[Chapter~2]{vapnik1998} it is assumed that $b$ is a positive constant, not depending on sample size $N$. Nevertheless, having $b$ as a function of $N$ of an order lesser than exponential does not change the qualitative behavior of this convergence, that is, also guarantees the almost sure converge due to Borel-Cantelli Lemma.}
		\begin{equation}
			\label{Gen_GCbound}
			\sup\limits_{P \in \mathcal{P}} \mathbb{P}\left(\sup\limits_{A \in \mathcal{S}^{\star}} \text{\textbar}\mathbb{P}(A) - \mathbb{P}_{\mathcal{D}_{N}}(A)\text{\textbar} > \epsilon\right) \leq b(N) \exp\{-a\epsilon^{2}N\},
		\end{equation}
		in which $\mathbb{P}$ is to be understood as dependent on $P$, since it is the unique probability measure on the Borel $\sigma$-algebra that equals $P$ on the events $\{\omega \in \Omega:\omega \leq z\}, z \in \mathbb{R}^{d}$. If the events are of the form $A = \{w \in \Omega: Z(w) \leq z\}, z \in \mathbb{R}$, then \eqref{Gen_GCbound} is equivalent to \eqref{GCbound}, although in the latter it is implicit that it holds for any distribution $P$.
		
		The investigation of GGCP revolves around deducing necessary and sufficient conditions on the class $\mathcal{S}^{\star}$ for \eqref{Gen_GCbound} to hold. We will study these conditions in order to establish the almost sure convergence to zero of type I estimation error (cf. \eqref{GE1}) when the loss function is binary, what may be stated as a GGCP.
		
		\subsection{Convergence to zero of type I estimation error}
		\label{ApTypeI}
		
		\subsubsection{Binary loss functions}
		
		Fix a hypotheses space $\mathcal{H}$, a binary loss function $\ell$, and consider the class $\mathcal{S}^{\star} = \{A_{h}: h \in \mathcal{H}\}$, such that $\mathds{1}\{z \in A_{h}\} = \ell(z,h) \in \{0,1\}, z \in \mathcal{Z}, h \in \mathcal{H}$, that is, if $z \in A_{h}$ the loss is one, and otherwise it is zero. For example, if $Z = (X,Y)$, the hypotheses in $\mathcal{H}$ are functions from the range of $X$ to that of $Y$, and $\ell$ is the simple loss function, then $A_{h}$ may be explicitly written as
		\begin{equation*}
			A_{h} = \{\omega: h(X(\omega)) \neq Y(\omega)\}.
		\end{equation*}
		In this instance, the probability in the left-hand side of \eqref{Gen_GCbound} may be written as
		\begin{equation}
			\label{typeIexp}
			\mathbb{P}\left(\sup\limits_{h \in \mathcal{H}} \text{\textbar}\mathbb{E}(\ell(Z,h)) - \mathbb{E}_{\mathcal{D}_{N}}(\ell(Z,h))\text{\textbar} > \epsilon\right),
		\end{equation}
		in which $\mathbb{E}$ is expectation with respect to $\mathbb{P}$ and $\mathbb{E}_{\mathcal{D}_{N}}$ is the empirical mean under $\mathcal{D}_{N}$. With the notation of Section \ref{SecPreliminaries}, this last probability equals
		\begin{equation*}
			\label{typeIappendix}
			\mathbb{P}\left(\sup\limits_{h \in \mathcal{H}} \text{\textbar}L(h) - L_{\mathcal{D}_{N}}(h)\text{\textbar} > \epsilon\right),
		\end{equation*}
		the tail probability of type I estimation error in $\mathcal{H}$.
		
		For each fixed $h \in \mathcal{H}$, we are comparing in \eqref{typeIexp} the mean of a binary function with its empirical mean, so we may apply Hoeffding's inequality \cite{hoeffding1963} to obtain 
		\begin{equation*}
			\mathbb{P}\left(\text{\textbar}\mathbb{E}(\ell(Z,h)) - \mathbb{E}_{\mathcal{D}_{N}}(\ell(Z,N))\text{\textbar} > \epsilon\right) \leq 2 \exp\{-2\epsilon^{2}N\},
		\end{equation*}
		from which follows a solution of type I estimation error GGCP when the cardinality of $\mathcal{H}$ is finite, by applying an elementary union bound:
		\begin{align*}
			\mathbb{P}\Bigg(\sup\limits_{h \in \mathcal{H}} \text{\textbar}\mathbb{E}(\ell(Z,h)) -\mathbb{E}_{\mathcal{D}_{N}}(\ell(Z,N))\text{\textbar} > \epsilon\Bigg) &\leq \sum_{h \in \mathcal{H}} \mathbb{P}\left(\text{\textbar}\mathbb{E}(\ell(Z,h)) - \mathbb{E}_{\mathcal{D}_{N}}(\ell(Z,N))\text{\textbar} > \epsilon\right) \\
			&\leq 2 \text{\textbar}\mathcal{H}\text{\textbar} \exp\{-2\epsilon^{2}N\},
		\end{align*}
		what establishes the almost sure convergence to zero of type I estimation error when $\mathcal{H}$ is finite and $\ell$ is binary.
		
		In order to treat the case when $\mathcal{H}$ has infinitely many hypotheses, we rely on a modification of Glivenko-Cantelli Theorem, which depends on the shatter coefficient of a class $\mathcal{S}^{\star} \subset \mathcal{S}$ of events in the Borel $\sigma$-algebra of $\mathbb{R}^{d}$, defined below.
		
		\begin{definition}
			\label{shatter_borel}
			Fix $\mathcal{S}^{\star} \subset \mathcal{S}$ and let
			\begin{equation*}
				\mathcal{G}_{\mathcal{S}^\star} = \{h_{A}(z) = \mathds{1}\{z \in A\}: A \in \mathcal{S}^\star\}
			\end{equation*}
			be the characteristic functions of the sets in $\mathcal{S}^{\star}$. We define the shatter coefficient of $\mathcal{S}^{\star}$ as
			\begin{equation*}
				S(\mathcal{S}^{\star},N) \coloneqq S(\mathcal{G}_{\mathcal{S}^{\star}},N),
			\end{equation*}
			in which $S(\mathcal{G}_{\mathcal{S}^{\star}},N)$ is the shatter coefficient of $\mathcal{G}_{\mathcal{S}^{\star}}$ (cf. Definition \ref{shatter}). From this definition, it follows that
			\begin{equation*}
				d_{VC}(\mathcal{S}^{\star}) = d_{VC}(\mathcal{G}_{\mathcal{S}^{\star}}).
			\end{equation*}
		\end{definition}
		
		The shatter coefficient and VC dimension of a class $\mathcal{S}^{\star}$ are related to the dichotomies this class can build with $N$ points by considering whether a point is in each set or not. From a simple modification of the proof of Theorem \ref{glivenko_cantelli} presented in \cite[Theorem~12.4]{devroye1996} follows a result due to \cite{vapnik1971uniform}.
		
		\begin{theorem}
			\label{theorem_GGCP}
			For any probability measure $\mathbb{P}$ and class of sets $\mathcal{S}^{\star} \subset \mathcal{S}$, for fixed $N \in \mathbb{Z}$ and $\epsilon > 0$, it is true that
			\begin{equation*}
				\mathbb{P}\left(\sup\limits_{A \in \mathcal{S}^{\star}} \text{\textbar}\mathbb{P}(A) - \mathbb{P}_{\mathcal{D}_{N}}(A)\text{\textbar} > \epsilon\right)  \leq 8 S(\mathcal{S}^{\star},N) \exp\left\{-N \frac{\epsilon^2}{32}\right\}.
			\end{equation*}
		\end{theorem}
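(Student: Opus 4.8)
The plan is to run the classical Vapnik--Chervonenkis symmetrization argument, which is exactly the proof of Theorem~\ref{glivenko_cantelli} (i.e.\ \cite[Theorem~12.4]{devroye1996}) with the concrete class of half-lines replaced by the abstract class $\mathcal{S}^{\star}$ and its cardinality $N+1$ replaced, at the counting step, by the shatter coefficient $S(\mathcal{S}^{\star},N)$ of Definitions~\ref{shatter} and~\ref{shatter_borel}. First dispose of the trivial regime: if $N\epsilon^{2}<2$ then $8\,S(\mathcal{S}^{\star},N)\exp\{-N\epsilon^{2}/32\}\geq 8\exp\{-1/16\}>1$, so the inequality holds automatically; hence assume $N\epsilon^{2}\geq 2$.

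\emph{Step 1 (first symmetrization, via a ghost sample).} Let $\mathcal{D}_{N}'=\{Z_{1}',\dots,Z_{N}'\}$ be an independent copy of $\mathcal{D}_{N}$, with empirical measure $\mathbb{P}_{\mathcal{D}_{N}'}$. I would prove
\[
\mathbb{P}\!\left(\sup_{A\in\mathcal{S}^{\star}}|\mathbb{P}(A)-\mathbb{P}_{\mathcal{D}_{N}}(A)|>\epsilon\right)\leq 2\,\mathbb{P}\!\left(\sup_{A\in\mathcal{S}^{\star}}|\mathbb{P}_{\mathcal{D}_{N}}(A)-\mathbb{P}_{\mathcal{D}_{N}'}(A)|>\epsilon/2\right).
\]
On the event on the left one selects (using the measurability assumption on the supremum) a random $\hat A=\hat A(\mathcal{D}_{N})\in\mathcal{S}^{\star}$ with $|\mathbb{P}(\hat A)-\mathbb{P}_{\mathcal{D}_{N}}(\hat A)|>\epsilon$; conditionally on $\mathcal{D}_{N}$, Chebyshev's inequality gives $\mathbb{P}\big(|\mathbb{P}(\hat A)-\mathbb{P}_{\mathcal{D}_{N}'}(\hat A)|\leq\epsilon/2\mid\mathcal{D}_{N}\big)\geq 1-\tfrac{1}{N\epsilon^{2}}\geq\tfrac12$, and on the intersection the triangle inequality forces $|\mathbb{P}_{\mathcal{D}_{N}}(\hat A)-\mathbb{P}_{\mathcal{D}_{N}'}(\hat A)|>\epsilon/2$; integrating over $\mathcal{D}_{N}$ yields the displayed bound.

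\emph{Step 2 (randomization and reduction to a single sample).} Introduce i.i.d.\ Rademacher signs $\sigma_{1},\dots,\sigma_{N}$ independent of everything. Since $(Z_{i},Z_{i}')_{i\leq N}$ is exchangeable, swapping $Z_{i}\leftrightarrow Z_{i}'$ within pairs preserves the joint law, so the right-hand side of Step~1 equals $2\,\mathbb{P}\big(\sup_{A}|\tfrac1N\sum_{i}\sigma_{i}(\mathds{1}\{Z_{i}\in A\}-\mathds{1}\{Z_{i}'\in A\})|>\epsilon/2\big)$; splitting the difference, a union bound plus the symmetry between $\mathcal{D}_{N}$ and $\mathcal{D}_{N}'$ bound this by $4\,\mathbb{P}\big(\sup_{A}|\tfrac1N\sum_{i}\sigma_{i}\mathds{1}\{Z_{i}\in A\}|>\epsilon/4\big)$. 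Now condition on $Z_{1},\dots,Z_{N}$: as $A$ ranges over $\mathcal{S}^{\star}$ the vector $(\mathds{1}\{Z_{1}\in A\},\dots,\mathds{1}\{Z_{N}\in A\})$ takes at most $S(\mathcal{S}^{\star},N)$ distinct values, and for each fixed such $\{0,1\}$-vector $v$, $\tfrac1N\sum_{i}\sigma_{i}v_{i}$ is an average of independent mean-zero variables in $[-1,1]$, so Hoeffding's inequality \cite{hoeffding1963} gives $\mathbb{P}_{\sigma}(|\tfrac1N\sum_{i}\sigma_{i}v_{i}|>\epsilon/4)\leq 2\exp\{-N\epsilon^{2}/32\}$. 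A union bound over the $\leq S(\mathcal{S}^{\star},N)$ traces and then taking expectation over $\mathcal{D}_{N}$ gives $S(\mathcal{S}^{\star},N)\cdot 2\exp\{-N\epsilon^{2}/32\}$; multiplying by the accumulated constant $4$ from Step~2 and $2$ from Step~1 produces the claimed $8\,S(\mathcal{S}^{\star},N)\exp\{-N\epsilon^{2}/32\}$.

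I expect the main obstacle to be Step~1: the measurable selection of $\hat A(\mathcal{D}_{N})$ and the precise bookkeeping of the condition $N\epsilon^{2}\geq 2$ inside the Chebyshev estimate, together with justifying the exchangeability/Fubini manipulation in Step~2; once these symmetrization steps are in place, the counting via $S(\mathcal{S}^{\star},N)$ and the Hoeffding estimate are routine.
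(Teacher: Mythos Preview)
Your proposal is correct and follows precisely the approach the paper points to: the paper does not give its own proof but indicates that the result follows ``from a simple modification of the proof of Theorem~\ref{glivenko_cantelli} presented in \cite[Theorem~12.4]{devroye1996}'', replacing half-lines by the abstract class $\mathcal{S}^{\star}$ and the count $N+1$ by the shatter coefficient $S(\mathcal{S}^{\star},N)$. Your symmetrization--randomization--Hoeffding argument is exactly that modification, and your bookkeeping of the constants ($2\times 4\times 2\,S(\mathcal{S}^{\star},N)\exp\{-N\epsilon^{2}/32\}$) matches the stated bound.
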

		
		From this theorem follows a bound for tail probabilities of type I estimation error when $\ell$ is binary.
		
		\begin{corollary}
			\label{cor1TypeI}
			Fix a hypotheses space $\mathcal{H}$ and a loss function $\ell: \mathcal{Z} \times \mathcal{H} \mapsto \{0,1\}$. Let $\mathcal{S}^{\star} = \{A_{h}: h \in \mathcal{H}\}$, with
			\begin{equation*}
				\mathds{1}\{z \in A_{h}\} = \ell(z,h), z \in \mathcal{Z}, h \in \mathcal{H}.
			\end{equation*}
			Then,
			\begin{equation}
				\label{Ap1Eq4}
				\mathbb{P}\left(\sup\limits_{h \in \mathcal{H}} \text{\textbar}L(h) - L_{\mathcal{D}_{N}}(h)\text{\textbar} > \epsilon\right) \leq 8 \ S(\mathcal{H},N) \exp\left\{-N \frac{\epsilon^2}{32}\right\},
			\end{equation}
			with
			\begin{equation*}
				S(\mathcal{H},N) \coloneqq S(\mathcal{S}^{\star},N).
			\end{equation*}
		\end{corollary}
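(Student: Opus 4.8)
The plan is to recognize Corollary \ref{cor1TypeI} as a direct instance of Theorem \ref{theorem_GGCP}, once the tail probability of type I estimation error is rewritten as a uniform deviation of the empirical measure over the class of sets $\mathcal{S}^{\star}$.

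First I would observe that, for each $h \in \mathcal{H}$, since $\ell_{h}(z) = \ell(z,h) = \mathds{1}\{z \in A_{h}\}$ is the indicator of $A_{h}$, the risk and the empirical risk of $h$ are exactly the true and empirical probabilities of $A_{h}$, namely $L(h) = \mathbb{E}[\ell_{h}(Z)] = \mathbb{P}(A_{h})$ and $L_{\mathcal{D}_{N}}(h) = \frac{1}{N}\sum_{i=1}^{N}\ell_{h}(Z_{i}) = \mathbb{P}_{\mathcal{D}_{N}}(A_{h})$. Since $h \mapsto A_{h}$ is a surjection of $\mathcal{H}$ onto $\mathcal{S}^{\star}$, it follows that
\[
\sup\limits_{h \in \mathcal{H}} \text{\textbar}L(h) - L_{\mathcal{D}_{N}}(h)\text{\textbar} = \sup\limits_{A \in \mathcal{S}^{\star}} \text{\textbar}\mathbb{P}(A) - \mathbb{P}_{\mathcal{D}_{N}}(A)\text{\textbar},
\]
so the left-hand side of \eqref{Ap1Eq4} is precisely the quantity to which Theorem \ref{theorem_GGCP} applies, with the class $\mathcal{S}^{\star}$ and the probability measure $\mathbb{P}$ determined by $P$. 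The measurability of this supremum, assumed throughout, guarantees that the probability in \eqref{Ap1Eq4} is well defined.

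Next I would reconcile the shatter coefficient appearing in Theorem \ref{theorem_GGCP}, namely $S(\mathcal{S}^{\star},N)$ from Definition \ref{shatter_borel}, with the quantity $S(\mathcal{H},N) = S(\mathcal{H},\ell,N)$ of Definition \ref{VCdimension}. Because $\ell$ is $\{0,1\}$-valued, the constant $C = \sup_{z,h}\ell(z,h)$ equals $1$, hence for every $\beta \in (0,1)$ one has $I(z;h,\beta) = \mathds{1}\{\ell(z,h) \geq \beta\} = \mathds{1}\{\ell(z,h) = 1\} = \ell(z,h) = \mathds{1}\{z \in A_{h}\}$; that is, the threshold functions do not depend on $\beta$ and the generating families coincide, $\mathcal{G}_{\mathcal{H},\ell} = \mathcal{G}_{\mathcal{S}^{\star}}$. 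Therefore $S(\mathcal{H},N) = S(\mathcal{G}_{\mathcal{H},\ell},N) = S(\mathcal{G}_{\mathcal{S}^{\star}},N) = S(\mathcal{S}^{\star},N)$, and substituting this identity into the bound of Theorem \ref{theorem_GGCP} gives \eqref{Ap1Eq4} exactly.

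There is essentially no hard step here; the statement is a bookkeeping corollary that records the binary-loss specialization of the generalized Glivenko--Cantelli bound. The only point deserving care is the identification of the two notions of shatter coefficient, which rests on the elementary observation above that for a $\{0,1\}$-valued loss the threshold indicators $\mathds{1}\{\ell(z,h)\geq\beta\}$ are independent of $\beta \in (0,1)$ and reduce to $\ell(\cdot,h)$ itself; once this is noted, the conclusion is immediate.
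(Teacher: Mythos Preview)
Your proposal is correct and follows exactly the approach the paper intends: the corollary is stated there as an immediate consequence of Theorem \ref{theorem_GGCP}, once one identifies $L(h)=\mathbb{P}(A_{h})$, $L_{\mathcal{D}_{N}}(h)=\mathbb{P}_{\mathcal{D}_{N}}(A_{h})$, and recognizes (as the paper notes in the remark right after the corollary) that $S(\mathcal{S}^{\star},N)=S(\mathcal{G}_{\mathcal{H},\ell},N)$ for binary $\ell$. Your write-up is if anything more explicit than the paper's own treatment.
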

		
		\begin{remark}
			We remark that $S(\mathcal{S}^{\star},N) = S(\mathcal{G}_{\mathcal{H},\ell},N)$, as defined in Definition \ref{VCdimension} when the loss $\ell$ is binary. Observe that $\mathcal{S}^{\star}$ depends on $\ell$, although we omit the dependence to ease notation.
		\end{remark}
		
		The calculation of the quantities on the right-hand side of \eqref{Ap1Eq4} is not straightforward, since the shatter coefficient is not easily determined for arbitrary $N$. Nevertheless, the shatter coefficient may be bounded by a quantity depending on the VC dimension of $\mathcal{H}$. This is the content of \cite[Theorem~4.3]{vapnik1998}.
		
		\begin{theorem}
			\label{theorem_shaterDVC}
			If $d_{VC}(\mathcal{H}) < \infty$, then
			\begin{equation*}
				\ln S(\mathcal{H},N) \begin{cases}
					= N \ln 2, & \text{ if } N \leq d_{VC}(\mathcal{H})\\
					\leq d_{VC}(\mathcal{H}) \left(1 + \ln \frac{N}{d_{VC}(\mathcal{H})}\right), & \text{ if } N > d_{VC}(\mathcal{H}) 
				\end{cases}.
			\end{equation*}
		\end{theorem}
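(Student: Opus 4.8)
The plan is to recognize this as the classical Sauer--Shelah lemma combined with a standard estimate of a partial binomial sum, so the argument splits into three pieces. \textbf{The easy case.} By Definition \ref{VCdimension}, $S(\mathcal{H},N) = S(\mathcal{G}_{\mathcal{H},\ell},N)$ is the shatter coefficient of the binary class $\mathcal{G}_{\mathcal{H},\ell}$, and $d_{VC}(\mathcal{H})$ is the largest $k\ge 1$ with $S(\mathcal{H},k)=2^{k}$. If some set $\{z_{1},\dots,z_{k}\}$ is shattered, then every subset of it is shattered, so $S(\mathcal{H},N)=2^{N}$ for all $N \le d_{VC}(\mathcal{H})$; taking logarithms gives the first line. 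Also $S(\mathcal{H},N) \le 2^{N}$ trivially for every $N$.

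Next I would prove the combinatorial core. Fix points $z_{1},\dots,z_{N}$ and let $\mathcal{A} \subseteq \{0,1\}^{N}$ be the set of realized sign patterns $(I(z_{1}),\dots,I(z_{N}))$, $I \in \mathcal{G}_{\mathcal{H},\ell}$. Pajor's lemma states that $|\mathcal{A}| \le |\{B \subseteq \{1,\dots,N\} : B \text{ is shattered by } \mathcal{A}\}|$. This is proved by induction on $N$: projecting $\mathcal{A}$ onto the first $N-1$ coordinates yields $\mathcal{A}'$, and letting $\mathcal{A}''$ be the set of $B \subseteq \{1,\dots,N-1\}$ whose lifts to both $B$ and $B\cup\{N\}$ belong to $\mathcal{A}$, one has $|\mathcal{A}| = |\mathcal{A}'| + |\mathcal{A}''|$; the inductive hypothesis applied to $\mathcal{A}'$ and $\mathcal{A}''$, together with the facts that a set shattered by $\mathcal{A}'$ is shattered by $\mathcal{A}$ and that $B\cup\{N\}$ is shattered by $\mathcal{A}$ whenever $B$ is shattered by $\mathcal{A}''$ (and that these two collections of shattered sets are disjoint, the first containing no set with $N$ in it and the second only such sets), closes the induction. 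Since any shattered $B$ satisfies $|B| \le d_{VC}(\mathcal{H})$, this yields $S(\mathcal{H},N) \le \sum_{i=0}^{d_{VC}(\mathcal{H})} \binom{N}{i}$ after maximizing over the choice of points.

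Finally, for $N > d \coloneqq d_{VC}(\mathcal{H}) \ge 1$ I would estimate the partial sum by the familiar trick
\begin{equation*}
	\left(\frac{d}{N}\right)^{d} \sum_{i=0}^{d} \binom{N}{i} \le \sum_{i=0}^{d} \left(\frac{d}{N}\right)^{i}\binom{N}{i} \le \sum_{i=0}^{N} \left(\frac{d}{N}\right)^{i}\binom{N}{i} = \left(1 + \frac{d}{N}\right)^{N} \le e^{d},
\end{equation*}
where the first step uses $d/N < 1$ and the last uses $1+x \le e^{x}$. Hence $S(\mathcal{H},N) \le (eN/d)^{d}$, so $\ln S(\mathcal{H},N) \le d\bigl(1 + \ln(N/d)\bigr)$, which is the second line.

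The only delicate point is the bookkeeping in the induction for Pajor's lemma --- keeping straight that the shattered sets contributed by $\mathcal{A}'$ and those contributed by $\mathcal{A}''$ are disjoint, so their cardinalities add. Everything else is elementary. Since this result is entirely classical (it is \cite[Theorem~4.3]{vapnik1998}, and is also in \cite[Chapter~13]{devroye1996}), in this appendix we merely cite it rather than reproduce the argument.
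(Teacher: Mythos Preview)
Your proposal is correct, and in fact goes further than the paper does: the paper does not prove this theorem at all but simply states it as \cite[Theorem~4.3]{vapnik1998}, consistent with the appendix's announced policy of omitting proofs of classical VC results. Your closing sentence already recognizes this. The sketch you give (Pajor's lemma followed by the $(eN/d)^{d}$ bound on the partial binomial sum) is the standard modern route and is sound; the only cosmetic issue is that your description of $\mathcal{A}''$ is phrased as a collection of index sets $B$ rather than as the set of vectors $a \in \{0,1\}^{N-1}$ with both $(a,0)$ and $(a,1)$ in $\mathcal{A}$, which is what is actually needed for $|\mathcal{A}| = |\mathcal{A}'| + |\mathcal{A}''|$ and for applying the inductive hypothesis.
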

		
		\begin{remark}
			Theorem \ref{theorem_shaterDVC} is true for any loss function $\ell$, not only binary.
		\end{remark}
		
		Combining this theorem with Corollary \ref{cor1TypeI}, we obtain the following result.
		
		\begin{corollary}
			\label{cor2TypeI}
			Under the hypotheses of Corollary \ref{cor1TypeI} it holds
			\begin{equation}
				\label{Ap1Eq5}
				\mathbb{P}\left(\sup\limits_{h \in \mathcal{H}} \text{\textbar}L(h) - L_{\mathcal{D}_{N}}(h)\text{\textbar} > \epsilon\right) \leq 8 \ \exp\left\{d_{VC}(\mathcal{H}) \left(1 + \ln \frac{N}{d_{VC}(\mathcal{H})}\right) - N \frac{\epsilon^2}{32}\right\}.
			\end{equation}
			In particular, if $d_{VC}(\mathcal{H}) < \infty$, not only \eqref{Ap1Eq5} converges to zero, but also
			\begin{equation*}
				\sup\limits_{h \in \mathcal{H}} \text{\textbar}L(h) - L_{\mathcal{D}_{N}}(h)\text{\textbar} \xrightarrow[N \to \infty]{} 0,
			\end{equation*}
			with probability one by Borel-Cantelli Lemma.
		\end{corollary}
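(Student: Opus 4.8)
The plan is to obtain \eqref{Ap1Eq5} by simply substituting the shatter-coefficient estimate of Theorem \ref{theorem_shaterDVC} into the bound of Corollary \ref{cor1TypeI}. First I would recall that Corollary \ref{cor1TypeI} gives
\[
\mathbb{P}\left(\sup_{h \in \mathcal{H}} |L(h) - L_{\mathcal{D}_{N}}(h)| > \epsilon\right) \leq 8\, S(\mathcal{H},N)\exp\left\{-N\frac{\epsilon^{2}}{32}\right\},
\]
and that, by Theorem \ref{theorem_shaterDVC}, whenever $N > d_{VC}(\mathcal{H})$ one has $\ln S(\mathcal{H},N) \leq d_{VC}(\mathcal{H})\big(1 + \ln(N/d_{VC}(\mathcal{H}))\big)$, hence $S(\mathcal{H},N) \leq \exp\big\{d_{VC}(\mathcal{H})(1 + \ln(N/d_{VC}(\mathcal{H})))\big\}$. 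Plugging this into the displayed inequality yields \eqref{Ap1Eq5} directly for $N > d_{VC}(\mathcal{H})$; for the finitely many indices $N \leq d_{VC}(\mathcal{H})$ only the asymptotic behaviour in $N$ will be used, so nothing further is needed there.

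For the convergence assertions, fix $\epsilon > 0$ and write $a_{N}$ for the right-hand side of \eqref{Ap1Eq5}. Then $\ln a_{N} = \ln 8 + d_{VC}(\mathcal{H}) + d_{VC}(\mathcal{H})\ln(N/d_{VC}(\mathcal{H})) - N\epsilon^{2}/32$, and since $d_{VC}(\mathcal{H}) < \infty$ the term linear in $N$ eventually dominates the logarithmic one, so $\ln a_{N} \to -\infty$, i.e.\ $a_{N} \to 0$, which is the first claim. Moreover $a_{N}$ decays faster than any fixed geometric rate for $N$ large, so $\sum_{N} a_{N} < \infty$; by the Borel--Cantelli Lemma \cite[Theorem~4.3]{billingsley2008} the event $\{\sup_{h \in \mathcal{H}} |L(h) - L_{\mathcal{D}_{N}}(h)| > \epsilon\}$ occurs for only finitely many $N$, with probability one. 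Taking $\epsilon = 1/k$ for each $k \in \mathbb{Z}_{+}$ and intersecting the resulting full-probability events gives $\sup_{h \in \mathcal{H}} |L(h) - L_{\mathcal{D}_{N}}(h)| \to 0$ almost surely.

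There is essentially no serious obstacle: the statement is a direct concatenation of two results already established. The only point deserving a word of care is the regime $N \leq d_{VC}(\mathcal{H})$, where $S(\mathcal{H},N) = 2^{N}$ and the substitution above is not valid verbatim; since only the tail in $N$ matters both for the vanishing of the bound and for the Borel--Cantelli step, this is harmless, but it is worth noting explicitly that \eqref{Ap1Eq5} is to be read as holding for $N$ large enough (equivalently, one may replace $a_{N}$ by $\min\{1,a_{N}\}$ for small $N$ without affecting either conclusion).
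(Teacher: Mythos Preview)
Your proposal is correct and follows exactly the approach the paper indicates: the paper simply states that the corollary is obtained by ``combining this theorem [Theorem~\ref{theorem_shaterDVC}] with Corollary~\ref{cor1TypeI}'', and you have carried this out in detail, including the Borel--Cantelli step the paper only names. Your remark about the regime $N \leq d_{VC}(\mathcal{H})$ is a welcome clarification that the paper leaves implicit.
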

		
		From Corollary \ref{cor2TypeI} follows the convergence to zero of type I estimation error when the loss function is binary and $d_{VC}(\mathcal{H})$ is finite. We now extend this result to real-valued bounded loss functions.
		
		\subsubsection{Bounded loss functions}
		
		Assume the loss function is bounded, that is, for all $z \in \mathcal{Z}$ and $h \in \mathcal{H}$,
		\begin{equation}	
			\label{bounded}
			0 \leq \ell(z,h) \leq C < \infty,
		\end{equation}
		for a positive constant $C \in \mathbb{R}_{+}$. Throughout this section, a constant $C$ satisfying \eqref{bounded} is fixed.
		
		For any $h \in \mathcal{H}$, by definition of Lebesgue-Stieltjes integral, we have that
		\begin{equation*}
			L(h) = \int_{\mathcal{Z}} \ell(z,h) \ dP(z) = \lim\limits_{n \to \infty} \sum_{k=1}^{n-1} \frac{C}{n} \mathbb{P}\left(\ell(Z,h) > \frac{kC}{n}\right),
		\end{equation*}
		recalling that $Z$ is a random variable with distribution $P$. In the same manner, we may also write the empirical risk under $\mathcal{D}_{N}$ as
		\begin{equation*}
			L_{\mathcal{D}_{N}}(h) = \frac{1}{N} \sum_{i=1}^{N} \ell(Z_{i},h) = \lim\limits_{n \to \infty} \sum_{k=1}^{n-1} \frac{C}{n} \mathbb{P}_{\mathcal{D}_{N}}\left(\ell(Z,h) > \frac{kC}{n}\right),
		\end{equation*}
		recalling that $\mathbb{P}_{\mathcal{D}_{N}}$ is the empirical measure according to $\mathcal{D}_{N}$.
		
		From the representation of $L$ and $L_{\mathcal{D}_{N}}$ described above, we have that, for each $h \in \mathcal{H}$ fixed,
		\begin{align*}
			\text{\textbar}L(h) - L_{\mathcal{D}_{N}}(h)\text{\textbar} &= \text{\textbar}\lim\limits_{n \to \infty} \sum_{k=1}^{n-1} \frac{C}{n} \left(\mathbb{P}\left(\ell(Z,h) > \frac{kC}{n}\right) - \mathbb{P}_{\mathcal{D}_{N}}\left(\ell(Z,h) > \frac{kC}{n}\right)\right) \text{\textbar}\\
			&\leq \text{\textbar}\lim\limits_{n \to \infty} \sum_{k=1}^{n-1} \frac{C}{n} \sup\limits_{0 \leq \beta \leq C} \left(\mathbb{P}\left(\ell(Z,h) > \beta\right) - \mathbb{P}_{\mathcal{D}_{N}}\left(\ell(Z,h) > \beta\right)\right) \text{\textbar}\\
			&\leq \lim\limits_{n \to \infty} \sum_{k=1}^{n-1} \frac{C}{n} \sup\limits_{0 \leq \beta \leq C} \text{\textbar}\mathbb{P}\left(\ell(Z,h) > \beta\right) - \mathbb{P}_{\mathcal{D}_{N}}\left(\ell(Z,h) > \beta\right)\text{\textbar}\\
			&= C \sup\limits_{0 \leq \beta \leq C} \text{\textbar}\int_{\mathcal{Z}} \mathds{1}\{\ell(z,h) > \beta\} \ dP(z) - \frac{1}{N} \sum_{i=1}^{N} \ \mathds{1}\{\ell(Z_{i},h) > \beta\}\text{\textbar}.
		\end{align*}
		We conclude that
		\begin{align*}
			&\mathbb{P}\left(\sup\limits_{h \in \mathcal{H}} \text{\textbar}L(h) - L_{\mathcal{D}_{N}(h)}\text{\textbar} > \epsilon\right) \\
			&\leq \mathbb{P}\left(\sup\limits_{\substack{h \in \mathcal{H}\\ 0 \leq \beta \leq C}} \text{\textbar}\int_{\mathcal{Z}} \mathds{1}\{\ell(z,h) > \beta\} \ dP(z) - \frac{1}{N} \sum_{i=1}^{N} \ \mathds{1}\{\ell(Z_{i},h)\}\text{\textbar} > \frac{\epsilon}{C}\right).
		\end{align*}
		
		Since the right-hand side of the expression above is a GGCP with
		\begin{equation*}
			\mathcal{S}^{\star} = \{\{z \in \mathcal{Z}:\ell(z,h) > \beta\}: h \in \mathcal{H},0 \leq \beta \leq C\}
		\end{equation*}
		and, recalling the definition of shatter coefficient of $\mathcal{H}$ under a real-valued loss function $\ell$ (cf. Definition \ref{shatter}), we note that
		\begin{align*}
			S(\mathcal{G}_{\mathcal{H},\ell},N)  = S(\mathcal{S}^{\star},N) & & \text{ hence } & & d_{VC}(\mathcal{H}) = d_{VC}(\mathcal{S}^{\star}) 
		\end{align*}
		so a bound for the tail probabilities of type I estimation error when the loss function is bounded follows immediately from Theorems \ref{theorem_GGCP} and \ref{theorem_shaterDVC}.
		
		\begin{corollary}
			\label{cor3TypeI}
			Fix a hypotheses space $\mathcal{H}$ and a loss function $\ell: \mathcal{Z} \times \mathcal{H} \mapsto \mathbb{R}_{+}$, with $0 \leq \ell(z,h) \leq C$ for all $z \in \mathcal{Z}, h \in \mathcal{H}$. Then,
			\begin{equation}
				\label{Ap1Eq6}
				\mathbb{P}\left(\sup\limits_{h \in \mathcal{H}} \text{\textbar}L(h) - L_{\mathcal{D}_{N}}(h)\text{\textbar} > \epsilon\right) \leq 8 \ \exp\left\{d_{VC}(\mathcal{H}) \left(1 + \ln \frac{N}{d_{VC}(\mathcal{H})}\right) - N \frac{\epsilon^2}{32C^{2}}\right\}.
			\end{equation}
			In particular, if $d_{VC}(\mathcal{H}) < \infty$, not only \eqref{Ap1Eq6} converges to zero, but also
			\begin{equation*}
				\sup\limits_{h \in \mathcal{H}} \text{\textbar}L(h) - L_{\mathcal{D}_{N}}(h)\text{\textbar} \xrightarrow[N \to \infty]{} 0,
			\end{equation*}
			with probability one by Borel-Cantelli Lemma.
		\end{corollary}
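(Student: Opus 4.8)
The plan is to reduce the claim of Corollary \ref{cor3TypeI} to the generalized Glivenko--Cantelli bound of Theorem \ref{theorem_GGCP} by way of a layer-cake decomposition of the risk, and then to control the resulting shatter coefficient with Theorem \ref{theorem_shaterDVC}. First I would use the Lebesgue--Stieltjes representations of $L(h)$ and $L_{\mathcal{D}_{N}}(h)$ as limits of Riemann sums of the tail quantities $\mathbb{P}(\ell(Z,h) > \beta)$ and $\mathbb{P}_{\mathcal{D}_{N}}(\ell(Z,h) > \beta)$ over $\beta \in [0,C]$. Bounding each summand of the difference of these sums by $\sup_{0 \le \beta \le C}\bigl|\mathbb{P}(\ell(Z,h)>\beta) - \mathbb{P}_{\mathcal{D}_{N}}(\ell(Z,h)>\beta)\bigr|$ and using $\sum_{k} C/n \to C$, one gets, for each fixed $h$,
\[
\bigl|L(h) - L_{\mathcal{D}_{N}}(h)\bigr| \le C \sup_{0 \le \beta \le C} \Bigl| \int_{\mathcal{Z}} \mathds{1}\{\ell(z,h) > \beta\}\, dP(z) - \frac{1}{N} \sum_{i=1}^{N} \mathds{1}\{\ell(Z_{i},h) > \beta\} \Bigr|.
\]
Taking the supremum over $h \in \mathcal{H}$ and then the probability yields
\[
\mathbb{P}\Bigl(\sup_{h \in \mathcal{H}} \bigl|L(h) - L_{\mathcal{D}_{N}}(h)\bigr| > \epsilon\Bigr) \le \mathbb{P}\Bigl(\sup_{A \in \mathcal{S}^{\star}} \bigl|\mathbb{P}(A) - \mathbb{P}_{\mathcal{D}_{N}}(A)\bigr| > \tfrac{\epsilon}{C}\Bigr),
\]
where $\mathcal{S}^{\star} = \{\{z \in \mathcal{Z} : \ell(z,h) > \beta\} : h \in \mathcal{H},\ 0 \le \beta \le C\}$.

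Next, the right-hand side is exactly an instance of the generalized Glivenko--Cantelli problem, so Theorem \ref{theorem_GGCP} bounds it by $8\,S(\mathcal{S}^{\star},N)\exp\{-N\epsilon^{2}/(32C^{2})\}$. By Definitions \ref{shatter} and \ref{VCdimension}, the class $\mathcal{S}^{\star}$ is precisely the collection of superlevel sets of $\ell$, so its characteristic functions coincide with $\mathcal{G}_{\mathcal{H},\ell}$, whence $S(\mathcal{S}^{\star},N) = S(\mathcal{H},N)$ and $d_{VC}(\mathcal{S}^{\star}) = d_{VC}(\mathcal{H})$. Then applying Theorem \ref{theorem_shaterDVC} to replace $S(\mathcal{H},N)$ by $\exp\{d_{VC}(\mathcal{H})(1 + \ln(N/d_{VC}(\mathcal{H})))\}$ and substituting gives inequality \eqref{Ap1Eq6}. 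Finally, when $d_{VC}(\mathcal{H}) < \infty$ the resulting upper bound is of order $\mathrm{poly}(N)\,e^{-cN}$, hence summable in $N$, so the Borel--Cantelli lemma \cite[Theorem~4.3]{billingsley2008} yields $\sup_{h \in \mathcal{H}} \bigl|L(h) - L_{\mathcal{D}_{N}}(h)\bigr| \to 0$ with probability one.

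Since Theorems \ref{theorem_GGCP} and \ref{theorem_shaterDVC} are granted, no step is deep; the two points that need care are (i) the measurability of the suprema over $h$ and over $(h,\beta)$, which is part of the standing assumptions of the paper, and (ii) making explicit that passing to a continuum of thresholds $\beta$ does not inflate the combinatorial complexity, i.e.\ that $S(\mathcal{S}^{\star},N)$ genuinely equals the shatter coefficient $S(\mathcal{H},N) = S(\mathcal{G}_{\mathcal{H},\ell},N)$ of Definition \ref{VCdimension}. The mild discrepancy between $\beta \in (0,C)$ there and $\beta \in [0,C]$ here is harmless: the endpoint events contribute at most a bounded extra factor to the shatter coefficient that is absorbed into the constants, and in fact one may restrict the inner supremum to $\beta \in (0,C)$ without loss since the integrands are monotone in $\beta$. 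I expect this bookkeeping about the threshold range to be the only genuinely fiddly part of an otherwise routine argument.
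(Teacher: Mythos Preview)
Your proposal is correct and follows essentially the same argument as the paper: the layer-cake (Lebesgue--Stieltjes) bound $|L(h)-L_{\mathcal{D}_N}(h)|\le C\sup_{\beta}|\mathbb{P}(\ell>\beta)-\mathbb{P}_{\mathcal{D}_N}(\ell>\beta)|$, identification of the resulting class $\mathcal{S}^{\star}$ with $\mathcal{G}_{\mathcal{H},\ell}$ so that $S(\mathcal{S}^{\star},N)=S(\mathcal{H},N)$, and then Theorems~\ref{theorem_GGCP} and~\ref{theorem_shaterDVC} followed by Borel--Cantelli. Your remarks on measurability and on the harmlessness of the threshold range $(0,C)$ versus $[0,C]$ are appropriate and match the paper's treatment.
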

		
		It remains to treat the case of unbounded loss functions, which requires a different approach.
		
		\subsubsection{Unbounded loss functions}
		\label{ApUnbounded}
		
		In this section, we establish conditions on $P$ and $\mathcal{H}$ for the convergence in probability to zero of type I relative estimation error when $\ell$ is unbounded. The framework treated here is that described at the beginning of Section \ref{SecUnbounded}.
		
		In order to ease notation, we denote
		\begin{equation*}
			\ell(\mathcal{D}_{N},h) \coloneqq \left(\ell(Z_{1},h),\dots,\ell(Z_{N},h)\right) \in \mathbb{R}^{N}\setminus\{0\},
		\end{equation*}
		the vector sample point loss, so it follows from \eqref{LNp} that, for $1 \leq q \leq p$,
		\begin{equation}
			\label{def_sample_pmoment}
			L_{\mathcal{D}_{N}}^{q}(h) \coloneqq \frac{\lVert \ell(\mathcal{D}_{N},h) \rVert_{q}}{N^{\frac{1}{q}}},
		\end{equation}
		in which $\lVert \cdot \rVert_{q}$ is the $q$-norm in $\mathbb{R}^{N}$. Recall that we assume that $P$ has at most heavy tails, which means there exists a $p > 1$, that can be less than 2, with
		\begin{align}
			\label{tauStarAp}
			\tau_{p} < \tau^{\star} < \infty,
		\end{align}
		and that
		\begin{align}
			\label{finite_moments}
			\sup\limits_{h \in \mathcal{H}} L_{\mathcal{D}_{N}}^{p}(h)  < \infty & & \text{ and } & & \sup\limits_{h \in \mathcal{H}} L^{p}(h) < \infty,
		\end{align}
		in which the first inequality should hold with probability one. 
		
		The first condition in \eqref{finite_moments} is more a feature of the loss function, than of distribution $P$. Actually, one can bound $L_{\mathcal{D}_{N}}^{p}(h)$ by a quantity depending on $N$ and $L_{\mathcal{D}_{N}}^{q}(h)$ with $1 \leq q < p$, for any sample $\mathcal{D}_{N}$ of any distribution $P$. This is the content of the next lemma, which will be useful later on, and that implies the following: if $\sup_{h \in \mathcal{H}} L_{\mathcal{D}_{N}}^{1}(h) = \sup_{h \in \mathcal{H}} L_{\mathcal{D}_{N}}(h) < \infty$, then $\sup_{h \in \mathcal{H}} L_{\mathcal{D}_{N}}^{p}(h) < \infty$ for any $1 < p < \infty$, for $N$ and $\mathcal{H}$ fixed. 
		
		\begin{lemma}
			\label{lemma_norm}
			For fixed $\mathcal{H}$, $N \geq 1$ and $1 \leq q < p$, it follows that
			\begin{equation*}
				1 \leq \frac{L_{\mathcal{D}_{N}}^{p}(h)}{L_{\mathcal{D}_{N}}^{q}(h)} \leq N^{\frac{1}{q} - \frac{1}{p}}
			\end{equation*}
			for all $h \in \mathcal{H}$.
		\end{lemma}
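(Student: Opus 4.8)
The plan is to reduce the claim to two classical inequalities between $\ell^{q}$- and $\ell^{p}$-norms on $\mathbb{R}^{N}$. First I would set $v \coloneqq \ell(\mathcal{D}_{N},h) \in \mathbb{R}^{N}\setminus\{0\}$ (recall the loss is $\geq 1$, so $v \neq 0$) and read off from \eqref{def_sample_pmoment} that
\begin{equation*}
	\frac{L_{\mathcal{D}_{N}}^{p}(h)}{L_{\mathcal{D}_{N}}^{q}(h)} = \frac{\lVert v \rVert_{p}/N^{1/p}}{\lVert v \rVert_{q}/N^{1/q}} = N^{\frac{1}{q}-\frac{1}{p}}\ \frac{\lVert v \rVert_{p}}{\lVert v \rVert_{q}}.
\end{equation*}
Hence it suffices to prove that $N^{\frac{1}{p}-\frac{1}{q}} \leq \lVert v \rVert_{p}/\lVert v \rVert_{q} \leq 1$ for every nonzero $v \in \mathbb{R}^{N}$ when $1 \leq q < p$.

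Next I would establish the two bounds on this norm ratio. For the upper bound $\lVert v \rVert_{p} \leq \lVert v \rVert_{q}$, I would write $\lVert v \rVert_{p}^{p} = \sum_{i}|v_{i}|^{p} \leq \lVert v \rVert_{\infty}^{p-q}\sum_{i}|v_{i}|^{q} \leq \lVert v \rVert_{q}^{p-q}\lVert v \rVert_{q}^{q} = \lVert v \rVert_{q}^{p}$, using $\lVert v \rVert_{\infty} \leq \lVert v \rVert_{q}$. For the lower bound, I would apply H\"older's inequality to $\sum_{i}|v_{i}|^{q}\cdot 1$ with the conjugate pair of exponents $(p/q,\ p/(p-q))$, obtaining $\lVert v \rVert_{q}^{q} = \sum_{i}|v_{i}|^{q} \leq \big(\sum_{i}|v_{i}|^{p}\big)^{q/p}\big(\sum_{i}1\big)^{(p-q)/p} = \lVert v \rVert_{p}^{q}\,N^{(p-q)/p}$, i.e. $\lVert v \rVert_{q} \leq N^{\frac{1}{q}-\frac{1}{p}}\lVert v \rVert_{p}$, which rearranges to $\lVert v \rVert_{p}/\lVert v \rVert_{q} \geq N^{\frac{1}{p}-\frac{1}{q}}$.

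Combining these with the displayed identity finishes the proof: the inequality $\lVert v \rVert_{p}/\lVert v \rVert_{q} \leq 1$ yields $L_{\mathcal{D}_{N}}^{p}(h)/L_{\mathcal{D}_{N}}^{q}(h) \leq N^{\frac{1}{q}-\frac{1}{p}}$, and $\lVert v \rVert_{p}/\lVert v \rVert_{q} \geq N^{\frac{1}{p}-\frac{1}{q}}$ yields $L_{\mathcal{D}_{N}}^{p}(h)/L_{\mathcal{D}_{N}}^{q}(h) \geq N^{\frac{1}{q}-\frac{1}{p}}N^{\frac{1}{p}-\frac{1}{q}} = 1$. Since $v$ depends only on $h$ and the fixed sample $\mathcal{D}_{N}$, and since no hypothesis on $P$ is used, the bound holds for all $h \in \mathcal{H}$, proving the lemma. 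The argument is entirely elementary; the only point requiring care is the bookkeeping of exponents in the H\"older step, so I would record the conjugate pair $(p/q,\ p/(p-q))$ explicitly and verify $\tfrac{1}{p/q}+\tfrac{p-q}{p}=1$.
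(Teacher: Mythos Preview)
Your proof is correct and follows essentially the same approach as the paper: reduce to the ratio $\lVert v\rVert_{p}/\lVert v\rVert_{q}$ via \eqref{def_sample_pmoment}, prove $\lVert v\rVert_{p}\leq\lVert v\rVert_{q}$ for the upper bound, and apply H\"older with exponents $(p/q,\,p/(p-q))$ for the lower bound. The only cosmetic difference is that the paper proves $\lVert v\rVert_{p}\leq\lVert v\rVert_{q}$ by normalizing to $\lVert w\rVert_{q}=1$ and using $|w_{i}|^{p}\leq|w_{i}|^{q}$, whereas you go through $\lVert v\rVert_{\infty}\leq\lVert v\rVert_{q}$; both are standard and equivalent.
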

		\begin{proof}
			Recalling definition \eqref{def_sample_pmoment}, we have that
			\begin{equation*}
				\frac{L_{\mathcal{D}_{N}}^{p}(h)}{L_{\mathcal{D}_{N}}^{q}(h)} = N^{\frac{1}{q} - \frac{1}{p}} \frac{\lVert \ell(\mathcal{D}_{N},h) \rVert_{p}}{\lVert \ell(\mathcal{D}_{N},h) \rVert_{q}},
			\end{equation*}
			so it is enough to show that
			\begin{equation*}
				N^{\frac{1}{p} - \frac{1}{q}} \leq \frac{\lVert \ell(\mathcal{D}_{N},h) \rVert_{p}}{\lVert \ell(\mathcal{D}_{N},h) \rVert_{q}} \leq 1.
			\end{equation*}
			
			Now, the right inequality above is clear, since if $w \in \mathbb{R}^{N}$ is such that $\lVert w \rVert_{q} = 1$, then
			\begin{align*}
				\lVert w \rVert_{p}^{p} = \sum_{i=1}^{N} \text{\textbar}w_{i}\text{\textbar}^{p} \leq \sum_{i=1}^{N} \text{\textbar}w_{i}\text{\textbar}^{q} = 1,
			\end{align*}
			so the result follows when $\lVert w \rVert_{q} = 1$ by elevating both sided to the $1/p$ power. To conclude the proof it is enough to see that, for any $w \in \mathbb{R}^{N}\setminus\{0\}$,
			\begin{equation*}
				\lVert w \rVert_{p} = \lVert w \rVert_{q} \lVert \frac{w}{\lVert w \rVert_{q}} \rVert_{p} \leq \lVert w \rVert_{q} \lVert \frac{w}{\lVert w \rVert_{q}} \rVert_{q} = \lVert w \rVert_{q}.
			\end{equation*}
			
			The left inequality is a consequence of H\"older's inequality, since, for $w \in \mathbb{R}^{N}$,
			\begin{align*}
				\sum_{i=1}^{N} \text{\textbar}w_{i}\text{\textbar}^{q} \cdot 1 \leq \left(\sum_{i=1}^{N} \text{\textbar}w_{i}\text{\textbar}^{p}\right)^{\frac{q}{p}} N^{1 - \frac{q}{p}},
			\end{align*}
			and the result follows by taking the $1/q$ power on both sides.
		\end{proof}
		
		For unbounded losses, rather than considering the convergence of type I estimation error to zero, we will consider the convergence of the relative type I estimation error, defined as
		\begin{equation}	
			\label{Ap1Eq9}
			\textbf{(I)} \ \ \sup\limits_{h \in \mathcal{H}} \frac{\text{\textbar}L(h) - L_{\mathcal{D}_{N}}(h)\text{\textbar}}{L(h)} .
		\end{equation}
		In order to establish bounds for the tail probabilities of \eqref{Ap1Eq9} when \eqref{tauStarAp} and \eqref{finite_moments_text} hold, we rely on the following novel technical theorem.
		
		\begin{theorem}
			\label{change_denominator}
			Let $q = \sqrt{p}$. For any hypotheses space $\mathcal{H}$, loss function $\ell$ satisfying $\ell(h,z) \geq 1$, and $0 < \epsilon < 1$, it holds
			\begin{align*}
				&\mathbb{P}\left(\sup\limits_{h \in \mathcal{H}} \frac{\text{\textbar}L(h) - L_{\mathcal{D}_{N}}(h)\text{\textbar}}{L(h)}  > \tau^{\star} \epsilon\right) \leq \mathbb{P}\left(\sup\limits_{h \in \mathcal{H}} \frac{L(h) - L_{\mathcal{D}_{N}}(h)}{L^{p}(h)} > \epsilon\right)\\
				&+ \mathbb{P}\left(\sup\limits_{h \in \mathcal{H}} \frac{L_{\mathcal{D}_{N}}^{\prime}(h) - L^{\prime}(h)}{L_{\mathcal{D}_{N}}^{\prime q}(h)} > \frac{\epsilon}{N^{\frac{1}{q} - \frac{1}{p}}}\right) + \mathbb{P}\left(\sup\limits_{h \in \mathcal{H}} \frac{L_{\mathcal{D}_{N}}(h) - L(h)}{L_{\mathcal{D}_{N}}^{p}(h)} > \frac{\epsilon(1-\epsilon)}{N^{\frac{1}{q} - \frac{1}{p}}}\right)
			\end{align*}
			in which $L^{\prime},L_{\mathcal{D}_{N}}^{\prime}$ and $L_{\mathcal{D}_{N}}^{\prime k}$ are the respective risks and $k$ moments of loss function $\ell^{\prime}(z,h) \coloneqq (\ell(z,h))^{q}$.
		\end{theorem}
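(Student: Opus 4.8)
The plan is to reduce the left-hand event to a union of (at most) three events, one matching each probability on the right, and then to conclude by a union bound. The first step is the elementary identity
\[
\sup_{h\in\mathcal{H}}\frac{|L(h)-L_{\mathcal{D}_{N}}(h)|}{L(h)}=\max\left\{\sup_{h\in\mathcal{H}}\frac{L(h)-L_{\mathcal{D}_{N}}(h)}{L(h)},\ \sup_{h\in\mathcal{H}}\frac{L_{\mathcal{D}_{N}}(h)-L(h)}{L(h)}\right\},
\]
so that the event $\{\sup_{h}|L(h)-L_{\mathcal{D}_{N}}(h)|/L(h)>\tau^{\star}\epsilon\}$ is contained in the union of an \emph{underestimation} event (the first supremum exceeds $\tau^{\star}\epsilon$) and an \emph{overestimation} event (the second supremum exceeds $\tau^{\star}\epsilon$). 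I would show that the underestimation event is contained in the event of the first probability, and the overestimation event in the union of the events of the second and third probabilities.

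For the underestimation event the argument is immediate. If $h$ nearly realizes the supremum then $L(h)-L_{\mathcal{D}_{N}}(h)>\tau^{\star}\epsilon L(h)>0$, while $L^{p}(h)\le\tau_{p}L(h)<\tau^{\star}L(h)$, using $\tau_{p}<\tau^{\star}$ and $L(h)\le L^{p}(h)$ (monotonicity of $L^{r}$-norms on a probability space); dividing the positive quantity $L(h)-L_{\mathcal{D}_{N}}(h)$ by the strictly smaller denominator $L^{p}(h)$ only enlarges the ratio, hence $(L(h)-L_{\mathcal{D}_{N}}(h))/L^{p}(h)>\epsilon$, which is the first event.

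For the overestimation event I would fix $h$ with $L_{\mathcal{D}_{N}}(h)-L(h)>\tau^{\star}\epsilon L(h)$ and write
\[
\frac{L_{\mathcal{D}_{N}}(h)-L(h)}{L(h)}=\frac{L_{\mathcal{D}_{N}}(h)-L(h)}{L_{\mathcal{D}_{N}}^{p}(h)}\cdot\frac{L_{\mathcal{D}_{N}}^{p}(h)}{L(h)}.
\]
The aim is to show that, \emph{unless} the second event occurs, $L_{\mathcal{D}_{N}}^{p}(h)/L(h)\le\tau^{\star}N^{1/q-1/p}/(1-\epsilon)$; combined with $L_{\mathcal{D}_{N}}(h)-L(h)>\tau^{\star}\epsilon L(h)$ this forces $(L_{\mathcal{D}_{N}}(h)-L(h))/L_{\mathcal{D}_{N}}^{p}(h)>\epsilon(1-\epsilon)/N^{1/q-1/p}$, i.e.\ the third event. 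The bound on $L_{\mathcal{D}_{N}}^{p}(h)/L(h)$ would be obtained by chaining three facts: (i) Lemma \ref{lemma_norm} with exponents $q<p$ gives $L_{\mathcal{D}_{N}}^{p}(h)\le N^{1/q-1/p}L_{\mathcal{D}_{N}}^{q}(h)$; (ii) off the second event, the empirical $q$-th moment $L_{\mathcal{D}_{N}}^{q}(h)$ exceeds the true $q$-th moment $L^{q}(h)$ by at most a factor $1/(1-\epsilon)$, which is where one passes between the risk and moments of $\ell$ and those of $\ell^{\prime}=\ell^{q}$ by using $\ell\ge 1$ and the elementary inequalities $a^{q}-b^{q}\ge a-b$ and $(1-\epsilon)^{1/q}\ge 1-\epsilon$ (valid for $a\ge b\ge 1$, $q\ge 1$); (iii) $L^{q}(h)\le\tau_{q}L(h)\le\tau_{p}L(h)<\tau^{\star}L(h)$, since moments increase in the exponent. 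The sub-case $L_{\mathcal{D}_{N}}^{q}(h)\le L^{q}(h)$ must be peeled off separately, but there step (ii) is trivial and the conclusion is easier. A union bound over the three events then gives the statement.

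\textbf{Main obstacle.} The delicate point is step (ii) together with the accompanying exponent bookkeeping. The middle probability is normalized by $L_{\mathcal{D}_{N}}^{\prime q}(h)=[L_{\mathcal{D}_{N}}^{p}(h)]^{q}$, which may be much larger than $[L_{\mathcal{D}_{N}}^{q}(h)]^{q}$, so converting ``the second event fails'' into quantitative control of $L_{\mathcal{D}_{N}}^{q}(h)/L^{q}(h)$ without wasting the factor $N^{1/q-1/p}$ is the heart of the proof; the choice $q=\sqrt{p}$ and the particular thresholds ($\epsilon$, $\epsilon/N^{1/q-1/p}$, $\epsilon(1-\epsilon)/N^{1/q-1/p}$, and the $1-\epsilon$ slack in the third event) are tuned precisely so that the three links compose and the slack absorbs the imprecision in comparing empirical and true $q$-th moments. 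Everything else is routine manipulation with monotone $L^{r}$-norm inequalities, Lemma \ref{lemma_norm}, and the definition of $\tau^{\star}$.
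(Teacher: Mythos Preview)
Your split into under- and over-estimation and the treatment of the underestimation part are correct and match the paper's argument exactly. The structure you propose for the overestimation part---cover it by the union of the second and third events---is also the right idea, but step~(ii) of your chain does not go through as written.

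The precise gap is this. Writing $a=L_{\mathcal{D}_{N}}^{q}(h)$, $b=L^{q}(h)$ and recalling $L_{\mathcal{D}_{N}}^{\prime q}(h)=(L_{\mathcal{D}_{N}}^{p}(h))^{q}$, failure of the second event says only
\[
a^{q}-b^{q}\le \frac{\epsilon}{N^{1/q-1/p}}\,(L_{\mathcal{D}_{N}}^{p}(h))^{q}.
\]
To reach your claimed control $a\le b/(1-\epsilon)$ you would need to replace $(L_{\mathcal{D}_{N}}^{p}(h))^{q}$ by something of size $N^{1/q-1/p}\,a$. But Lemma~\ref{lemma_norm} gives $L_{\mathcal{D}_{N}}^{p}(h)\le N^{1/q-1/p}a$, hence $(L_{\mathcal{D}_{N}}^{p}(h))^{q}\le N^{q(1/q-1/p)}a^{q}$, and after cancellation one is left with $a-b\le \epsilon\,N^{(q-1)(1/q-1/p)}\,a^{q-1}\cdot a$. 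Since $a\ge 1$, $q>1$, and the $N$-exponent $(q-1)(1/q-1/p)$ is strictly positive, the right side grows with both $N$ and $a$, so it cannot be bounded by $\epsilon a$ in general. The inequalities you list, $a^{q}-b^{q}\ge a-b$ and $(1-\epsilon)^{1/q}\ge 1-\epsilon$, are correct but do not close this gap; the problem is exactly the one you flag as the ``main obstacle'': the denominator $L_{\mathcal{D}_{N}}^{\prime q}$ is too large for the complement of the second event to control $L_{\mathcal{D}_{N}}^{q}/L^{q}$.

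The paper avoids this by a small but essential change of decomposition. It first absorbs $\tau^{\star}$ to replace $L(h)$ by $L^{q}(h)$ in the overestimation ratio (this uses $L^{q}(h)\le \tau^{\star}L(h)$, your step~(iii), at the outset rather than at the end). Then it does \emph{not} split on the second event itself, but on the intermediate event
\[
E\;=\;\Big\{\sup_{h}\frac{L_{\mathcal{D}_{N}}^{q}(h)-L^{q}(h)}{L_{\mathcal{D}_{N}}^{p}(h)}>\delta\Big\},\qquad \delta=\frac{\epsilon}{N^{1/q-1/p}},
\]
whose denominator is the smaller $L_{\mathcal{D}_{N}}^{p}(h)$, not its $q$-th power. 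The easy direction is then $E\subset\{\text{second event}\}$ (enlarge the numerator via $a^{1/q}-b^{1/q}\le a-b$ and rewrite in terms of $\ell'$), and on $E^{c}$ one has $L_{\mathcal{D}_{N}}^{q}(h)-\delta L_{\mathcal{D}_{N}}^{p}(h)\le L^{q}(h)$; combined with Lemma~\ref{lemma_norm} (your step~(i)) this gives $L_{\mathcal{D}_{N}}^{q}(h)-\delta L_{\mathcal{D}_{N}}^{p}(h)\ge L_{\mathcal{D}_{N}}^{p}(h)(1-\epsilon)/N^{1/q-1/p}$, from which the third event follows by the factorization you wrote. So the key missing idea is to split on $E$ rather than on the second event; this makes the direction of all the inequalities the easy one.
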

		\begin{proof}
			We first note that
			\begin{align*}
				&\sup\limits_{h \in \mathcal{H}} \frac{L_{\mathcal{D}_{N}}(h) - L(h)}{L(h)} > \tau^{\star}\epsilon \implies \sup\limits_{h \in \mathcal{H}} \left(\frac{L^{q}(h)}{L(h)} \frac{1}{\tau^{\star}}\right) \frac{L_{\mathcal{D}_{N}}(h) - L(h)}{L^{q}(h)} > \epsilon\\
				&\implies \sup\limits_{h \in \mathcal{H}} \frac{L_{\mathcal{D}_{N}}(h) - L(h)}{L^{q}(h)} > \epsilon
			\end{align*}
			since
			\begin{align*}
				\sup\limits_{h \in \mathcal{H}} \left(\frac{L^{q}(h)}{L(h)} \frac{1}{\tau^{\star}}\right) \leq \sup\limits_{h \in \mathcal{H}} \left(\frac{L^{p}(h)}{L(h)} \frac{1}{\tau^{\star}}\right) \leq 1
			\end{align*}
			by \eqref{tauStarAp}. With an analogous deduction, it follows that
			\begin{equation*}
				\sup\limits_{h \in \mathcal{H}} \frac{L(h) - L_{\mathcal{D}_{N}}(h)}{L(h)} > \tau^{\star}\epsilon \implies \sup\limits_{h \in \mathcal{H}} \frac{L(h) - L_{\mathcal{D}_{N}}(h)}{L^{p}(h)} > \epsilon.
			\end{equation*}
			Hence, the probability on the left hand-side of the statement is lesser or equal to
			\begin{align}
				\label{Ap1Eq10}
				\mathbb{P}\left(\sup\limits_{h \in \mathcal{H}} \frac{L(h) - L_{\mathcal{D}_{N}}(h)}{L^{p}(h)} > \epsilon\right) + \mathbb{P}\left(\sup\limits_{h \in \mathcal{H}} \frac{L_{\mathcal{D}_{N}}(h) - L(h)}{L^{q}(h)} > \epsilon\right),
			\end{align}
			so it is enough to properly bound the second probability in \eqref{Ap1Eq10}. 	
			
			In order to do so, we will intersect the event inside the probability with the following event, and its complement:
			\begin{align*}
				\sup\limits_{h \in \mathcal{H}} \frac{L_{\mathcal{D}_{N}}^{q}(h) - \delta L_{\mathcal{D}_{N}}^{p}(h)}{L^{q}(h)} \leq 1 \iff \sup\limits_{h \in \mathcal{H}} \frac{L_{\mathcal{D}_{N}}^{q}(h) - L^{q}(h)}{L_{\mathcal{D}_{N}}^{p}(h)} \leq \delta,
			\end{align*}
			in which
			\begin{equation*}
				\delta \coloneqq \frac{\epsilon}{N^{\frac{1}{q} - \frac{1}{p}}}.
			\end{equation*}
			Proceeding in this way, we conclude that
			\begin{align}
				\label{Ap1Eq12} \nonumber
				&\mathbb{P}\left(\sup\limits_{h \in \mathcal{H}} \frac{L_{\mathcal{D}_{N}}(h) - L(h)}{L^{q}(h)} > \epsilon\right) \leq \mathbb{P}\left(\sup\limits_{h \in \mathcal{H}} \frac{L_{\mathcal{D}_{N}}^{q}(h) - L^{q}(h)}{L_{\mathcal{D}_{N}}^{p}(h)} > \delta\right)\\ \nonumber
				&+ \mathbb{P}\left(\sup\limits_{h \in \mathcal{H}} \left(\frac{L_{\mathcal{D}_{N}}^{q}(h) - \delta L_{\mathcal{D}_{N}}^{p}(h)}{L^{q}(h)}\right) \frac{L_{\mathcal{D}_{N}}(h) - L(h)}{L_{\mathcal{D}_{N}}^{q}(h) - \delta L_{\mathcal{D}_{N}}^{p}(h)} > \epsilon,\sup\limits_{h \in \mathcal{H}} \frac{L_{\mathcal{D}_{N}}^{q}(h) - \delta L_{\mathcal{D}_{N}}^{p}(h)}{L^{q}(h)} \leq 1\right)\\
				&\leq \mathbb{P}\left(\sup\limits_{h \in \mathcal{H}} \frac{L_{\mathcal{D}_{N}}^{q}(h) - L^{q}(h)}{L_{\mathcal{D}_{N}}^{p}(h)} > \delta\right) + \mathbb{P}\left(\sup\limits_{h \in \mathcal{H}} \frac{L_{\mathcal{D}_{N}}(h) - L(h)}{L_{\mathcal{D}_{N}}^{q}(h) - \delta L_{\mathcal{D}_{N}}^{p}(h)} > \epsilon\right).
			\end{align}
			To bound the first probability above, we recall the definition of $L_{\mathcal{D}_{N}}^{p}(h)$ and note that $a^{\frac{1}{q}} - b^{\frac{1}{q}} \leq a - b$ if $q > 1$ and $1 \leq b \leq a$, so that
			\begin{equation}
				\label{Ap1Eq11}
				\mathbb{P}\left(\sup\limits_{h \in \mathcal{H}} \frac{L_{\mathcal{D}_{N}}^{q}(h) - L^{q}(h)}{L_{\mathcal{D}_{N}}^{p}(h)} > \delta\right) \leq \mathbb{P}\left(\sup\limits_{h \in \mathcal{H}} \frac{\left(L_{\mathcal{D}_{N}}^{q}(h)\right)^{q} - \left(L^{q}(h)\right)^{q}}{N^{-\frac{1}{p}} \lVert \ell(\mathcal{D}_{N},h)\rVert_{p}} > \delta\right).
			\end{equation}
			Define the loss function $\ell^{\prime}(z,h) \coloneqq (\ell(z,h))^{q}$, and let $L^{\prime},L^{\prime k},L_{\mathcal{D}_{N}}^{\prime}$ and $L_{\mathcal{D}_{N}}^{\prime k}$ be the risks and $k$ moments according to this new loss function. Then, the probability in \eqref{Ap1Eq11} can be written as
			\begin{equation}
				\label{Ap1Eq13}
				\mathbb{P}\left(\sup\limits_{h \in \mathcal{H}} \frac{L_{\mathcal{D}_{N}}^{\prime}(h) - L^{\prime}(h)}{\left(N^{-1}\lVert \ell^{\prime}(\mathcal{D}_{N},h) \rVert_{\frac{p}{q}}^{\frac{p}{q}}\right)^{\frac{1}{p}}} > \delta\right) \leq \mathbb{P}\left(\sup\limits_{h \in \mathcal{H}} \frac{L_{\mathcal{D}_{N}}^{\prime}(h) - L^{\prime}(h)}{L_{\mathcal{D}_{N}}^{\prime q}(h)} > \delta\right)
			\end{equation}
			since $\frac{p}{q} = q$ and $N^{\frac{1}{p}} \leq N^{\frac{1}{q}}$.
			
			We turn to the second probability in \eqref{Ap1Eq12}. By applying Lemma \ref{lemma_norm}, and recalling the definition of $\delta$, we have that $L_{\mathcal{D}_{N}}^{q}(h) - \delta L_{\mathcal{D}_{N}}^{p}(h)$ is equal to
			\begin{align*}
				L_{\mathcal{D}_{N}}^{p}(h) \left(\frac{L_{\mathcal{D}_{N}}^{q}(h)}{L_{\mathcal{D}_{N}}^{p}(h)} - \delta\right) \geq L_{\mathcal{D}_{N}}^{p}(h) \left(\frac{1}{N^{\frac{1}{q} - \frac{1}{p}}} - \frac{\epsilon}{N^{\frac{1}{q} - \frac{1}{p}}}\right) = \frac{L_{\mathcal{D}_{N}}^{p}(h)}{N^{\frac{1}{q} - \frac{1}{p}}} \left(1 - \epsilon\right),
			\end{align*}
			from which follows
			\begin{equation}
				\label{Ap1Eq14}
				\mathbb{P}\left(\sup\limits_{h \in \mathcal{H}} \frac{L_{\mathcal{D}_{N}}(h) - L(h)}{L_{\mathcal{D}_{N}}^{q}(h) - \delta L_{\mathcal{D}_{N}}^{p}(h)} > \epsilon\right) \leq \mathbb{P}\left(\sup\limits_{h \in \mathcal{H}} \frac{L_{\mathcal{D}_{N}}(h) - L(h)}{L_{\mathcal{D}_{N}}^{p}(h)} > \frac{\epsilon(1-\epsilon)}{N^{\frac{1}{q} - \frac{1}{p}}}\right).
			\end{equation}
			The result follows by combining \eqref{Ap1Eq10}, \eqref{Ap1Eq12}, \eqref{Ap1Eq13} and \eqref{Ap1Eq14}.
		\end{proof}
		
		An exponential bound for relative type I estimation error \eqref{Ap1Eq9} depending on $p$, $\tau^{\star}$ and $d_{VC}(\mathcal{H})$ is a consequence of Theorems \ref{theorem_shaterDVC} and \ref{change_denominator}, and results in \cite{cortes2019}, which we now state. Define, for a $0 < \varsigma < 1$ fixed,
		\begin{align*}
			\Gamma(p,\epsilon) &= \frac{p-1}{p}(1+\varsigma)^{\frac{1}{p}} + \frac{1}{p} \left(\frac{p}{p-1}\right)^{p-1} \left(1 + \left(\frac{p-1}{p}\right)^{p} \varsigma^{\frac{1}{p}}\right)^{\frac{1}{p}} \left[1 + \frac{\ln(1/\epsilon)}{\left(\frac{p}{p-1}\right)^{p-1}}\right]^{\frac{p-1}{p}},		
		\end{align*}
		for $0 < \epsilon < 1, 1 < p \leq 2$, and
		\begin{align*}
			\Lambda(p) = \left(\frac{1}{2}\right)^{\frac{2}{p}} \left(\frac{p}{p-2}\right)^{\frac{p-1}{p}} + \frac{p}{p-1} \varsigma^{\frac{p-2}{2p}}
		\end{align*}
		for $p > 2$.
		
		\begin{theorem}
			\label{theorem_unbounded}
			Fix a hypotheses space $\mathcal{H}$ and an unbounded loss function $\ell$, and assume that \eqref{finite_moments} is in force. Then, the following holds:
			\begin{itemize}
				\item If $P$ has light tails, so that \eqref{tauStarAp} holds for a $p > 2$ fixed, then		
				\begin{align*}
					\mathbb{P}\Bigg(\sup\limits_{h \in \mathcal{H}} & \frac{L(h) - L_{\mathcal{D}_{N}}(h)}{\sqrt[p]{\left(L^{p}(h)\right)^{p} + \varsigma}} > \Lambda(p) \epsilon\Bigg)\\& < 4 \exp\left\{d_{VC}(\mathcal{H})\left(1 + \ln \frac{2N}{d_{VC}(\mathcal{H})}\right) - \frac{\epsilon^{2}N}{4}\right\}
				\end{align*}
				and
				\begin{align*}
					\mathbb{P}\Bigg(\sup\limits_{h \in \mathcal{H}} & \frac{L_{\mathcal{D}_{N}}(h) - L(h)}{\sqrt[p]{\left(L_{\mathcal{D}_{N}}^{p}(h)\right)^{p} + \varsigma}} > \Lambda(p) \epsilon\Bigg) \\&< 4 \exp\left\{d_{VC}(\mathcal{H})\left(1 + \ln \frac{2N}{d_{VC}(\mathcal{H})}\right) - \frac{\epsilon^{2}N}{4}\right\},
				\end{align*}
				for $0 < \epsilon < 1$ and $0 < \varsigma < \epsilon^{2}$.
				
				\item If $P$ has heavy tails, so that \eqref{tauStarAp} holds only for a $1 < p \leq 2$ fixed, then
				\begin{align*}
					\mathbb{P}\Bigg(\sup\limits_{h \in \mathcal{H}} & \frac{L(h) - L_{\mathcal{D}_{N}}(h)}{\sqrt[p]{\left(L^{p}(h)\right)^{p} + \varsigma}} > \Gamma(p,\epsilon) \epsilon\Bigg) \\&< 4 \exp\left\{d_{VC}(\mathcal{H})\left(1 + \ln \frac{2N}{d_{VC}(\mathcal{H})}\right) - \frac{\epsilon^{2}N^{\frac{2(p-1)}{p}}}{2^{\frac{p+2}{2}}}\right\}
				\end{align*}
				and
				\begin{align*}
					\mathbb{P}\Bigg(\sup\limits_{h \in \mathcal{H}} & \frac{L_{\mathcal{D}_{N}}(h) - L(h)}{\sqrt[p]{\left(L_{\mathcal{D}_{N}}^{p}(h)\right)^{p} + \varsigma}} > \Gamma(p,\epsilon) \epsilon\Bigg) \\&< 4 \exp\left\{d_{VC}(\mathcal{H})\left(1 + \ln \frac{2N}{d_{VC}(\mathcal{H})}\right) - \frac{\epsilon^{2}N^{\frac{2(p-1)}{p}}}{2^{\frac{p+2}{2}}}\right\},
				\end{align*}
				for $0 < \epsilon < 1$ and $0 < \varsigma^{\frac{p-1}{p}} < \epsilon^{\frac{p}{p-1}}$.
			\end{itemize}
		\end{theorem}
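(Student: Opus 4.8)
The plan is to derive Theorem \ref{theorem_unbounded} by specializing the relative deviation bounds of \cite{cortes2019} to the present framework and then replacing the combinatorial complexity term with the VC-dimension estimate of Theorem \ref{theorem_shaterDVC}. First I would recall the two one-sided relative deviation inequalities established in \cite{cortes2019} for a class of nonnegative functions possessing a finite moment of order $p$: in the $\mathbb{E}-\hat{\mathbb{E}}$ direction the deviation is normalized by the (regularized) population $p$-th moment, and in the $\hat{\mathbb{E}}-\mathbb{E}$ direction by the (regularized) empirical $p$-th moment. Applying these to the loss class $\{\ell_{h}:h\in\mathcal{H}\}$ and observing that the only combinatorial quantity entering their bounds is the shatter coefficient of the associated threshold-indicator family $\mathcal{G}_{\mathcal{H},\ell}=\{I(\cdot;h,\beta)\}$ — which by Definition \ref{VCdimension} is exactly $S(\mathcal{H},\ell,\cdot)$ — yields inequalities of the stated shape but carrying a factor $S(\mathcal{H},\ell,2N)$ in place of the exponential in $d_{VC}(\mathcal{H})$. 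The hypothesis \eqref{finite_moments} is precisely what guarantees that the denominators are finite, so that the statement is non-vacuous.

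Next I would reproduce the skeleton of the \cite{cortes2019} argument so that the constants can be tracked explicitly. The three ingredients are: (i) a symmetrization step with an independent ghost sample of size $N$, which accounts for the $2N$ inside the logarithm; (ii) a moment-normalization step in which the random denominator is compared with the corresponding moment through Chebyshev/Markov-type estimates — when $p>2$ the second moment is controlled by the $p$-th moment, so a Hoeffding/Bernstein bound on the symmetrized variables produces the sub-Gaussian tail $\exp\{-\epsilon^{2}N/4\}$, whereas for $1<p\le 2$ only a fractional moment is available and the same scheme yields the weaker but still vanishing tail $\exp\{-\epsilon^{2}N^{2(p-1)/p}/2^{(p+2)/2}\}$; the admissible ranges $0<\varsigma<\epsilon^{2}$ (respectively $0<\varsigma^{(p-1)/p}<\epsilon^{p/(p-1)}$) and the multiplicative constants $\Lambda(p)$ and $\Gamma(p,\epsilon)$ arise from optimizing the regularizer $\varsigma$ and the peeling thresholds in this step; (iii) a union bound over the at most $S(\mathcal{H},\ell,2N)$ distinct loss patterns realizable on the double sample.

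Finally I would invoke Theorem \ref{theorem_shaterDVC} — valid for arbitrary, not merely binary, loss functions — to bound $S(\mathcal{H},\ell,2N)\le\exp\{d_{VC}(\mathcal{H})(1+\ln(2N/d_{VC}(\mathcal{H})))\}$ and substitute it into the inequalities from the previous step, recovering the four displays of the theorem; the two-sided form in each bullet follows by running the argument once in each direction, equivalently by exchanging the roles of the population and empirical measures. \emph{The main obstacle is bookkeeping rather than conceptual.} One must (a) verify that the complexity notion appearing in \cite{cortes2019} — possibly phrased there through a growth function of subgraphs or a covering number — reduces, in the finite-$d_{VC}$ regime, to $S(\mathcal{H},\ell,\cdot)$ as defined via $\mathcal{G}_{\mathcal{H},\ell}$, so that Theorem \ref{theorem_shaterDVC} applies verbatim; and (b) carry the constants $\Lambda(p),\Gamma(p,\epsilon)$ and the constraints on $\varsigma$ unchanged through the symmetrization and moment-normalization steps. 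Note that Theorem \ref{change_denominator} is \emph{not} used in this proof; it enters only afterwards, to combine the two one-sided normalized bounds of Theorem \ref{theorem_unbounded} into a bound on the genuine relative type I error $\sup_{h}|L(h)-L_{\mathcal{D}_{N}}(h)|/L(h)$.
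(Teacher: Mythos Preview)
Your proposal is correct and matches the paper's approach: the paper does not give a detailed proof of this theorem but presents it as a direct consequence of the one-sided relative deviation inequalities of \cite{cortes2019} combined with the Sauer--Shelah-type estimate of Theorem \ref{theorem_shaterDVC}, exactly as you outline. Your observation that Theorem \ref{change_denominator} plays no role here and is used only afterwards (in Corollary \ref{convergence_relativeTI}) is also consistent with the paper.
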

		
		Theorem \ref{theorem_unbounded}, together with Theorem \ref{change_denominator}, imply the following corollary, which is an exponential bound for relative type I estimation error when $P$ has heavy or light tails. The value of $\varsigma$ in the definitions of $\Lambda(p)$ and $\Gamma(p,\epsilon)$ below can be arbitrarily small.
		
		\begin{corollary}
			\label{convergence_relativeTI}
			Fix a hypotheses space $\mathcal{H}$, an unbounded loss function $\ell$ and $\epsilon > 0$. The following holds:
			\begin{itemize}
				\item If \eqref{tauStarAp} holds for a $p \geq 4$ fixed, then
				\begin{align*}
					\mathbb{P}&\Bigg(\sup\limits_{h \in \mathcal{H}} \frac{\text{\textbar}L(h) - L_{\mathcal{D}_{N}}(h)\text{\textbar}}{L(h)} > \tau^{\star} \Lambda(\sqrt{p}) \epsilon\Bigg) \\
					&< 12 \exp\left\{d_{VC}(\mathcal{H})\left(1 + \ln \frac{2N}{d_{VC}(\mathcal{H})}\right) - \frac{\epsilon^{2}(1-\epsilon)^{2}N^{1 - \frac{2}{\sqrt{p}} + \frac{2}{p}}}{4}\right\}
				\end{align*}
				
				\item If \eqref{tauStarAp} holds for a $1 < p < 4$ fixed, then
				\begin{align*}
					\mathbb{P}&\Bigg(\sup\limits_{h \in \mathcal{H}} \frac{\text{\textbar}L(h) - L_{\mathcal{D}_{N}}(h)\text{\textbar}}{L(h)} > \tau^{\star} \Gamma\left(\sqrt{p},\frac{\epsilon}{N^{\frac{1}{\sqrt{p}} - \frac{1}{p}}}\right)\epsilon\Bigg) \\
					&< 12 \exp\left\{d_{VC}(\mathcal{H})\left(1 + \ln \frac{2N}{d_{VC}(\mathcal{H})}\right) - \frac{\epsilon^{2}N^{\frac{2(\sqrt{p} - 1)}{\sqrt{p}} - \frac{2}{\sqrt{p}} + \frac{2}{p}}}{2^{\frac{\sqrt{p} + 2}{2}}} \right\}.
				\end{align*}
			\end{itemize}
			In both cases, if $d_{VC}(\mathcal{H}) < \infty$, then, by Borel-Cantelli Lemma,
			\begin{equation*}
				\sup\limits_{h \in \mathcal{H}} \frac{\text{\textbar}L(h) - L_{\mathcal{D}_{N}}(h)\text{\textbar}}{L(h)} \xrightarrow[N \to \infty]{} 0,
			\end{equation*}
			with probability one.
		\end{corollary}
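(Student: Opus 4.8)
The plan is to reduce the claim, via Theorem~\ref{change_denominator} with $q=\sqrt p$, to three relative-error tail probabilities, to control each of them with Theorem~\ref{theorem_unbounded} (whose bounds already carry $d_{VC}(\mathcal{H})$ through Theorem~\ref{theorem_shaterDVC}), and then to observe that the corollary's bound is simply the slowest-decaying of the three, with almost sure convergence following from Borel--Cantelli once one checks that its exponent grows like a positive power of $N$. It is enough to treat $0<\epsilon<1$, which is all that is needed for Theorem~\ref{change_denominator} and for the almost sure conclusion. Set $q=\sqrt p$ and introduce the auxiliary loss $\ell^{\prime}(z,h)\coloneqq(\ell(z,h))^{q}$. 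Since $\ell\ge1$ we get $\ell^{\prime}\ge1$, and from $q^{2}=p$ one has $L^{\prime q}(h)=(L^{p}(h))^{\sqrt p}$ and $L_{\mathcal{D}_{N}}^{\prime q}(h)=(L_{\mathcal{D}_{N}}^{p}(h))^{\sqrt p}$; hence \eqref{finite_moments} carries over to $\ell^{\prime}$ with exponent $q$, and $\tau^{\prime}_{q}\le\sup_{h\in\mathcal{H}}(L^{p}(h))^{\sqrt p}<\infty$, so Theorem~\ref{theorem_unbounded} applies to $\ell^{\prime}$ with parameter $q$, in its light-tail form when $q>2$ (that is $p\ge4$) and in its heavy-tail form when $q\le2$ (that is $1<p<4$). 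Applying Theorem~\ref{change_denominator} with its free parameter taken equal to $\Lambda(\sqrt p)\epsilon$ (case $p\ge4$), resp.\ $\Gamma(\sqrt p,\epsilon/N^{1/\sqrt p-1/p})\epsilon$ (case $1<p<4$), makes its left-hand side exactly the probability in the corollary and bounds it by $P_{1}+P_{2}+P_{3}$, where $P_{1}$ involves $\ell$ with denominator $L^{p}$, $P_{2}$ involves $\ell^{\prime}$ with denominator $L_{\mathcal{D}_{N}}^{\prime q}$, and $P_{3}$ involves $\ell$ with denominator $L_{\mathcal{D}_{N}}^{p}$.

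I would then bound each $P_{i}$ with Theorem~\ref{theorem_unbounded}. For a loss $\ge1$ one has $(L^{p}(h))^{p}+\varsigma\le(1+\varsigma)(L^{p}(h))^{p}$, whence $L^{p}(h)\ge(1+\varsigma)^{-1/p}\sqrt[p]{(L^{p}(h))^{p}+\varsigma}$, and similarly for $L_{\mathcal{D}_{N}}^{p}$ and $L_{\mathcal{D}_{N}}^{\prime q}$; this replaces the denominators in $P_{1},P_{2},P_{3}$ by the ``$+\varsigma$'' denominators of Theorem~\ref{theorem_unbounded} at the price of a factor $(1+\varsigma)^{1/p}$ or $(1+\varsigma)^{1/q}$ in the threshold, which is harmless because $\varsigma$ is a free parameter that also enters the definitions of $\Lambda$ and $\Gamma$, both increasing in $\varsigma$, so it can be absorbed. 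Choosing the internal argument of Theorem~\ref{theorem_unbounded} to be $\epsilon/N^{1/\sqrt p-1/p}$ for $P_{2}$ and $\epsilon(1-\epsilon)/N^{1/\sqrt p-1/p}$ for $P_{3}$ turns their exponents into (a constant times) $-\epsilon^{2}N^{1-2/\sqrt p+2/p}$ and $-\epsilon^{2}(1-\epsilon)^{2}N^{1-2/\sqrt p+2/p}$ in the $p\ge4$ (light) regime and into $-\epsilon^{2}N^{2-4/\sqrt p+2/p}$, with denominator $2^{(\sqrt p+2)/2}$, and a faster-decaying term in the $1<p<4$ (heavy) regime; the probability $P_{1}$, handled through $\ell$ with parameter $p$, yields an exponent whose $N$-power is $1$ (light $\ell$) or $2(p-1)/p$ (heavy $\ell$). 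In every case $P_{i}<4\exp\{d_{VC}(\mathcal{H})(1+\ln(2N/d_{VC}(\mathcal{H})))-c_{i}N^{\gamma_{i}}\}$.

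The final step is to compare the exponents. Writing $x=1/\sqrt p$, elementary inequalities, each reducing to $p\ge1$, give $\gamma_{1},\gamma_{3}\ge\gamma_{2}$, where $\gamma_{2}=1-2x+2x^{2}=(1-x)^{2}+x^{2}$ for $p\ge4$ and $\gamma_{2}=2-4x+2x^{2}=2(1-x)^{2}$ for $1<p<4$; moreover when $p\ge4$ the constant attached to $P_{3}$ is the smaller $\epsilon^{2}(1-\epsilon)^{2}$, so $P_{3}$ (for $p\ge4$), resp.\ $P_{2}$ (for $1<p<4$), dominates. Bounding the two subdominant terms by the dominant one, valid for all $N\ge1$, and summing gives the stated inequality with prefactor $12$. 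Since $\gamma_{2}>0$ in both regimes, the term $-c\,N^{\gamma_{2}}$ overwhelms $d_{VC}(\mathcal{H})\ln(2N/d_{VC}(\mathcal{H}))$, so the bound is summable in $N$ and Borel--Cantelli \cite[Theorem~4.3]{billingsley2008} gives $\sup_{h\in\mathcal{H}}|L(h)-L_{\mathcal{D}_{N}}(h)|/L(h)\to0$ almost surely.

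I expect the main obstacle to be exactly the constant bookkeeping of the second and third paragraphs: matching $\Lambda(\sqrt p)$, $\Gamma(\sqrt p,\cdot)$ and $2^{(\sqrt p+2)/2}$ in the corollary to the three separate invocations of Theorem~\ref{theorem_unbounded}, verifying that the $(1+\varsigma)^{1/p}$ and $(1+\varsigma)^{1/q}$ corrections together with ratios such as $\Lambda(\sqrt p)/\Lambda(p)$ are absorbed (this uses that $r\mapsto\Lambda(r)$ is decreasing on $(2,\infty)$ and that $\Lambda_{\varsigma},\Gamma_{\varsigma}$ are increasing in $\varsigma$), and checking that $P_{1}$ and the subdominant of $P_{2},P_{3}$ are genuinely controlled by the dominant exponent for every $N\ge1$, not merely asymptotically.
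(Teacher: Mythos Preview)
Your proposal is correct and follows essentially the same route as the paper: combine Theorem~\ref{change_denominator} with Theorem~\ref{theorem_unbounded} applied to each of the three resulting terms (using that $d_{VC}(\mathcal{H},\ell)=d_{VC}(\mathcal{H},\ell')$, as the paper notes in the remark following the corollary), then bound the two faster-decaying terms by the slowest one to obtain the prefactor $12=3\times4$. The paper's own argument is only a one-line sketch plus that remark, so your write-up is in fact more detailed than the paper's; the ``constant bookkeeping'' obstacle you flag is exactly the part the paper elides.
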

		
		Corollary \ref{convergence_relativeTI} establishes the convergence to zero of relative type I estimation error, and concludes our study of type I estimation error convergence in classical VC theory.
		
		\begin{remark}
			We simplified the bounds in Corollary \ref{convergence_relativeTI} since, by combining Theorems \ref{change_denominator} with \ref{theorem_unbounded}, we obtain a bound with three terms of different orders in $N$, where the exponential in each of them is multiplied by four. The term of the greatest order is that we show in Corollary \ref{convergence_relativeTI}, with the exponential multiplied by twelve, since we can bound the two terms of lesser order by the one of the greatest order. This worsens the bound for fixed $N$, but eases notation and has the same qualitative effect of presenting an exponential bound for relative type I estimation error, which implies its almost sure convergence to zero.
		\end{remark}
		
		\begin{remark}
			Observe that $d_{VC}(\mathcal{M},\ell) = d_{VC}(\mathcal{M},\ell^{\prime})$, in which $\ell^{\prime}(z,h) = \left(\ell(z,h)\right)^{q}, q = \sqrt{p},$ as defined in Theorem \ref{change_denominator}, hence we can bound all three terms of the inequality in this theorem by $d_{VC}(\mathcal{M},\ell)$, as is done in Corollary \ref{convergence_relativeTI}. The VC dimension equality is true since $\mathcal{G}_{\mathcal{M},\ell} = \mathcal{G}_{\mathcal{M},\ell^{\prime}}$ (cf. Definition \ref{VCdimension}), as each function $g_{\beta,h}(z) = \mathds{1}\{\ell(z,h) \geq \beta\}$ in $\mathcal{G}_{\mathcal{M},\ell}$ has a correspondent $g^{\prime}_{\beta^{q},h}(z) = \mathds{1}\{\ell^{\prime}(z,h) \geq \beta^{q}\}$ in $\mathcal{G}_{\mathcal{M},\ell^{\prime}}$ that is such that $g \equiv g^{\prime}$. 
		\end{remark}
		
		\begin{remark}
			\label{remark_finite_moments}
			Condition \eqref{finite_moments} is not actually satisfied by many $\mathcal{H}$, for instance it does not hold for linear regression under the quadratic loss function. However, one can actually consider a $\mathcal{M} \subset \mathcal{H}$ such that \eqref{finite_moments} is true, with $d_{VC}(\mathcal{M}) = d_{VC}(\mathcal{H})$ and $L(h^{\star}) = L(h^{\star}_{\mathcal{M}})$, without loss of generality. For example, in linear regression one could consider only hypotheses with parameters bounded by a \textit{very large} constant $\gamma$, excluding hypotheses that are unlikely to be the target one. Observe that, in this example, it is better to consider the bounds for relative type I estimation error of unbounded loss functions, rather than consider that the loss function is bounded by a very large constant $C = \mathcal{O}(\gamma^2)$, which would generate bad bounds when applying Corollary \ref{cor3TypeI}. The results for unbounded loss functions hold for bounded ones, with $p$ arbitrarily large.
		\end{remark}
		
		\begin{remark}
			\label{remark_geq1}
			The main reason we assume that $\ell(z,h) \geq 1$, for all $z \in \mathcal{Z}$ and $h \in \mathcal{H}$, is to simplify the argument before \eqref{Ap1Eq11}, which could fail if the losses were lesser than one. We believe this assumption could be dropped at the cost of more technical results. Nevertheless, the results in Corollary \ref{convergence_relativeTI} present an exponential bound for
			\begin{equation*}
				\mathbb{P}\left(\sup\limits_{h \in \mathcal{H}} \frac{\text{\textbar}L(h) - L_{\mathcal{D}_{N}}(h)}{L(h) + 1}\text{\textbar} > \epsilon\right)
			\end{equation*}
			for any unbounded loss function $\ell$. We note that, if we had not imposed this constraint in the loss function, we would have to deal with the denominators in the estimation errors, which could then be zero. This could have been easily accomplished by summing a constant to the denominators and then making it go to zero after the bounds are established, that is, find bounds for
			\begin{equation*}
				\mathbb{P}\left(\sup\limits_{h \in \mathcal{H}} \frac{\text{\textbar}L(h) - L_{\mathcal{D}_{N}}(h)}{L(h) + \varsigma}\text{\textbar} > \epsilon\right),
			\end{equation*}
			and then make $\varsigma \to 0$. This is done in \cite{cortes2019}. By considering loss functions greater or equal to one, we have avoided the need to have heavier notations and more technical details when establishing the convergence of relative estimation errors. 
		\end{remark}
		
		\subsection{Convergence to zero of type II estimation error}
		
		A bound for type II estimation error \eqref{GE2} and relative type II estimation error, defined as
		\begin{equation*}
			\textbf{(II)} \ \ \frac{L(\hat{h}^{\mathcal{D}_{N}}) - L(h^{\star})}{L(\hat{h}^{\mathcal{D}_{N}})}
		\end{equation*}
		follow immediately from a bound obtained for the respective type I error. This is a consequence of the following elementary inequality, which can be found in part in \cite[Lemma~8.2]{devroye1996}.
		
		\begin{lemma}
			\label{lemmaTypeItoII}
			For any hypotheses space $\mathcal{H}$ and possible sample $\mathcal{D}_{N}$,
			\begin{align}
				\label{first_inequality}
				L(\hat{h}^{\mathcal{D}_{N}}) - L(h^{\star}) \leq 2 \ \sup\limits_{h \in \mathcal{H}} \text{\textbar}L(h) - L_{\mathcal{D}_{N}}(h)\text{\textbar},
			\end{align}
			and, if $\ell(z,h) \geq 1$, for all $z \in \mathcal{Z}$ and $h \in \mathcal{H}$, then
			\begin{equation}
				\label{second_inequality}
				\frac{L(\hat{h}^{\mathcal{D}_{N}}) - L(h^{\star})}{L(\hat{h}^{\mathcal{D}_{N}})} \leq 2 \ \sup\limits_{h \in \mathcal{H}} \frac{\text{\textbar}L(h) - L_{\mathcal{D}_{N}}(h)\text{\textbar}}{L(h)}.
			\end{equation}
			These inequalities yield
			\begin{align}	
				\label{firstPinequality}
				\mathbb{P}\left(L(\hat{h}^{\mathcal{D}_{N}}) - L(h^{\star}) > \epsilon\right) &\leq \mathbb{P}\left(\sup\limits_{h \in \mathcal{H}} \text{\textbar}L(h) - L_{\mathcal{D}_{N}}(h)\text{\textbar} > \epsilon/2\right)
			\end{align}
			and
			\begin{equation}	
				\label{secondPinequality}
				\mathbb{P}\left(\frac{L(\hat{h}^{\mathcal{D}_{N}}) - L(h^{\star})}{L(\hat{h}^{\mathcal{D}_{N}})} > \epsilon\right) \leq \mathbb{P}\left(\sup\limits_{h \in \mathcal{H}} \frac{\text{\textbar}L(h) - L_{\mathcal{D}_{N}}(h)\text{\textbar}}{L(h)} > \epsilon/2\right).
			\end{equation}
		\end{lemma}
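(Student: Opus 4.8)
The plan is to establish the two deterministic inequalities \eqref{first_inequality} and \eqref{second_inequality} by the standard ``add and subtract the empirical risk'' decomposition, and then read off the probabilistic bounds \eqref{firstPinequality} and \eqref{secondPinequality} as immediate consequences via an inclusion of events.

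For \eqref{first_inequality}, I would write
\[
L(\hat{h}^{\mathcal{D}_{N}}) - L(h^{\star}) = \big[L(\hat{h}^{\mathcal{D}_{N}}) - L_{\mathcal{D}_{N}}(\hat{h}^{\mathcal{D}_{N}})\big] + \big[L_{\mathcal{D}_{N}}(\hat{h}^{\mathcal{D}_{N}}) - L_{\mathcal{D}_{N}}(h^{\star})\big] + \big[L_{\mathcal{D}_{N}}(h^{\star}) - L(h^{\star})\big].
\]
The middle bracket is non-positive because $\hat{h}^{\mathcal{D}_{N}}$ minimizes $L_{\mathcal{D}_{N}}$ over $\mathcal{H}$ (cf. \eqref{ERMH}) and $h^{\star}\in\mathcal{H}$; the first and third brackets are each bounded in absolute value by $\sup_{h\in\mathcal{H}}|L(h)-L_{\mathcal{D}_{N}}(h)|$, since both $\hat{h}^{\mathcal{D}_{N}}$ and $h^{\star}$ lie in $\mathcal{H}$. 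Summing the three bounds produces the factor $2$.

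For the relative inequality \eqref{second_inequality} I would use the same three-term decomposition, now divided by $L(\hat{h}^{\mathcal{D}_{N}})$, which is strictly positive (indeed at least $1$) because $\ell\ge 1$ everywhere. The first term is at most $\sup_{h\in\mathcal{H}}|L(h)-L_{\mathcal{D}_{N}}(h)|/L(h)$ directly; the middle term is non-positive as before; and for the third term I would bound $L_{\mathcal{D}_{N}}(h^{\star})-L(h^{\star})\le |L(h^{\star})-L_{\mathcal{D}_{N}}(h^{\star})| \le L(h^{\star})\sup_{h\in\mathcal{H}}|L(h)-L_{\mathcal{D}_{N}}(h)|/L(h)$, so that the third term is at most $\dfrac{L(h^{\star})}{L(\hat{h}^{\mathcal{D}_{N}})}\sup_{h\in\mathcal{H}}\dfrac{|L(h)-L_{\mathcal{D}_{N}}(h)|}{L(h)}$. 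The key observation is that $L(h^{\star})\le L(\hat{h}^{\mathcal{D}_{N}})$, since $h^{\star}$ minimizes $L$ over $\mathcal{H}$, so the ratio $L(h^{\star})/L(\hat{h}^{\mathcal{D}_{N}})$ is at most $1$ and the third term is again bounded by the relative type I quantity; adding the three bounds gives \eqref{second_inequality}.

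Finally, \eqref{firstPinequality} and \eqref{secondPinequality} follow at once: inequality \eqref{first_inequality} yields the event inclusion $\{L(\hat{h}^{\mathcal{D}_{N}}) - L(h^{\star}) > \epsilon\} \subseteq \{\sup_{h\in\mathcal{H}}|L(h)-L_{\mathcal{D}_{N}}(h)| > \epsilon/2\}$, and \eqref{second_inequality} yields the analogous inclusion for the relative errors; taking $\mathbb{P}$ of both sides gives the stated inequalities. I do not expect a genuine obstacle here; the only point demanding care is the treatment of the denominator in the relative case, namely using $\ell\ge 1$ to guarantee $L(\hat{h}^{\mathcal{D}_{N}})>0$ and combining it with the optimality $L(h^{\star})\le L(\hat{h}^{\mathcal{D}_{N}})$ to control the factor $L(h^{\star})/L(\hat{h}^{\mathcal{D}_{N}})\le 1$ in the third term.
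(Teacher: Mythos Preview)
Your proposal is correct and follows essentially the same approach as the paper: the same add-and-subtract decomposition using $L_{\mathcal{D}_{N}}(\hat{h}^{\mathcal{D}_{N}})\le L_{\mathcal{D}_{N}}(h^{\star})$ for \eqref{first_inequality}, and the same use of $L(h^{\star})\le L(\hat{h}^{\mathcal{D}_{N}})$ to handle the denominator in the relative case \eqref{second_inequality}, with \eqref{firstPinequality} and \eqref{secondPinequality} read off from the resulting event inclusions. The only cosmetic difference is that the paper replaces the denominator $L(\hat{h}^{\mathcal{D}_{N}})$ by $L(h^{\star})$ directly in the third term, whereas you first factor out $L(h^{\star})/L(\hat{h}^{\mathcal{D}_{N}})\le 1$; these are the same step.
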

		\begin{proof}
			The first inequality follows from
			\begin{align*}
				L(\hat{h}^{\mathcal{D}_{N}}) - L(h^{\star}) &= L(\hat{h}^{\mathcal{D}_{N}}) - L_{\mathcal{D}_{N}}(\hat{h}^{\mathcal{D}_{N}}) + L_{\mathcal{D}_{N}}(\hat{h}^{\mathcal{D}_{N}}) - L(h^{\star})\\
				&\leq  L(\hat{h}^{\mathcal{D}_{N}}) - L_{\mathcal{D}_{N}}(\hat{h}^{\mathcal{D}_{N}}) + L_{\mathcal{D}_{N}}(h^{\star}) - L(h^{\star})\\
				&\leq 2 \ \sup\limits_{h \in \mathcal{H}} \text{\textbar}L(h) - L_{\mathcal{D}_{N}}(h)\text{\textbar}.
			\end{align*}
			For the second one, analogous to the deduction above, we have that
			\begin{align*}
				\frac{L(\hat{h}^{\mathcal{D}_{N}}) - L(h^{\star})}{L(\hat{h}^{\mathcal{D}_{N}})} &\leq  \frac{L(\hat{h}^{\mathcal{D}_{N}}) - L_{\mathcal{D}_{N}}(\hat{h}^{\mathcal{D}_{N}})}{L(\hat{h}^{\mathcal{D}_{N}})} + \frac{L_{\mathcal{D}_{N}}(h^{\star}) - L(h^{\star})}{L(h^{\star})}\\
				&\leq 2 \ \sup\limits_{h \in \mathcal{H}} \frac{\text{\textbar}L(h) - L_{\mathcal{D}_{N}}(h)\text{\textbar}}{L(h)},
			\end{align*}
			since $L(\hat{h}^{\mathcal{D}_{N}}) \geq L(h^{\star})$. The inequalities \eqref{firstPinequality} and \eqref{secondPinequality} are direct from \eqref{first_inequality} and \eqref{second_inequality}.
		\end{proof}
		
		Combining Lemma \ref{lemmaTypeItoII} with Corollaries \ref{cor2TypeI} and \ref{cor3TypeI} we obtain the consistency of type II estimation error, when $d_{VC}(\mathcal{H}) < \infty$ and the loss function is bounded, what also concerns binary loss functions.
		
		\begin{corollary}
			\label{cor1TypeII}
			Fix a hypotheses space $\mathcal{H}$ and a loss function $\ell: \mathcal{Z} \times \mathcal{H} \mapsto \mathbb{R}_{+}$, with $0 \leq \ell(z,h) \leq C$ for all $z \in \mathcal{Z}, h \in \mathcal{H}$. Then,
			\begin{equation}
				\label{Ap1Eq7}
				\mathbb{P}\left(L(\hat{h}^{\mathcal{D}_{N}}) - L(h^{\star}) > \epsilon\right) \leq 8 \ \exp\left\{d_{VC}(\mathcal{H}) \left(1 + \ln \frac{N}{d_{VC}(\mathcal{H})}\right) - N \frac{\epsilon^2}{128C^{2}}\right\}.
			\end{equation}
			In particular, if $d_{VC}(\mathcal{H}) < \infty$, not only \eqref{Ap1Eq7} converges to zero, but also
			\begin{equation*}
				L(\hat{h}^{\mathcal{D}_{N}}) - L(h^{\star}) \xrightarrow[N \to \infty]{} 0,
			\end{equation*}
			with probability one by Borel-Cantelli Lemma.
		\end{corollary}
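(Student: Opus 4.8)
The plan is to derive Corollary~\ref{cor1TypeII} directly from the deviation bound on type~I estimation error for bounded losses together with the elementary comparison between types~I and~II estimation errors. First I would invoke inequality~\eqref{firstPinequality} of Lemma~\ref{lemmaTypeItoII}, namely
\begin{equation*}
	\mathbb{P}\left(L(\hat{h}^{\mathcal{D}_{N}}) - L(h^{\star}) > \epsilon\right) \leq \mathbb{P}\left(\sup\limits_{h \in \mathcal{H}} \text{\textbar}L(h) - L_{\mathcal{D}_{N}}(h)\text{\textbar} > \epsilon/2\right),
\end{equation*}
which reduces the control of type~II estimation error to the uniform deviation of the empirical risk already studied for bounded loss functions.

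Next I would apply Corollary~\ref{cor3TypeI} with the margin $\epsilon/2$ in place of $\epsilon$. Since $0 \leq \ell(z,h) \leq C$, that corollary gives
\begin{equation*}
	\mathbb{P}\left(\sup\limits_{h \in \mathcal{H}} \text{\textbar}L(h) - L_{\mathcal{D}_{N}}(h)\text{\textbar} > \epsilon/2\right) \leq 8 \exp\left\{d_{VC}(\mathcal{H})\left(1 + \ln \frac{N}{d_{VC}(\mathcal{H})}\right) - N \frac{(\epsilon/2)^{2}}{32C^{2}}\right\},
\end{equation*}
and the exponent collapses to $d_{VC}(\mathcal{H})\left(1 + \ln(N/d_{VC}(\mathcal{H}))\right) - N\epsilon^{2}/(128C^{2})$, which is exactly~\eqref{Ap1Eq7}. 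Chaining the two displays yields the asserted inequality; the binary case is recovered by taking $C = 1$ (equivalently, by using Corollary~\ref{cor2TypeI}).

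Finally, for the almost sure convergence I would observe that, for fixed $\epsilon > 0$ and $d_{VC}(\mathcal{H}) < \infty$, the right-hand side of~\eqref{Ap1Eq7} is $\mathcal{O}\!\left(N^{d_{VC}(\mathcal{H})} \exp\{-N\epsilon^{2}/(128C^{2})\}\right)$, hence the series $\sum_{N \geq 1} \mathbb{P}\left(L(\hat{h}^{\mathcal{D}_{N}}) - L(h^{\star}) > \epsilon\right)$ converges, and the Borel--Cantelli Lemma \cite[Theorem~4.3]{billingsley2008} forces $L(\hat{h}^{\mathcal{D}_{N}}) - L(h^{\star}) \to 0$ with probability one. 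There is no genuine obstacle in this proof; the only point needing a line of care is verifying that the polynomial factor arising from the shatter-coefficient estimate of Theorem~\ref{theorem_shaterDVC} is dominated by the exponential decay in $N$, so that summability over $N$ (and thus the applicability of Borel--Cantelli) is legitimate.
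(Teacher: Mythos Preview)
Your proposal is correct and follows exactly the approach the paper takes: the paper simply states that the result is obtained by combining Lemma~\ref{lemmaTypeItoII} with Corollaries~\ref{cor2TypeI} and~\ref{cor3TypeI}, which is precisely the chain you spell out (inequality~\eqref{firstPinequality} followed by Corollary~\ref{cor3TypeI} with $\epsilon/2$, then Borel--Cantelli). Your write-up is in fact more explicit than the paper's, which gives no further details beyond naming the ingredients.
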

		
		Finally, combining Lemma \ref{lemmaTypeItoII} with Corollary \ref{convergence_relativeTI}, we obtain the consistency of relative type II estimation error when $d_{VC}(\mathcal{H}) < \infty$, the loss function is unbounded, and $P$ satisfies \eqref{tauStarAp}.
		
		\begin{corollary}
			\label{cor2TypeII}
			Fix a hypotheses space $\mathcal{H}$, an unbounded loss function $\ell$ and $\epsilon > 0$. The following holds:
			\begin{itemize}
				\item If \eqref{tauStarAp} holds for a $p \geq 4$ fixed, then
				\begin{align*}
					\mathbb{P}\Bigg(&\frac{L(\hat{h}^{\mathcal{D}_{N}}) - L(h^{\star})}{L(\hat{h}^{\mathcal{D}_{N}})} > \tau^{\star} \Lambda(\sqrt{p}) \epsilon\Bigg) \\
					&< 12 \exp\left\{d_{VC}(\mathcal{H})\left(1 + \ln \frac{2N}{d_{VC}(\mathcal{H})}\right) - \frac{\epsilon^{2}(1-\epsilon/2)^{2}N^{1 - \frac{2}{\sqrt{p}} + \frac{2}{p}}}{16}\right\}
				\end{align*}
				
				\item If \eqref{tauStarAp} holds for a $1 < p < 4$ fixed, then
				\begin{align*}
					\mathbb{P}\Bigg(&\frac{L(\hat{h}^{\mathcal{D}_{N}}) - L(h^{\star})}{L(\hat{h}^{\mathcal{D}_{N}})} > \tau^{\star} \Gamma\left(\sqrt{p},\frac{\epsilon}{N^{\frac{1}{\sqrt{p}} - \frac{1}{p}}}\right)\epsilon\Bigg) \\
					&< 12 \exp\left\{d_{VC}(\mathcal{H})\left(1 + \ln \frac{2N}{d_{VC}(\mathcal{H})}\right) - \frac{\epsilon^{2}N^{\frac{2(\sqrt{p} - 1)}{\sqrt{p}} - \frac{2}{\sqrt{p}} + \frac{2}{p}}}{2^{\frac{\sqrt{p} + 6}{2}}} \right\}.
				\end{align*}
			\end{itemize}
			In any case, if $d_{VC}(\mathcal{H}) < \infty$, then, by Borel-Cantelli Lemma,
			\begin{equation*}
				\sup\limits_{h \in \mathcal{H}} \frac{L(\hat{h}^{\mathcal{D}_{N}}) - L(h^{\star})}{L(\hat{h}^{\mathcal{D}_{N}})} \xrightarrow[N \to \infty]{} 0,
			\end{equation*}
			with probability one.
		\end{corollary}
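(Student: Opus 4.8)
The plan is to derive Corollary \ref{cor2TypeII} by directly combining the type~II--to--type~I reduction of Lemma \ref{lemmaTypeItoII} with the relative type~I bound of Corollary \ref{convergence_relativeTI}; no new probabilistic estimate is needed, only a careful passage of constants. Since the loss satisfies $\ell(z,h)\ge 1$, inequality \eqref{secondPinequality} gives, for every threshold $t>0$,
\[
\mathbb{P}\left(\frac{L(\hat{h}^{\mathcal{D}_{N}}) - L(h^{\star})}{L(\hat{h}^{\mathcal{D}_{N}})} > t\right) \leq \mathbb{P}\left(\sup_{h\in\mathcal{H}} \frac{|L(h)-L_{\mathcal{D}_{N}}(h)|}{L(h)} > \frac{t}{2}\right).
\]
First I would apply this with $t=\tau^{\star}\Lambda(\sqrt{p})\epsilon$ in the regime $p\ge 4$ (so that $\sqrt{p}\ge 2$ and the light-tail branch of Theorem \ref{theorem_unbounded}, hence Corollary \ref{convergence_relativeTI}, is available for the transformed loss $\ell^{\prime}=\ell^{\sqrt p}$), and with $t=\tau^{\star}\Gamma(\sqrt{p},\epsilon/N^{1/\sqrt p-1/p})\epsilon$ in the regime $1<p<4$, and then invoke Corollary \ref{convergence_relativeTI} with its free parameter set to $\epsilon/2$.

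The remaining work is bookkeeping. Replacing $\epsilon$ by $\epsilon/2$ in the exponents of Corollary \ref{convergence_relativeTI} turns the factor $(1-\epsilon)^2$ into $(1-\epsilon/2)^2$ and multiplies the denominator of the $N$-dependent term by $4=2^2$, so the denominator $4$ becomes $16$ in the case $p\ge 4$ and $2^{(\sqrt p+2)/2}$ becomes $2^{(\sqrt p+6)/2}$ in the case $1<p<4$, while the leading constant $12$ and the shatter term $d_{VC}(\mathcal{H})\bigl(1+\ln(2N/d_{VC}(\mathcal{H}))\bigr)$ are untouched; in the $p\ge4$ case this reproduces the stated bound verbatim. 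For $1<p<4$ one also has to line up the second argument of $\Gamma$: I would use that $\Gamma(\sqrt p,\cdot)$ is monotone in its second argument and absorb the slack from the $\epsilon\mapsto\epsilon/2$ substitution into $\Gamma$'s definition (the constant $\varsigma$ there is arbitrary), exactly as the remark following Corollary \ref{convergence_relativeTI} does for the three-term bound. I would also recall that $d_{VC}(\mathcal{H},\ell)=d_{VC}(\mathcal{H},\ell^{\sqrt p})$, so every VC-dimension term collapses to $d_{VC}(\mathcal{H})$.

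Finally, for the almost sure convergence, since $d_{VC}(\mathcal{H})<\infty$ both right-hand sides have the form $c_1\exp\{c_2\ln N - c_3N^{\alpha}\}$ with $\alpha=1-2/\sqrt p+2/p=\bigl((\sqrt p-1)^2+1\bigr)/p>0$ in the first case and $\alpha=2(\sqrt p-1)/\sqrt p-2/\sqrt p+2/p=2(1-1/\sqrt p)^2>0$ in the second, so the bounds are summable over $N$ for every fixed $\epsilon>0$; the Borel--Cantelli Lemma \cite[Theorem~4.3]{billingsley2008} then gives $\tfrac{L(\hat h^{\mathcal{D}_N})-L(h^\star)}{L(\hat h^{\mathcal{D}_N})}\to 0$ with probability one, as in Corollary \ref{convergence_relativeTI}. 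I do not anticipate a genuine obstacle here: the only delicate point is the constant tracking together with the monotone handling of $\Gamma$, both of which merely repeat manoeuvres already performed for Corollary \ref{convergence_relativeTI}.
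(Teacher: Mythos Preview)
Your proposal is correct and matches the paper's approach exactly: the paper's proof consists of the single sentence ``combining Lemma \ref{lemmaTypeItoII} with Corollary \ref{convergence_relativeTI}'', and you carry out precisely this combination, with the constant tracking that the paper omits. One minor caveat: in the heavy-tail case $1<p<4$, the function $\Gamma(\sqrt p,\cdot)$ is \emph{decreasing} in its second argument (through $\ln(1/\epsilon)$), so the monotonicity goes the wrong way to pass from the threshold $\tau^\star\Gamma(\sqrt p,(\epsilon/2)/N^{1/\sqrt p-1/p})\epsilon$ that you actually obtain to the threshold $\tau^\star\Gamma(\sqrt p,\epsilon/N^{1/\sqrt p-1/p})\epsilon$ written in the statement; this appears to be a small slip in the paper's displayed constant rather than in your argument, and it does not affect the almost sure convergence.
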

		
		This ends the study of type II estimation error convergence.
		
		\section{Results of the experiments}
		\label{AppResults}
		
		% latex table generated in R 4.2.2 by xtable 1.8-4 package
		% Mon May  6 12:07:08 2024
		
		\begin{table}[ht]
			\centering
			\caption{Joint distributions considered in each example in Section \ref{SecPLLS}.} \label{jointDist}
			\resizebox{\linewidth}{!}{\begin{tabular}{|c|cc|cc|cc|cc|cc|cc|cc|}
					\hline
					\multirow{2}{*}{$x$} & \multicolumn{2}{c|}{Example 1} & \multicolumn{2}{c|}{Example 2}  & \multicolumn{2}{c|}{Example 3}  & \multicolumn{2}{c|}{Example 4} & \multicolumn{2}{c|}{Example 5} & \multicolumn{2}{c|}{Example 6} & \multicolumn{2}{c|}{Example 7}  \\ %\cline{2-15}
					& $p(0,x)$ & $p(1,x)$ & $p(0,x)$ & $p(1,x)$ & $p(0,x)$ & $p(1,x)$ & $p(0,x)$ & $p(1,x)$ & $p(0,x)$ & $p(1,x)$ & $p(0,x)$ & $p(1,x)$ & $p(0,x)$ & $p(1,x)$ \\ 
					\hline
					1 & 0.0612 & 0.0638 & 0.0600 & 0.0650 & 0.0563 & 0.0688 & 0.0500 & 0.0750 & 0.0375 & 0.0875 & 0.0250 & 0.1000 & 0.0125 & 0.1125 \\ 
					2 & 0.0638 & 0.0612 & 0.0650 & 0.0600 & 0.0688 & 0.0563 & 0.0750 & 0.0500 & 0.0875 & 0.0375 & 0.1000 & 0.0250 & 0.1125 & 0.0125 \\ 
					3 & 0.0612 & 0.0638 & 0.0600 & 0.0650 & 0.0563 & 0.0688 & 0.0500 & 0.0750 & 0.0375 & 0.0875 & 0.0250 & 0.1000 & 0.0125 & 0.1125 \\ 
					4 & 0.0638 & 0.0612 & 0.0650 & 0.0600 & 0.0688 & 0.0563 & 0.0750 & 0.0500 & 0.0875 & 0.0375 & 0.1000 & 0.0250 & 0.1125 & 0.0125 \\ 
					5 & 0.0612 & 0.0638 & 0.0600 & 0.0650 & 0.0563 & 0.0688 & 0.0500 & 0.0750 & 0.0375 & 0.0875 & 0.0250 & 0.1000 & 0.0125 & 0.1125 \\ 
					6 & 0.0638 & 0.0612 & 0.0650 & 0.0600 & 0.0688 & 0.0563 & 0.0750 & 0.0500 & 0.0875 & 0.0375 & 0.1000 & 0.0250 & 0.1125 & 0.0125 \\ 
					7 & 0.0612 & 0.0638 & 0.0600 & 0.0650 & 0.0563 & 0.0688 & 0.0500 & 0.0750 & 0.0375 & 0.0875 & 0.0250 & 0.1000 & 0.0125 & 0.1125 \\ 
					8 & 0.0638 & 0.0612 & 0.0650 & 0.0600 & 0.0688 & 0.0563 & 0.0750 & 0.0500 & 0.0875 & 0.0375 & 0.1000 & 0.0250 & 0.1125 & 0.0125 \\ 
					\hline $\epsilon^{\star}$ & \multicolumn{2}{c|}{0.0025}  & \multicolumn{2}{c|}{0.005}  & \multicolumn{2}{c|}{0.0125}  & \multicolumn{2}{c|}{0.025}  & \multicolumn{2}{c|}{0.050}  & \multicolumn{2}{c|}{0.075}  & \multicolumn{2}{c|}{0.100}  \\ 
					CE & \multicolumn{2}{c|}{0.693}  & \multicolumn{2}{c|}{0.692}  & \multicolumn{2}{c|}{0.688}  & \multicolumn{2}{c|}{0.673}  & \multicolumn{2}{c|}{0.611}  & \multicolumn{2}{c|}{0.500}  & \multicolumn{2}{c|}{0.325}  \\ 
					$L(h^{\star})$ & \multicolumn{2}{c|}{0.490}  & \multicolumn{2}{c|}{0.480}  & \multicolumn{2}{c|}{0.450}  & \multicolumn{2}{c|}{0.400}  & \multicolumn{2}{c|}{0.300}  & \multicolumn{2}{c|}{0.200}  & \multicolumn{2}{c|}{0.100}  \\ 
					\hline
					\multicolumn{15}{l}{\footnotesize $p(0,x) = \mathbb{P}(Y = 0,X = x)$, $p(1,x) = \mathbb{P}(Y = 1,X = x)$, CE: Conditional Entropy}
			\end{tabular}}
		\end{table}
		
		\begin{figure}[ht]
			\centering
			\includegraphics[width=\linewidth]{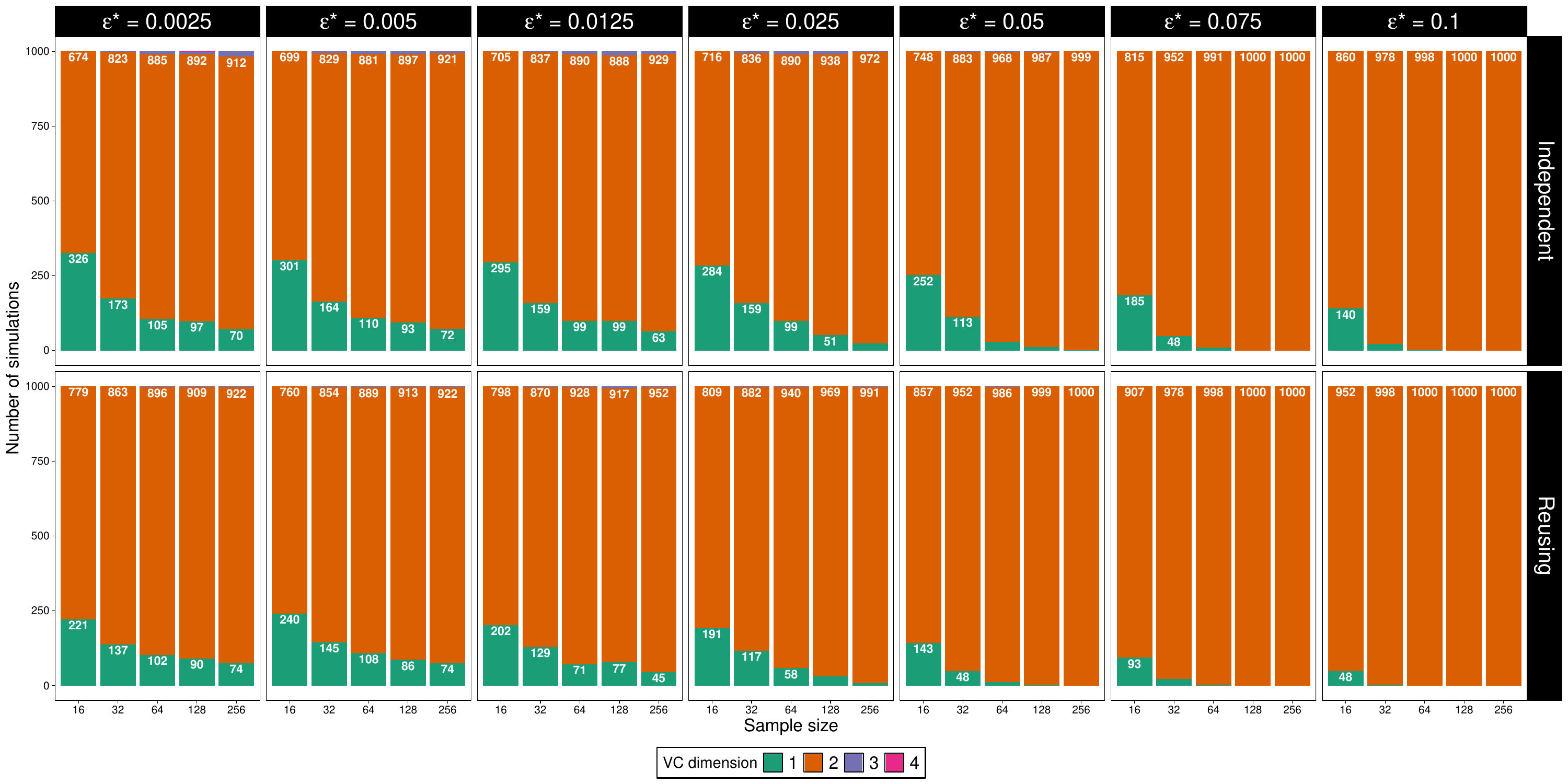}
			\caption{The frequency of $d_{VC}(\hat{\mathcal{M}})$ over the 1,000 samples in each case when learning via model selection in the whole partition lattice Learning Space in the example of Section \ref{SecPLLS}.} \label{fig_PLLS4}
		\end{figure}
		
		\begin{figure}[ht]
			\centering
			\includegraphics[width=\linewidth]{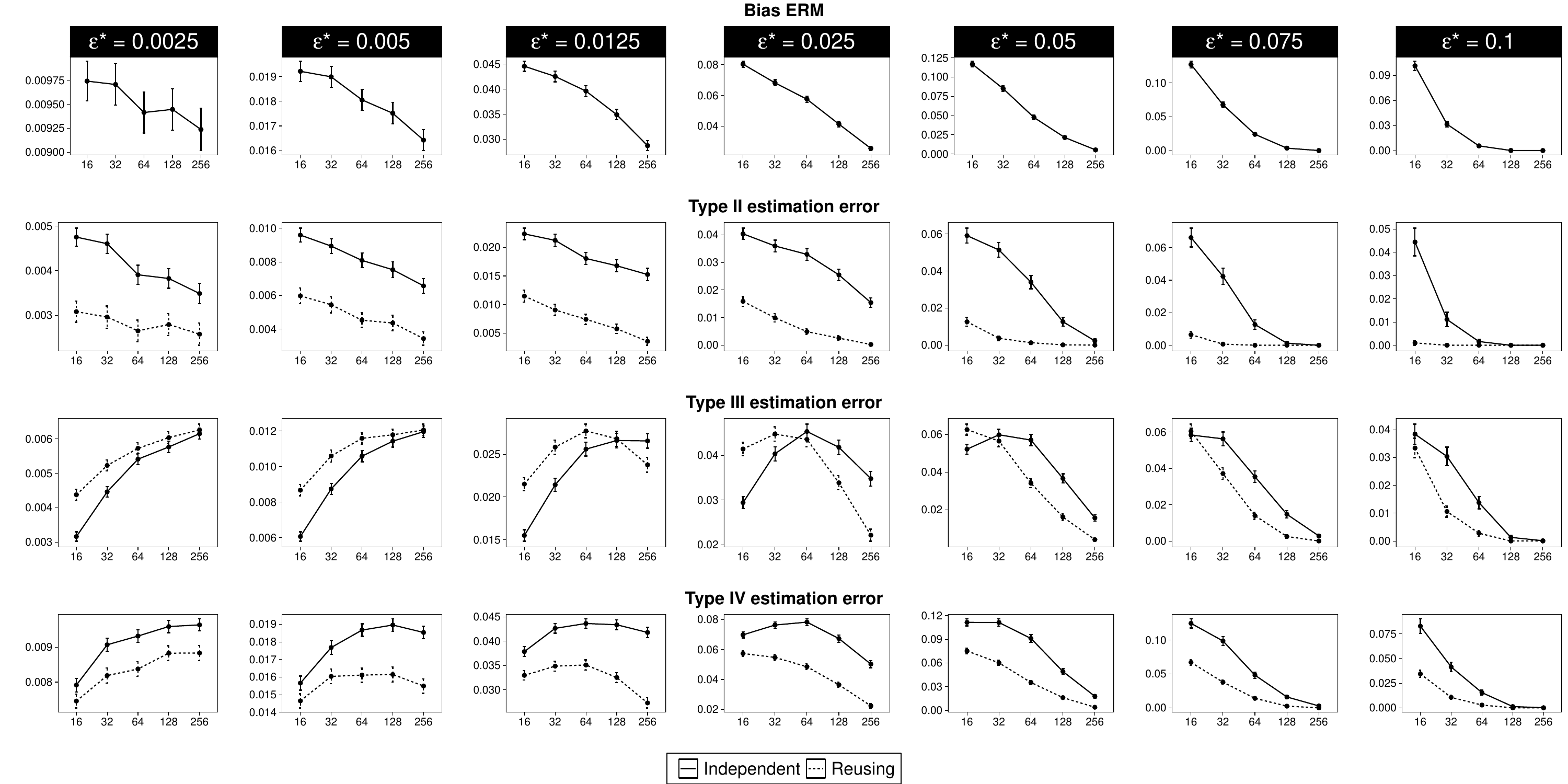}
			\caption{The average bias and 95\% confidence interval of the ERM hypotheses and types II, III, and IV estimation error over the 1,000 simulated samples for each case when learning via model selection in the models with VC dimension 2 in the partition lattice Learning Space in the example of Section \ref{SecPLLS}. These results are also presented in Table \ref{res_PLLS2}.} \label{fig_PLLS5}
		\end{figure}
		
		\footnotesize
		\begin{longtable}{lllllll}	
			\caption{\normalsize Average and standard error of the ERM hypotheses bias and estimation errors over the 1,000 samples simulated in Section \ref{SecPLLS} for learning in the whole partition lattice Learning Space.} \label{res_PLLS} \\
			\hline
			$\epsilon^{\star}$ & Learning & Size & Bias ERM & Type II & Type III & Type IV \\ 
			\hline
			\multirow{10}{*}{0.0025} & \multirow{5}{*}{Independent} & 16 & 0.0097 (1e-04) & 0.003 (1e-04) & 0.0059 (1e-04) & 0.0089 (1e-04) \\ 
			&  & 32 & 0.0097 (1e-04) & 0.0036 (1e-04) & 0.0059 (1e-04) & 0.0095 (1e-04) \\ 
			&  & 64 & 0.0094 (1e-04) & 0.0032 (1e-04) & 0.0063 (1e-04) & 0.0095 (1e-04) \\ 
			&  & 128 & 0.0094 (1e-04) & 0.0031 (1e-04) & 0.0066 (1e-04) & 0.0097 (1e-04) \\ 
			&  & 256 & 0.0092 (1e-04) & 0.0029 (1e-04) & 0.0067 (1e-04) & 0.0097 (1e-04) \\  \cline{2-7}
			& \multirow{5}{*}{Reusing} & 16 & 0.0097 (1e-04) & 0.0025 (1e-04) & 0.0059 (1e-04) & 0.0084 (1e-04) \\ 
			&  & 32 & 0.0097 (1e-04) & 0.0028 (1e-04) & 0.0062 (1e-04) & 0.009 (1e-04) \\ 
			&  & 64 & 0.0094 (1e-04) & 0.0027 (1e-04) & 0.0065 (1e-04) & 0.0092 (1e-04) \\ 
			&  & 128 & 0.0094 (1e-04) & 0.0025 (1e-04) & 0.0068 (1e-04) & 0.0093 (1e-04) \\ 
			&  & 256 & 0.0092 (1e-04) & 0.0024 (1e-04) & 0.0068 (1e-04) & 0.0093 (1e-04) \\ 
			\hline
			\multirow{10}{*}{0.005} & \multirow{5}{*}{Independent} & 16 & 0.0192 (2e-04) & 0.0064 (2e-04) & 0.0112 (2e-04) & 0.0177 (2e-04) \\ 
			&  & 32 & 0.019 (2e-04) & 0.007 (2e-04) & 0.0116 (2e-04) & 0.0185 (2e-04) \\ 
			&  & 64 & 0.0181 (2e-04) & 0.0067 (2e-04) & 0.0124 (2e-04) & 0.0191 (2e-04) \\ 
			&  & 128 & 0.0175 (2e-04) & 0.0062 (2e-04) & 0.013 (2e-04) & 0.0192 (2e-04) \\ 
			&  & 256 & 0.0164 (2e-04) & 0.0057 (2e-04) & 0.0131 (2e-04) & 0.0188 (2e-04) \\  \cline{2-7}
			& \multirow{5}{*}{Reusing} & 16 & 0.0192 (2e-04) & 0.0048 (2e-04) & 0.0116 (2e-04) & 0.0164 (2e-04) \\ 
			&  & 32 & 0.019 (2e-04) & 0.0056 (2e-04) & 0.0122 (2e-04) & 0.0178 (2e-04) \\ 
			&  & 64 & 0.0181 (2e-04) & 0.0046 (2e-04) & 0.013 (2e-04) & 0.0177 (2e-04) \\ 
			&  & 128 & 0.0175 (2e-04) & 0.0043 (2e-04) & 0.0133 (2e-04) & 0.0176 (2e-04) \\ 
			&  & 256 & 0.0164 (2e-04) & 0.0035 (2e-04) & 0.0134 (2e-04) & 0.017 (2e-04) \\ 
			\hline
			\multirow{10}{*}{0.0125} & \multirow{5}{*}{Independent} & 16 & 0.0446 (5e-04) & 0.0154 (5e-04) & 0.0279 (5e-04) & 0.0433 (4e-04) \\ 
			&  & 32 & 0.0426 (5e-04) & 0.0166 (6e-04) & 0.028 (4e-04) & 0.0446 (5e-04) \\ 
			&  & 64 & 0.0396 (6e-04) & 0.0154 (5e-04) & 0.0297 (4e-04) & 0.0451 (5e-04) \\ 
			&  & 128 & 0.0349 (5e-04) & 0.0141 (5e-04) & 0.0307 (4e-04) & 0.0448 (5e-04) \\ 
			&  & 256 & 0.0287 (5e-04) & 0.0136 (5e-04) & 0.0293 (4e-04) & 0.0429 (5e-04) \\  \cline{2-7}
			& \multirow{5}{*}{Reusing} & 16 & 0.0446 (5e-04) & 0.0102 (5e-04) & 0.0283 (5e-04) & 0.0385 (5e-04) \\ 
			&  & 32 & 0.0426 (5e-04) & 0.0104 (5e-04) & 0.0297 (4e-04) & 0.04 (5e-04) \\ 
			&  & 64 & 0.0396 (6e-04) & 0.0087 (4e-04) & 0.0305 (4e-04) & 0.0392 (5e-04) \\ 
			&  & 128 & 0.0349 (5e-04) & 0.0059 (4e-04) & 0.0309 (4e-04) & 0.0368 (5e-04) \\ 
			&  & 256 & 0.0287 (5e-04) & 0.0042 (3e-04) & 0.0282 (4e-04) & 0.0324 (5e-04) \\ 
			\hline
			\multirow{10}{*}{0.025} & \multirow{5}{*}{Independent} & 16 & 0.0804 (0.001) & 0.0287 (0.0011) & 0.0535 (0.0011) & 0.0822 (9e-04) \\ 
			&  & 32 & 0.0682 (0.001) & 0.0288 (0.0011) & 0.0538 (9e-04) & 0.0825 (0.001) \\ 
			&  & 64 & 0.0575 (0.001) & 0.0283 (0.0011) & 0.0535 (9e-04) & 0.0819 (0.0011) \\ 
			&  & 128 & 0.0413 (9e-04) & 0.024 (0.001) & 0.046 (9e-04) & 0.07 (0.0012) \\ 
			&  & 256 & 0.0255 (7e-04) & 0.0148 (9e-04) & 0.0365 (9e-04) & 0.0513 (0.0012) \\  \cline{2-7}
			& \multirow{5}{*}{Reusing} & 16 & 0.0804 (0.001) & 0.0165 (8e-04) & 0.0527 (0.001) & 0.0692 (0.0011) \\ 
			&  & 32 & 0.0682 (0.001) & 0.0119 (7e-04) & 0.0537 (9e-04) & 0.0656 (0.001) \\ 
			&  & 64 & 0.0575 (0.001) & 0.0093 (6e-04) & 0.0508 (9e-04) & 0.0601 (0.001) \\ 
			&  & 128 & 0.0413 (9e-04) & 0.0052 (4e-04) & 0.0434 (9e-04) & 0.0486 (9e-04) \\ 
			&  & 256 & 0.0255 (7e-04) & 0.0024 (2e-04) & 0.0334 (8e-04) & 0.0358 (9e-04) \\ 
			\hline
			\multirow{10}{*}{0.05} & \multirow{5}{*}{Independent} & 16 & 0.1169 (0.0019) & 0.0464 (0.002) & 0.095 (0.0022) & 0.1414 (0.0022) \\ 
			&  & 32 & 0.085 (0.0018) & 0.044 (0.002) & 0.0787 (0.002) & 0.1227 (0.0025) \\ 
			&  & 64 & 0.0475 (0.0015) & 0.0321 (0.0018) & 0.0616 (0.0017) & 0.0937 (0.0025) \\ 
			&  & 128 & 0.0215 (0.001) & 0.0124 (0.0012) & 0.0384 (0.0014) & 0.0508 (0.002) \\ 
			&  & 256 & 0.0054 (5e-04) & 0.0023 (5e-04) & 0.0158 (9e-04) & 0.0181 (0.0011) \\  \cline{2-7}
			& \multirow{5}{*}{Reusing} & 16 & 0.1169 (0.0019) & 0.0169 (0.001) & 0.0836 (0.002) & 0.1005 (0.0021) \\ 
			&  & 32 & 0.085 (0.0018) & 0.0146 (9e-04) & 0.0703 (0.0018) & 0.0849 (0.0019) \\ 
			&  & 64 & 0.0475 (0.0015) & 0.0068 (6e-04) & 0.0562 (0.0016) & 0.063 (0.0017) \\ 
			&  & 128 & 0.0215 (0.001) & 0.0026 (4e-04) & 0.0352 (0.0012) & 0.0378 (0.0013) \\ 
			&  & 256 & 0.0054 (5e-04) & 3e-04 (1e-04) & 0.0155 (9e-04) & 0.0159 (9e-04) \\ 
			\hline
			\multirow{10}{*}{0.075} & \multirow{5}{*}{Independent} & 16 & 0.1273 (0.0025) & 0.0552 (0.0029) & 0.106 (0.0034) & 0.1612 (0.0039) \\ 
			&  & 32 & 0.0677 (0.002) & 0.0403 (0.0025) & 0.0679 (0.0025) & 0.1082 (0.0036) \\ 
			&  & 64 & 0.0241 (0.0013) & 0.0124 (0.0015) & 0.0377 (0.0018) & 0.05 (0.0024) \\ 
			&  & 128 & 0.0036 (5e-04) & 0.0013 (4e-04) & 0.0148 (0.001) & 0.0161 (0.0012) \\ 
			&  & 256 & 1e-04 (1e-04) & 0 (0) & 0.0029 (5e-04) & 0.0029 (5e-04) \\  \cline{2-7}
			& \multirow{5}{*}{Reusing} & 16 & 0.1273 (0.0025) & 0.0173 (0.0011) & 0.0842 (0.0029) & 0.1014 (0.003) \\ 
			&  & 32 & 0.0677 (0.002) & 0.0119 (9e-04) & 0.0575 (0.0022) & 0.0695 (0.0023) \\ 
			&  & 64 & 0.0241 (0.0013) & 0.0038 (5e-04) & 0.0352 (0.0016) & 0.0389 (0.0017) \\ 
			&  & 128 & 0.0036 (5e-04) & 8e-04 (2e-04) & 0.0146 (0.001) & 0.0153 (0.001) \\ 
			&  & 256 & 1e-04 (1e-04) & 0 (0) & 0.0029 (5e-04) & 0.0029 (5e-04) \\ 
			\hline
			\multirow{10}{*}{0.1} & \multirow{5}{*}{Independent} & 16 & 0.1014 (0.0028) & 0.0383 (0.0029) & 0.0864 (0.0043) & 0.1247 (0.0051) \\ 
			&  & 32 & 0.0316 (0.0017) & 0.0105 (0.0015) & 0.0382 (0.0024) & 0.0487 (0.003) \\ 
			&  & 64 & 0.0056 (7e-04) & 0.0016 (5e-04) & 0.0144 (0.0013) & 0.016 (0.0014) \\ 
			&  & 128 & 1e-04 (1e-04) & 0 (0) & 0.0013 (4e-04) & 0.0013 (4e-04) \\ 
			&  & 256 & 0 (0) & 0 (0) & 1e-04 (1e-04) & 1e-04 (1e-04) \\  \cline{2-7}
			& \multirow{5}{*}{Reusing} & 16 & 0.1014 (0.0028) & 0.0128 (0.0011) & 0.053 (0.003) & 0.0658 (0.0032) \\ 
			&  & 32 & 0.0316 (0.0017) & 0.0038 (6e-04) & 0.0296 (0.0017) & 0.0334 (0.0018) \\ 
			&  & 64 & 0.0056 (7e-04) & 0.0013 (4e-04) & 0.0137 (0.0011) & 0.015 (0.0012) \\ 
			&  & 128 & 1e-04 (1e-04) & 0 (0) & 0.0013 (4e-04) & 0.0013 (4e-04) \\ 
			&  & 256 & 0 (0) & 0 (0) & 1e-04 (1e-04) & 1e-04 (1e-04) \\ 
			\hline
		\end{longtable}
		
		\begin{longtable}{lllllll}
			\caption{\normalsize Average and standard error of the ERM hypotheses bias and estimation errors over the 1,000 samples simulated in Section \ref{SecPLLS} for learning in the partition lattice Learning Space restricted to the models with VC dimension 2.} \label{res_PLLS2} \\
			\hline
			$\epsilon^{\star}$ & Learning & Size & Bias ERM & Type II & Type III & Type IV \\ 
			\hline
			\multirow{10}{*}{0.0025} & \multirow{5}{*}{Independent} & 16 & 0.0097 (1e-04) & 0.0048 (1e-04) & 0.0032 (1e-04) & 0.0079 (1e-04) \\ 
			&  & 32 & 0.0097 (1e-04) & 0.0046 (1e-04) & 0.0045 (1e-04) & 0.0091 (1e-04) \\ 
			&  & 64 & 0.0094 (1e-04) & 0.0039 (1e-04) & 0.0054 (1e-04) & 0.0093 (1e-04) \\ 
			&  & 128 & 0.0094 (1e-04) & 0.0038 (1e-04) & 0.0058 (1e-04) & 0.0096 (1e-04) \\ 
			&  & 256 & 0.0092 (1e-04) & 0.0035 (1e-04) & 0.0062 (1e-04) & 0.0096 (1e-04) \\ \cline{2-7}
			& \multirow{5}{*}{Reusing} & 16 & 0.0097 (1e-04) & 0.0031 (1e-04) & 0.0044 (1e-04) & 0.0075 (1e-04) \\ 
			&  & 32 & 0.0097 (1e-04) & 0.003 (1e-04) & 0.0052 (1e-04) & 0.0082 (1e-04) \\ 
			&  & 64 & 0.0094 (1e-04) & 0.0026 (1e-04) & 0.0057 (1e-04) & 0.0084 (1e-04) \\ 
			&  & 128 & 0.0094 (1e-04) & 0.0028 (1e-04) & 0.006 (1e-04) & 0.0088 (1e-04) \\ 
			&  & 256 & 0.0092 (1e-04) & 0.0026 (1e-04) & 0.0063 (1e-04) & 0.0088 (1e-04) \\ 
			\hline
			\multirow{10}{*}{0.005} & \multirow{5}{*}{Independent} & 16 & 0.0192 (2e-04) & 0.0096 (2e-04) & 0.0061 (1e-04) & 0.0157 (2e-04) \\ 
			&  & 32 & 0.019 (2e-04) & 0.0089 (2e-04) & 0.0087 (2e-04) & 0.0177 (2e-04) \\ 
			&  & 64 & 0.0181 (2e-04) & 0.0081 (2e-04) & 0.0106 (2e-04) & 0.0187 (2e-04) \\ 
			&  & 128 & 0.0175 (2e-04) & 0.0075 (2e-04) & 0.0114 (2e-04) & 0.019 (2e-04) \\ 
			&  & 256 & 0.0164 (2e-04) & 0.0066 (2e-04) & 0.012 (2e-04) & 0.0185 (2e-04) \\  \cline{2-7}
			& \multirow{5}{*}{Reusing} & 16 & 0.0192 (2e-04) & 0.006 (2e-04) & 0.0087 (2e-04) & 0.0147 (2e-04) \\ 
			&  & 32 & 0.019 (2e-04) & 0.0055 (2e-04) & 0.0106 (2e-04) & 0.016 (2e-04) \\ 
			&  & 64 & 0.0181 (2e-04) & 0.0045 (2e-04) & 0.0116 (2e-04) & 0.0161 (2e-04) \\ 
			&  & 128 & 0.0175 (2e-04) & 0.0044 (2e-04) & 0.0118 (2e-04) & 0.0162 (2e-04) \\ 
			&  & 256 & 0.0164 (2e-04) & 0.0034 (2e-04) & 0.0121 (2e-04) & 0.0155 (2e-04) \\ 
			\hline
			\multirow{10}{*}{0.0125} & \multirow{5}{*}{Independent} & 16 & 0.0446 (5e-04) & 0.0224 (5e-04) & 0.0155 (3e-04) & 0.0379 (5e-04) \\ 
			&  & 32 & 0.0426 (5e-04) & 0.0212 (6e-04) & 0.0214 (4e-04) & 0.0427 (5e-04) \\ 
			&  & 64 & 0.0396 (6e-04) & 0.0181 (5e-04) & 0.0256 (4e-04) & 0.0437 (5e-04) \\ 
			&  & 128 & 0.0349 (5e-04) & 0.0168 (5e-04) & 0.0266 (4e-04) & 0.0434 (5e-04) \\ 
			&  & 256 & 0.0287 (5e-04) & 0.0152 (5e-04) & 0.0266 (4e-04) & 0.0418 (5e-04) \\  \cline{2-7}
			& \multirow{5}{*}{Reusing} & 16 & 0.0446 (5e-04) & 0.0115 (5e-04) & 0.0215 (4e-04) & 0.033 (5e-04) \\ 
			&  & 32 & 0.0426 (5e-04) & 0.009 (5e-04) & 0.0258 (4e-04) & 0.0349 (5e-04) \\ 
			&  & 64 & 0.0396 (6e-04) & 0.0074 (5e-04) & 0.0277 (4e-04) & 0.0351 (5e-04) \\ 
			&  & 128 & 0.0349 (5e-04) & 0.0057 (4e-04) & 0.0268 (4e-04) & 0.0325 (5e-04) \\ 
			&  & 256 & 0.0287 (5e-04) & 0.0035 (4e-04) & 0.0238 (4e-04) & 0.0273 (5e-04) \\ 
			\hline
			\multirow{10}{*}{0.025} & \multirow{5}{*}{Independent} & 16 & 0.0804 (0.001) & 0.0404 (0.001) & 0.0294 (7e-04) & 0.0698 (0.0011) \\ 
			&  & 32 & 0.0682 (0.001) & 0.036 (0.0011) & 0.0404 (8e-04) & 0.0763 (0.0011) \\ 
			&  & 64 & 0.0575 (0.001) & 0.0329 (0.0011) & 0.0454 (8e-04) & 0.0783 (0.0011) \\ 
			&  & 128 & 0.0413 (9e-04) & 0.0255 (0.001) & 0.0418 (8e-04) & 0.0673 (0.0012) \\ 
			&  & 256 & 0.0255 (7e-04) & 0.0154 (9e-04) & 0.0348 (8e-04) & 0.0502 (0.0012) \\  \cline{2-7}
			& \multirow{5}{*}{Reusing} & 16 & 0.0804 (0.001) & 0.0159 (9e-04) & 0.0414 (8e-04) & 0.0573 (0.001) \\ 
			&  & 32 & 0.0682 (0.001) & 0.0099 (8e-04) & 0.0448 (8e-04) & 0.0547 (0.001) \\ 
			&  & 64 & 0.0575 (0.001) & 0.0049 (6e-04) & 0.0436 (9e-04) & 0.0485 (0.001) \\ 
			&  & 128 & 0.0413 (9e-04) & 0.0026 (4e-04) & 0.0339 (8e-04) & 0.0364 (9e-04) \\ 
			&  & 256 & 0.0255 (7e-04) & 3e-04 (1e-04) & 0.0221 (7e-04) & 0.0224 (7e-04) \\ 
			\hline
			\multirow{10}{*}{0.05} & \multirow{5}{*}{Independent} & 16 & 0.1169 (0.0019) & 0.0591 (0.0021) & 0.0522 (0.0013) & 0.1113 (0.0023) \\ 
			&  & 32 & 0.085 (0.0018) & 0.0514 (0.002) & 0.0598 (0.0015) & 0.1112 (0.0024) \\ 
			&  & 64 & 0.0475 (0.0015) & 0.034 (0.0018) & 0.057 (0.0016) & 0.091 (0.0024) \\ 
			&  & 128 & 0.0215 (0.001) & 0.0127 (0.0012) & 0.0366 (0.0013) & 0.0493 (0.0019) \\ 
			&  & 256 & 0.0054 (5e-04) & 0.0023 (5e-04) & 0.0156 (9e-04) & 0.0179 (0.0011) \\  \cline{2-7}
			& \multirow{5}{*}{Reusing} & 16 & 0.1169 (0.0019) & 0.0126 (0.0012) & 0.0626 (0.0015) & 0.0752 (0.0018) \\ 
			&  & 32 & 0.085 (0.0018) & 0.0037 (6e-04) & 0.0565 (0.0015) & 0.0602 (0.0017) \\ 
			&  & 64 & 0.0475 (0.0015) & 0.0013 (4e-04) & 0.0341 (0.0013) & 0.0354 (0.0013) \\ 
			&  & 128 & 0.0215 (0.001) & 2e-04 (1e-04) & 0.0161 (9e-04) & 0.0162 (9e-04) \\ 
			&  & 256 & 0.0054 (5e-04) & 0 (0) & 0.0041 (4e-04) & 0.0041 (4e-04) \\ 
			\hline
			\multirow{10}{*}{0.075} & \multirow{5}{*}{Independent} & 16 & 0.1273 (0.0025) & 0.0662 (0.0029) & 0.0584 (0.0019) & 0.1246 (0.0034) \\ 
			&  & 32 & 0.0677 (0.002) & 0.0423 (0.0025) & 0.0563 (0.0019) & 0.0986 (0.0034) \\ 
			&  & 64 & 0.0241 (0.0013) & 0.0127 (0.0015) & 0.0355 (0.0016) & 0.0482 (0.0023) \\ 
			&  & 128 & 0.0036 (5e-04) & 0.0013 (4e-04) & 0.0148 (0.001) & 0.0161 (0.0012) \\ 
			&  & 256 & 1e-04 (1e-04) & 0 (0) & 0.0029 (5e-04) & 0.0029 (5e-04) \\  \cline{2-7}
			& \multirow{5}{*}{Reusing} & 16 & 0.1273 (0.0025) & 0.0065 (0.0011) & 0.0605 (0.002) & 0.0671 (0.0022) \\ 
			&  & 32 & 0.0677 (0.002) & 7e-04 (3e-04) & 0.0372 (0.0016) & 0.0379 (0.0016) \\ 
			&  & 64 & 0.0241 (0.0013) & 0 (0) & 0.014 (0.001) & 0.014 (0.001) \\ 
			&  & 128 & 0.0036 (5e-04) & 0 (0) & 0.0026 (4e-04) & 0.0026 (4e-04) \\ 
			&  & 256 & 1e-04 (1e-04) & 0 (0) & 1e-04 (1e-04) & 1e-04 (1e-04) \\ 
			\hline
			\multirow{10}{*}{0.1} & \multirow{5}{*}{Independent} & 16 & 0.1014 (0.0028) & 0.0444 (0.003) & 0.0384 (0.0018) & 0.0828 (0.0038) \\ 
			&  & 32 & 0.0316 (0.0017) & 0.0111 (0.0016) & 0.0304 (0.0017) & 0.0415 (0.0025) \\ 
			&  & 64 & 0.0056 (7e-04) & 0.0016 (5e-04) & 0.0138 (0.0011) & 0.0154 (0.0013) \\ 
			&  & 128 & 1e-04 (1e-04) & 0 (0) & 0.0013 (4e-04) & 0.0013 (4e-04) \\ 
			&  & 256 & 0 (0) & 0 (0) & 1e-04 (1e-04) & 1e-04 (1e-04) \\  \cline{2-7}
			& \multirow{5}{*}{Reusing} & 16 & 0.1014 (0.0028) & 0.001 (5e-04) & 0.0334 (0.0018) & 0.0344 (0.0019) \\ 
			&  & 32 & 0.0316 (0.0017) & 0 (0) & 0.0106 (0.001) & 0.0106 (0.001) \\ 
			&  & 64 & 0.0056 (7e-04) & 0 (0) & 0.0028 (5e-04) & 0.0028 (5e-04) \\ 
			&  & 128 & 1e-04 (1e-04) & 0 (0) & 0 (0) & 0 (0) \\ 
			&  & 256 & 0 (0) & 0 (0) & 0 (0) & 0 (0) \\
			\hline
		\end{longtable}
		\normalsize 
		
		\begin{landscape}
		\tiny
		% latex table generated in R 4.4.2 by xtable 1.8-8 package
		% Mon Jun 22 09:44:43 2026
		\begin{longtable}{llrllllllll}
			\hline
			Sce. & $d$ & $n$ & Noise & ERM & LASSO & Ridge & VSLS & PLLS & Steps (PLLS) & Steps (VSLS) \\ 
			\hline
			\multirow{27}{*}{S1} & \multirow{9}{*}{100} & 140 & Easy & 0.000194 (3.75e-06) & 0.00102 (1.3e-05) & 0.0109 (0.000216) & \textbf{3.64e-05 (1.96e-06)} & 8.06e-05 (2.99e-06) & 510 & 323 \\ 
			 &  & 140 & Moderate & 0.0019 (3.24e-05) & 0.00076 (2.68e-05) & 0.0124 (0.000268) & \textbf{0.000386 (2.69e-05)} & 0.000758 (2.53e-05) & 510 & 326 \\ 
			 &  & 140 & Hard & 0.0189 (0.000284) & 0.00735 (0.00025) & 0.0291 (0.000539) & \textbf{0.00357 (0.000183)} & 0.0118 (0.000449) & 505 & 328 \\ 
			 \cline{3-11} &  & 175 & Easy & 0.000137 (2.48e-06) & 0.000981 (1.34e-05) & 0.0079 (0.000144) & \textbf{2.99e-05 (1.6e-06)} & 5.42e-05 (2.23e-06) & 522 & 298 \\ 
			 &  & 175 & Moderate & 0.0013 (2.06e-05) & 0.000633 (2.35e-05) & 0.00964 (0.00018) & \textbf{0.000277 (1.62e-05)} & 0.000533 (2.1e-05) & 535 & 303 \\ 
			 &  & 175 & Hard & 0.0136 (0.000223) & 0.00609 (0.000208) & 0.0215 (0.000375) & \textbf{0.00252 (0.000143)} & 0.00746 (0.000278) & 525 & 295 \\ 
			 \cline{3-11} &  & 210 & Easy & 0.000106 (1.79e-06) & 0.000964 (1.29e-05) & 0.00656 (9.51e-05) & \textbf{2.39e-05 (1.1e-06)} & 4.1e-05 (1.35e-06) & 504 & 271 \\ 
			 &  & 210 & Moderate & 0.00107 (1.9e-05) & 0.000532 (1.74e-05) & 0.00771 (0.000146) & \textbf{0.000232 (1.36e-05)} & 0.00042 (1.53e-05) & 491 & 278 \\ 
			 &  & 210 & Hard & 0.0106 (0.000166) & 0.00525 (0.000177) & 0.0181 (0.000365) & \textbf{0.00241 (0.000148)} & 0.00583 (0.000231) & 502 & 269 \\ 
			 \cline{2-11} &  \multirow{9}{*}{500} & 674 & Easy & 7.82e-06 (6.12e-08) & 0.00559 (3.33e-05) & 0.0145 (0.000194) & 1.94 (0.23) & \textbf{2.92e-06 (5.79e-08)} & 132 & 46 \\ 
			 &  & 674 & Moderate & 7.98e-05 (6.71e-07) & 0.00548 (3.46e-05) & 0.0147 (0.000239) & 2.24 (0.222) & \textbf{7.01e-05 (1.54e-05)} & 127 & 45 \\ 
			 &  & 674 & Hard & 0.000784 (6.35e-06) & 0.00451 (2.9e-05) & 0.0154 (0.000259) & 2.34 (0.246) & \textbf{0.000408 (1.25e-05)} & 139 & 44 \\ 
			 \cline{3-11} &  & 842 & Easy & 5.65e-06 (3.8e-08) & 0.00549 (3.38e-05) & 0.0107 (9.09e-05) & 2.07 (0.222) & \textbf{1.98e-06 (4.17e-08)} & 119 & 43 \\ 
			 &  & 842 & Moderate & 5.65e-05 (4.59e-07) & 0.00543 (3.29e-05) & 0.0105 (9.54e-05) & 1.94 (0.22) & \textbf{3.18e-05 (6.91e-06)} & 116 & 44 \\ 
			 &  & 842 & Hard & 0.000567 (3.88e-06) & 0.00435 (2.77e-05) & 0.0108 (0.000128) & 2.12 (0.227) & \textbf{0.000244 (9.07e-06)} & 132 & 43 \\ 
			 \cline{3-11} &  & 1010 & Easy & \textbf{4.45e-06 (3.14e-08)} & 0.00553 (3.52e-05) & 0.00919 (6.27e-05) & 2.2 (0.219) & 2.94e-05 (1.26e-05) & 113 & 40 \\ 
			 &  & 1010 & Moderate & 4.35e-05 (3.15e-07) & 0.00541 (3.47e-05) & 0.00914 (5.93e-05) & 2.18 (0.215) & \textbf{2.13e-05 (4.24e-06)} & 111 & 40 \\ 
			 &  & 1010 & Hard & 0.00044 (3.09e-06) & 0.00432 (2.87e-05) & 0.00865 (8.66e-05) & 1.93 (0.21) & \textbf{0.000201 (2.09e-05)} & 130 & 42 \\ 
			 \cline{2-11} &  \multirow{9}{*}{1000} & 1340 & Easy & \textbf{1.99e-06 (1.23e-08)} & 0.011 (6.33e-05) & 0.0254 (0.000171) & 8.63 (0.136) & 0.00115 (0.000231) & 115 & 28 \\ 
			 &  & 1340 & Moderate & \textbf{1.99e-05 (1.24e-07)} & 0.011 (6.12e-05) & 0.0254 (0.00015) & 7.78 (0.0936) & 0.000578 (7.53e-05) & 129 & 37 \\ 
			 &  & 1340 & Hard & \textbf{0.000198 (1.22e-06)} & 0.0107 (6.08e-05) & 0.025 (0.000162) & 8.3 (0.121) & 0.000482 (3.43e-05) & 137 & 32 \\ 
			 \cline{3-11} &  & 1675 & Easy & \textbf{1.42e-06 (8.27e-09)} & 0.0109 (6.18e-05) & 0.0205 (0.000133) & 7.76 (0.0961) & 0.000509 (6.3e-05) & 115 & 37 \\ 
			 &  & 1675 & Moderate & \textbf{1.43e-05 (8.43e-08)} & 0.011 (6.87e-05) & 0.0203 (0.000113) & 8.05 (0.101) & 0.000646 (5.96e-05) & 109 & 34 \\ 
			 &  & 1675 & Hard & \textbf{0.000143 (6.83e-07)} & 0.0106 (6.18e-05) & 0.0201 (0.00012) & 7.66 (0.0765) & 0.000524 (3.85e-05) & 118 & 38 \\ 
			 \cline{3-11} &  & 2010 & Easy & \textbf{1.09e-06 (5.97e-09)} & 0.0109 (6.15e-05) & 0.0178 (9.64e-05) & 8.46 (0.0947) & 0.00146 (0.000418) & 95 & 29 \\ 
			 &  & 2010 & Moderate & \textbf{1.1e-05 (5.28e-08)} & 0.0108 (6.2e-05) & 0.0178 (9.5e-05) & 8.52 (0.0961) & 0.000834 (0.000108) & 97 & 28 \\ 
			 &  & 2010 & Hard & \textbf{0.000111 (6.22e-07)} & 0.0106 (6.09e-05) & 0.0175 (9.67e-05) & 8.21 (0.106) & 0.000816 (6.69e-05) & 97 & 30 \\ 
			\hline \multirow{27}{*}{S2} & \multirow{9}{*}{100} & 140 & Easy & 0.00019 (3.37e-06) & 7.71e-05 (2.43e-06) & 0.000363 (7.94e-06) & \textbf{3.59e-05 (2.12e-06)} & 5.57e-05 (3.03e-06) & 557 & 324 \\ 
			 &  & 140 & Moderate & 0.00185 (3.48e-05) & 0.000776 (2.57e-05) & 0.0022 (4.61e-05) & \textbf{0.000333 (1.84e-05)} & 0.000533 (4.1e-05) & 532 & 321 \\ 
			 &  & 140 & Hard & 0.019 (0.000341) & 0.00765 (0.000246) & 0.0158 (0.000373) & \textbf{0.00587 (0.000443)} & 0.0175 (0.000649) & 533 & 338 \\ 
			 \cline{3-11} &  & 175 & Easy & 0.000134 (2.31e-06) & 6.24e-05 (2.12e-06) & 0.000266 (4.99e-06) & \textbf{2.79e-05 (1.3e-06)} & 3.24e-05 (1.81e-06) & 540 & 298 \\ 
			 &  & 175 & Moderate & 0.00131 (2.3e-05) & 0.000621 (1.95e-05) & 0.0016 (2.84e-05) & \textbf{0.000273 (1.4e-05)} & 0.000343 (1.61e-05) & 538 & 303 \\ 
			 &  & 175 & Hard & 0.0135 (0.000231) & 0.00642 (0.00023) & 0.0136 (0.000307) & \textbf{0.00393 (0.000286)} & 0.0117 (0.000433) & 533 & 303 \\ 
			 \cline{3-11} &  & 210 & Easy & 0.000107 (1.92e-06) & 5.11e-05 (1.78e-06) & 0.000215 (3.72e-06) & 2.16e-05 (1.2e-06) & \textbf{2.16e-05 (9.61e-07)} & 537 & 274 \\ 
			 &  & 210 & Moderate & 0.00103 (1.62e-05) & 0.000504 (1.55e-05) & 0.00128 (2.62e-05) & \textbf{0.000236 (1.31e-05)} & 0.000258 (1.58e-05) & 547 & 276 \\ 
			 &  & 210 & Hard & 0.0111 (0.00017) & 0.00561 (0.000212) & 0.0113 (0.000275) & \textbf{0.00299 (0.00025)} & 0.00778 (0.000335) & 531 & 272 \\ 
			 \cline{2-11} &  \multirow{9}{*}{500} & 674 & Easy & 7.96e-06 (6.6e-08) & 2.53e-06 (4.93e-08) & 1.85e-05 (1.98e-07) & 0.00238 (0.000266) & \textbf{7.88e-07 (3.28e-08)} & 173 & 45 \\ 
			 &  & 674 & Moderate & 7.97e-05 (6.27e-07) & 2.54e-05 (4.49e-07) & 8.82e-05 (8.37e-07) & 0.00233 (0.000248) & \textbf{8.4e-06 (3.28e-07)} & 160 & 46 \\ 
			 &  & 674 & Hard & 0.000789 (6.39e-06) & \textbf{0.000249 (3.98e-06)} & 0.000778 (7.51e-06) & 0.00082 (8.13e-05) & 0.000277 (1.28e-05) & 122 & 53 \\ 
			 \cline{3-11} &  & 842 & Easy & 5.64e-06 (4.42e-08) & 2e-06 (3.84e-08) & 1.29e-05 (1.4e-07) & 0.00235 (0.000241) & \textbf{4.52e-07 (1.63e-08)} & 141 & 43 \\ 
			 &  & 842 & Moderate & 5.72e-05 (4.48e-07) & 1.98e-05 (3.9e-07) & 6.49e-05 (6.01e-07) & 0.00186 (0.000206) & \textbf{4.78e-06 (1.97e-07)} & 142 & 46 \\ 
			 &  & 842 & Hard & 0.000567 (4.24e-06) & 0.000195 (3.22e-06) & 0.000584 (5.8e-06) & 0.00179 (0.000196) & \textbf{9.05e-05 (6.86e-06)} & 143 & 46 \\ 
			 \cline{3-11} &  & 1010 & Easy & 4.41e-06 (3.26e-08) & 1.73e-06 (3.28e-08) & 1.03e-05 (1.08e-07) & 0.00208 (0.000214) & \textbf{3.27e-07 (1.25e-08)} & 168 & 43 \\ 
			 &  & 1010 & Moderate & 4.43e-05 (2.89e-07) & 1.7e-05 (2.91e-07) & 5.11e-05 (4.05e-07) & 0.0022 (0.000221) & \textbf{3.62e-06 (1.4e-07)} & 134 & 41 \\ 
			 &  & 1010 & Hard & 0.000436 (3.02e-06) & 0.000169 (3.15e-06) & 0.000472 (4.14e-06) & 0.00201 (0.000192) & \textbf{4e-05 (2.68e-06)} & 129 & 43 \\ 
			 \cline{2-11} &  \multirow{9}{*}{1000} & 1340 & Easy & 2e-06 (1.1e-08) & 7.06e-07 (8.09e-09) & 5.17e-06 (4.59e-08) & 0.00237 (3.88e-05) & \textbf{1.66e-07 (4.49e-09)} & 150 & 30 \\ 
			 &  & 1340 & Moderate & 2.01e-05 (1.29e-07) & 5.9e-06 (7.6e-08) & 2.26e-05 (1.59e-07) & 0.0023 (2.97e-05) & \textbf{1.74e-06 (5.11e-08)} & 145 & 33 \\ 
			 &  & 1340 & Hard & 0.000198 (1.1e-06) & 5.91e-05 (6.83e-07) & 0.000198 (1.22e-06) & 0.00199 (6.8e-06) & \textbf{2.57e-05 (1.47e-06)} & 159 & 43 \\ 
			 \cline{3-11} &  & 1675 & Easy & 1.43e-06 (7.92e-09) & 5.65e-07 (6.89e-09) & 3.54e-06 (2.59e-08) & 0.00223 (2.47e-05) & \textbf{9.53e-08 (2.84e-09)} & 130 & 34 \\ 
			 &  & 1675 & Moderate & 1.42e-05 (7.57e-08) & 4.64e-06 (5.97e-08) & 1.61e-05 (9.63e-08) & 0.00215 (1.68e-05) & \textbf{1.16e-06 (3.78e-08)} & 134 & 37 \\ 
			 &  & 1675 & Hard & 0.000142 (7.27e-07) & 4.63e-05 (6.37e-07) & 0.000146 (1.1e-06) & 0.00225 (2.62e-05) & \textbf{9.25e-06 (2.84e-07)} & 129 & 34 \\ 
			 \cline{3-11} &  & 2010 & Easy & 1.11e-06 (5.74e-09) & 4.64e-07 (6.81e-09) & 2.71e-06 (2.07e-08) & 0.00233 (2.95e-05) & \textbf{6.83e-08 (2.29e-09)} & 118 & 31 \\ 
			 &  & 2010 & Moderate & 1.1e-05 (5.61e-08) & 3.81e-06 (4.82e-08) & 1.27e-05 (7.18e-08) & 0.0024 (2.77e-05) & \textbf{7.98e-07 (2.8e-08)} & 119 & 29 \\ 
			 &  & 2010 & Hard & 0.00011 (5.87e-07) & 3.93e-05 (5.13e-07) & 0.000115 (7.34e-07) & 0.0023 (2.35e-05) & \textbf{6.54e-06 (2.33e-07)} & 121 & 32 \\ 
			\hline \multirow{27}{*}{S3} & \multirow{9}{*}{100} & 140 & Easy & 0.000195 (3.85e-06) & 0.000699 (1.27e-05) & 0.00135 (3.32e-05) & 0.182 (0.0186) & \textbf{5.56e-05 (2.84e-06)} & 490 & 243 \\ 
			 &  & 140 & Moderate & 0.00192 (3.27e-05) & 0.00222 (3.9e-05) & 0.00306 (6.17e-05) & 0.181 (0.0172) & \textbf{0.000615 (4.83e-05)} & 488 & 257 \\ 
			 &  & 140 & Hard & \textbf{0.0186 (0.000352)} & 0.023 (0.000509) & 0.0222 (0.000538) & 0.26 (0.0111) & 0.0361 (0.000994) & 492 & 283 \\ 
			 \cline{3-11} &  & 175 & Easy & 0.000134 (2.27e-06) & 0.000492 (9.92e-06) & 0.000925 (2.19e-05) & 0.15 (0.0155) & \textbf{3.51e-05 (1.76e-06)} & 478 & 221 \\ 
			 &  & 175 & Moderate & 0.00136 (2.34e-05) & 0.00158 (2.83e-05) & 0.00222 (4.26e-05) & 0.147 (0.0152) & \textbf{0.000318 (1.51e-05)} & 481 & 222 \\ 
			 &  & 175 & Hard & \textbf{0.0137 (0.000269)} & 0.0167 (0.000353) & 0.0165 (0.000336) & 0.167 (0.0136) & 0.0235 (0.000776) & 474 & 237 \\ 
			 \cline{3-11} &  & 210 & Easy & 0.000105 (1.69e-06) & 0.000401 (8.08e-06) & 0.000703 (1.92e-05) & 0.0921 (0.0105) & \textbf{2.42e-05 (1.21e-06)} & 478 & 206 \\ 
			 &  & 210 & Moderate & 0.00106 (1.63e-05) & 0.0013 (2.31e-05) & 0.00177 (3.41e-05) & 0.134 (0.0139) & \textbf{0.000256 (1.15e-05)} & 463 & 200 \\ 
			 &  & 210 & Hard & \textbf{0.0105 (0.000179)} & 0.0126 (0.000251) & 0.0129 (0.000268) & 0.16 (0.0136) & 0.0149 (0.000546) & 463 & 210 \\ 
			 \cline{2-11} &  \multirow{9}{*}{500} & 674 & Easy & 7.73e-06 (6.71e-08) & 0.000187 (1.55e-06) & 0.000179 (1.55e-06) & 0.082 (0.000189) & \textbf{8.55e-07 (3.26e-08)} & 149 & 50 \\ 
			 &  & 674 & Moderate & 7.88e-05 (6.16e-07) & 0.000113 (1.11e-06) & 0.00015 (1.77e-06) & 0.082 (0.000187) & \textbf{9.18e-06 (3.74e-07)} & 165 & 46 \\ 
			 &  & 674 & Hard & 0.000781 (6.08e-06) & 0.000835 (7.16e-06) & 0.000858 (7.57e-06) & 0.0819 (0.000199) & \textbf{0.000579 (2.89e-05)} & 182 & 45 \\ 
			 \cline{3-11} &  & 842 & Easy & 5.64e-06 (4.42e-08) & 0.000144 (1.09e-06) & 0.00014 (1.03e-06) & 0.0807 (0.000221) & \textbf{5.39e-07 (2.17e-08)} & 139 & 45 \\ 
			 &  & 842 & Moderate & 5.61e-05 (4.44e-07) & 6.99e-05 (6.9e-07) & 0.000102 (1.06e-06) & 0.081 (0.000255) & \textbf{5.46e-06 (2.27e-07)} & 152 & 42 \\ 
			 &  & 842 & Hard & 0.000567 (3.79e-06) & 0.000608 (4.32e-06) & 0.000622 (5.35e-06) & 0.0809 (0.000219) & \textbf{0.000146 (1.04e-05)} & 159 & 44 \\ 
			 \cline{3-11} &  & 1010 & Easy & 4.37e-06 (3.43e-08) & 0.000124 (8.64e-07) & 0.00012 (7.14e-07) & 0.0802 (0.000262) & \textbf{3.84e-07 (1.45e-08)} & 149 & 41 \\ 
			 &  & 1010 & Moderate & 4.35e-05 (3.27e-07) & 5.35e-05 (4.87e-07) & 7.83e-05 (7.56e-07) & 0.0803 (0.000231) & \textbf{3.69e-06 (1.36e-07)} & 125 & 41 \\ 
			 &  & 1010 & Hard & 0.000443 (3.24e-06) & 0.000485 (3.95e-06) & 0.000494 (3.61e-06) & 0.0806 (0.000237) & \textbf{4.55e-05 (2.5e-06)} & 141 & 42 \\ 
			 \cline{2-11} &  \multirow{9}{*}{1000} & 1340 & Easy & 1.99e-06 (1.24e-08) & 9.76e-05 (6.14e-07) & 9.25e-05 (5.47e-07) & 0.0415 (2.94e-05) & \textbf{1.85e-07 (5.13e-09)} & 143 & 29 \\ 
			 &  & 1340 & Moderate & 1.98e-05 (1.05e-07) & 7.04e-05 (5.68e-07) & 6.64e-05 (4.94e-07) & 0.0415 (2.87e-05) & \textbf{1.86e-06 (5.46e-08)} & 143 & 30 \\ 
			 &  & 1340 & Hard & 0.000199 (1.14e-06) & 0.00021 (1.27e-06) & 0.000216 (1.3e-06) & 0.0415 (2.59e-05) & \textbf{3.78e-05 (2.25e-06)} & 150 & 37 \\ 
			 \cline{3-11} &  & 1675 & Easy & 1.43e-06 (8.2e-09) & 7.6e-05 (4.72e-07) & 7.39e-05 (4.17e-07) & 0.0413 (3.38e-05) & \textbf{1.17e-07 (3.1e-09)} & 126 & 34 \\ 
			 &  & 1675 & Moderate & 1.43e-05 (6.77e-08) & 4.95e-05 (4.11e-07) & 4.82e-05 (3.64e-07) & 0.0413 (3.03e-05) & \textbf{1.17e-06 (3.22e-08)} & 127 & 34 \\ 
			 &  & 1675 & Hard & 0.000142 (8.3e-07) & 0.000152 (9.41e-07) & 0.000156 (9.02e-07) & 0.0412 (3.39e-05) & \textbf{1.18e-05 (3.56e-07)} & 138 & 38 \\ 
			 \cline{3-11} &  & 2010 & Easy & 1.11e-06 (5.76e-09) & 6.63e-05 (3.84e-07) & 6.42e-05 (3.86e-07) & 0.0411 (3.37e-05) & \textbf{8.51e-08 (2.06e-09)} & 122 & 34 \\ 
			 &  & 2010 & Moderate & 1.11e-05 (5.65e-08) & 3.89e-05 (2.72e-07) & 3.83e-05 (2.85e-07) & 0.0412 (3.62e-05) & \textbf{7.98e-07 (2.28e-08)} & 114 & 27 \\ 
			 &  & 2010 & Hard & 0.00011 (5.69e-07) & 0.000119 (7.09e-07) & 0.000122 (7.83e-07) & 0.0412 (3.48e-05) & \textbf{7.73e-06 (2e-07)} & 116 & 30 \\ 
			\hline \multirow{27}{*}{S4} & \multirow{9}{*}{100} & 140 & Easy & \textbf{1.85e-06 (3.3e-08)} & 0.0259 (0.000381) & 0.034 (0.000825) & 5.61 (0.579) & 0.0167 (0.00226) & 257 & 238 \\ 
			 &  & 140 & Moderate & \textbf{1.87e-05 (3.43e-07)} & 0.0261 (0.000381) & 0.0354 (0.00112) & 5.47 (0.591) & 0.0161 (0.00263) & 252 & 242 \\ 
			 &  & 140 & Hard & \textbf{0.000189 (3.25e-06)} & 0.0261 (0.00042) & 0.0347 (0.000884) & 5.3 (0.578) & 0.0236 (0.00341) & 254 & 240 \\ 
			 \cline{3-11} &  & 175 & Easy & \textbf{1.32e-06 (2.12e-08)} & 0.0201 (0.000262) & 0.023 (0.000497) & 4.22 (0.528) & 0.00333 (0.00055) & 223 & 220 \\ 
			 &  & 175 & Moderate & \textbf{1.33e-05 (2.16e-07)} & 0.0203 (0.00028) & 0.0229 (0.000447) & 5.12 (0.524) & 0.00273 (0.000391) & 220 & 222 \\ 
			 &  & 175 & Hard & \textbf{0.000137 (2.4e-06)} & 0.0204 (0.000267) & 0.0226 (0.000402) & 5.12 (0.526) & 0.00522 (0.00066) & 220 & 221 \\ 
			 \cline{3-11} &  & 210 & Easy & \textbf{1.05e-06 (1.59e-08)} & 0.0178 (0.0002) & 0.0191 (0.000286) & 3.9 (0.494) & 0.00117 (0.000167) & 203 & 201 \\ 
			 &  & 210 & Moderate & \textbf{1.04e-05 (1.76e-07)} & 0.0178 (0.000206) & 0.0185 (0.000284) & 4.37 (0.514) & 0.000611 (9.44e-05) & 209 & 199 \\ 
			 &  & 210 & Hard & \textbf{0.000107 (1.8e-06)} & 0.0175 (0.000189) & 0.019 (0.000324) & 2.49 (0.354) & 0.00125 (0.000129) & 198 & 207 \\ 
			 \cline{2-11} &  \multirow{9}{*}{500} & 674 & Easy & \textbf{7.89e-08 (6.29e-10)} & 0.13 (0.00099) & 0.127 (0.000875) & 55.5 (0.0734) & 0.274 (0.00842) & 121 & 49 \\ 
			 &  & 674 & Moderate & \textbf{8.03e-07 (6.6e-09)} & 0.131 (0.00101) & 0.126 (0.000992) & 55.7 (0.0624) & 0.285 (0.00768) & 117 & 48 \\ 
			 &  & 674 & Hard & \textbf{7.77e-06 (6.39e-08)} & 0.13 (0.00104) & 0.127 (0.00099) & 55.7 (0.054) & 0.242 (0.00641) & 142 & 45 \\ 
			 \cline{3-11} &  & 842 & Easy & \textbf{5.67e-08 (4.31e-10)} & 0.103 (0.000731) & 0.102 (0.000565) & 55.2 (0.105) & 0.129 (0.00497) & 116 & 46 \\ 
			 &  & 842 & Moderate & \textbf{5.7e-07 (4.44e-09)} & 0.103 (0.000822) & 0.103 (0.000633) & 55.2 (0.112) & 0.135 (0.00479) & 105 & 46 \\ 
			 &  & 842 & Hard & \textbf{5.66e-06 (3.71e-08)} & 0.103 (0.000696) & 0.103 (0.000668) & 55.3 (0.0924) & 0.0938 (0.00171) & 142 & 44 \\ 
			 \cline{3-11} &  & 1010 & Easy & \textbf{4.42e-08 (3.39e-10)} & 0.0899 (0.000624) & 0.0898 (0.000521) & 55 (0.0963) & 0.0761 (0.00311) & 106 & 42 \\ 
			 &  & 1010 & Moderate & \textbf{4.47e-07 (3.21e-09)} & 0.0897 (0.000568) & 0.0889 (0.000484) & 54.8 (0.113) & 0.0709 (0.00261) & 110 & 39 \\ 
			 &  & 1010 & Hard & \textbf{4.41e-06 (3.27e-08)} & 0.0909 (0.000613) & 0.0887 (0.000563) & 54.8 (0.119) & 0.0696 (0.00272) & 114 & 41 \\ 
			 \cline{2-11} &  \multirow{9}{*}{1000} & 1340 & Easy & \textbf{2e-08 (1.41e-10)} & 0.265 (0.00185) & 0.255 (0.00175) & 111 (0.057) & 0.569 (0.0095) & 136 & 38 \\ 
			 &  & 1340 & Moderate & \textbf{1.97e-07 (1.06e-09)} & 0.265 (0.00185) & 0.255 (0.00158) & 111 (0.0683) & 0.597 (0.0138) & 129 & 33 \\ 
			 &  & 1340 & Hard & \textbf{2e-06 (1.22e-08)} & 0.266 (0.00187) & 0.257 (0.00173) & 111 (0.046) & 0.633 (0.00976) & 110 & 28 \\ 
			 \cline{3-11} &  & 1675 & Easy & \textbf{1.43e-08 (7.54e-11)} & 0.209 (0.0012) & 0.205 (0.00133) & 111 (0.0712) & 0.27 (0.00401) & 124 & 33 \\ 
			 &  & 1675 & Moderate & \textbf{1.4e-07 (7.9e-10)} & 0.213 (0.00139) & 0.206 (0.00121) & 111 (0.0658) & 0.271 (0.00374) & 122 & 34 \\ 
			 &  & 1675 & Hard & \textbf{1.41e-06 (7.47e-09)} & 0.209 (0.0013) & 0.202 (0.00117) & 111 (0.0686) & 0.284 (0.00539) & 115 & 34 \\ 
			 \cline{3-11} &  & 2010 & Easy & \textbf{1.11e-08 (5.17e-11)} & 0.182 (0.00117) & 0.179 (0.00108) & 111 (0.0701) & 0.162 (0.00241) & 112 & 31 \\ 
			 &  & 2010 & Moderate & \textbf{1.11e-07 (5.52e-10)} & 0.183 (0.00112) & 0.178 (0.00102) & 111 (0.0842) & 0.177 (0.00429) & 106 & 29 \\ 
			 &  & 2010 & Hard & \textbf{1.1e-06 (5.87e-09)} & 0.181 (0.00112) & 0.178 (0.000988) & 111 (0.0678) & 0.183 (0.00557) & 103 & 26 \\ 
			\hline
			\caption{\normalsize Mean (SE) of Type IV estimation error by method, across replications, for each scenario, sample size $n$, and noise level. The lowest (best) Type IV error in each row is shown in bold. The last two columns report the mean number of search steps taken by VSLS and PLLS within the allotted computation time of 120 seconds.} 
			\label{tab.typeIV}
		\end{longtable}
	\end{landscape}
	\normalsize	
		
	\section*{Acknowledgments}
	
	D. Marcondes was funded by grant \#2022/06211-2, São Paulo Research Foundation (FAPESP), during the writing of this paper. We thank J. Barrera and A. Simonis for many fruitful conversations about Learning Spaces. During the preparation of this work, the authors used Claude AI to assist in writing and refining the code of the simulation in Section \ref{SecRegression}. After using this tool, the authors reviewed and carefully edited the code, verified its output, and take full responsibility for the content of this publication. This work was supported by computational resources provided by the Australian Government through the National Computational Infrastructure (NCI) under the ANU Startup Scheme.
	
	\bibliographystyle{plain}
	\bibliography{ref}
\end{document}